\newtheorem{theorem}{Theorem}[section]
\newtheorem{lemma}[theorem]{Lemma}
\theoremstyle{remark}
\numberwithin{equation}{section}
\theoremstyle{plain}
\newtheorem{example}{Example}
\newtheorem{assumption}{Assumption}
\newtheorem{definition}{Definition}
\newtheorem{corollary}{Corollary}[section]
\DeclareMathOperator*{\argmin}{arg\,min}
\def\sde{\text{sde}}
\def\ode{\text{ode}}
\begin{document}

\begin{frontmatter}
\title{Conditional Stochastic Interpolation for\\ Generative Learning}
\runtitle{Conditional Stochastic Interpolation}
%\thankstext{T1}{A sample additional note to the title.}

\begin{aug}
%%%%%%%%%%%%%%%%%%%%%%%%%%%%%%%%%%%%%%%%%%%%%%
%%Only one address is permitted per author. %%
%%Only division, organization and e-mail is %%
%%included in the address.                  %%
%%Additional information can be included in %%
%%the Acknowledgments section if necessary. %%
%%%%%%%%%%%%%%%%%%%%%%%%%%%%%%%%%%%%%%%%%%%%%%
\author[A]{\fnms{Ding} \snm{Huang}\ead[label=e1,mark]{ding.huang@connect.polyu.hk}},
\author[A,B]
{\fnms{Jian} \snm{Huang}\ead[label=e2,mark]{j.huang@polyu.edu.hk}\ead[label=u1,url]{https://www.polyu.edu.hk/en/ama/people/academic-staff/prof-huang-jian/}},
\author[A]{\fnms{Ting} \snm{Li}\ead[label=e3,mark]{tingeric.li@polyu.edu.hk}} and
\author[A]{\fnms{Guohao} \snm{Shen}\ead[label=e4,mark]{guohao.shen@polyu.edu.hk}}

%%%%%%%%%%%%%%%%%%%%%%%%%%%%%%%%%%%%%%%%%%%%%%
%% Addresses                                %%
%%%%%%%%%%%%%%%%%%%%%%%%%%%%%%%%%%%%%%%%%%%%%%
\address[A]{Department of Applied Mathematics,
The Hong Kong Polytechnic University, Hong Kong SAR, China\\ \printead{e1}, \printead{e3}, \printead{e4}}

\address[B]{Department of Data Science and AI, The Hong Kong Polytechnic
University, Hong Kong SAR, China\\ \printead{e2}}

\end{aug}

\begin{abstract}
    We propose a conditional stochastic interpolation (CSI) method for learning conditional distributions. CSI  is based on estimating probability flow equations or stochastic differential equations that transport a reference distribution to the target conditional distribution. This is achieved by first learning the conditional drift and score functions based on CSI, which are then used to construct a deterministic process governed by an ordinary differential equation or a diffusion process for conditional sampling. In our proposed approach, we incorporate an adaptive diffusion term to address the instability issues arising in the diffusion process.
    We derive explicit expressions of the conditional drift and score functions in terms of conditional expectations, which naturally lead to an nonparametric regression approach to estimating these functions. Furthermore, we establish nonasymptotic error bounds for learning the target conditional distribution. We illustrate the application of CSI on image generation using a benchmark image dataset.
\end{abstract}

%\begin{keyword}[class=MSC2020]
%\kwd[Primary ]{62G05; }
%\kwd[Secondary ]{%62M45,
%68T07}
%\end{keyword}

\begin{keyword}
\kwd{Conditional distribution}
\kwd{deep neural networks}
\kwd{diffusion}
\kwd{drift and score functions}
\kwd{nonasymptotic error bounds}
\end{keyword}

\end{frontmatter}

\section{Introduction}

In recent years, there have been important advances in statistical modeling and analysis of high-dimensional data using a \textit{generative learning} approach with deep neural networks. For example, for learning  (unconditional) distributions of high-dimensional data arising in image analysis and natural language processing, the {generative adversarial networks} (GANs) \citep{goodfellow14, arjovsky2017wasserstein} have proven to be effective and achieved impressive success \citep{reed16,zhu17}.
Instead of estimating the functional forms of density functions, GANs start from a known reference distribution and learn a function that maps the reference distribution to the data distribution.

The basic idea of GANs has also been extended to learn conditional distributions. In particular, conditional generative adversarial networks \citep{mirza2014conditional, zhou2022deep} have shown to be able to generate high-quality samples under predetermined conditions. However, these adversarially trained generative models are not exempt from drawbacks, primarily the problems of training instability \citep{arjovsky2017wasserstein, karras2019style} and mode collapse \citep{zhao2018bias}.

Recently, process-based methods have emerged as an area of significant interest for generative modeling due to its impressive results. In contrast to estimating a generator function in GANs, it learns a transport procedure that converts a simple reference distribution into a high-dimensional target distribution. There are mainly two categories of methods: Stochastic Differential Equation (SDE)-based generative models and Ordinary Differential Equation (ODE)-based generative models. A notable instance of SDE-based generative modelings is diffusion models \citep{sohl2015deep, ho2020denoising, song2020score, meng2021sdedit}, which have achieved impressive empirical success \citep{dhariwal2021diffusion}. This approach embeds the target distribution into a Gaussian distribution via an Ornstein-Uhlenbeck process and solves a reverse-time SDE. This reverse-time process is extended to the deterministic process that maintains the consistency in time-dependent distribution \citep{song2020denoising, song2020score, lu2022dpm, lipman2022flow}. ODE-based models also have excellent performance \citep{liu2022flow, albergo2022building, liu2023flowgrad, xu2022poisson}. Most ODE-based methods use an interpolative trajectory modeling approach \citep{liu2022flow, albergo2022building, liu2023flowgrad,lipman2022flow}. In particular, a linear interpolation to connect a reference distribution and the target distribution is introduced by \citet{liu2022flow}. Both the ODE and SDE  methods are known as score-based generative models.
While these methods have achieved excellent performance, the process is truncated at a finite time point, which introduces bias into the reverse process and the final estimation, due to the requirement for an infinite time horizon in this method.

In a recent stimulating paper \citep{albergo2023stochastic}, the authors propose to use stochastic interpolations to generate samples for estimating drift and score functions, and then employ ODE and SDE-based generative models to transform a reference distribution to a target distribution within a finite time interval. In addition, the score and drift functions under this framework can be estimated more easily based on a quadratic loss function, instead of score matching as in the existing score based methods. However,  the problem of score function explosion in SDE generators has not been considered. Additionally, the statistical error properties of the ODE- and SDE-based generators trained with stochastic interpolation have not been studied.

We extend the approach of \citet{albergo2023stochastic} from the unconditional setting and propose a Conditional Stochastic Interpolation (CSI) method to conditional generative learning. We show that the conditional dirft and score functions of CSI can be expressed as certain conditional expectations. Based on these expressions, we can convert conditional drift and score estimation problems into nonparametric regression problems. We leverage the expressive power of deep neural networks for estimating high-dimensional conditional score and drift functions. We establish nonasymptotic error bounds of the proposed CSI method for learning the target conditional distribution.

The main contributions and the novel aspects of our work are outlined as follows.
\begin{itemize}
\item We formulate a conditional stochastic interpolation approach for learning conditional
distributions. This approach facilitates a bias-free generative model for both ODE- and SDE-based generators over a finite time interval. This represents a significant advancement in generative methods for learning conditional conditions.

\item We provide sufficient conditions on the interpolation process that guarantee the stability of the
conditional drift and score functions at the boundary points of the time interval.
To mitigate instability issues, we also propose an adaptive diffusion term in our SDE-based generators, which, when combined with the stability conditions on the interpolation process,  ensures our method's stability on the entire time interval $[0, 1].$

\item We establish error bounds for the estimated conditional distributions of the ODE- and
SDE-based generative models in terms of the 2-Wasserstein distance and KL divergence, respectively. We take into account the errors incurred from using deep neural networks to approximate the conditional drift and score functions. To the best of our knowledge, these results are new within the context of ODE- and SDE-based conditional flow models.

\item We derive a linear relationship between the conditional score function and the drift function when the interpolation takes an additive form. This relationship helps reduce the computational burden and leads to an improved convergence property for the estimated conditional score function.

\item We conduct numerical experiments to illustrate that our proposed CSI approach is capable
of generating high-quality samples, underscoring the effectiveness and reliability of our method.
\end{itemize}

These contributions highlight the significance of our work, offering new methods with theoretical guarantees for learning conditional distributions.

The remainder of this article is organized as follows. Section \ref{sec_framework} introduces the definition of CSI along with its basic properties. Section \ref{sub_ode} describes the generative models under the CSI approach.  Section \ref{sec_theory} presents the theoretical analysis of the CSI estimators for the underlying target distribution. In Section \ref{sec_relatedwork}, we examine relevant literature and outline the similarities and distinctions between our framework and existing approaches. Section \ref{sec_experiment} provides an evaluation of our proposal methods through simulation studies on image generation and reconstruction tasks with benchmark data.
 Finally, Section \ref{sec_conclusion} concludes and discusses some future works. Proofs and  technical details are provided in the supplementary material.

\section{Conditional stochastic interpolation}
\label{sec_framework}

Our objective is to learn a conditional distribution, a task that can be particularly challenging in high-dimensional settings when approached through direct nonparametric estimation. The proposed CSI method addresses this challenge by using a stochastic interpolation process that serves as a conduit between the target conditional distribution and a simple reference distribution. It then constructs a conditional process based on ODEs/SDEs using the estimated conditional drift and score functions to generate samples that approximately follow the target conditional distribution. In this section, we will present the CSI method and study its basic properties.

Let $(X,Y_1) \in \mathcal{X} \times \mathcal{Y}$ be a pair of random vectors, where
$\mathcal{X} \times \mathcal{Y} \subseteq \mathbb{R}^k \times \mathbb{R}^d$ for positive integers $k$ and $d.$
We are interested in the conditional distribution of $Y_1$ given $X$, denoted by $\mathbb{P}_{Y_1 \mid X}.$  Given a known distribution $\mathbb{P}_{Y_0}$ of a random vector $Y_0\in \mathbb{R}^d,$ we first construct a pathway between $\mathbb{P}_{Y_0}$ and the marginal distribution $\mathbb{P}_{Y_1}$ using the following stochastic interpolation.

\subsection{Definitions}
\label{sub_CSI}
 Assume that $Y_0$ and $Y_1$
are independent  and both are absolutely continuous with respect to the Lebesgue measure. Let
  \begin{align}\label{cond_stoch_interp_equa}
        Y_t  = \mathcal{I}(Y_0, Y_1, t)  + \gamma(t)\boldsymbol{\eta}, \  t \in [0,1],
    \end{align}
where $ \boldsymbol{\eta}\sim N(\mathbf{0}, \mathbf{I}_d)$ and is independent of $Y_0 $ and $Y_1$, the function $\mathcal{I}: \mathcal{Y} \times \mathcal{Y} \times [0,1] \rightarrow \mathbb{R}^d$ is called an interpolation function, and $\gamma: [0,1] \to \mathbb{R}$ is a real-valued function.
For every $ t \in (0, 1),$  $Y_t$ is an interpolation between $Y_0$ and $Y_1$ plus a Gaussian noise with time-dependent variance $\gamma^2(t).$
We make the following assumptions about $\gamma$ and $\mathcal{I}.$

\begin{assumption}
\label{assumption0}

(a) The interpolation function $\mathcal{I} (\mathbf{y}_0,\mathbf{y}_1,t)$ is continuous differentiable with respect to $y_0$ and $y_1$, and satisfies the boundary condition $\mathcal{I}(\mathbf{y}_0,\mathbf{y}_1,0)=\mathbf{y}_0 $ and $\mathcal{I}(\mathbf{y}_0,\mathbf{y}_1, 1)=\mathbf{y}_1$ for every $\mathbf{y}_0, \mathbf{y}_1 \in \mathcal{Y}.$

 (b) The function $\gamma$ satisfies one of the following two conditions:
  (i) For $t\in [0,1]$, $\gamma(t) \equiv 0$;
       or  (ii)  For $t\in (0,1)$, $\gamma(t) > 0 $ and $\gamma(0) = \gamma(1)=0 .$
\end{assumption}
These are the basic requirements on $\gamma$ and $\mathcal{I},$ which are needed in constructing
proper interpolation processes.

\begin{definition}[Conditional stochastic interpolation]
\label{cond_stoch_interp}
Assume that $Y_0$, $\boldsymbol{\eta}$ and $(X, Y_1)$ are mutually independent. A conditional stochastic interpolation between the reference distribution $\mathbb{P}_{Y_0}$ and the conditional distribution  $\mathbb{P}_{Y_1 \mid X}$ is defined as
 the set of the conditional distributions $\{\mathbb{P}_{Y_t|X}: t \in [0, 1]\},$ where $\{Y_t, t \in [0, 1]\}$ is defined in  (\ref{cond_stoch_interp_equa}).
\end{definition}

The set $\{\mathbb{P}_{Y_t|X}: t \in [0, 1]\}$ can be viewed as a path of conditional distributions
connecting $\mathbb{P}_{Y_0|X} \equiv \mathbb{P}_{Y_0}$ and $\mathbb{P}_{Y_1|X}.$
We can also conceptualize this using a stochastic process with $\mathbb{P}_{Y_t|X}$ as the corresponding conditional distributions.
Indeed, the proposed CSI method aims to construct such a process. It involves two steps:
(a) Calculating the interpolation $\{Y_t, t \in [0, 1]\}$
and estimating a conditional generative model through the paired data $\{(Y_t, X), t \in [0, 1]\},$
 (b) Generate samples from the estimated model in the previous step.
We use an ODE Flow model and an SDE Diffusion model for conditional sample generation by first estimating the conditional drift and score functions.

The interpolation process $\{Y_t, t \in [0, 1]\}$  involves $\mathcal{I} ,\gamma $ and the reference distribution $\mathbb{P}_{Y_0}$. Different choices of $\mathcal{I} ,\gamma $, and $\mathbb{P}_{Y_0}$ result in a flexible selection space. When $\gamma(t) \equiv 0$, CSI degenerates into a deterministic process. For example, the rectified flow \citep{liu2022flow} is a linear interpolation model with $\mathcal{I}(\mathbf{y}_0, \mathbf{y}_1, t) =(1-t)\mathbf{y}_0+ t \mathbf{y}_1 $ and $\gamma(t)=0$.
Figure \ref{eig} provides some examples of the interpolation function $\cal{I}$ and the perturbation function $\gamma$, as well as displays the generating processes from white noise to an image using these functions.

\begin{figure}[H]
    \centering
    \includegraphics[width = 0.85\linewidth]{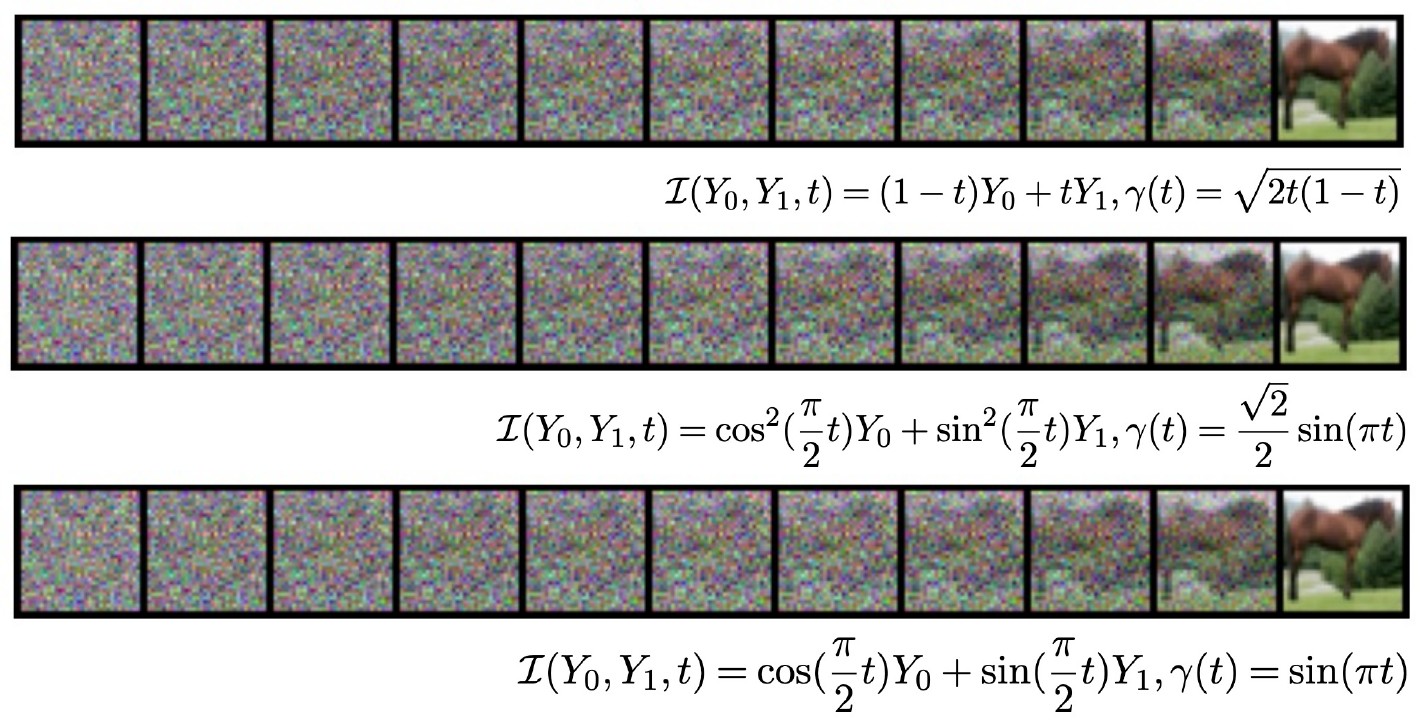}
\caption{Examples of the interpolation function $\cal{I}$ and the perturbation function $\gamma$.}
\label{eig}
\end{figure}

Below we consider two field functions, the velocity field (or drift function) and the distribution field (or score function),  that are important in characterizing a CSI process.
We need some additional assumptions on $\gamma$ and $\mathcal{I}.$

\begin{assumption}
\label{assumption0b}
(a) The interpolation function $\mathcal{I} (\mathbf{y}_0,\mathbf{y}_1,t)$ is  continuous with respect to $t$, $y_0$ and $y_1$ on $\mathcal{X}\times\mathbb{R}^d\times[0,1]$, and is second-order differentiable on $\mathcal{X}\times\mathbb{R}^d\times(0,1)$.  The function $\gamma(t)$ is continuous on $[0,1]$ and secondary differentiable in $(0,1)$ with respect to $t$. (b) There exists $M_1, M_2<\infty$ such that the random variables $Y_0$, $(X,Y_1)$ and the function $\mathcal{I}$ satisfy
$$\mathbb{E}\left[\left|\partial_t \mathcal{I}\left(Y_0, Y_1, t\right)\right|^4 \mid \mathbf{x}\right] \leq M_1, \  \mathbb{E}\left[\left|\partial_t^2 I\left(Y_0, Y_1, t\right)\right|^2 \mid \mathbf{x} \right] \leq M_2, \ (\mathbf{x}, t) \in \mathcal{X}\times [0, 1].
$$
\end{assumption}

Assumption \ref{assumption0b}(a) ensures that the drift function is well defined and can be easily satisfied by choosing appropriate $\mathcal{I}$ and $\gamma.$ Assumption \ref{assumption0b} (b) guarantees the continuity and smoothness of the density function, which is demonstrated in Theorem \ref{pro_b}. In the rest of the paper, we denote $\partial_t \mathcal{I}(\cdot,\cdot, t)$ and $\dot{\gamma}(t)$ by their derivatives with respect to $t\in(0,1)$.   In particular, this assumption is satisfied for the usual choices, as shown in examples in Figure \ref{eig}.

\begin{definition}[Conditional drift and score functions]
\label{defi1}
Denote the time-dependent conditional density of $Y_t \mid X = \mathbf{x}$ by
$\rho^*(\mathbf{x}, \mathbf{y}, t)$, which can be considered a map: $\rho^*: \mathcal{X} \times \mathcal{Y} \times [0,1] \rightarrow [0, + \infty).$
\begin{itemize}
\item[(a)]
The conditional drift function $\boldsymbol{b}^*: \mathcal{X} \times \mathbb{R}^{d}\times (0,1) \rightarrow \mathbb{R}^{d}$ is defined by
    \begin{align}\label{drif_equa}
        \boldsymbol{b}^*(\mathbf{x},\mathbf{y},t) := \mathbb{E}[\partial_t {\mathcal{I}}(Y_0,Y_1,t) + \dot{\gamma}(t)\boldsymbol{\eta} \mid X = \mathbf{x}, Y_t = \mathbf{y}], \ t \in (0, 1).
    \end{align}

\item[(b)]
The conditional score function $\boldsymbol{s}^*: \mathcal{X} \times \mathbb{R}^{d}\times (0,1) \rightarrow \mathbb{R}^{d}$ is defined as
    \begin{align*}
        \boldsymbol{s}^*(\mathbf{x},\mathbf{y},t):=\nabla_{\mathbf{y}} \log \rho^*(\mathbf{x},\mathbf{y},t), \ t \in (0, 1),
    \end{align*}
if the gradients $\nabla_{\mathbf{y}} \rho^*_{\mathbf{x}}(\mathbf{y}, t)$ exist for all $(\mathbf{x}, \mathbf{y}, t) \in \mathcal{X}\times \mathcal{Y} \times (0, 1)$.
\end{itemize}
\end{definition}

To emphasize that these functions are defined for a given $\mathbf{x} \in \mathcal{X},$ we also write
\begin{align}\label{cdt}
\boldsymbol{b}^{*}_{\mathbf{x}}(\mathbf{y},t):=\boldsymbol{b}^*(\mathbf{x},\mathbf{y},t),\
 \rho^*_{\mathbf{x}}(\mathbf{y},t):=\rho^*(\mathbf{x},\mathbf{y},t),
 \ \text{and} \ \boldsymbol{s}^{*}_{\mathbf{x}}(\mathbf{y},t)=\nabla_{\mathbf{y}} \log \rho^{*}_{\mathbf{x}}(\mathbf{y},t).
\end{align}
In subsequent sections, we will alternate between these notations as dictated by context and convenience, ensuring that their usage does not lead to ambiguity.

\subsection{Basic properties}
\label{sub_CSIb}
The conditional drift function represents the deterministic part of the process, which captures the long-term behavior of the process. It provides information of the mean velocity for the process at each location and time. For non-degenerate CSI ($\gamma(t) \not\equiv 0$), the conditional score function measures the sensitivity of the changes in the density function of the process. The conditional drift and score functions are used to construct equivalent processes described by ODE or SDEs.

We first prove that the time-dependent conditional density satisfies the transport equation or Liouville's equation with regard to the conditional drift function.

\begin{theorem}[Transport equation]
\label{pro_b}
Suppose that Assumptions \ref{assumption0} and \ref{assumption0b} are satisfied.
Given $\mathbf{x} \in \mathcal{X}$, the time-dependent conditional density $\rho^*_\mathbf{x}: \mathbb{R}^d \times [0,1]\to [0, \infty)$ is absolutely continuous with respect to the Lebesgue measure for all $t \in[0,1]$ and satisfies $\rho^*_\mathbf{x} \in C^1\left([0,1]; C^p\left(\mathbb{R}^d\right)\right)$ for any $p \in \mathbb{N}$, $(t, \mathbf{y}_t) \in[0,1] \times \mathbb{R}^d$. In addition, $\rho^*_{\mathbf{x}}$ satisfies the transport equation
\begin{align} \label{tp_equa}
        \partial_t \rho^*_{\mathbf{x}} +  \nabla_{\mathbf{y}} \cdot (\boldsymbol{b}^*_\mathbf{x} \rho^*_\mathbf{x}) = 0
\end{align}
in a weak sense, where $\partial_t \rho^*_{\mathbf{x}}$ denotes the partial derivative of $\rho^{*}_{\mathbf{x}}$ with respect to $t$, and $\nabla_{\mathbf{y}} \cdot $ denotes the divergence operator, i.e., $\nabla_{\mathbf{y}} \cdot (\boldsymbol{b}^{*}_{\mathbf{x}}\rho^{*}_{\mathbf{x}})=  \Sigma_{i=1}^d \frac{\partial}{\partial \mathbf{y}^{(i)}} (\boldsymbol{b}^{*}_{\mathbf{x}}\rho^{*}_{\mathbf{x}})^{(i)},$  with $\mathbf{y}^{(i)}$ representing the $i$-th component of $\mathbf{y}$.
\end{theorem}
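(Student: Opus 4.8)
The plan is to prove \eqref{tp_equa} by the standard weak-formulation (test-function) argument: pick an arbitrary test function $\phi \in C_c^\infty(\mathbb{R}^d)$, compute $\frac{d}{dt}\,\mathbb{E}[\phi(Y_t)\mid X=\mathbf{x}]$ in two different ways, and match the results. Fix $\mathbf{x}\in\mathcal{X}$. On one hand, since $\mathbb{E}[\phi(Y_t)\mid X=\mathbf{x}] = \int_{\mathbb{R}^d}\phi(\mathbf{y})\,\rho^*_{\mathbf{x}}(\mathbf{y},t)\,d\mathbf{y}$, differentiating under the integral sign gives $\int \phi\,\partial_t\rho^*_{\mathbf{x}}\,d\mathbf{y}$. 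On the other hand, by Assumption \ref{assumption0b}(a) the map $t\mapsto Y_t$ is $C^1$ with $\dot Y_t = \partial_t\mathcal{I}(Y_0,Y_1,t) + \dot\gamma(t)\boldsymbol{\eta}$ (this is just differentiating \eqref{cond_stoch_interp_equa}), so the chain rule together with dominated convergence — using boundedness of $\nabla\phi$ and the integrability in Assumption \ref{assumption0b}(b) to dominate $\nabla\phi(Y_t)\cdot\dot Y_t$ — yields $\frac{d}{dt}\mathbb{E}[\phi(Y_t)\mid X=\mathbf{x}] = \mathbb{E}\big[\nabla\phi(Y_t)\cdot\dot Y_t \,\big|\, X=\mathbf{x}\big]$.

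Next I would apply the tower property by conditioning additionally on $Y_t$: since $\nabla\phi(Y_t)$ is $\sigma(Y_t)$-measurable, $\mathbb{E}[\nabla\phi(Y_t)\cdot\dot Y_t\mid X=\mathbf{x}] = \mathbb{E}\big[\nabla\phi(Y_t)\cdot \mathbb{E}[\dot Y_t\mid X=\mathbf{x},Y_t] \,\big|\, X=\mathbf{x}\big]$, and the inner conditional expectation is exactly $\boldsymbol{b}^*_{\mathbf{x}}(Y_t,t)$ by the definition \eqref{drif_equa}. Rewriting the outer expectation as an integral against $\rho^*_{\mathbf{x}}$ gives $\int \nabla\phi(\mathbf{y})\cdot\boldsymbol{b}^*_{\mathbf{x}}(\mathbf{y},t)\,\rho^*_{\mathbf{x}}(\mathbf{y},t)\,d\mathbf{y}$, and an integration by parts — whose boundary term vanishes because $\phi$ has compact support — turns this into $-\int \phi(\mathbf{y})\,\nabla_{\mathbf{y}}\cdot\big(\boldsymbol{b}^*_{\mathbf{x}}\rho^*_{\mathbf{x}}\big)(\mathbf{y},t)\,d\mathbf{y}$. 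Equating with the first computation gives $\int \phi\,\big(\partial_t\rho^*_{\mathbf{x}} + \nabla_{\mathbf{y}}\cdot(\boldsymbol{b}^*_{\mathbf{x}}\rho^*_{\mathbf{x}})\big)\,d\mathbf{y}=0$ for all $\phi\in C_c^\infty(\mathbb{R}^d)$, and since $\phi$ is arbitrary the integrand vanishes, which is \eqref{tp_equa}.

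I expect the main obstacle to be the analytic justification of the two interchanges — differentiating under the integral sign in $t$ and the spatial integration by parts — which nominally requires regularity and integrability of $\rho^*_{\mathbf{x}}$ and of the flux $\boldsymbol{b}^*_{\mathbf{x}}\rho^*_{\mathbf{x}}$ in $\mathbf{y}$. I would sidestep this by first reading \eqref{tp_equa} in the distributional sense, where the identity $\partial_t\!\int\phi\rho^*_{\mathbf{x}} = \int\phi\,\partial_t\rho^*_{\mathbf{x}}$ and the integration by parts are automatic with no extra hypotheses, and then observing that under the standing smoothness of $\mathcal{I}$, $\gamma$ and the reference density (which, in the non-degenerate case $\gamma\not\equiv 0$, make $\rho^*_{\mathbf{x}}$ smooth on $\mathcal{Y}\times[0,1]$ via Gaussian convolution; in the degenerate case $\gamma\equiv 0$ a direct change-of-variables along the deterministic flow applies) every term in \eqref{tp_equa} is continuous, so the weak identity is in fact pointwise. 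A secondary point to check carefully is measurability/existence of the conditional expectations in the tower-property step, which is guaranteed by Assumption \ref{assumption0b}(b).
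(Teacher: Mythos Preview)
Your proposal is correct and follows essentially the same test-function argument as the paper: introduce a compactly supported smooth $\varphi$, compute $\partial_t\mathbb{E}[\varphi(Y_t)\mid X=\mathbf{x}]$ via the chain rule, insert the tower property to bring in $\boldsymbol{b}^*_{\mathbf{x}}$, and integrate by parts to obtain the weak form of \eqref{tp_equa}. If anything, you are more explicit than the paper about the analytic justifications (dominated convergence for differentiating under the expectation, and reading the identity distributionally before upgrading to a pointwise statement), which the paper's proof leaves implicit.
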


{\color{black}Theorem \ref{pro_b} guarantees the continuity and smoothness of the conditional density function $\rho_{\mathbf{x}}(t,\mathbf{y})$ of $Y_t$ and that $\rho_{\mathbf{x}}(t, \mathbf{y})$ exists and is differentiable with respect to $t$ and $\mathbf{y}$. Also, the transport equation $ \partial_t \rho^*_{\mathbf{x}} +  \nabla_{\mathbf{y}} \cdot (\boldsymbol{b}^*_\mathbf{x} \rho^*_\mathbf{x}) = 0$ holds in a weak sense.
Theorem \ref{pro_b} characterizes the process of distributional changes over time during the stochastic interpolation process and serves as the foundation for demonstrating the equivalence of the conditional distributions with the corresponding ODE/SDE-based generators discussed later.}

In the following, we derive an expression for the conditional score function in terms of
a conditional expectation. Previously, \cite{albergo2023stochastic} focused on the unconditional case for explicit form of the score function under smoothness conditions on the data density function.
 Here, we also relax the smmothness condition and obtain the formulation of the conditional score function by employing a novel technique.

\begin{theorem}[Conditional score function]\label{pro_score}
 Suppose Assumptions \ref{assumption0} and \ref{assumption0b} are satisfied.
    If $\gamma(t) \neq 0$ for every $t \in (0, 1)$, then the conditional score function $\boldsymbol{s}^*$ can be expressed as
    \begin{align} \label{score_equa}
  \boldsymbol{s}^{*}_{\mathbf{x}}(\mathbf{y},t):=\boldsymbol{s}^*(\mathbf{x},\mathbf{y},t)
        = -\frac{1}{\gamma(t)} \mathbb{E}\left[\boldsymbol{\eta} \mid X= \mathbf{x}, Y_t=\mathbf{y} \right],\  t \in (0, 1).
    \end{align}
\end{theorem}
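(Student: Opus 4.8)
I would fix $\mathbf{x} \in \mathcal{X}$ and $t \in (0,1)$, so that $\gamma(t) > 0$ by Assumption \ref{assumption0}(b)(ii), and work with the conditional law of $Y_t$ given $X = \mathbf{x}$. The starting point is that, conditionally on $X = \mathbf{x}$ and $(Y_0, Y_1) = (\mathbf{y}_0, \mathbf{y}_1)$, the vector $Y_t = \mathcal{I}(\mathbf{y}_0, \mathbf{y}_1, t) + \gamma(t)\boldsymbol{\eta}$ is Gaussian with mean $\mathcal{I}(\mathbf{y}_0, \mathbf{y}_1, t)$ and covariance $\gamma^2(t)\mathbf{I}_d$, since $\boldsymbol{\eta} \sim N(\mathbf{0}, \mathbf{I}_d)$ is independent of $(X, Y_0, Y_1)$. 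Writing $\varphi_{\sigma}$ for the $N(\mathbf{0}, \sigma^2 \mathbf{I}_d)$ density and using that $Y_0$ is independent of $(X, Y_1)$, this yields the mixture representation
\begin{align*}
\rho^*_{\mathbf{x}}(\mathbf{y}, t) = \int \varphi_{\gamma(t)}\big(\mathbf{y} - \mathcal{I}(\mathbf{y}_0, \mathbf{y}_1, t)\big)\, d\mathbb{P}_{Y_0}(\mathbf{y}_0)\, d\mathbb{P}_{Y_1 \mid X = \mathbf{x}}(\mathbf{y}_1),
\end{align*}
which already shows that $\rho^*_{\mathbf{x}}(\cdot, t)$ is smooth and strictly positive in $\mathbf{y}$ for every $t \in (0,1)$, irrespective of whether $\mathbb{P}_{Y_1 \mid X = \mathbf{x}}$ admits a density. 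This is precisely the point that lets the argument dispense with any smoothness assumption on the data distribution.

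\textbf{Main step.} I would then compute $\nabla_{\mathbf{y}} \log \rho^*_{\mathbf{x}}(\mathbf{y}, t) = \nabla_{\mathbf{y}} \rho^*_{\mathbf{x}}(\mathbf{y}, t) / \rho^*_{\mathbf{x}}(\mathbf{y}, t)$ and match it against a conditional expectation, using a test-function (weak) formulation so as not to lean on regularity of the data law. For an arbitrary $\phi \in C^1_c(\mathbb{R}^d)$, the tower property gives
\begin{align*}
\mathbb{E}\big[\phi(Y_t)\,\boldsymbol{\eta} \mid X = \mathbf{x}\big] = \int \phi(\mathbf{y})\, \mathbb{E}\big[\boldsymbol{\eta} \mid X = \mathbf{x}, Y_t = \mathbf{y}\big]\, \rho^*_{\mathbf{x}}(\mathbf{y}, t)\, d\mathbf{y}.
\end{align*}
On the other hand, conditioning on $(X, Y_0, Y_1)$ and applying Stein's identity to the Gaussian vector $\boldsymbol{\eta}$, i.e.\ $\mathbb{E}_{\boldsymbol{\eta}}[\phi(\mathbf{a} + \gamma(t)\boldsymbol{\eta})\boldsymbol{\eta}] = \gamma(t)\,\mathbb{E}_{\boldsymbol{\eta}}[\nabla\phi(\mathbf{a} + \gamma(t)\boldsymbol{\eta})]$, gives $\mathbb{E}[\phi(Y_t)\boldsymbol{\eta} \mid X = \mathbf{x}] = \gamma(t)\int \nabla\phi(\mathbf{y})\, \rho^*_{\mathbf{x}}(\mathbf{y}, t)\, d\mathbf{y}$; since $\rho^*_{\mathbf{x}}(\cdot, t)$ and its gradient decay faster than any polynomial (Gaussian convolution), an ordinary integration by parts rewrites this as $-\gamma(t)\int \phi(\mathbf{y})\, \boldsymbol{s}^*_{\mathbf{x}}(\mathbf{y}, t)\, \rho^*_{\mathbf{x}}(\mathbf{y}, t)\, d\mathbf{y}$. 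Equating the two expressions and letting $\phi$ vary over $C^1_c$ forces $\mathbb{E}[\boldsymbol{\eta} \mid X = \mathbf{x}, Y_t = \mathbf{y}] = -\gamma(t)\,\boldsymbol{s}^*_{\mathbf{x}}(\mathbf{y}, t)$ for $\rho^*_{\mathbf{x}}(\cdot, t)$-a.e.\ $\mathbf{y}$; continuity of both sides in $\mathbf{y}$ upgrades this to every $\mathbf{y}$, and dividing by $\gamma(t)\neq 0$ gives (\ref{score_equa}).

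\textbf{Cross-check.} As a sanity check I would also carry out the direct computation: differentiate the mixture formula under the integral sign — legitimate by dominated convergence, since $\nabla \varphi_{\gamma(t)}$ is bounded locally uniformly in $\mathbf{y}$ and $\gamma$ stays bounded away from $0$ near $t$ — to obtain $\nabla_{\mathbf{y}}\rho^*_{\mathbf{x}}(\mathbf{y}, t) = -\gamma(t)^{-2}\int (\mathbf{y} - \mathcal{I}(\mathbf{y}_0, \mathbf{y}_1, t))\,\varphi_{\gamma(t)}(\mathbf{y} - \mathcal{I}(\mathbf{y}_0,\mathbf{y}_1,t))\, d\mathbb{P}_{Y_0}\, d\mathbb{P}_{Y_1\mid X=\mathbf{x}}$. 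Dividing by $\rho^*_{\mathbf{x}}(\mathbf{y}, t)$ and recognizing the Gaussian-weighted average as a conditional expectation yields $-\gamma(t)^{-2}\,\mathbb{E}[\mathbf{y} - \mathcal{I}(Y_0, Y_1, t) \mid X = \mathbf{x}, Y_t = \mathbf{y}]$, which equals $-\gamma(t)^{-1}\mathbb{E}[\boldsymbol{\eta} \mid X = \mathbf{x}, Y_t = \mathbf{y}]$ because $\mathbf{y} - \mathcal{I}(Y_0, Y_1, t) = \gamma(t)\boldsymbol{\eta}$ on the event $\{Y_t = \mathbf{y}\}$.

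\textbf{Expected obstacle.} The one place needing care is the interchange of $\nabla_{\mathbf{y}}$ (or the integration by parts against $\phi$) with the mixing integral over $(Y_0, Y_1)$: a priori $\mathbb{P}_{Y_1 \mid X = \mathbf{x}}$ may be singular (supported on a lower-dimensional set, as is typical for image data), so one cannot differentiate through it. The resolution — and the reason the hypothesis $\gamma(t) \neq 0$ on $(0,1)$ is imposed — is exactly that the additive term $\gamma(t)\boldsymbol{\eta}$ mollifies $\rho^*_{\mathbf{x}}(\cdot, t)$ on the open interval, so all derivatives fall on the smooth Gaussian kernel and no regularity of $\mathbb{P}_{Y_1 \mid X}$ is ever used. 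The rest is routine: invoking Fubini (using $\mathbb{E}\|\boldsymbol{\eta}\| < \infty$ together with Assumption \ref{assumption0b}) and dominated convergence with explicit Gaussian bounds, and noting that the endpoints $t \to 0^+$ and $t \to 1^-$ are excluded precisely because $\gamma$ vanishes there.
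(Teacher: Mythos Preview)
Your proposal is correct, but the paper's route is different in packaging. The paper first proves a \emph{conditional Tweedie formula}: if $Y\mid \boldsymbol{\mu}\sim \mathcal{N}(\boldsymbol{\mu},\sigma^2 I_d)$ and $X\perp Y\mid \boldsymbol{\mu}$, then $\mathbb{E}[\boldsymbol{\mu}\mid X=\mathbf{x},Y=\mathbf{y}]=\mathbf{y}+\sigma^2\nabla\log p(\mathbf{y}\mid\mathbf{x})$, derived via Bayes' theorem and the exponential-family cumulant identity. It then applies this with $\boldsymbol{\mu}=\mathcal{I}(Y_0,Y_1,t)$ and $\sigma=\gamma(t)$ to get $\mathbb{E}[\mathcal{I}(Y_0,Y_1,t)\mid X,Y_t]=Y_t+\gamma(t)^2\boldsymbol{s}^*(X,Y_t,t)$, and combines with the trivial relation $Y_t=\mathbb{E}[\mathcal{I}(Y_0,Y_1,t)\mid X,Y_t]+\gamma(t)\mathbb{E}[\boldsymbol{\eta}\mid X,Y_t]$ to conclude.

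Your argument reaches the same identity via Stein's lemma for the Gaussian $\boldsymbol{\eta}$ and a test-function integration by parts, rather than via the exponential-family posterior computation. The two are essentially dual: Tweedie's formula \emph{is} Stein's identity read through Bayes' rule, so the underlying mechanism (the Gaussian perturbation $\gamma(t)\boldsymbol{\eta}$ makes $\rho^*_{\mathbf{x}}(\cdot,t)$ smooth and lets one shift a derivative) is identical. What your approach buys is that it stays entirely in weak form---you never name or manipulate the posterior density of $\mathcal{I}(Y_0,Y_1,t)$ given $(X,Y_t)$---and your ``cross-check'' via direct differentiation of the Gaussian mixture is in fact the most transparent derivation of all. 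What the paper's approach buys is a reusable lemma (conditional Tweedie) that also delivers the posterior covariance $\operatorname{Cov}[\boldsymbol{\mu}\mid Y,X]=I_d+\sigma^2\mathbf{H}(\log p(\mathbf{y}\mid\mathbf{x}))$ for free, and ties the result explicitly to the classical empirical-Bayes literature. Either is a complete proof; your discussion of why no smoothness on $\mathbb{P}_{Y_1\mid X}$ is needed matches the paper's emphasis exactly.
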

We note that expression (\ref{score_equa}) depends on the inclusion of a noise term in the interpolation (\ref{cond_stoch_interp_equa}), that is, $\gamma(t)>0, t \in (0, 1).$
Theorem \ref{pro_score} also highlights the potential for the score function to exhibit explosive behavior near the boundary points $t={0, 1},$ since $\gamma(0) = \gamma(1)=0.$ Such phenomena have been previously noted in models based on stochastic differential equations  \citep{kim2021soft}. The behavior of the score function at these boundaries is influenced not only by the perturbation function $\gamma(t)$ but also by the reference distribution and the data distribution. Let
\begin{align}
\label{kappa1}
\boldsymbol{\kappa}^*(\mathbf{x},\mathbf{y},t):= \mathbb{E}[\boldsymbol{\eta}\mid X = \mathbf{x},Y_t = \mathbf{y}], \ t\in (0, 1).
\end{align}
Similar to the denoising function in diffusion models \citep{ho2020denoising, song2020score}, we refer to $\boldsymbol{\kappa}^*$  as the conditional denoising function in CSI. Then (\ref{score_equa}) implies that
\begin{align}
\label{kappa2}
\boldsymbol{s}^*(\mathbf{x},\mathbf{y},t) =-\frac{1}{\gamma(t)} \boldsymbol{\kappa}^*(\mathbf{x},\mathbf{y},t), \ t \in (0, 1).
\end{align}
Clearly, $\boldsymbol{\kappa}^*$ has better regularity properties than $\boldsymbol{s}^*.$  We will first estimate $\boldsymbol{\kappa}^*$ and then
obtain an estimator of $\boldsymbol{s}^*$ based on (\ref{kappa2}), see Section \ref{estimation} below. We now provide sufficient conditions under which the score function and the drift function are stable near $t=0$.
\begin{theorem}[Boundary behavior at $t=0$]
\label{boundary_score}
 Suppose that Assumptions \ref{assumption0} and \ref{assumption0b} are satisfied and $Y_0 \sim \mathcal{N}(\boldsymbol{0},I_d)$ with its density function denoted by $p_{Y_0}.$
For a fixed $\mathbf{x}\in \mathcal{X}, $  let $p_{\mathcal{I}(Y_0,Y_1,t) \mid \mathbf{x}}$ denote the conditional density function of  $\mathcal{I}(Y_0,Y_1,t)$ given  $ X=\mathbf{x}.$
Suppose $\Vert p_{\mathcal{I}(Y_0,Y_1,t) \mid \mathbf{x}} - p_{Y_0}\Vert_\infty = o(\gamma(t))$ as $t\to 0$  for any given $\mathbf{x} \in\mathcal{X}$, then the limitation $\lim_{t \to 0}\boldsymbol{s}^*_{\mathbf{x}}(\mathbf{y},t)$ exits with
    $
    \boldsymbol{s}^*_{\mathbf{x}}(\mathbf{y},0) = -\mathbf{y},
    $
    and the value of the drift function at $t = 0$ is
    $    \boldsymbol{b}^*_{\mathbf{x}}(\mathbf{y},0) = \mathbb{E}[\partial_t {\mathcal{I}}(\mathbf{y},Y_1, 0) \mid  X = \mathbf{x}].
    $
\end{theorem}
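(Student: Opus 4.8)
The plan is to feed the explicit representation (\ref{score_equa}) of $\boldsymbol{s}^*$ from Theorem \ref{pro_score} (which applies here, since the hypothesis $\|p_{\mathcal{I}(Y_0,Y_1,t)\mid\mathbf{x}}-p_{Y_0}\|_\infty=o(\gamma(t))$ forces us into the regime $\gamma(t)>0$ on $(0,1)$) into a Bayesian convolution description of the conditional law of $Y_t$, and then to perturb around the Gaussian reference. Since $\boldsymbol{\eta}$ is independent of $(X,Y_0,Y_1)$, conditioning on $\{X=\mathbf{x},\boldsymbol{\eta}=\mathbf{e}\}$ gives $Y_t=\mathcal{I}(Y_0,Y_1,t)+\gamma(t)\mathbf{e}$ the conditional density $\mathbf{y}\mapsto p_{\mathcal{I}(Y_0,Y_1,t)\mid\mathbf{x}}(\mathbf{y}-\gamma(t)\mathbf{e})$, so Bayes' rule makes the conditional density of $\boldsymbol{\eta}$ given $\{X=\mathbf{x},Y_t=\mathbf{y}\}$ proportional to $\phi_1(\mathbf{e})\,p_{\mathcal{I}(Y_0,Y_1,t)\mid\mathbf{x}}(\mathbf{y}-\gamma(t)\mathbf{e})$, where $\phi_\sigma$ denotes the $\mathcal{N}(\boldsymbol{0},\sigma^2 I_d)$ density (so $p_{Y_0}=\phi_1$). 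Substituting into (\ref{score_equa}) gives, for $t\in(0,1)$,
\[
\boldsymbol{s}^*_{\mathbf{x}}(\mathbf{y},t)=-\frac{1}{\gamma(t)}\,\frac{N_t}{D_t},\qquad N_t=\int_{\mathbb{R}^d}\mathbf{e}\,p_{\mathcal{I}(Y_0,Y_1,t)\mid\mathbf{x}}(\mathbf{y}-\gamma(t)\mathbf{e})\,\phi_1(\mathbf{e})\,d\mathbf{e},
\]
with $D_t$ the same integral without the factor $\mathbf{e}$.

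Next I would treat the Gaussian reference obtained by replacing $p_{\mathcal{I}(Y_0,Y_1,t)\mid\mathbf{x}}$ by $\phi_1$: completing the square in $\mathbf{e}$ shows $\phi_1(\mathbf{e})\phi_1(\mathbf{y}-\gamma(t)\mathbf{e})$ is, up to a constant, the $\mathcal{N}\!\big(\tfrac{\gamma(t)}{1+\gamma(t)^2}\mathbf{y},\tfrac{1}{1+\gamma(t)^2}I_d\big)$ density, with total mass $\phi_{\sqrt{1+\gamma(t)^2}}(\mathbf{y})$, so the reference ratio equals $\tfrac{\gamma(t)}{1+\gamma(t)^2}\mathbf{y}$ and the reference score equals $-\mathbf{y}/(1+\gamma(t)^2)$. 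To transfer this, write $p_{\mathcal{I}(Y_0,Y_1,t)\mid\mathbf{x}}=\phi_1+r_t$ with $\epsilon_t:=\|r_t\|_\infty=o(\gamma(t))$; since $\gamma$ is continuous with $\gamma(0)=0$, also $\epsilon_t\to0$. Using $\int\phi_1=1$ and $\int\|\mathbf{e}\|\phi_1(\mathbf{e})\,d\mathbf{e}=\mathbb{E}\|\boldsymbol{\eta}\|<\infty$, the deviations of $N_t$ and $D_t$ from their Gaussian values are at most $\epsilon_t\,\mathbb{E}\|\boldsymbol{\eta}\|$ and $\epsilon_t$, respectively; since $\phi_{\sqrt{1+\gamma(t)^2}}(\mathbf{y})\to\phi_1(\mathbf{y})>0$ for fixed $\mathbf{y}$, the denominator is $\ge\tfrac12\phi_1(\mathbf{y})>0$ for $t$ small, and the exact cancellation of the common Gaussian leading term in $N_t$ and $D_t$ yields $\|N_t/D_t-\tfrac{\gamma(t)}{1+\gamma(t)^2}\mathbf{y}\|=O(\epsilon_t)$. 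Dividing by $\gamma(t)$,
\[
\Big\|\boldsymbol{s}^*_{\mathbf{x}}(\mathbf{y},t)+\tfrac{\mathbf{y}}{1+\gamma(t)^2}\Big\|=O\!\big(\epsilon_t/\gamma(t)\big)=o(1),
\]
and since $\mathbf{y}/(1+\gamma(t)^2)\to\mathbf{y}$, we obtain $\lim_{t\to0}\boldsymbol{s}^*_{\mathbf{x}}(\mathbf{y},t)=-\mathbf{y}$.

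For the drift, note that by Assumption \ref{assumption0}(a)--(b) we have $\mathcal{I}(Y_0,Y_1,0)=Y_0$ and $\gamma(0)=0$, so the process value at $t=0$ coincides a.s. with the reference variable $Y_0$, and the conditioning event $\{Y_t=\mathbf{y}\}$ at $t=0$ is $\{Y_0=\mathbf{y}\}$. Evaluating (\ref{drif_equa}) at $t=0$, the term $\dot{\gamma}(0)\,\mathbb{E}[\boldsymbol{\eta}\mid X=\mathbf{x},Y_0=\mathbf{y}]$ is $\boldsymbol{0}$ because $\boldsymbol{\eta}$ is independent of $(X,Y_0)$, while independence of $(X,Y_1)$ from $Y_0$ gives $\mathbb{E}[\partial_t\mathcal{I}(Y_0,Y_1,0)\mid X=\mathbf{x},Y_0=\mathbf{y}]=\mathbb{E}[\partial_t\mathcal{I}(\mathbf{y},Y_1,0)\mid X=\mathbf{x}]$, which is finite by Assumption \ref{assumption0b}(b). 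This is the asserted value of $\boldsymbol{b}^*_{\mathbf{x}}(\mathbf{y},0)$, and it uses only the independence structure, not the special hypothesis on $p_{\mathcal{I}(Y_0,Y_1,t)\mid\mathbf{x}}$.

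The main obstacle is the perturbation step for the score: the ratio $N_t/D_t$ must be controlled with remainder of exact order $\epsilon_t$, since only then does dividing by $\gamma(t)$ combine with $\epsilon_t=o(\gamma(t))$ to make the error vanish; a cruder bound on the ratio would be useless after this division. Carrying this out cleanly requires keeping $D_t$ bounded away from zero uniformly for small $t$ and verifying that the Gaussian leading terms in $N_t$ and $D_t$ cancel exactly rather than merely approximately.
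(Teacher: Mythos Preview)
Your proposal is correct and follows essentially the same route as the paper: write $\boldsymbol{s}^*_{\mathbf{x}}(\mathbf{y},t)$ via Theorem \ref{pro_score} as $-\gamma(t)^{-1}N_t/D_t$ with the Bayes-rule integrals, split $p_{\mathcal{I}(Y_0,Y_1,t)\mid\mathbf{x}}=p_{Y_0}+r_t$, compute the Gaussian reference ratio $\gamma(t)\mathbf{y}/(1+\gamma(t)^2)$ by completing the square, bound the perturbations of numerator and denominator by $\epsilon_t\,\mathbb{E}\|\boldsymbol{\eta}\|$ and $\epsilon_t$, and use $\epsilon_t=o(\gamma(t))$ after dividing by $\gamma(t)$. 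The paper carries out the same computation but presents it as explicit componentwise upper and lower sandwich bounds (your inequalities (\ref{b3_1})--(\ref{b3_2}) in the appendix) rather than a single $O(\epsilon_t)$ estimate on the ratio; your formulation is more compact but mathematically identical. For the drift, your direct argument via $Y_t\big|_{t=0}=Y_0$ and independence of $\boldsymbol{\eta}$ from $(X,Y_0)$ is slightly cleaner than the paper's, which routes the $\dot\gamma(0)\boldsymbol{\eta}$ term through the just-proved score limit to obtain $\dot\gamma(0)\gamma(0)\mathbf{y}=0$.
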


Theorem \ref{boundary_score}
shows that the values of the score and drift functions remain bounded and integrable at the initial time $t=0$, which facilitates the use of numerical solvers. Techniques such as the Euler-Maruyama and stochastic Runge-Kutta methods \citep{platen2010numerical} can then be used to generate approximate trajectories from the ODEs/SDEs. In contrast, existing methods \citep{liu2022flow, albergo2022building, lipman2022flow, dao2023flow, song2020score} often truncate the integral domain near the boundary $t=0$ to circumvent the potential blow-up behavior of the integrand function, yet this introduces an unnecessary truncation bias.
Corollary \ref{boundary_score} addresses this problem by adopting a normal reference distribution, ensuring the integrals remain well-defined and free from introducing biases.

Theorem \ref{pro_score} gives expressions of the conditional drift and score functions in terms of conditional expectations. Based on these expressions, we can obtain estimators of $\boldsymbol{b}^* $ and $\boldsymbol{s}^* $ through nonparametric least squares regression.

 \begin{lemma} \label{lemma_loss}
 Assume that $\boldsymbol{b}^*$, $\boldsymbol{s}^* \in L^2(\mathbb{R}^{d+k},(0,1))^d$. Then the drift function $\boldsymbol{b}^*$ and score function $\boldsymbol{s}^*$ can be obtained by minimizing the corresponding loss functions,
\begin{align*}
        \boldsymbol{b}^* = \argmin_{\boldsymbol{b}} \mathcal{L}_b(Y_0, Y_1, X, \boldsymbol{\eta})
        \ \text{ and } \
        \boldsymbol{s}^* = \argmin_{\boldsymbol{s}} \mathcal{L}_s( Y_0, Y_1, X, \boldsymbol{\eta}),
\end{align*}
where
    \begin{align}
        \mathcal{L}_b(Y_0, Y_1, X, \boldsymbol{\eta}) &:=
  {\displaystyle \int_0^1 \mathbb{E} \| \partial_t {\mathcal{I}}(Y_0, Y_1, t)
  + \dot{\gamma}(t)\boldsymbol{\eta} - \boldsymbol{b}(X,Y_t,t) \|^2  \ \mathrm{d}t} ,\label{Lb} \\
        \mathcal{L}_s( Y_0, Y_1, X, \boldsymbol{\eta})&:=
        {\displaystyle \int_0^1 \mathbb{E} \| \gamma(t)^{-1} \boldsymbol{\eta} + \boldsymbol{s}(X, Y_t, t) \|^2  \ \mathrm{d}t}. \label{Ls}
    \end{align}
 \end{lemma}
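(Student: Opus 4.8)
The plan is to reduce both identities to the $L^2$-projection property of conditional expectation: for an integrable random vector $Z$ and a random vector $W$, every measurable function $g$ satisfies
\begin{align*}
\mathbb{E}\|Z - g(W)\|^2 = \mathbb{E}\|Z - \mathbb{E}[Z \mid W]\|^2 + \mathbb{E}\|\mathbb{E}[Z \mid W] - g(W)\|^2 ,
\end{align*}
the cross term vanishing because $\mathbb{E}[Z \mid W] - g(W)$ is $W$-measurable while $\mathbb{E}[Z - \mathbb{E}[Z\mid W] \mid W] = 0$; consequently $g(W) = \mathbb{E}[Z \mid W]$ minimizes $\mathbb{E}\|Z - g(W)\|^2$, uniquely up to almost-sure equality.

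For the drift, I would fix $t \in [0,1]$ and apply this with $W = (X, Y_t)$ and $Z = Z_t := \partial_t \mathcal{I}(Y_0, Y_1, t) + \dot{\gamma}(t)\boldsymbol{\eta}$, the required conditional integrability being supplied by Assumption \ref{assumption0b}(b) and $\boldsymbol{\eta}\sim N(\mathbf{0}, \mathbf{I}_d)$. By Definition \ref{defi1} and (\ref{drif_equa}), $\mathbb{E}[Z_t \mid X = \mathbf{x}, Y_t = \mathbf{y}] = \boldsymbol{b}^*(\mathbf{x}, \mathbf{y}, t)$, so with $g(W) = \boldsymbol{b}(X, Y_t, t)$ the identity above reads, for each $t$,
\begin{align*}
\mathbb{E}\|Z_t - \boldsymbol{b}(X, Y_t, t)\|^2 = \mathbb{E}\|Z_t - \boldsymbol{b}^*(X, Y_t, t)\|^2 + \mathbb{E}\|\boldsymbol{b}^*(X, Y_t, t) - \boldsymbol{b}(X, Y_t, t)\|^2 .
\end{align*}
Integrating over $t \in [0,1]$ gives $\mathcal{L}_b$ evaluated at $\boldsymbol{b}$ equal to its value at $\boldsymbol{b}^*$ plus the nonnegative term $\int_0^1 \mathbb{E}\|\boldsymbol{b}^*(X, Y_t, t) - \boldsymbol{b}(X, Y_t, t)\|^2 \, \mathrm{d}t$, so $\boldsymbol{b}^*$ is a minimizer, unique up to a $\mathrm{d}t \otimes \mathbb{P}_{X, Y_t}$-null set. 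This route avoids any need to first produce pointwise-in-$t$ minimizers and then patch them into one jointly measurable function, since $\boldsymbol{b}^*$ is already supplied as such a map by Definition \ref{defi1}.

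For the score I would repeat the argument in the non-degenerate case assumed here ($\gamma(t) \neq 0$ for $t\in(0,1)$): fix $t \in (0,1)$, take $W = (X, Y_t)$ and $Z = -\gamma(t)^{-1}\boldsymbol{\eta}$, so that $\mathbb{E}[Z \mid X = \mathbf{x}, Y_t = \mathbf{y}] = -\gamma(t)^{-1}\mathbb{E}[\boldsymbol{\eta} \mid X = \mathbf{x}, Y_t = \mathbf{y}] = \boldsymbol{s}^*(\mathbf{x}, \mathbf{y}, t)$ by Theorem \ref{pro_score} and (\ref{score_equa}). The same Pythagorean decomposition with $g(W) = \boldsymbol{s}(X, Y_t, t)$, integrated in $t$, yields $\mathcal{L}_s(\boldsymbol{s}) = \mathcal{L}_s(\boldsymbol{s}^*) + \int_0^1 \mathbb{E}\|\boldsymbol{s}^*(X, Y_t, t) - \boldsymbol{s}(X, Y_t, t)\|^2 \, \mathrm{d}t$, whence $\boldsymbol{s}^*$ is a minimizer; the endpoints $t \in \{0,1\}$ form a Lebesgue-null set and are irrelevant to the integral.

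The only points needing a little care, and I do not expect any of them to be a genuine obstacle, are: (i) justifying the projection identity and the vanishing of the cross term at each $t$, which follows from Assumption \ref{assumption0b} together with the Gaussianity of $\boldsymbol{\eta}$ and the continuity of $\dot{\gamma}$ and $\partial_t \mathcal{I}$; (ii) making precise the meaning of $\argmin$, namely minimization over measurable maps $\boldsymbol{b}, \boldsymbol{s}$ on $\mathcal{X} \times \mathbb{R}^d \times [0,1]$ with uniqueness modulo $\mathrm{d}t \otimes \mathbb{P}_{X, Y_t}$-null sets; and (iii) observing that, since the decomposition is pointwise in $t$, the conclusion for $\mathcal{L}_s$ is unaffected by whether its $t$-integral converges near the endpoints (where $\gamma(t)^{-2}$ may fail to be integrable) — a technicality the paper sidesteps in practice by estimating $\boldsymbol{\kappa}^*$ and invoking (\ref{kappa2}).
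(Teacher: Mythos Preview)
Your proposal is correct and matches the paper's approach: the paper does not spell out a separate proof of this lemma but simply invokes ``the basic properties of conditional expectation'' (see the discussion around (\ref{bstar})--(\ref{sstar})), and in the proof of Lemma~\ref{lemma1} it uses exactly the Pythagorean identity $R^b_t(\boldsymbol{b}) - R^b_t(\boldsymbol{b}^*) = \mathbb{E}\Vert \boldsymbol{b}(X,Y_t,t) - \boldsymbol{b}^*(X,Y_t,t)\Vert^2$ that you derive. Your write-up is in fact more careful than the paper's, particularly on points (ii) and (iii).
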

This lemma shows that $\boldsymbol{b}^*$ and $\boldsymbol{s}^*$ can be identified as solutions to the minimization of two squares losses. This paves the way for using nonparametric regression to estimate these two functions.  We will present the details in Section \ref{estimation}.

\subsection{Additive interpolation} \label{sub_additive_interpolation}
We now focus on an important special case of the general interpolation function when it takes an additive form.

\begin{definition}[Additive interpolation]
An additive interpolation takes the form
\begin{align}\label{interpb}
Y_t = a(t) Y_0 + b(t) Y_1 + \gamma(t) \boldsymbol{\eta}, \ t \in [0, 1],
\end{align}
where $\boldsymbol{\eta} \sim N(\mathbf{0}, \mathbf{I}_d),$ the functions $a$ and $b$ are continuously differentiable on $[0,1]$ and satisfy
\[
a(0) =1, a(1) = 0; \ b(0)=0, b(1)=1,
\]
and the function $\gamma$ satisfies  Assumption 1(b). In addition, $Y_0$, $Y_1$ and $\boldsymbol{\eta}$ are mutually independent.
\end{definition}

The conditional drift function $\boldsymbol{b}^*$ corresponding to (\ref{interpb})
is given by
\begin{align}\label{drif_equa2}
 \boldsymbol{b}^*(\mathbf{x},\mathbf{y},t)
&= \mathbb{E}[\dot{a}(t) Y_0 + \dot{b}(t) Y_1 + \dot{\gamma}(t)\boldsymbol{\eta} \mid X = \mathbf{x}, Y_t = \mathbf{y}], \ t \in (0, 1).
\end{align}
We derive an expression of the score function in terms of the drift function.

\begin{corollary}[Conditional score function]\label{pro_score2}
Suppose that Assumptions \ref{assumption0} and \ref{assumption0b} hold, $Y_0\sim \mathcal{N}(\mathbf{0},\mathbf{I}_d) $ and the interpolation takes the additive form (\ref{interpb}). Denote
$$
A(t):= a(t)[a(t)\dot{b}(t)-\dot{a}(t)b(t)] + \gamma(t)[\gamma(t)\dot{b}(t)-\dot{\gamma}(t)b(t)].
$$
Then the conditional score function can be expressed as
     \begin{align} \label{score_b}
      \boldsymbol{s}^{*}_{\mathbf{x}}(\mathbf{y},t):= \boldsymbol{s}^{*}(\mathbf{x},\mathbf{y},t)
        =\frac{b(t)}{A(t)} \boldsymbol{b}^*(\mathbf{x},\mathbf{y},t) - \frac{\dot{b}(t)}{A(t)} \mathbf{y}, \ t \in (0, 1),
    \end{align}
provided that the function ${b(t)}/{A(t)}$ and ${\dot{b}(t)}/{A(t)}$ are well-defined in $(0,1).$
\end{corollary}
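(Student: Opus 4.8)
The plan is to express every conditional expectation that enters the drift formula (\ref{drif_equa2}) in terms of the score $\boldsymbol{s}^{*}_{\mathbf{x}}(\mathbf{y},t)$ and the point $\mathbf{y}$, and then to read off (\ref{score_b}) by solving a single linear identity. Fix $\mathbf{x}\in\mathcal{X}$ and $t\in(0,1)$, abbreviate $a=a(t)$, $b=b(t)$, $\gamma=\gamma(t)$ (and likewise for their derivatives), and set $u_0:=\mathbb{E}[Y_0\mid X=\mathbf{x},Y_t=\mathbf{y}]$, $u_1:=\mathbb{E}[Y_1\mid X=\mathbf{x},Y_t=\mathbf{y}]$, $v:=\mathbb{E}[\boldsymbol{\eta}\mid X=\mathbf{x},Y_t=\mathbf{y}]$. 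Since $\gamma(t)>0$ on $(0,1)$, Theorem \ref{pro_score} applies and yields the first relation $v=-\gamma\,\boldsymbol{s}^{*}_{\mathbf{x}}(\mathbf{y},t)$. Taking the conditional expectation of the interpolation identity $Y_t=aY_0+bY_1+\gamma\boldsymbol{\eta}$ given $\{X=\mathbf{x},Y_t=\mathbf{y}\}$ gives the second relation $\mathbf{y}=a u_0+b u_1+\gamma v$.

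The crucial step I would carry out next is to relate $u_0$ and $v$ directly, namely to prove $\gamma\,u_0=a\,v$. My approach is to introduce the auxiliary vector $Z:=\gamma Y_0-a\boldsymbol{\eta}$. Because $Y_0\sim\mathcal{N}(\mathbf{0},\mathbf{I}_d)$ and $\boldsymbol{\eta}\sim\mathcal{N}(\mathbf{0},\mathbf{I}_d)$ are independent standard Gaussians (here the hypothesis on the reference distribution is essential) and $(Y_0,\boldsymbol{\eta})$ is independent of $(X,Y_1)$, the pair $(Z,\,aY_0+\gamma\boldsymbol{\eta})$ is jointly Gaussian, independent of $(X,Y_1)$, and has uncorrelated — hence independent — components (each coordinate has covariance $a\gamma-a\gamma=0$). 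A short argument then shows $Z$ is independent of $(X,Y_1,aY_0+\gamma\boldsymbol{\eta})$, and since $Y_t=(aY_0+\gamma\boldsymbol{\eta})+bY_1$ is a measurable function of $(aY_0+\gamma\boldsymbol{\eta},Y_1)$, also independent of $(X,Y_t)$. Hence $\mathbb{E}[Z\mid X=\mathbf{x},Y_t=\mathbf{y}]=\mathbb{E}[Z]=\mathbf{0}$, i.e. $\gamma u_0-a v=\mathbf{0}$; combined with $v=-\gamma\,\boldsymbol{s}^{*}_{\mathbf{x}}(\mathbf{y},t)$ this yields $u_0=-a\,\boldsymbol{s}^{*}_{\mathbf{x}}(\mathbf{y},t)$.

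With $u_0$ and $v$ eliminated, I would use the second relation to solve $b\,u_1=\mathbf{y}-a u_0-\gamma v=\mathbf{y}+(a^2+\gamma^2)\,\boldsymbol{s}^{*}_{\mathbf{x}}(\mathbf{y},t)$, then multiply the drift expression $\boldsymbol{b}^{*}_{\mathbf{x}}(\mathbf{y},t)=\dot a\,u_0+\dot b\,u_1+\dot\gamma\,v$ through by $b$ and substitute, obtaining
\[
b\,\boldsymbol{b}^{*}_{\mathbf{x}}(\mathbf{y},t)=\dot b\,\mathbf{y}+\big[\dot b(a^2+\gamma^2)-b(a\dot a+\gamma\dot\gamma)\big]\,\boldsymbol{s}^{*}_{\mathbf{x}}(\mathbf{y},t).
\]
A routine computation identifies the bracket with $A(t)=a(a\dot b-\dot a b)+\gamma(\gamma\dot b-\dot\gamma b)$, so $A(t)\,\boldsymbol{s}^{*}_{\mathbf{x}}(\mathbf{y},t)=b(t)\,\boldsymbol{b}^{*}_{\mathbf{x}}(\mathbf{y},t)-\dot b(t)\,\mathbf{y}$, and dividing by $A(t)$ — legitimate exactly where $b/A$ and $\dot b/A$ are well-defined, as assumed — gives (\ref{score_b}).

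I expect the step $\gamma u_0=a v$ to be the only genuine obstacle: the remaining manipulations are linear bookkeeping justified by Assumptions \ref{assumption0} and \ref{assumption0b} (which guarantee that $\partial_t\mathcal{I}$, $\dot\gamma$ and the relevant conditional expectations are well-defined, so that (\ref{drif_equa2}) holds), whereas this step truly uses the Gaussianity of the reference $Y_0$ together with the mutual independence built into (\ref{interpb}). One must be somewhat careful there to upgrade "uncorrelated" to "independent" for the jointly Gaussian pair $(Z,aY_0+\gamma\boldsymbol{\eta})$, and to verify that $Z$ is independent of the entire conditioning $\sigma$-field $\sigma(X,Y_t)$ rather than merely of $X$ and of $Y_t$ separately — this is where combining $Z\perp(aY_0+\gamma\boldsymbol{\eta})$ with $(Z,aY_0+\gamma\boldsymbol{\eta})\perp(X,Y_1)$ does the work.
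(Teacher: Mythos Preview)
Your proof is correct and reaches (\ref{score_b}) by a genuinely different route from the paper's. The paper applies the conditional Tweedie formula (Lemma \ref{condition_tf}) a second time---now viewing $Y_t$ as $b(t)Y_1$ plus the Gaussian noise $a(t)Y_0+\gamma(t)\boldsymbol{\eta}\sim\mathcal{N}(\mathbf{0},(a^2+\gamma^2)\mathbf{I}_d)$---to obtain $b\,u_1=\mathbf{y}+(a^2+\gamma^2)\,\boldsymbol{s}^{*}_{\mathbf{x}}$ directly, and then solves the four linear relations (\ref{s_00000}), (\ref{s_1}), (\ref{s_000}), (\ref{s_0000}). You instead establish $u_0=-a\,\boldsymbol{s}^{*}_{\mathbf{x}}$ by a Gaussian rotation argument: $Z=\gamma Y_0-a\boldsymbol{\eta}$ is orthogonal to $aY_0+\gamma\boldsymbol{\eta}$, and joint Gaussianity of $(Y_0,\boldsymbol{\eta})$ together with $(Y_0,\boldsymbol{\eta})\perp(X,Y_1)$ upgrades this to $Z\perp(X,Y_t)$, whence $\gamma u_0=a v$. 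Your path avoids a second invocation of Tweedie and makes transparent exactly where the Gaussianity of $Y_0$ enters (turning orthogonality into independence); the paper's path is more uniform in that it recycles the same Tweedie machinery already used for Theorem \ref{pro_score}. Either way the remaining steps coincide: substituting $u_0=-a\,\boldsymbol{s}^{*}_{\mathbf{x}}$ and $v=-\gamma\,\boldsymbol{s}^{*}_{\mathbf{x}}$ into the interpolation identity recovers the paper's relation $b\,u_1=\mathbf{y}+(a^2+\gamma^2)\,\boldsymbol{s}^{*}_{\mathbf{x}}$, and the drift identity then yields $A(t)\,\boldsymbol{s}^{*}_{\mathbf{x}}=b\,\boldsymbol{b}^{*}_{\mathbf{x}}-\dot b\,\mathbf{y}$.
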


This corollary shows that, for a Gaussian initial value $Y_0$ and an additive interpolation function as given in (\ref{interpb}), an estimator of the conditional score function can be obtained directly based on (\ref{score_b}), provided that an estimator for the conditional drift function is available. We will use a numerical example in Section \ref{sec_experiment} to illustrate this point.

In addition to facilitating estimation, additive interpolation offers stability for the score function at the boundaries when the functions $a(t)$, $b(t)$ and $\gamma(t)$ are appropriately selected.  This can be elucidated through formula (\ref{score_b}) and Theorem \ref{boundary_score}. Notably, the applicability of Theorem \ref{boundary_score} is contingent upon the condition that $\Vert p_{\mathcal{I}(Y_0,Y_1,t) \mid X} - p_{Y_0}\Vert_\infty = o(\gamma(t))$. This condition  imposes restrictions on the support of $Y_1$ and the rate of change of the interpolation function $\mathcal{I}$ at the boundary.%, dictated by the diffusion coefficient $\gamma(t)$.
We formalize these conditions as follows.

\begin{assumption} \label{assump_ab}
    (a) The supports of the conditional distributions of $Y_1\mid X=\mathbf{x}$ for every given $\mathbf{x} \in \mathcal{X}$ are uniformly bounded by a finite constant $B_1 > 0$. (b) Functions $a(t)$ and $b(t)$ satisfy $1-a(t)=o(\gamma(t))$ and $ b(t) =  o(\gamma(t))$ as $t$ tends to $0$.
\end{assumption}

Assumption \ref{assump_ab}(a) serves as a regularity condition for the support of $Y_1$ and is particularly applicable to bounded support data, such as image data.  To satisfy Assumption \ref{assump_ab}(b), the functions $a(t)$, $b(t)$ and $\gamma(t)$ must be appropriately selected, which also implies that the coefficients $b(t)/A(t)$ and $\dot{b}(t)/A(t)$ in formula (\ref{score_b}) are well-defined at $t=0$.  Consequently, we derive the following corollary.

\begin{corollary}[Boundary behavior at $t=0$] \label{corollary_boundary_score}
    Suppose that assumptions \ref{assumption0}, \ref{assumption0b}, \ref{assump_ab} are satisfied, $Y_0\sim \mathcal{N}(\mathbf{0},\mathbf{I}_d)$, and the interpolation takes the additive form (\ref{interpb}). Then, $\Vert p_{\mathcal{I}(Y_0, Y_1,t) \mid \mathbf{x}} - p_{Y_0}\Vert_\infty = o(\gamma(t))$ and $\boldsymbol{s}^*_{\mathbf{x}}(\mathbf{y},0) = -\mathbf{y}$, $    \boldsymbol{b}^*_{\mathbf{x}}(\mathbf{y},0) = \mathbb{E}[\partial_t {\mathcal{I}}(\mathbf{y},Y_1, 0) \mid  X = \mathbf{x}].
    $
\end{corollary}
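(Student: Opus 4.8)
The plan is to reduce the statement to the single nontrivial hypothesis of Theorem~\ref{boundary_score}, namely that, for each fixed $\mathbf{x}\in\mathcal{X}$,
\[
\bigl\| p_{\mathcal{I}(Y_0,Y_1,t)\mid\mathbf{x}} - p_{Y_0}\bigr\|_\infty = o(\gamma(t)) \qquad\text{as }t\to 0 ,
\]
since once this is known the two boundary identities $\boldsymbol{s}^*_{\mathbf{x}}(\mathbf{y},0)=-\mathbf{y}$ and $\boldsymbol{b}^*_{\mathbf{x}}(\mathbf{y},0)=\mathbb{E}[\partial_t\mathcal{I}(\mathbf{y},Y_1,0)\mid X=\mathbf{x}]$ are exactly the conclusion of that theorem, and its remaining hypotheses (Assumptions~\ref{assumption0}, \ref{assumption0b} and $Y_0\sim\mathcal{N}(\mathbf{0},\mathbf{I}_d)$) are among the present ones. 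For the additive interpolation \eqref{interpb} we have $\mathcal{I}(Y_0,Y_1,t)=a(t)Y_0+b(t)Y_1$, so the first thing I would record is the Gaussian-mixture form of the relevant density: because $Y_0\sim\mathcal{N}(\mathbf{0},\mathbf{I}_d)$ is independent of $(X,Y_1)$, conditioning additionally on $Y_1$ gives
\[
p_{\mathcal{I}(Y_0,Y_1,t)\mid\mathbf{x}}(\mathbf{y}) = \mathbb{E}\bigl[\varphi_{a(t)}(\mathbf{y}-b(t)Y_1)\,\big|\,X=\mathbf{x}\bigr],
\]
where $\varphi_\sigma$ denotes the $\mathcal{N}(\mathbf{0},\sigma^2\mathbf{I}_d)$ density, so that $\varphi_1=p_{Y_0}$.

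The core of the argument is a uniform bound on the integrand. For each realization of $Y_1$ I would split
\[
\bigl|\varphi_{a(t)}(\mathbf{y}-b(t)Y_1)-\varphi_1(\mathbf{y})\bigr| \le \bigl|\varphi_{a(t)}(\mathbf{y}-b(t)Y_1)-\varphi_1(\mathbf{y}-b(t)Y_1)\bigr| + \bigl|\varphi_1(\mathbf{y}-b(t)Y_1)-\varphi_1(\mathbf{y})\bigr|
\]
and bound the two pieces separately. The second (a location shift) is controlled by the global Lipschitz property of $\varphi_1$: since $\nabla\varphi_1(\mathbf{z})=-\mathbf{z}\,\varphi_1(\mathbf{z})$ has norm bounded by a finite constant $L$, this piece is at most $L\,b(t)\,\|Y_1\|$, and by Assumption~\ref{assump_ab}(a) $\|Y_1\|$ is bounded almost surely (given $X=\mathbf{x}$) by a finite constant depending only on $B_1$ and $d$; hence it is $O(b(t))=o(\gamma(t))$ by Assumption~\ref{assump_ab}(b). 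The first (a scale change) is controlled by noting that $a$ is continuous with $a(0)=1$, so $a(t)$ stays in a fixed compact neighborhood of $1$, bounded away from $0$, for all small $t$; differentiating $\sigma\mapsto\varphi_\sigma(\mathbf{z})$ and using that both $\varphi_\sigma(\mathbf{z})$ and $\|\mathbf{z}\|^2\varphi_\sigma(\mathbf{z})$ are bounded uniformly in $\mathbf{z}$ and in $\sigma$ over that neighborhood yields $\sup_{\mathbf{z}}|\varphi_\sigma(\mathbf{z})-\varphi_1(\mathbf{z})|\le C\,|\sigma-1|$, so this piece is $O(1-a(t))=o(\gamma(t))$, again by Assumption~\ref{assump_ab}(b). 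Taking the conditional expectation and then the supremum over $\mathbf{y}\in\mathbb{R}^d$ gives the desired estimate.

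With the sup-norm condition in hand, I would simply invoke Theorem~\ref{boundary_score} for the fixed $\mathbf{x}$ to obtain $\boldsymbol{s}^*_{\mathbf{x}}(\mathbf{y},0)=-\mathbf{y}$ and $\boldsymbol{b}^*_{\mathbf{x}}(\mathbf{y},0)=\mathbb{E}[\partial_t\mathcal{I}(\mathbf{y},Y_1,0)\mid X=\mathbf{x}]$, which is the assertion. I expect the only delicate point to be the uniformity of the two perturbation bounds: the location estimate really uses the uniform boundedness of the supports of $Y_1\mid X=\mathbf{x}$ from Assumption~\ref{assump_ab}(a), and the scale estimate needs the variance factor $a(t)^2$ to stay away from $0$ near $t=0$; without either ingredient the mixture density would not approach $p_{Y_0}$ at the required rate $o(\gamma(t))$. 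Everything else is a routine application of the triangle inequality and elementary estimates on Gaussian densities.
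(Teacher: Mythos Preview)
Your reduction to Theorem~\ref{boundary_score} is exactly right, and your verification of the sup-norm condition is correct. The approach, however, differs from the paper's in how the density perturbation is handled. You write the conditional density as a Gaussian mixture $\mathbb{E}[\varphi_{a(t)}(\mathbf{y}-b(t)Y_1)\mid X=\mathbf{x}]$ and then decouple the location and scale perturbations by the triangle inequality, bounding the translation via the global Lipschitz constant of $\varphi_1$ and the rescaling via the uniform bound on $\partial_\sigma\varphi_\sigma$ for $\sigma$ near $1$. This gives $O(|b(t)|)+O(|1-a(t)|)=o(\gamma(t))$ directly and uniformly in~$\mathbf{y}$. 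The paper instead combines both perturbations into a single substitution $\mathbf{v}=(\mathbf{y}-\mathbf{u})/a(t)-\mathbf{y}$ with $\|\mathbf{v}\|\le r_t(\mathbf{y}):=\frac{|1-a(t)|}{a(t)}\|\mathbf{y}\|+\frac{|b(t)|}{a(t)}B_1$, applies the mean value theorem with a $\mathbf{y}$-dependent local Lipschitz constant $l_t(\mathbf{y})$, and then must argue that $\sup_{\mathbf{y}} r_t(\mathbf{y})l_t(\mathbf{y})$ is finite and of the right order by a somewhat longer calculation. Your separation of the two effects is more modular and shorter; the paper's one-step route is more hands-on but leads to the same rate. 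Both rely on exactly the ingredients you flag: bounded support of $Y_1\mid X=\mathbf{x}$ (for the location piece) and $a(t)$ bounded away from $0$ near $t=0$ (for the scale piece).
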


Assuming $Y_1$ is bounded, we further provide examples to illustrate the selection of $a(t)$, $b(t)$ and $\gamma(t)$ and their impact on boundary behaviors, using corollary \ref{corollary_boundary_score}.

 \begin{example}
 \label{ex1}
Let $a(t)=1-t$, $b(t)=t$ and $\gamma(t) = \sqrt{2t(1-t)}, t \in [0, 1].$
The corresponding interpolation process is
\[
Y_t = (1-t) Y_0 + t Y_1 + \sqrt{2t(1-t)}\,\boldsymbol{\eta}, \ t \in [0, 1].
\]
Assumption \ref{assump_ab}(b) can be verified to be satisfied, which implies the stability of the score function at the boundary $t=0$. For the boundary $t=1$, straightforward calculations yield $A(t)=2t^2-t+1.$ Thus, the coefficients $b(t)/A(t)$ and $\dot{b}(t)/A(t)$ in formula (\ref{score_b}) are well-defined at $t=1$. Provided that $\boldsymbol{b}^*$ is well-defined at $t=1$, we conclude that the conditional score function of this interpolation process is stable using Corollary (\ref{pro_score2}).
\end{example}

\begin{example}
\label{ex2}
 Let $a(t)=\cos^2(\frac{\pi}{2}t), b(t)=\sin^2(\frac{\pi}{2} t)$ and
$\gamma(t)=\frac{\sqrt{2}}{2}\sin(\pi t), t \in [0,1].$
The corresponding interpolation process is
\[
Y_t = \cos^2(\frac{\pi}{2}t) Y_0 + \sin^2(\frac{\pi}{2} t) Y_1
+ \frac{\sqrt{2}}{2}\sin(\pi t) \boldsymbol{\eta}, \ t \in [0, 1].
\]

It can be verified that Assumption \ref{assump_ab}(b) is also satisfied, ensuring the stability of the score function at the boundary $t=0$. Additionally, some algebra yields $A(t)=\frac{\pi}{2}\sin(\pi t)[1+\frac{1}{2}\sin(\pi t) -\frac{5}{4}\sin^2(\pi t) + \sin^4(\frac{\pi}{2}t)].$  Provided that $\boldsymbol{b}^*$ is well-defined at $t=1$, we conclude that the conditional score function of this interpolation process is stable using Corollary (\ref{pro_score2}).
 \end{example}

 \begin{example}\label{ex3}
 Let $a(t)=\cos(\frac{\pi}{2}t), b(t)=\sin(\frac{\pi}{2} t)$ and
$\gamma(t)=\sin(\pi t), t \in [0,1].$
The corresponding interpolation process is
\[
Y_t = \cos(\frac{\pi}{2}t)   Y_0 + \sin(\frac{\pi}{2} t)  Y_1
+ \sin(\pi t) \boldsymbol{\eta}, \ t \in [0, 1].
\]
Some algebra yields $A(t)=\frac{\pi}{2}\cos(\frac{\pi}{2} t)[1+\sin^2(\pi t) -
4 \sin^2(\frac{\pi}{2} t) \cos(\pi t)]$. At the boundary $t=0$, Assumption \ref{assump_ab}(b) does not hold and $ |b(t)/A(t)| \to \infty$ as $t \to 1.$ Consequently, the stability of the conditional score function remains unclear.
 \end{example}

\section{Conditional generators} \label{sub_ode}
We propose two conditional generators based on the velocity and distribution field information of the CSI process: conditional stochastic interpolation flow (CSI Flow, an ODE-based generative model) and conditional stochastic interpolation diffusion (CSI Diffusion, an SDE-based generative model).
These models produce Markov processes $\{Z_{t,X}\}_{t\in[0,1]}$ under the condition $X \in \mathcal{X}$, ensuring that $Z_{0,X} \sim \mathbb{P}_{Y_0}$ and $Z_{1,X} \sim \mathbb{P}_{Y_1\mid X}$. The construction of these generative models uses estimators of the conditional drift and score functions. These generative models can be conceptualized as a two-step approach, including a training step and a sampling step.

\begin{figure}[H]
    \centering
    \includegraphics[width=\textwidth]{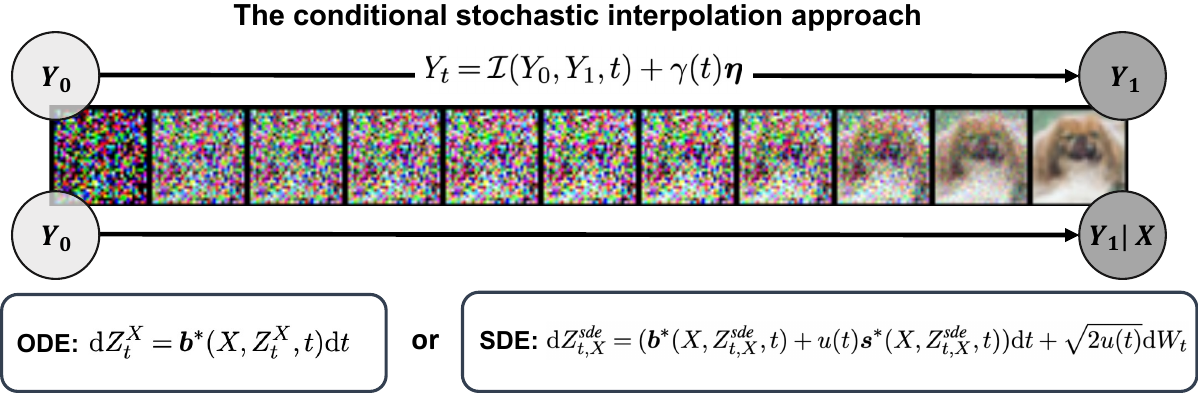}
    \caption{{\color{blue} Illustration of the proposed CSI approach.}}
    \label{fig_overview}
    \end{figure}

As illustrated in Figure \ref{fig_overview}, the training step of our CSI-based generative models involves using the CSI process to estimate the conditional drift and score functions. This pre-sampling step enables us to capture the key characteristics of the conditional distribution
$\mathbb{P}_{Y_1 \mid X}$.  In the sampling step, we construct a Markov process, including a diffusion process $Z_{t,X}^{\sde}$ or the corresponding probability flow $Z_{t,X}^{\ode}$, using either the estimated conditional drift or score function.

The processes $Z_{t,X}^{\sde}$ and  $Z_{t,X}^{\ode}$  possess the marginal preserving property, i.e., their time-dependent conditional distributions follow $\mathbb{P}_{Y_t|X}$, despite their different transfer paths. Consequently, the processes $Z_{1,X}^{\sde}$ and $Z_{1,X}^{\ode}$ can be used to generate samples from $\mathbb{P}_{Y_1 \mid X}$. In practice, the Euler-Maruyama and stochastic Runge-Kutta methods \citep{platen2010numerical} are commonly employed for solving the corresponding ODEs/SDEs.

\subsection{ODE-based generative model} We present an ODE-based conditional generative model with CSI flow. In the training step, we estimate the drift function by solving a nonparametric least squares regression problem (\ref{Lb}).
We then generate new samples by solving the ordinary differential equation (ODE) using the estimated drift function.

\begin{definition}[ODE Flow] \label{def_flow}
Suppose $Y_0 \sim \mathbb{P}_{Y_0}$, and $ Y_1 \mid X \sim \mathbb{P}_{Y_1 \mid X} $, where $Y_0$ is independent of $(X, Y_1)$. The conditional stochastic flow $Z_{t,X}^{\ode}$
is defined by an ordinary differentiable model,
\begin{align} \label{flow_equa}
 \mathrm{d} Z_{t,X}^{\ode} = \boldsymbol{b}^*(X, Z_{t,X}^{\ode}, t)\mathrm{d}t, \ t \in (0,1) \text{ with }  Z_{0,X}^{\ode}\sim P_{Y_0}.
\end{align}
Equation (\ref{flow_equa}) describes a deterministic motion, which starts from a random initial value
$Y_0$ and ends at a random value
that follows the conditional distribution
$\mathbb{P}_{Y_1|X}.$
\end{definition}

A simple example of CSI Flow is the conditional rectified flow.
The rectified flow is introduced by  \citet{liu2022flow} and
uses a linear interpolation between $Y_0$ and $Y_1$.

\begin{example}[Conditional rectified flow]\label{def_rectflow}
Suppose $Y_0 \sim \mathbb{P}_{Y_0}$, and $ Y_1 \mid X \sim \mathbb{P}_{Y_1 \mid X}$, where $Y_0$ is independent of $X$. The conditional rectified flow is realized by setting $\mathcal{I}(\mathbf{y}_0, \mathbf{y}_\mathbf{x}, t) = t\mathbf{y}_\mathbf{x} + (1-t)\mathbf{y}_0$ and $\gamma(t) = 0$.
The corresponding ODE Flow for sampling is
    \begin{align*}
        \mathrm{d} Z_{t,X} = \boldsymbol{b}_{\text{RF}}(X,Z_{t,X}, t) \mathrm{d}t,\
          t \in (0,1) \text{ with }
        Z_{0,X}\sim P_{Y_0},
    \end{align*}
    where the drift function $\boldsymbol{b}_{\text{RF}}$
can be obtained by solving
 \begin{align*}
        \min_{\boldsymbol{b}} {\int_0^1 \mathbb{E} \| Y_1 - Y_0 - \boldsymbol{b}(X, Y_t,t) \|^2   \ \mathrm{d}t}, \  {Y}_t = t {Y}_1 +(1-t) Y_0.
 \end{align*}
\end{example}

\subsection{SDE-based generative model} \label{sub_sde}

In this subsection, we present the SDE-based generative model, in which a stochastic noise is included for introducing randomness and capturing the inherent uncertainty in the system being modeled.  An adaptive function $u(t)$ is also incorporated to adjust the dynamic noise level during the generation process.

\begin{definition}[Conditional stochastic interpolation diffusion] \label{def_process}
    Suppose that $Y_0 \sim \mathbb{P}_{Y_0}$, and $ Y_1 \mid X \sim \mathbb{P}_{Y_1 \mid X}$, where $Y_0$ is independent of $X$.  The conditional stochastic interpolation diffusion
    $Z_{t,X}^{\sde}$  induced from $\mathbb{P}_{Y_0}$ and $\mathbb{P}_{Y_1|X}$
    is defined  as
    \begin{align} \label{sde_equ}
        \mathrm{d} Z_{t,X}^{\sde} = \boldsymbol{b}_u^*(X, Z_{t,X}^{\sde}, t) \mathrm{d}t + \sqrt{2u(t)} \mathrm{d} W_t, \  t \in (0,1) \text{ with }  Z_{0,X}^{\sde}\sim P_{Y_0},
    \end{align}
    where $\{W_t,t\in[0,1]\}$ is a Wiener process, and $\boldsymbol{b}_u^* : \mathcal{X} \times \mathbb{R}^{d} \times [0,1]  \rightarrow \mathbb{R}^{d}$ is defined as $$\boldsymbol{b}_u^*(X, Z_t, t) := \boldsymbol{b}^*( X, Z_t, t) + u(t)\boldsymbol{s}^*(X, Z_t,t).$$
\end{definition}
The SDE-based generative models incorporate the diffusion term $\sqrt{2u(t)}\mathrm{d}W_t$ into the process, which differs from the model proposed in \citet{albergo2023stochastic},  where the variance of noise remains constant. The diffusion function $u(t)$ in Definition \ref{def_process} offers an enhanced flexibility and adaptability to temporal changes. With a proper adaptive diffusion function $u(t)$, the drift function involved in SDE process
can be stabilized. Note that
\begin{align*}
	\boldsymbol{b}_u(\mathbf{x},\mathbf{y},t)
	=\boldsymbol{b}^*( \mathbf{x},\mathbf{y},t) - \frac{u(t)}{\gamma(t)} \mathbb{E}\left(\boldsymbol{\eta} \mid X= \mathbf{x}, Y_t=\mathbf{y} \right),
\end{align*}
then setting $u(t)=\gamma(t)$ yields
\begin{align*}
	\boldsymbol{b}_u(\mathbf{x},\mathbf{y},t) = \boldsymbol{b}^*(\mathbf{x},\mathbf{y},t) - \mathbb{E}[\boldsymbol{\eta} \mid X=\mathbf{x}, Y_t=\mathbf{y}],
\end{align*}
 which is more stable when $\mathbb{E}[\boldsymbol{\eta} \mid X=\mathbf{x}, Y_t=\mathbf{y}]$ is controllable.

The SDE given in (\ref{sde_equ}) describes a different motion from the process (\ref{cond_stoch_interp_equa}). This process also satisfies the Fokker-Planck equation \citep[Section 3.3.3]{ottinger2012stochastic}.

\begin{example}[Conditional stochastic linear interpolation diffusion]
Suppose $Y_0 \sim \mathbb{P}_{Y_0}$, and $ Y_1 \mid X \sim \mathbb{P}_{Y_1 \mid X}$,  where $Y_0$ is independent of $X$. Let $\gamma(t)=u(t) = \sqrt{2t(1-t)}.$
A conditional stochastic linear interpolation diffusion induced from $\mathbb{P}_{Y_0}$ and
$\mathbb{P}_{Y_1 \mid X}$ is
{\small
    \begin{align*}
        \mathrm{d}Z_{t,X} = \boldsymbol{b}(X,Z_{t,X}, t)\mathrm{d}t +\sqrt{2t(1-t)}\boldsymbol{s}(X,Z_{t,X}, t)\mathrm{d}t + (8 t(1-t))^{1/4} d W_t,\text{ } t \in (0,1),
    \end{align*}
} where  $Z_{0,X}\sim P_{Y_0}$ and the conditional drift and score functions $\boldsymbol{b} $ and $\boldsymbol{s} $ can be obtained by solving the minimization problems:
    \begin{align*}
        &\min_{\boldsymbol{b}} {\displaystyle \int_0^1 \mathbb{E} \| Y_1 - Y_0 + \frac{2-4t}{\sqrt{2 t(1-t)}}\boldsymbol{\eta} - \boldsymbol{b}(X, Y_t,t) \|^2  \mathrm{d}t},   \\
        &\min_{\boldsymbol{s}} {\displaystyle \int_0^1 \mathbb{E} \| \frac{1}{\sqrt{2t(1-t)}} \boldsymbol{\eta} + \boldsymbol{s}(X, Y_t,t) \|^2  \mathrm{d}t},
    \end{align*}where $Y_t = t Y_1 +(1-t) Y_0 + \sqrt{2 t (1-t)}\boldsymbol{\eta}$.
\end{example}
Below, we consider the estimation of the conditional drift and score functions based on
a finite sample.

\section{Estimation of the conditional drift and score functions}
\label{estimation}
We use a neural network function class for estimating the conditional drift and score functions based on a finite set of observations. At the population level, the true drift and score functions, denoted as $\boldsymbol{b}^*$ and $\boldsymbol{s}^*$ respectively, can be identified across the interval $(0,1)$, as demonstrated in Lemma \ref{lemma_loss}. However, when working with empirical data, accurately estimating the entire processes $\{\boldsymbol{b}^*(\cdot, t), t \in (0, 1)\}$ and $\{\boldsymbol{s}^*(\cdot, t), t \in (0, 1)\}$ simultaneously is difficult. This is particularly so for the score function $\boldsymbol{s}^*$, which may exhibit unstable behavior at the boundary points. Consequently, we adopt a pointwise estimation approach for both functions.

We define respectively the $L^2$ risks for any $\boldsymbol{b}$ and $\boldsymbol{s}$ at time $t$ by
\begin{align}
    R^b_t(\boldsymbol{b}) &:= \mathbb{E} \left\| \partial_t {\mathcal{I}}(Y_0, Y_1, t) + \dot{\gamma}(t) \boldsymbol{\eta} - \boldsymbol{b}(X, Y_t,t) \right\|^2, \label{Rbt} \\
    R^s_t(\boldsymbol{s}) &:= \mathbb{E}   \left\| \gamma(t)^{-1} \boldsymbol{\eta} + \boldsymbol{s}(X, Y_t ,t) \right\|^2, \text{ for all }t \in (0, 1),  \label{Rst}
\end{align}
where $Y_t=\mathcal{I}(Y_0, Y_1,t)+\gamma(t)\boldsymbol{\eta}$ and the expectation is taken with respect to $\{Y_0, X, Y_1, \boldsymbol{\eta}\}.$

Assume that $\boldsymbol{b}^*$, $\boldsymbol{s}^* \in L^2(\mathbb{R}^{d+k},(0,1))^d$. Based on the basic properties of conditional expectation,
 the minimums of (\ref{Rbt}) and (\ref{Rst}) are achieved at
\begin{align}
    &\boldsymbol{b}^*(\mathbf{x},\mathbf{y},t) = \mathbb{E}\left[ \partial_t \mathcal{I}(Y_0, Y_1, t) + \dot{\gamma}(t) \boldsymbol{\eta} \vert X = \mathbf{x}, Y_t = \mathbf{y}\right] = \arg \min_{\boldsymbol{b}} R^b_t(\boldsymbol{b}), \label{bstar} \\
    &\boldsymbol{s}^*(\mathbf{x},\mathbf{y},t)
    % = -\gamma(t)^{-1} \boldsymbol{\kappa}^*(\mathbf{x},\mathbf{y},t)
    = \arg \min_{\boldsymbol{s}} R^b_t(\boldsymbol{s}),  \text{ for all }t \in (0, 1),  \label{sstar}
\end{align}
respectively for any $\mathbf{x} \in \mathcal{X}$ and $\mathbf{y} \in \mathbb{R}^d$.

For the $\boldsymbol{\kappa}^*$ defined in (\ref{kappa1}),
we define the population risk function for estimating $\boldsymbol{\kappa}^*$ as
\begin{align} \label{kstar}
    R^\kappa_t(\boldsymbol{\kappa}) := \mathbb{E} \| \boldsymbol{\eta} + \boldsymbol{\kappa}(X, Y_t ,t) \|^2, \ t \in (0, 1).
\end{align}
It is clear that $R^s_t(\boldsymbol{s}) = \gamma(t)^{-1} R^\kappa_t(\boldsymbol{\kappa})$
for $ \boldsymbol{\kappa} (\mathbf{x},\mathbf{y},t) := \gamma(t) \boldsymbol{s}(\mathbf{x},\mathbf{y},t).$
Therefore, for estimating the conditional score, we only need to estimate $\boldsymbol{\kappa}^*$, since $\gamma(t)$ is known.

Let $S:= \{(\mathbf{y}_{0,j}, (\mathbf{x}_{1,k}, \mathbf{y}_{1,k}), \boldsymbol{\eta}_{h})\}_{j,k,h}^{m,n,H}$ be a finite sample with independent and identically distributed observations, where $n$, $m$, and $H$ correspond to the number of samples of $Y_0$, $(X, Y_1)$, and $\boldsymbol{\eta}$, respectively. The sample sizes $m$, $n$, and $H$ may not be identical.
It can be shown that the effective sample size of $S$ in fact depends on $\min \{m, n, H\}$ in terms of the theoretical guarantee. Without loss of generality, we set $m = n = H$ as the sample size of $S$ and rewrite the indices of the sample as $S = \{D_k = (\mathbf{y}_{0,k}, (\mathbf{x}_k, \mathbf{y}_{1,k}), \boldsymbol{\eta}_k)\}_{k=1}^{n}$. Based on the effective sample $S$, for any given $t \in (0,1)$, we define the empirical risks for the drift and score functions as:
\begin{align} \label{loss}
    R_{t,n}^b(\boldsymbol{b}) &:= \frac{1}{n} \sum_{k=1}^n \left\| \partial_t \mathcal{I}(\mathbf{y}_{0,k}, \mathbf{y}_{1,k},t) + \dot{\gamma}(t) \boldsymbol{\eta}_k -\boldsymbol{b}(\mathbf{x}_k,\mathbf{y}_{t,k},t) \right\|^2, \\
    R_{t,n}^s(\boldsymbol{s}) &:= \frac{1}{n} \sum_{k=1}^n \gamma(t)^{-1} \left\| \boldsymbol{\eta}_k + \gamma(t)\boldsymbol{s}(\mathbf{x}_k, \mathbf{y}_{t,k},t) \right\|^2 \nonumber \\
    &= \gamma(t)^{-1}  \frac{1}{n} \sum_{k=1}^n \left\| \boldsymbol{\eta}_k + \boldsymbol{\kappa}(\mathbf{x}_k, \mathbf{y}_{t,k},t) \right\|^2 =: \gamma(t)^{-1} R_{t,n}^\kappa(\boldsymbol{\kappa}),
\end{align}
where $\mathbf{y}_{t,k} = \mathcal{I}(\mathbf{y}_{0,k}, \mathbf{y}_{1,k}, t) + \gamma(t) \boldsymbol{\eta}_k$. We minimize the empirical risks over a class of neural network functions
$\mathcal{F}_n = \{\boldsymbol{f}: \mathbb{R}^{k+d+1} \rightarrow \mathbb{R}^{d} \}$
described below.

We consider the commonly used feedforward neural networks, the multi-layer perceptrons (MLPs).
An MLP activated by the Rectified Linear Unit (ReLU) function can be described as a composite function $\boldsymbol{f}_{\boldsymbol{\theta}}: \mathbb{R}^{k+d+1} \rightarrow \mathbb{R}^d$, parameterized by $\boldsymbol{\theta},$ as follows:
$$\boldsymbol{f}_{\boldsymbol{\theta}} = \mathcal{L}_{\mathcal{D}} \circ \sigma \circ \mathcal{L}_{\mathcal{D}-1} \circ \sigma \circ \cdots \circ \sigma \circ \mathcal{L}_1 \circ \sigma \circ \mathcal{L}_0,$$
where $\sigma(x) = \max(0, x)$, $x \in \mathbb{R}$, represents the ReLU activation function applied element-wise to vectors. The function $\mathcal{L}_i(\mathbf{x}) = W_i \mathbf{x} + \mathbf{b}_i$ denotes the $i$-th linear function with a weight matrix $W_i \in \mathbb{R}^{p_{i+1} \times p_i}$ and a bias vector $\mathbf{b}_i \in \mathbb{R}^{p_{i+1}}$. Here, $p_i$ indicates the width (the number of neurons or computational units) of the $i$-th layer, for $i = 0, 1, \ldots, \mathcal{D}$. The notation $\boldsymbol{\theta}$ is used to collectively denote the weights and biases.

For the MLP $\boldsymbol{f}_{\boldsymbol{\theta}}$ defined above, we use $\mathcal{D}$ to denote its depth, $\mathcal{W}=\max \left\{p_1, \ldots, p_{\mathcal{D}}\right\}$ denote its width, $\mathcal{U}=\sum_{i=1}^{\mathcal{D}} p_i$ denote the number of neurons, and $\mathcal{S}=\sum_{i=0}^{\mathcal{D}-1}[p_i\times p_{i+1}+p_{i+1}]$ denote the size. We denote
the class of ReLU activated neural networks by
\begin{align*}
    \mathcal{F}_{\mathcal{D}, \mathcal{U}, \mathcal{W}, \mathcal{S}, \mathcal{B}(t)} := \{ \boldsymbol{f}_{\boldsymbol{\theta}}:& W_i\in \mathbb{R}^{p_{i+1} \times p_i},\mathbf{b}_i \in \mathbb{R}^{p_{i+1}}, i\in\{0,\ldots,\mathcal{D}\} %\\
    {\rm\ and\ } \Vert \boldsymbol{f}_{\boldsymbol{\theta}}(\cdot,t) \Vert_\infty \leq \mathcal{B}(t)  \},  %{\rm\ for\ } t \in [0,1] ,
\end{align*}
$t \in [0,1],$ where $\mathcal{B}: [0,1] \rightarrow [1,+\infty)$ is a function of $t.$
%is a boundary function.

In general, the estimation of  $\boldsymbol{b}^*$ and $\boldsymbol{s}^*$ is carried out separately. Specifically, the score function is estimated through $\gamma(t)^{-1}\boldsymbol{\kappa}^*$. So we denote the corresponding estimating neural networks by $\boldsymbol{b}_{\boldsymbol{\theta}}  \in \mathcal{F}_n$, $ \boldsymbol{s}_{\boldsymbol{\theta}^\prime} := \gamma(t)^{-1} \boldsymbol{\kappa}_{\boldsymbol{\theta}^\prime}$ with $  \boldsymbol{\kappa}_{\boldsymbol{\theta}^\prime} \in \mathcal{F}_n^\prime$, with parameters $\boldsymbol{\theta}$ and $\boldsymbol{\theta}^\prime,$ respectively, where $\mathcal{F}_n:= \mathcal{F}_{\mathcal{D}, \mathcal{U}, \mathcal{W}, \mathcal{S}, \mathcal{B}_b(t)}$ and $\mathcal{F}_n^\prime:= \mathcal{F}_{\mathcal{D}^\prime, \mathcal{U}^\prime,\mathcal{W}^\prime, \mathcal{S}^\prime, \mathcal{B}_\kappa(t)}$  for some integers $\mathcal{D}, \mathcal{U}, \mathcal{W}, \mathcal{S},\mathcal{D}^\prime, \mathcal{U}^\prime,\mathcal{W}^\prime, \mathcal{S}^\prime$ and
%boundary
functions $\mathcal{B}_b(t)$ and $\mathcal{B}_\kappa(t)$. For ease of presentation, we omit the subscript for the dependence of these hyperparameters on the sample size $n$ in subsequent discussions.

With a finite sample $S$, we define the estimators of the drift and score functions as the empirical risk minimizers (ERMs):
\begin{align} \label{bhat}
    &\hat{\boldsymbol{b}}_n(\cdot,t) \in \arg \min_{\boldsymbol{f} \in \mathcal{F}_n} R_{t,n}^b(\boldsymbol{f}), \\ \label{shat}
    & \hat{\boldsymbol{s}}_n(\cdot,t) := \gamma(t)^{-1}\hat{\boldsymbol{\kappa}}_n(\cdot,t)
    \text{ with }
    \hat{\boldsymbol{\kappa}}_n(\cdot,t) \in \arg \min_{\boldsymbol{f} \in \mathcal{F}_n^{\prime}} R_{t,n}^\kappa(\boldsymbol{f}),
\end{align}
for any $ t \in (0,1)$. With the estimated drift and score functions, we generate samples based on the following CSI Flow or CSI Diffusion:
   \begin{align} \label{flow_equa1}
        \mathrm{d}\widehat{Z}_{t,X}^{\ode} = \hat{\boldsymbol{b}}_n(X, \widehat{Z}_{t,X}^{\ode}, t)\mathrm{d}t, \  t \in (0,1),
    \end{align}
or
       \begin{align} \label{sde_equ1}
        \mathrm{d} \widehat{Z}_{t,X}^{\sde} =
         \hat{\boldsymbol{b}}_n( X, \widehat{Z}_{t,X}^{\sde}, t) \mathrm{d}t+ u(t)\hat{\boldsymbol{s}}_n(X, \widehat{Z}_{t,X}^{\sde},t) \mathrm{d}t+ \sqrt{2u(t)} \mathrm{d} W_t, \ t \in (0,1),
    \end{align}
with  $\widehat{Z}_{0,X}^{\ode}, \widehat{Z}_{0,X}^{\sde} \sim \mathbb{P}_{Y_0}.$

Finally, according to Theorem \ref{pro_score},  for $\mathcal{I}(Y_0,Y_1,t) := a(t) Y_0 + b(t) Y_1$ with $Y_0 \sim \mathcal{N}(\boldsymbol{0}, I_d),$  if an estimator of the drift function $\boldsymbol{b}^*$ is available, we can then obtain an estimator of the conditional score function directly by using (\ref{score_b}). This circumvents the necessity of solving a nonparametric regression problem for estimating $\boldsymbol{s}^*$. Such an approach bypasses the challenges associated with estimating a potentially unstable score function and reduces computational costs. We will illustrate this approach in the numerical example in Section \ref{sec_experiment}.

\section{Error bounds for the estimated conditional drift and score functions}
\label{drift-score-error}

As can be seen from equations (\ref{flow_equa1}) and (\ref{sde_equ1}), the quality of the generated samples is contingent upon the accuracy of the estimated conditional drift and score functions $\hat{\boldsymbol{b}}_n$ and $\hat{\boldsymbol{s}}_n$. In this section, we establish error bounds for $\hat{\boldsymbol{b}}_n$ and $\hat{\boldsymbol{s}}_n$. These bounds lead to the results in Section \ref{sec_theory}, which pertain to the error bounds for the distributions of the generated samples from equations (\ref{flow_equa1}) and (\ref{sde_equ1}).

We evaluate the quality of $\hat{\boldsymbol{b}}_n$ or $\hat{\boldsymbol{s}}_n$ via the corresponding excess risk, defined as the difference between the $L^2$ risks of ground truth and the estimator,
\begin{align*}
    R^b_t(\hat{\boldsymbol{b}}_n)- R^b_t(\boldsymbol{b}^*)
    &= \mathbb{E} \left[\| \partial_t \mathcal{I}(Y_0, Y_1, t) + \dot{\gamma}(t) \boldsymbol{\eta} - \hat{\boldsymbol{b}}_n(X, Y_t,t) \|^2 \mid S \right]\\
    & \ \ \ \ \ \ - \mathbb{E} \left\| \partial_t \mathcal{I}(Y_0, Y_1, t) + \dot{\gamma}(t) \boldsymbol{\eta}
    - \boldsymbol{b}^*(X, Y_t,t) \right\|^2 \\
    &= \mathbb{E} \left[\Vert \hat{\boldsymbol{b}}_n(X,Y_t,t) - \boldsymbol{b}^*(X,Y_t,t) \Vert^2 \mid S\right] \\
    R^s_t(\hat{\boldsymbol{s}}_n)- R^s_t(\boldsymbol{s}^*)
    &= \mathbb{E} \left[\| \gamma(t)^{-1} \boldsymbol{\eta} + \hat{\boldsymbol{s}}_n(X, Y_t,t) \|^2 \mid S\right]  - \mathbb{E} \left\| \gamma(t)^{-1} \boldsymbol{\eta} + \boldsymbol{s}^*(X, Y_t,t) \right\|^2\\
    &= \mathbb{E} \left[\Vert \hat{\boldsymbol{s}}_n(X,Y_t,t) - \boldsymbol{s}^*(X,Y_t,t) \Vert^2 \mid S \right].
\end{align*}

To establish upper bounds for the excess risks,  we first decompose the excess risks into two terms, namely, the stochastic error and the approximation error.
\begin{lemma} \label{lemma1}
    Suppose that Assumptions \ref{assumption0}, \ref{assumption0b} and \ref{assump_bound} hold. Given random sample  $S = \{D_k = (\mathbf{y}_{0,k}, (\mathbf{x}_k, \mathbf{y}_{1,k}), \boldsymbol{\eta}_k)\}_{k=1}^{n}$ and two specific classes of functions $\mathcal{F}_n$, $\mathcal{F}_n^\prime$, we have
    \begin{align*}
        \mathbb{E}_S \{ R^b_t(\hat{\boldsymbol{b}}_n)-R^b_t(\boldsymbol{b}^*)\}  \leq& \mathbb{E}_S\{R^b_t(\hat{\boldsymbol{b}}_n) - 2R^b_{t,n}(\hat{\boldsymbol{b}}_n) + R^b_t(\boldsymbol{b}^*)\} \\
        &+ 2 d \inf_{\boldsymbol{f} \in \mathcal{F}_n }  \mathbb{E} \Vert \boldsymbol{f}(X,Y_t,t) - \boldsymbol{b}^*(X,Y_t,t) \Vert^2, \\
        \mathbb{E}_S \{ R^s_t(\hat{\boldsymbol{s}}_n)-R^s_t(\boldsymbol{s}^*)\}
        \leq& \gamma(t)^{-1}  \mathbb{E}_S\{R^\kappa_t(\hat{\boldsymbol{\kappa}}_n) - 2R^\kappa_{t,n}(\hat{\boldsymbol{\kappa}}_n) + R^\kappa_t(\boldsymbol{\kappa}^*)\} \\
        &+ 2 d \gamma(t)^{-1} \cdot \inf_{\boldsymbol{f} \in \mathcal{F}_n^\prime} \mathbb{E} \Vert \boldsymbol{f}(X,Y_t,t) - \boldsymbol{\kappa}^*(X,Y_t,t) \Vert^2
    \end{align*}
     for $t\in(0,1).$
\end{lemma}

Lemma \ref{lemma1} shows that the excess risk can be bounded by the sum of stochastic error, approximation error.
The stochastic error arises from the randomness of data. The approximation error results from using neural networks to approximate the target function in a unbounded domain. Decomposing the total error into stochastic error and approximation error terms has been employed in the context of deep nonparametric estimation studies \citep{schmidt2020nonparametric, shen2022approximation}, while our Lemma \ref{lemma1} enables random vectors $\mathcal{I}(Y_0, Y_1,t)$, $\partial_t \mathcal{I}(Y_0, Y_1,t)$ to be unbounded encompassing normal and other practical distributions.

\subsection{Stochastic error}
To analyze the stochastic error of the estimated conditional drift and score functions with multi-dimensional output, we propose a new method for augmenting a function class, which facilitates the transformation of high-dimensional problems into more manageable low-dimensional ones.

Specifically, for neural networks in $\mathcal{F}_{n}$ with multi-dimensional output, for $i=1,\ldots,d,$ we let $\mathcal{F}_{ni}:= \{\boldsymbol{f}^{(i)}: \mathbb{R}^{k+d+1} \rightarrow \mathbb{R} \vert \boldsymbol{f} \in \mathcal{F}_{n} \}$ denote the neural networks in $\mathcal{F}_{n}$ with univariate output restricted to the $i$-th component of the original output.
For $i=1,\ldots,d$, we let
$$\tilde{\mathcal{F}}_{ni}:= \{\boldsymbol{f}^{(i)}: \mathbb{R}^{k+d+1} \rightarrow \mathbb{R}\mid \boldsymbol{f}^{(i)}\in {\mathcal{F}}_{ni} \}$$
be the class of neural networks with the same architecture as those in $\mathcal{F}_{ni}$ with all possible weight matrices and bias vectors.  It is worth noting that $\mathcal{F}_{ni}$ is related with $\mathcal{F}_{nj}$ for $i\not=j$ in the sense that functions in these two classes may share the same parameter in layers excluding the output one.
In contrast, $\tilde{\mathcal{F}}_{ni}$ is independent of $\tilde{\mathcal{F}}_{nj}$ for $i\not=j$. As a consequence, $\mathcal{F}_{ni} \subset \tilde{\mathcal{F}}_{ni}$ and $\mathcal{F}_n\subseteq \mathcal{F}_{n1} \times \ldots \times \mathcal{F}_{nd} \subset \tilde{\mathcal{F}}_{n1} \times \ldots \times \tilde{\mathcal{F}}_{nd}$.

 Similarly, we can define function classes
 $$\tilde{\mathcal{F}}^\prime_{ni}:=
  \{\boldsymbol{f}^{(i)}: \mathbb{R}^{k+d+1} \rightarrow \mathbb{R}\mid 
  \boldsymbol{f}^{(i)}\in\tilde{\mathcal{F}}^\prime_{ni} \}
  $$
  with $\mathcal{F}_n^\prime \subset \tilde{\mathcal{F}}_{n1}^\prime \times \ldots \times \tilde{\mathcal{F}}_{nd}^\prime.$
This approach allows us to decompose a high-dimensional function into a collection of one-dimensional functions, each belonging to a corresponding augmented function class.

We need to make assumptions on the tail properties of the interpolation function $\mathcal{I}(X,Y_t,t).$
For this purpose, we first state the definition of a sub-Gaussian condition as given below.

\begin{definition}[Sub-Gaussian condition] \label{def_subgauss}
    Given a constant $B \in \mathbb{R}^+$, the random vector $Z=(Z_1,\ldots,Z_d)^\top \in \mathbb{R}^d$ satisfies \textit{$B$-sub-Gaussian condition}, if $\mathbb{E}[\exp(\lambda \vert Z_d\vert )] \leq 2\exp\left(\frac{1}{2} B^2 \lambda^2\right)$, for all $ \lambda \in \mathbb{R}$, $i = 1, ..., d$.
\end{definition}

We also need assumption on the boundedness of both the drift function and the score function, stipulating that for given $t \in (0,1)$, $\boldsymbol{b}(X,Y_t,t)$ and $\boldsymbol{\kappa}(X,Y_t,t)$ is bounded.

\begin{assumption} \label{assump_bound}
    (a) There exists functions $\mathcal{B}_I(t), \mathcal{B}_{I^\prime}(t) : [0,1] \rightarrow (0,+\infty)$ such that random vectors $\mathcal{I}(Y_0,Y_1,t)$, $\partial_t \mathcal{I}(Y_0,Y_1,t)$ satisfies $\mathcal{B}_I(t)$-sub-Gaussian and $\mathcal{B}_{I^\prime}(t)$-sub-Gaussian conditions for all $t \in [0,1]$.
    (b) The drift and the score function satisfy  $\|\boldsymbol{b}^{*}(\cdot,t)\|_{\infty} \leq \mathcal{B}_b(t)$ and $\|\boldsymbol{\kappa}^{*}(\cdot,t)\|_{\infty} \leq \mathcal{B}_\kappa(t)$ for all $t \in (0,1)$, with $\mathcal{B}_b(t) > |\dot{\gamma}(t)|+\mathcal{B}_{I^\prime}(t)$ where $\mathcal{B}_b(t)$ and $\mathcal{B}_\kappa(t)$ are the boundary functions for the network classes defined in Section 4.
\end{assumption}

It is important to note that the constants $B_I(t)$ and $B_{I^{\prime}}(t)$ in the sub-Gaussian conditions depend on $t$. Moreover, the upper bounds for the drift and score functions vary with $t$, encompassing scenarios where the drift and score functions may be unbounded and thus covering a fairly general case.

We now present  upper bounds for the stochastic errors of the estimators of $\boldsymbol{b}^*$ and $\boldsymbol{\kappa}^*.$

\begin{lemma}  \label{stoc_b}
    Let $\boldsymbol{b}^*$ and $\boldsymbol{s}^*$ be the target functions defined in (\ref{bstar}) and (\ref{sstar})  under the conditional stochastic interpolation model. Suppose that Assumptions \ref{assumption0}, \ref{assumption0b} and \ref{assump_bound} hold. Then for any $t \in (0,1)$, the ERMs $\hat{\boldsymbol{b}}_n$ and $\hat{\boldsymbol{\kappa}}_n$ defined in (\ref{bhat}) and (\ref{shat}) satisfy
    \begin{align*}
            \mathbb{E}_S\{R^b_t(\hat{\boldsymbol{b}}_n ) - 2R^b_{t,n}(\hat{\boldsymbol{b}}_n ) + R^b_t(\boldsymbol{b}^* )\}
        &\leq c_0 \mathcal{B}^5_b(t) d(\log n)^3 \frac{\mathcal{S D} \log (\mathcal{S})}{n},\\
            \mathbb{E}_S\{R^\kappa_t(\hat{\boldsymbol{\kappa}}_n ) - 2R^\kappa_{t,n}(\hat{\boldsymbol{\kappa}}_n ) + R^\kappa_t(\boldsymbol{\kappa}^* )\}
            &\leq c_0^\prime \mathcal{B}^5_\kappa(t) d(\log n)^3 \frac{\mathcal{S^\prime D^\prime} \log (\mathcal{S^\prime})}{n}
    \end{align*}
    for $n \geq \max_{i = 1, ..., d} \{\operatorname{Pdim}( \tilde{\mathcal{F}}_{ni}), \operatorname{Pdim}( \tilde{\mathcal{F}}_{ni}^\prime ) \} / 2$, where $c_0, c_0^\prime>0$ are constants independent of $d, n, \mathcal{D}, \mathcal{W}, \mathcal{S}, \mathcal{D}^\prime, \mathcal{W}^\prime$ , and $\mathcal{S}^\prime$.
\end{lemma}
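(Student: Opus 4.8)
The plan is to view $\mathbb{E}_S\{R^b_t(\hat{\boldsymbol{b}}_n) - 2R^b_{t,n}(\hat{\boldsymbol{b}}_n) + R^b_t(\boldsymbol{b}^*)\}$ as a ratio-type empirical process over a ReLU class and bound it with the standard fast-rate least-squares argument, after two reductions: from the $d$-dimensional regression to $d$ scalar ones, and from the sub-Gaussian response to a bounded one by truncation. Fix $t$ and set $\xi := \partial_t\mathcal{I}(Y_0,Y_1,t) + \dot\gamma(t)\boldsymbol{\eta}$, so that $\boldsymbol{b}^*(X,Y_t,t) = \mathbb{E}[\xi\mid X,Y_t]$, and let $g_{\boldsymbol{f}}(D):=\|\xi-\boldsymbol{f}(X,Y_t,t)\|^2 - \|\xi-\boldsymbol{b}^*(X,Y_t,t)\|^2$, with $\mathbb{E}_n[g]:=\tfrac1n\sum_{k=1}^n g(D_k)$. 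Since $\mathbb{E}[\xi-\boldsymbol{b}^*(X,Y_t,t)\mid X,Y_t]=0$, one has $R^b_t(\boldsymbol{f}) - R^b_t(\boldsymbol{b}^*) = \mathbb{E}[g_{\boldsymbol{f}}]$ for non-random $\boldsymbol{f}$, $R^b_{t,n}(\boldsymbol{f}) - R^b_{t,n}(\boldsymbol{b}^*) = \mathbb{E}_n[g_{\boldsymbol{f}}]$, and $\mathbb{E}_S[R^b_{t,n}(\boldsymbol{b}^*)] = R^b_t(\boldsymbol{b}^*)$; hence the target equals $\mathbb{E}_S\big[\mathbb{E}[g_{\hat{\boldsymbol{b}}_n}\mid S] - 2\mathbb{E}_n[g_{\hat{\boldsymbol{b}}_n}]\big]$.

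For the first reduction, the additivity $\|\xi-\boldsymbol{f}\|^2 - \|\xi-\boldsymbol{b}^*\|^2 = \sum_{i=1}^d g^{(i)}_{\boldsymbol{f}^{(i)}}(D)$, where $g^{(i)}_h(D):=(\xi^{(i)}-h(X,Y_t,t))^2 - (\xi^{(i)}-\boldsymbol{b}^{*(i)}(X,Y_t,t))^2$ is a scalar least-squares excess loss with response $\xi^{(i)}$ and target $\boldsymbol{b}^{*(i)}=\mathbb{E}[\xi^{(i)}\mid X,Y_t]$, together with $\hat{\boldsymbol{b}}_n^{(i)}\in\mathcal{F}_{ni}\subset\tilde{\mathcal{F}}_{ni}$, yields $\mathbb{E}[g_{\hat{\boldsymbol{b}}_n}\mid S] - 2\mathbb{E}_n[g_{\hat{\boldsymbol{b}}_n}] \le \sum_{i=1}^d\sup_{h\in\tilde{\mathcal{F}}_{ni}}\big(\mathbb{E}[g^{(i)}_h]-2\mathbb{E}_n[g^{(i)}_h]\big)$. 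This is exactly the point of the augmented classes: the $\tilde{\mathcal{F}}_{ni}$ are mutually independent scalar-output ReLU classes, so the complexity entering each summand is $\operatorname{Pdim}(\tilde{\mathcal{F}}_{ni})$ and does not couple across coordinates, with $d$ appearing only through the explicit outer sum.

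For the second reduction, fix $i$. The response $\xi^{(i)}$ is sub-Gaussian (because $\|\partial_t\mathcal{I}\|_\infty\le B_I'$, $\dot\gamma(t)$ is bounded, and $\boldsymbol{\eta}^{(i)}\sim N(0,1)$) while $\|\boldsymbol{b}^{*(i)}\|_\infty\le\mathcal{B}$ and $\|h\|_\infty\le\mathcal{B}$; a truncation of the event $\{|\xi^{(i)}|> \mathcal{B}+M\}$ with $M\asymp\sqrt{\log n}$, as in the treatment of sub-Gaussian noise in the deep regression literature (e.g.\ \citealp{schmidt2020nonparametric, jiao2023deep}) and keeping the target $\boldsymbol{b}^{*(i)}$ fixed, reduces the problem---up to an $O(n^{-1})$ residual from the Gaussian tail---to one with response bounded by $O(\mathcal{B}+M)$. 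For that bounded loss the expansion $g^{(i)}_h = (h-\boldsymbol{b}^{*(i)})^2 - 2(\xi^{(i)}-\boldsymbol{b}^{*(i)})(h-\boldsymbol{b}^{*(i)})$ together with $\mathbb{E}[\xi^{(i)}-\boldsymbol{b}^{*(i)}\mid X,Y_t]=0$ gives $\mathbb{E}[g^{(i)}_h]=\mathbb{E}[(h-\boldsymbol{b}^{*(i)})^2]$ and the self-bounding variance inequality $\operatorname{Var}(g^{(i)}_h)\le c(\mathcal{B}+M)^2\mathbb{E}[g^{(i)}_h]$. Feeding this, the envelope $O(\mathcal{B}+M)$, and the empirical covering estimate $\log N(\varepsilon,\tilde{\mathcal{F}}_{ni},L^2(P_n))\lesssim\operatorname{Pdim}(\tilde{\mathcal{F}}_{ni})\log((\mathcal{B}+M)/\varepsilon)$ into a Bernstein-type uniform deviation inequality for least squares (the peeling/fixed-point argument as in \citet{jiao2023deep}) yields, for $n\ge\operatorname{Pdim}(\tilde{\mathcal{F}}_{ni})/2$, the bound $\mathbb{E}_S\sup_{h\in\tilde{\mathcal{F}}_{ni}}(\mathbb{E}[g^{(i)}_h]-2\mathbb{E}_n[g^{(i)}_h])\lesssim (\mathcal{B}+M)^{c}(\log n)^{c}\operatorname{Pdim}(\tilde{\mathcal{F}}_{ni})/n$.

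It remains to substitute the ReLU pseudo-dimension bound $\operatorname{Pdim}(\tilde{\mathcal{F}}_{ni})\le C\,\mathcal{S}\mathcal{D}\log\mathcal{S}$, sum over $i=1,\dots,d$, set $M\asymp\sqrt{\log n}$, use $\mathcal{B}>1$ to merge $(\mathcal{B}+M)^c(\log n)^c$ into $\mathcal{B}^5(\log n)^5$, and absorb absolute constants into $c_0$, giving the asserted $c_0\,d\,\mathcal{B}^5(\log n)^5\,\mathcal{S}\mathcal{D}\log\mathcal{S}/n$; the argument for $\boldsymbol{\kappa}^*$ is identical with $\xi$ replaced by $-\boldsymbol{\eta}$ (already exactly Gaussian, so only the Gaussian truncation is needed) and with $\mathcal{B},\mathcal{F}_{ni},\tilde{\mathcal{F}}_{ni},\mathcal{S},\mathcal{D}$ replaced by $\mathcal{B}',\mathcal{F}_n',\tilde{\mathcal{F}}'_{ni},\mathcal{S}',\mathcal{D}'$, producing $c_0'$. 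The step I expect to be the main obstacle is the careful handling of the unbounded noise: verifying that the truncation can be carried out so as to leave the target function unchanged, contribute only an $O(n^{-1})$ error, and still preserve the self-bounding variance structure (hence the fast $1/n$ rate rather than $1/\sqrt n$), and that the resulting uniform deviation bound is valid over the larger augmented class $\tilde{\mathcal{F}}_{ni}$ rather than over $\mathcal{F}_{ni}$ itself---this is where the interplay between the truncation level, the variance constant $(\mathcal{B}+M)^2$, and the covering-number exponent must be tracked to land on precisely the stated powers of $\mathcal{B}$ and $\log n$.
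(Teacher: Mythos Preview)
Your overall strategy matches the paper's: coordinate-wise decomposition into $d$ scalar least-squares problems over the augmented classes $\tilde{\mathcal{F}}_{ni}$, a truncation to handle the unbounded (sub-Gaussian) response, an empirical-process/fast-rate bound on each scalar problem, and finally the ReLU pseudo-dimension estimate $\operatorname{Pdim}(\tilde{\mathcal{F}}_{ni})\le C\,\mathcal{S}\mathcal{D}\log\mathcal{S}$.

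The one substantive difference is precisely in the step you flag as the main obstacle. You propose to truncate the response while \emph{keeping the target $\boldsymbol{b}^{*(i)}$ fixed}; the paper instead truncates \emph{both} response and target, defining
\[
\boldsymbol{b}^{*(i)}_{\beta_n}(\mathbf{x},\mathbf{y},t):=\mathbb{E}\big[\mathcal{T}_{\beta_n}(\xi^{(i)})\mid X=\mathbf{x},Y_t=\mathbf{y}\big],
\]
and works with $g^{(i)}_{\beta_n,h}:=(\mathcal{T}_{\beta_n}\xi^{(i)}-h)^2-(\mathcal{T}_{\beta_n}\xi^{(i)}-\boldsymbol{b}^{*(i)}_{\beta_n})^2$. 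This restores an exact bounded-response least-squares structure (so Theorem~11.4 of \citet{gyorfi2002distribution} applies directly, with its explicit constants), and the discrepancy $|g^{(i)}_h-g^{(i)}_{\beta_n,h}|$---which now includes the target change---is bounded pointwise by $8\beta_n|\xi^{(i)}|\,\chi\{|\xi^{(i)}|>\beta_n\}$ and then in expectation by a sub-exponential tail. This sidesteps the bias that would otherwise appear from $\mathbb{E}[(\xi^{(i)}-\boldsymbol{b}^{*(i)})\,\chi\{|\xi^{(i)}|\le\beta_n\}\mid X,Y_t]\neq 0$, which is exactly the difficulty you anticipated.

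Two smaller bookkeeping differences: (i) the paper takes $\beta_n\asymp \mathcal{B}\log n$ (not $\sqrt{\log n}$), and the factor $\mathcal{B}^5(\log n)^5$ comes out of the G\"yorfi constant $5136\,\beta_n^4/n$ times the covering-number logarithm $\operatorname{Pdim}\cdot\log(4e\mathcal{B}n^2/\operatorname{Pdim})$; (ii) the concrete concentration input is Theorem~11.4 of \citet{gyorfi2002distribution} rather than a generic Bernstein/peeling statement. With these adjustments your plan coincides with the paper's proof.
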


\subsection{Approximation error}
 The approximation error is related to the expressive power of the neural networks and
 the properties such as smoothness of the target functions $\boldsymbol{b}^{*(i)}$, $\boldsymbol{\kappa}^{*(i)}$, $ i = 1,\ldots, d$.  We restrict our discussion to the case where the components of the target functions belong to the class of locally Hölder smooth functions with a smoothness index $\beta>0$.
 
{\color{black}\begin{definition} \label{holder}[Locally Hölder smooth class]
    Suppose $\beta=\lfloor\beta\rfloor+r>0, r \in(0,1]$, where $\lfloor\beta\rfloor$ denotes the largest integer strictly smaller than $\beta$ and $\mathbb{N}_0$ denotes the set of non-negative integers. For a finite constant $B_0>0$, the locally Hölder class of functions $\mathcal{H}^\beta_{\text{loc}}\left(\mathbb{R}^{k+d}, B_0\right)$ is defined as
{\small
    \begin{align*}
        \mathcal{H}^\beta_{\text{loc}}\left(\mathbb{R}^{k+d}, B_0\right)
    & = \Big\{f:\mathbb{R}^{k+d} \rightarrow \mathbb{R}: \max _{\|\boldsymbol{\alpha}\|_1 \leq \lfloor\beta\rfloor, \mathbf{x} \in E }\left\|D^{\boldsymbol{\alpha}} f(\mathbf{x})\right\|_{\infty} \leq B_0, \\
     &\max _{\|{\boldsymbol{\alpha}}\|_1=\lfloor\beta\rfloor} \sup _{\mathbf{x} \neq \mathbf{y}, \mathbf{x},\mathbf{y} \in E } \frac{\left\vert D^{\boldsymbol{\alpha}} f(\mathbf{x})-D^{\boldsymbol{\alpha}} f(\mathbf{y})\right\vert}{\|\mathbf{x}-\mathbf{y}\|^r} \leq B_0 \text{, for any compact set } E \in \mathbb{R}^{k+d} \Big\},
    \end{align*}
}
where $D^{\boldsymbol{\alpha}}$ is the differential operator and
${\boldsymbol{\alpha}}=({\boldsymbol{\alpha}}^{(1)}, \ldots, {\boldsymbol{\alpha}}^{(k+d)})$ is a multi-index of order $\Vert {\boldsymbol{\alpha}} \Vert_1$ with
$D^{\boldsymbol{\alpha}} f=\frac{\partial^{\Vert {\boldsymbol{\alpha}} \Vert_1} f}{\partial \mathbf{x}_1^{{\boldsymbol{\alpha}}^{(1)}} \ldots \partial \mathbf{x}_d^{{\boldsymbol{\alpha}}^{(k+d)}}}.$ Here $\Vert {\boldsymbol{\alpha}} \Vert_1=|{\boldsymbol{\alpha}}^{(1)}|+\cdots + |{\boldsymbol{\alpha}}^{(k+d)}|.$
\end{definition}}

In simple terms, a locally Hölder smooth function on $\mathbb{R}^{k+d}$
is one that is Hölder smooth on every compact subset of $\mathbb{R}^{k+d}.$
Functions such as conditional drift or conditional denoising functions can be locally Hölder smooth without being globally Hölder smooth.

We proceed to derive approximation results for the conditional drift and denoising functions, building on the work of \citet{jiao2023deep}, which established approximation error bounds for scalar-output neural networks applied to Hölder smooth functions. However, \citet{jiao2023deep} focused on non-parametric least squares estimations for real-valued target functions that are H\"{o}lder smooth
with a bounded domain, such as $[0, 1]^d.$
In contrast, our problem involves target functions that receive unbounded, time-varying inputs
that are only locally H\"{o}lder smooth and produce multi-dimensional vector outputs. Consequently, we cannot directly apply the results from \citet{jiao2023deep}; instead, we must extend their results to accommodate unbounded domains.

\begin{lemma} \label{theorem2}
    Suppose that Assumptions \ref{assumption0} and \ref{assumption0b} are satisfied and there exists a constant $B_X$ such that $\Vert X \Vert_\infty \leq B_X$. {\color{black}For any $A_n \geq \max\{B_I, B_X, 1\}$, $t \in (0,1)$, assume that the drift function $\boldsymbol{b}^{*(i)}(\cdot,t)$
    and denoising function $\boldsymbol{\kappa}^{*(i)}(\cdot,t)$ belong to the locally H\"{o}lder  smooth classes $\mathcal{H}^\beta_{\text{loc}}\left(\mathbb{R}^{k+d}, \mathcal{B}_b(t)\right)$ and $\mathcal{H}^{\beta^\prime}_{\text{loc}}\left(\mathbb{R}^{k+d}, \mathcal{B}_\kappa(t)\right)$ with $\beta>0$, $\beta^\prime>0$ for $i = 1,\ldots, d$. }
     \renewcommand{\theenumi}{\roman{enumi}}
     \begin{enumerate}
         \item For any $U, V  \in \mathbb{N}^{+}$,  let $\mathcal{F}_{n}=\mathcal{F}_{\mathcal{D}, \mathcal{W}, \mathcal{U}, \mathcal{S}, \mathcal{B}_b(t)}$ be a class of neural networks with width $\mathcal{W}= 38(\lfloor\beta\rfloor+1)^2 3^{(k+d)} (k+d)^{\lfloor\beta\rfloor+2} (3+\left\lceil \log _2 U\right\rceil)U$, and depth $\mathcal{D}=21(\lfloor\beta\rfloor+1)^2 (3+\left\lceil \log _2 V\right\rceil)V+2 (k+d)$,  then the approximation error
         \begin{align*}
            \inf_{\boldsymbol{f} \in \mathcal{F}_n }  \mathbb{E} \Vert \boldsymbol{f}(X,Y_t,t) - \boldsymbol{b}^*(X,Y_t,t) \Vert^2 \leq& \inf_{\boldsymbol{f} \in \mathcal{F}_n, \atop \Vert \mathbf{y} \Vert_\infty \leq A_n } \Vert \boldsymbol{f}(\cdot,t) - \boldsymbol{b}^*(\cdot,t)\Vert_\infty^2\\
             &+ 4 \mathcal{B}_b(t) \exp \left(- \frac{A_n^2}{2 (\mathcal{B}_I(t) + \gamma(t))^2}\right),
        \end{align*}
where
\begin{align*}
            \mathop{\inf}_{
                \boldsymbol{f} \in \mathcal{F}_n,
        \atop
        \Vert \mathbf{y} \Vert_\infty \leq A_n} \Vert \boldsymbol{f}(\cdot,t) - \boldsymbol{b}^*(\cdot,t) \Vert_\infty \leq  19 \mathcal{B}_b(t)
        C(\beta) (2A_n)^\beta(U V)^{\frac{-2 \beta}{k+d}},
\end{align*} with $C(\beta) =  (\lfloor\beta\rfloor+1)^2 (k+d+1)^{\lfloor\beta\rfloor+(\beta \vee 1) / 2},
\text{ and } \beta \vee 1 = \max\{\beta, 1\}.$

        \item For any $U^\prime, V^\prime  \in \mathbb{N}^{+}$, let $\mathcal{F}_{n}^\prime=\mathcal{F}^\prime_{\mathcal{D}^\prime, \mathcal{W}^\prime, \mathcal{U}^\prime, \mathcal{S}^\prime, \mathcal{B}_\kappa(t)}$ be a class of neural networks with width $\mathcal{W}^\prime= 38(\lfloor\beta^\prime\rfloor+1)^2 3^{(k+d)} (k+d)^{\lfloor\beta^\prime\rfloor+2} (3+\left\lceil \log _2 U^\prime\right\rceil)U^\prime$, and depth $\mathcal{D}^\prime=21(\lfloor\beta^\prime\rfloor+1)^2 (3+\left\lceil \log _2 V^\prime\right\rceil)V^\prime+2 (k+d)$,  then the approximation error
        \begin{align*}
            \mathop{\inf}_{
                \boldsymbol{f} \in \mathcal{F}_n^\prime}  \mathbb{E} \Vert \boldsymbol{f}(X,Y_t,t) - \boldsymbol{\kappa}^*(X,Y_t,t) \Vert^2 \leq&   \inf_{\boldsymbol{f} \in \mathcal{F}_n^\prime, \atop \Vert \mathbf{y} \Vert_\infty \leq A_n  } \Vert \boldsymbol{f}(\cdot,t) - \boldsymbol{\kappa}^*(\cdot,t) \Vert_\infty^2 \\
                & + 4 \mathcal{B}_\kappa(t) \exp \left(- \frac{A_n^2}{2 (\mathcal{B}_I(t) + \gamma(t))^2}\right),
        \end{align*}
        where
         \begin{align*}
            \mathop{\inf}_{
                \boldsymbol{f} \in \mathcal{F}_n^\prime,
        \atop
        \Vert \mathbf{y} \Vert_\infty \leq A_n} \Vert \boldsymbol{f}(\cdot,t) - \boldsymbol{\kappa}^*(\cdot,t) \Vert_\infty \leq  19 \mathcal{B}_\kappa(t)
    C(\beta^{\prime})
        (2A_n)^{\beta^\prime}(U^\prime V^\prime)^{\frac{-2 \beta^\prime}{k+d}}.
        \end{align*}
 %where $t_1 \vee t_2:=\max \{t_1, t_2\}$.
     \end{enumerate}
\end{lemma}
It is worth noting that the approximation error bounds in Lemma \ref{theorem2} are non-standard as the input vector $(X, Y_t, t)$ may not be bounded, which leads to an additional truncation error term in our analysis. In particular, we truncate the domain to handle the approximation error for unbounded random vectors $\mathcal{I}(Y_0, Y_1,t)$, $\partial_t \mathcal{I}(Y_0, Y_1,t)$, and control the error induced by truncation to obtain the bounds in Lemma \ref{theorem2}.

\subsection{Nonasymptotic upper bounds} \label{non-asym-bounds}
Combining the upper bounds of stochastic error, approximation error and truncation error, we  obtain the non-asymptotic upper bounds for the excess risks of the estimators $\hat{\boldsymbol{b}}_n$ and $\hat{\boldsymbol{s}}_n$.
\begin{lemma} \label{theorem_bound}
    Suppose the conditions in Lemmas \ref{stoc_b}, \ref{theorem2}  are satisfied.
    \renewcommand{\theenumi}{\roman{enumi}}
    \begin{enumerate}
        \item For any $U, V  \in \mathbb{N}^{+}$, let $\mathcal{F}_{n}=\mathcal{F}_{\mathcal{D}, \mathcal{W}, \mathcal{U}, \mathcal{S}, \mathcal{B}_b(t)}$ be a class of neural networks with width $\mathcal{W}= 38(\lfloor\beta\rfloor+1)^2 3^{(k+d)} (k+d)^{\lfloor\beta\rfloor+2} (3+\left\lceil \log _2 U\right\rceil)U$, and depth $\mathcal{D}=21(\lfloor\beta\rfloor+1)^2 (3+\left\lceil \log _2 V\right\rceil)V+2 (k+d)$,  then  for  $n \geq \max_{i = 1, ..., d} \operatorname{Pdim}( \tilde{\mathcal{F}}_{ni})/2$ and any $t \in (0, 1)$, we have
        \begin{align*}
    & \mathbb{E}_S \Vert \hat{\boldsymbol{b}}_n(X,Y_t,t) - \boldsymbol{b}^*(X,Y_t,t) \Vert^2 \\ &\leq   722 d \mathcal{B}_b(t)^2(\lfloor\beta\rfloor+1)^4 (k+d)^{2\lfloor\beta\rfloor+(\beta \vee 1)}(2 A_n)^{2 \beta}(U V)^{\frac{-4 \beta}{k+d}} \\
    &\ \ \ + c_0 d \mathcal{B}_b(t)^5 (\log n)^3 \frac{\mathcal{S D} \log (\mathcal{S})}{n} + 8 d \mathcal{B}_b(t) \exp \left(- \frac{A_n^2}{2 (\mathcal{B}_{I}(t) + \gamma(t))^2}\right),
        \end{align*}
        where $c_0>0$ is a universal constant.
        \vspace*{0.2cm}

        \item For any $U^\prime, V^\prime  \in \mathbb{N}^{+}$, let $\mathcal{F}_{n}^\prime=\mathcal{F}^\prime_{\mathcal{D}^\prime, \mathcal{W}^\prime, \mathcal{U}^\prime, \mathcal{S}^\prime, \mathcal{B}_\kappa(t)}$ be a class of neural networks with width $\mathcal{W}^\prime= 38(\lfloor\beta^\prime\rfloor+1)^2 3^{(k+d)} (k+d)^{\lfloor\beta^\prime\rfloor+2} (3+\left\lceil \log _2 U^\prime\right\rceil)U^\prime$, and depth $\mathcal{D}^\prime=21(\lfloor\beta^\prime\rfloor+1)^2 (3+\left\lceil \log _2 V^\prime\right\rceil)V^\prime+2 (k+d)$,  then for  $n \geq \max_{i = 1, ..., d} \operatorname{Pdim}( \tilde{\mathcal{F}}_{ni})/2$, we have
        \begin{align*}
    & \mathbb{E}_S \Vert \hat{\boldsymbol{s}}_n(X,Y_t,t) - \boldsymbol{s}^*(X,Y_t,t) \Vert^2 \\
    &\leq  722 d \mathcal{B}_\kappa(t)^2 \gamma(t)^{-1} (\lfloor\beta^\prime \rfloor+1)^4(k+d)^{2\lfloor\beta^\prime \rfloor+(\beta^\prime \vee 1)}(2A_n)^{2\beta^\prime}(U^\prime V^\prime)^{\frac{-4 \beta^\prime}{k+d}}\\
        &\ \ \ + c_0^\prime \gamma(t)^{-1}d \mathcal{B}_\kappa(t)^5 (\log n)^3 \frac{\mathcal{S}^\prime \mathcal{D}^\prime \log (\mathcal{S}^\prime)}{n} + 8 d \gamma(t)^{-1} \mathcal{B}_\kappa(t) \exp \left(- \frac{A_n^2}{2 (\mathcal{B}_{I}(t) + \gamma(t))^2}\right)
        \end{align*}
    \end{enumerate}
  and $c_0^\prime>0$ is a universal constant.
\end{lemma}

The nonasymptotic upper bounds in Lemma \ref{theorem_bound} are obtained directly by the summation of stochastic error and approximation error. To achieve the best trade-off of these errors, we can determine a proper network architecture in terms of its depth, width,  and size.
We now state the result on the error bounds for the estimated conditional drift and score functions.

\begin{theorem} \label{corollary_bound}
    Suppose the conditions in Lemmas \ref{stoc_b}, \ref{theorem2}  are satisfied.
    \renewcommand{\theenumi}{(\roman{enumi})}
        \begin{enumerate}
            \item Let $U= n^{(k+d)/(8\beta + 4k+4d)}$ and $V=n^{(k+d)/(8\beta + 4k+4d)},$ for $t \in (0,1)$,
            we have
            \begin{align*}
      \mathbb{E}_S \Vert \hat{\boldsymbol{b}}_n(X,Y_t,t) - \boldsymbol{b}^*(X,Y_t,t) \Vert^2 = H_b(t) O \left(n^{\frac{-2\beta}{2\beta+k+d}} \log^{\max\{8,\beta\}} n\right),
            \end{align*}
            where $H_b(t) := \mathcal{B}_b(t)^5 (\mathcal{B}_I(t)+\gamma(t))^{2\beta}$.
            \item  Let $U^\prime=n^{(k+d)/(8\beta + 4k+4d)}$ and $V^\prime= n^{(k+d)/(8\beta + 4k+4d)},$ for $t \in (0,1)$,
            we have
            \begin{align*}
                \mathbb{E}_S \Vert \hat{\boldsymbol{s}}_n(X, Y_t,t) - \boldsymbol{s}^*(X,Y_t,t) \Vert^2=\gamma(t)^{-1} H_\kappa(t) O \left(n^{\frac{-2\beta^\prime}{2\beta^\prime+k+d}} \log^{\max\{8,\beta^\prime\}} n\right),
            \end{align*}
            where $H_\kappa(t) := \mathcal{B}_\kappa(t)^5 (\mathcal{B}_I(t)+\gamma(t))^{2\beta^\prime}$.
        \end{enumerate}
\end{theorem}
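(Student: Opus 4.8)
The plan is to combine the three error-source bounds already assembled in Lemma~\ref{theorem_bound} and then balance the stochastic and approximation terms by the stated choices of $U,V$ (and $U',V'$), while showing the truncation term is asymptotically negligible for a suitable choice of the truncation radius $A_n$.

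\textbf{Step 1: Reduce to Lemma~\ref{theorem_bound}.} By the excess-risk identities preceding Lemma~\ref{lemma1}, $\mathbb{E}_S\|\hat{\boldsymbol{b}}_n(X,Y_t,t)-\boldsymbol{b}^*(X,Y_t,t)\|^2 = \mathbb{E}_S\{R^b_t(\hat{\boldsymbol{b}}_n)-R^b_t(\boldsymbol{b}^*)\}$, so it suffices to bound the right-hand side of Lemma~\ref{theorem_bound}(i), and similarly for $\hat{\boldsymbol{s}}_n$ via part (ii). Fix the truncation radius, say $A_n = C\log n$ for a constant $C$ large enough that $A_n \geq \max\{B_I,B_X,1\}$ eventually; this is the standard device for handling the unbounded (Gaussian) support of $Y_0$.

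\textbf{Step 2: Plug in $U,V$ and compute the exponent.} With $U=V=n^{(k+d+1)/(8\beta+4k+4d+4)}$, the approximation term in Lemma~\ref{theorem_bound}(i), namely $722\,d\,B_0^2 C^2(\beta)(2A_n)^{2\beta}(UV)^{-4\beta/(k+d+1)}$, has $(UV)^{-4\beta/(k+d+1)} = n^{-8\beta/(8\beta+4k+4d+4)} = n^{-2\beta/(2\beta+k+d+1)}$. The stochastic term is $c_0 d\mathcal{B}^5(\log n)^5\,\mathcal{S}\mathcal{D}\log\mathcal{S}/n$; with the prescribed width and depth formulas, $\mathcal{S}=O(\mathcal{W}^2\mathcal{D})$ and $\mathcal{D}$ are both polynomial in $U,V$ up to $\log$ factors, so $\mathcal{S}\mathcal{D}\log\mathcal{S} = \tilde{O}((UV)^2) = \tilde{O}(n^{2(k+d+1)/(4\beta+2k+2d+2)}) = \tilde{O}(n^{(k+d+1)/(2\beta+k+d+1)})$, whence $\mathcal{S}\mathcal{D}\log\mathcal{S}/n = \tilde{O}(n^{-2\beta/(2\beta+k+d+1)})$. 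Thus both terms match the target rate $n^{-2\beta/(2\beta+k+d+1)}$ up to polylogarithmic factors; tracking the powers of $\log n$ (at most $(\log n)^{2\beta}$ from $A_n^{2\beta}$ absorbed into a constant times a fixed log power, $(\log n)^5$ from the stochastic term, and the $\log\mathcal{S}=O(\log n)$ and $\log_2 U = O(\log n)$ factors from the architecture) gives the claimed $\log^{10}n$.

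\textbf{Step 3: Kill the truncation term.} The truncation term is $4\mathcal{B}d\,\Gamma(t,A_n)$ with $\Gamma(t,A_n)=d\,\Phi((B_I-A_n)/\gamma(t))$ for $t\in(0,1)$ and $0$ at the endpoints. Since $\gamma$ is continuous on $[0,1]$ it is bounded above, say by $\gamma_{\max}$, so for $A_n>B_I$ we have $\Phi((B_I-A_n)/\gamma(t)) \leq \Phi(-(A_n-B_I)/\gamma_{\max})$, which by the Gaussian tail bound $\Phi(-x)\leq e^{-x^2/2}$ is $\leq e^{-(A_n-B_I)^2/(2\gamma_{\max}^2)}$. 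With $A_n=C\log n$ this decays faster than any polynomial in $n$, hence is $o(n^{-2\beta/(2\beta+k+d+1)})$ uniformly in $t$, and can be absorbed. Taking the supremum over $t\in[0,1]$ is then immediate for the drift bound since none of the three terms in Lemma~\ref{theorem_bound}(i) depends on $t$ except through $\Gamma(t,A_n)$, which we have just bounded uniformly. For the score, the factor $\gamma(t)^{-1}$ is left explicit in the statement, so one simply carries it through and applies the same balancing to the bracketed quantity; the uniform-in-$t$ claim is not asserted there (only pointwise in $(0,1)$), so no further care is needed.

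\textbf{Main obstacle.} The delicate point is Step 2: verifying that the network size $\mathcal{S}$ and depth $\mathcal{D}$ induced by the width/depth formulas in Lemma~\ref{theorem2} really are of order $(UV)^2$ up to logs, so that the stochastic term balances the approximation term at exactly the exponent $-2\beta/(2\beta+k+d+1)$, and then bookkeeping the exact power of $\log n$ — each of the contributing factors ($A_n^{2\beta}$, the $(\log n)^5$ in Lemma~\ref{stoc_b}, $\log\mathcal{S}$, and the $\lceil\log_2 U\rceil,\lceil\log_2 V\rceil$ inside $\mathcal{W},\mathcal{D}$) must be counted to confirm the total does not exceed $\log^{10}n$. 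The prefactors in $d,k,B_0,\mathcal{B}$ are absorbed into the $O(\cdot)$ since they are held fixed. Everything else is routine substitution into the already-established Lemma~\ref{theorem_bound}.
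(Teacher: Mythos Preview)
Your approach is essentially the paper's: invoke Lemma~\ref{theorem_bound}, balance the stochastic and approximation terms via the stated $U,V$, and pick $A_n$ to make the truncation term negligible. The one substantive discrepancy is your choice $A_n = C\log n$. With that choice the approximation term carries a factor $(2A_n)^{2\beta} = O((\log n)^{2\beta})$, which for $\beta>5$ exceeds $\log^{10}n$, so your bound would read $O\big(n^{-2\beta/(2\beta+k+d+1)}(\log n)^{\max(10,\,2\beta)}\big)$ rather than the stated $\log^{10}n$. The paper instead takes $A_n = (\tfrac{1}{2}\log n)^{5/\beta}$, calibrated precisely so that $A_n^{2\beta} = O(\log^{10}n)$ matches the ten log powers coming from the stochastic side (five from Lemma~\ref{stoc_b} and five from the $\log^2 U\,\log^2 V\,(\log U+\log V)$ hidden in $\mathcal{S}\mathcal{D}\log\mathcal{S}$ once $\mathcal{S}\leq 2\mathcal{D}\mathcal{W}^2$ is used). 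Your Gaussian-tail argument for the truncation term is fine and is exactly what the paper does implicitly; the only correction needed is to swap in the paper's $A_n$ so that the log-power bookkeeping in your Step~2 actually lands on $10$.
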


Theorem \ref{corollary_bound} shows that the conditional drift function estimator $\hat{\boldsymbol{b}}_n$ and score function estimator $\hat{\boldsymbol{s}}_n$
achieve the nonparametric optimal minimax rate \citep{stone1982optimal} uniformly in $t \in (0, 1).$
There is no uniformity in the convergence of  $\hat{\boldsymbol{s}}_n$ due to the possible instability of the conditional score at the boundary points $t=0,1.$ This theorem is essential to establishing the error bounds for the distributions of the generated samples based on CSI Flow and CSI Diffusion.
It addresses the general case of the error incurred when estimating the conditional drift and score functions.

For the special case considered in Theorem \ref{pro_score}, where a linear relationship exists between the optimal score function $\boldsymbol{s}^*$ and the optimal drift $\boldsymbol{b}^*$, we can estimate the score function using the estimated drift $\hat{\boldsymbol{b}}_n$. In this case,
we can obtain better error bound for $\hat{\boldsymbol{s}}.$

 \begin{corollary} \label{corollary_linear}
    Suppose the conditions in Lemmas \ref{stoc_b}, \ref{theorem2}  are satisfied. Suppose CSI satisfies (\ref{interpb}),  $Y_0\sim \mathcal{N}(\mathbf{0},\mathbf{I}_d) $,
    % (\ref{bs1})
    and $\gamma(t) \neq 0, t \in (0, 1)$.
    Let $U^\prime=n^{(k+d)/(8\beta + 4k+4d)}$ and $V^\prime= n^{(k+d)/(8\beta + 4k+4d)}$. Define the conditional score estimator
    \begin{align} \label{estimator_s_2}
       \tilde{\boldsymbol{s}}_n(\mathbf{x},\mathbf{y},t) := \frac{b(t)}{A(t)}\hat{\boldsymbol{b}}_n(\mathbf{x},\mathbf{y},t) -\frac{\dot{b}(t)}{A(t)}\mathbf{y}.
    \end{align}
    We have
    \begin{align*}
        \mathbb{E}_S \Vert \tilde{\boldsymbol{s}}_n(X, Y_t,t) - \boldsymbol{s}^*(X,Y_t,t) \Vert^2=  \frac{H_b(t)b^2(t)}{A^2(t)} O \left(n^{\frac{-2\beta}{2\beta+k+d}} \log^{\max\{8,\beta\}} n \right),   \  t \in (0, 1).
    \end{align*}
    Suppose that $b(t)/A(t)$ is a bounded function in $(0, 1)$. Then
    \begin{align*}
    \sup_{t \in (0, 1)}     \mathbb{E}_S \Vert \tilde{\boldsymbol{s}}_n(X, Y_t,t) - \boldsymbol{s}^*(X,Y_t,t) \Vert^2= H_b(t)O \left(n^{\frac{-2\beta}{2\beta+k+d}} \log^{\max\{8,\beta\}}n \right).
    \end{align*}
 \end{corollary}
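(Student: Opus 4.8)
The plan is to reduce the claim to the already-established error bound for the conditional drift estimator, exploiting the exact linear identity between $\boldsymbol{s}^*$ and $\boldsymbol{b}^*$ that holds for additive interpolations. First I would check that the hypotheses of Corollary \ref{pro_score2} are in force: the interpolation is of the additive form (\ref{interpb}), $Y_0 \sim \mathcal{N}(\mathbf{0},\mathbf{I}_d)$, Assumptions \ref{assumption0} and \ref{assumption0b} hold (both being part of the conditions of Lemmas \ref{stoc_b} and \ref{theorem2}), $\gamma(t)\neq 0$ on $(0,1)$, and $b(t)/A(t)$, $\dot b(t)/A(t)$ are well-defined there. Corollary \ref{pro_score2} then gives, for every $(\mathbf{x},\mathbf{y})\in\mathcal{X}\times\mathbb{R}^d$ and $t\in(0,1)$,
\[
\boldsymbol{s}^*(\mathbf{x},\mathbf{y},t)=\frac{b(t)}{A(t)}\boldsymbol{b}^*(\mathbf{x},\mathbf{y},t)-\frac{\dot b(t)}{A(t)}\mathbf{y}.
\]

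Subtracting this from the definition (\ref{estimator_s_2}) of $\tilde{\boldsymbol{s}}_n$, the term $-\tfrac{\dot b(t)}{A(t)}\mathbf{y}$ cancels \emph{exactly}, leaving the pointwise identity
\[
\tilde{\boldsymbol{s}}_n(\mathbf{x},\mathbf{y},t)-\boldsymbol{s}^*(\mathbf{x},\mathbf{y},t)=\frac{b(t)}{A(t)}\bigl(\hat{\boldsymbol{b}}_n(\mathbf{x},\mathbf{y},t)-\boldsymbol{b}^*(\mathbf{x},\mathbf{y},t)\bigr).
\]
Next I would evaluate this at the random pair $(X,Y_t)$, take squared Euclidean norms, and take $\mathbb{E}_S$ over the sample; since $b(t)/A(t)$ is a deterministic scalar it pulls out, giving
\[
\mathbb{E}_S\bigl\Vert \tilde{\boldsymbol{s}}_n(X,Y_t,t)-\boldsymbol{s}^*(X,Y_t,t)\bigr\Vert^2=\Bigl(\frac{b(t)}{A(t)}\Bigr)^2\,\mathbb{E}_S\bigl\Vert \hat{\boldsymbol{b}}_n(X,Y_t,t)-\boldsymbol{b}^*(X,Y_t,t)\bigr\Vert^2.
\]
Because the prescribed widths, depths, and the choice $U'=V'=n^{(k+d+1)/(8\beta+4k+4d+4)}$ coincide with those required for the drift estimator in Theorem \ref{corollary_bound}(i), that theorem bounds the last factor by $O\bigl(n^{-2\beta/(2\beta+k+d+1)}\log^{10}n\bigr)$, uniformly in $t\in[0,1]$. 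Substituting yields the first display of the corollary.

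For the uniform statement, I would simply note that the drift bound just invoked is uniform in $t$; hence, under the extra hypothesis that $t\mapsto b(t)/A(t)$ is bounded on $(0,1)$, taking the supremum over $t\in(0,1)$ on both sides of the displayed equality gives $\sup_{t\in(0,1)}\mathbb{E}_S\Vert \tilde{\boldsymbol{s}}_n(X,Y_t,t)-\boldsymbol{s}^*(X,Y_t,t)\Vert^2=O\bigl(n^{-2\beta/(2\beta+k+d+1)}\log^{10}n\bigr)$. I do not expect a genuine obstacle here: the only points requiring care are confirming that the hypotheses of Corollary \ref{pro_score2} hold so that the $\mathbf{y}$-linear terms cancel exactly (not merely approximately), and recording that the drift error bound in Theorem \ref{corollary_bound}(i) is uniform in $t$, which is precisely what upgrades the pointwise estimate to the uniform one once $b/A$ is bounded.
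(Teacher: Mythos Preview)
Your proposal is correct and follows exactly the same route as the paper: invoke Corollary~\ref{pro_score2} to obtain the linear identity for $\boldsymbol{s}^*$, subtract so that the $-\tfrac{\dot b(t)}{A(t)}\mathbf{y}$ terms cancel, pull out the scalar $(b(t)/A(t))^2$, and apply Theorem~\ref{corollary_bound}(i) for the uniform-in-$t$ drift error bound. Your version is in fact slightly more carefully written than the paper's own proof, which displays the factor as $b(t)/A(t)$ rather than $(b(t)/A(t))^2$ when passing to the squared norm.
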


 A  simple example with a bounded $b(t)/A(t)$ is given in Example \ref{ex1} with the interpolation process $Y_t = (1-t) Y_0 + t Y_1 + \gamma(t)\boldsymbol{\eta}, \ t \in [0, 1],$
 where $\gamma(t) =  \sqrt{2t(1-t)}.$
 Comparing with the upper bound for $\hat{\boldsymbol{s}}_n$ in Theorem \ref{corollary_bound} (ii), the factor $\gamma^{-1}(t)$ is no longer needed in Corollary \ref{corollary_linear}.
 The bound in this corollary improves that given in Theorem \ref{corollary_bound} (ii), since
 for $\gamma(t) =  \sqrt{2t(1-t)},$
 $\gamma^{-1}(t)$ blows up near the boundary points $t=0, 1.$

\section{Sampling properties of CSI flows}
\label{sec_theory}

In this section, we study the sampling properties of the CSI Flow and CSI Diffusion models
and establish error bounds for the estimated distributions based on these models.

\subsection{Sampling distributions of CSI flows}
\label{sub_sampling}

In this subsection, we define the sampling distributions for data generated by CSI flows
with estimated conditional drift and score functions. We first summarize the definitions of the conditional drift and density functions associated with the CSI flow models.

\begin{itemize}
\item
\textbf{ODE Flow}:
For any given $\mathbf{x} \in \mathcal{X},$
the population ODE Flow defined in (\ref{flow_equa}) is
$$\mathrm{d} Z_{t,\mathbf{x}}^{\ode} = \boldsymbol{b}^*_\mathbf{x}(Z_{t,\mathbf{x}}^{\ode}, t)\mathrm{d}t, t \in (0,1) \text{ with } \ Z_{0,X}^{\ode} \sim \mathbb{P}_{Y_0},$$
where $\boldsymbol{b}^*_\mathbf{x}$ is the target conditional drift function.
The corresponding conditional density function  $\rho^{\text{ode}}_{\mathbf{x}}$ can be obtained by solving the transport equation:
\begin{align}
\label{fp_ode}
    \partial_t \rho+  \nabla_{\mathbf{z}} \cdot (\boldsymbol{b}_{\mathbf{x}}^*\rho)
    = 0, \  \rho (\cdot,0)= \rho^{*}_{\mathbf{x}}(\cdot,0).
\end{align}
The estimated ODE Flow defined in (\ref{flow_equa1}) is
   \begin{align*}
 \mathrm{d}\widehat{Z}_{t,X}^{\ode} =
  \hat{\boldsymbol{b}}_n(X, \widehat{Z}_{t,X}^{\ode}, t)\mathrm{d}t, , t \in (0,1) \text{ with }
      \   \widehat{Z}_{0,X}^{\ode} \sim \mathbb{P}_{Y_0},
    \end{align*}
    where $\hat{\boldsymbol{b}}_{\mathbf{x}}$ is the estimated conditional drift function.
  given in (\ref {bhat}).
Let $\hat{\rho}_{n,\mathbf{x}}^{\ode}$
be the solution to the Fokker-Planck equation
 \begin{align}
\label{fp_ode2}
 \partial_t \rho+\nabla_\mathbf{z} \cdot\left(\hat{\boldsymbol{b}}_{n,\mathbf{x}} \rho\right) = 0, \
 \rho (\cdot,0)= \rho^{*}_{\mathbf{x}}(\cdot,0).
 \end{align}

\item \textbf{SDE Diffusion}: For any given $\mathbf{x} \in \mathcal{X},$
the population CSI Diffusion defined in (\ref{sde_equ}),
$$
        \mathrm{d} Z_{t,\mathbf{x}}^{sde} = \boldsymbol{b}_{u,\mathbf{x}}^*(Z_{t,\mathbf{x}}^{sde}, t)\mathrm{d}t + \sqrt{2u(t)} \mathrm{d} W_t, \ t \in (0,1) \text{ with }   {Z}_{0,X}^{\sde} \sim \mathbb{P}_{Y_0},
$$
where
$\boldsymbol{b}_{u,\mathbf{x}}^*(\mathbf{z}, t) :=
\boldsymbol{b}_u^*(\mathbf{x}, \mathbf{z}, t) =
\boldsymbol{b}^*( \mathbf{x}, \mathbf{z}, t) + u(t)\boldsymbol{s}^*(\mathbf{x}, \mathbf{z},t)$.
The corresponding conditional density function
$\rho^{\sde}_{\mathbf{x}}$ satisfies the Fokker-Planck equation:
\begin{align} \label{fp_equa}
        \partial_t \rho +  \nabla_\mathbf{z} \cdot (\boldsymbol{b}_{u,\mathbf{x}}^*\rho) - u(t) \Delta_\mathbf{z} \rho = 0, \    \rho (\cdot,0)= \rho^{*}_{\mathbf{x}}(\cdot,0).
\end{align}
The estimated SDE Diffusion defined in (\ref{sde_equ1}) is
 \begin{align*}
        \mathrm{d} \widehat{Z}_{t,X}^{\sde} =
         \hat{\boldsymbol{b}}_{u,n,\mathbf{x}}( X, Z_{t,X}^{\sde}, t) \mathrm{d}t
        + \sqrt{2u(t)} \mathrm{d} W_t, \ t \in (0,1) \text{ with }   \widehat{Z}_{0,X}^{\sde} \sim \mathbb{P}_{Y_0},
         \
    \end{align*}
 where $\hat{\boldsymbol{b}}_{u,n,\mathbf{x}}(\mathbf{z},t) =\hat{\boldsymbol{b}}_{n,\mathbf{x}}(\mathbf{z},t) +u(t)  \hat{\boldsymbol{s}}_{n,\mathbf{x}}(\mathbf{z},t).$
Here $\hat{\boldsymbol{b}}_{\mathbf{x}}$  and $\hat{\boldsymbol{s}}_n$ are the estimated conditional drift and score functions given in (\ref {bhat}).
Let $\hat{\rho}_{n,\mathbf{x}}^{\sde}$ be the solution to the Fokker-Planck equation
    \begin{align}
\label{sde_equ2}
        \partial_t \rho+\nabla_\mathbf{z} \cdot\left(\hat{\boldsymbol{b}}_{u,n,\mathbf{x}} \rho \right) - u(t) \Delta \rho = 0, \  \rho(\cdot,0)= \rho^{*}_{\mathbf{x}}(\cdot,0).
        %=\mathcal{N}(\boldsymbol{0}, I_d).
    \end{align}
\end{itemize}

\begin{table}[H]
\caption{Functions and processes in the ODE-Flow and SDE-Diffusion. NA stands for not applicable.}
\begin{tabular}{llllc}
  \toprule
   &  Drift & Score & Flow & Conditional density \\
   \midrule
  Population Target & $\boldsymbol{b}^{*}_{\mathbf{x}}$ & $\boldsymbol{s}^{*}_{\mathbf{x}}$ & NA & $\rho^{*}_{\mathbf{x}} $  \\
  Population ODE-Flow &$ \boldsymbol{b}^{*}_{\mathbf{x}}$ & NA & $Z^{\ode}_{t,X}$ &
  $ \rho^{\ode}_{\mathbf{x}} $ \\
  Estimated ODE-Flow &$ \hat{\boldsymbol{b}}_{n,\mathbf{x}}$ & NA & $\widehat{Z}^{\ode}_{t,X}$ & $ \hat{\rho}^{\ode}_{n,\mathbf{x}}$  \\
  Population SDE-Diffusion&$\boldsymbol{b}^{*}_{\mathbf{x}}$  &
  $\boldsymbol{s}^{*}_{\mathbf{x}}$ &
  $Z^{\ode}_{t,X}$  & $ \rho^{\sde}_{\mathbf{x}} $  \\
  Estimated SDE-Diffusion&$\hat{\boldsymbol{b}}_{n,\mathbf{x}}$ &$\hat{\boldsymbol{s}}_{n,\mathbf{x}}$
  & $\widehat{Z}^{\sde}_{t,X}$  &  $\hat{\rho}^{\sde}_{\mathbf{x}} $\\
   \bottomrule
\end{tabular}
\end{table}

\subsection{The marginal preserving property} \label{sub_marginal}

The marginal preserving property means that both the CSI Flow and CSI Diffusion models follow the same conditional distribution of $Y_t \mid X$ for every $t \in [0,1]$ and $X \in \mathcal{X}$, despite traversing different paths. This equivalence ensures that, regardless of the method used, the generated samples follow the conditional distribution of $Y_t \mid X$.

The marginal preserving property for stochastic interpolations in the unconditional setting has been previously explored by \cite{albergo2023stochastic}. However, their analysis did not fully address the necessary assumptions for the existence and uniqueness of solutions to PDEs within the relevant function space. In this work, we extend their results to conditional stochastic interpolations, while providing standard regularity conditions on the drift and score functions associated with the process.
To ensure the existence and uniqueness of solutions to the Fokker-Planck equations, which describe the time evolution of probability densities under the influence of drift and diffusion processes, we invoke the following assumption on the drift and score functions. This assumption was used in Proposition 2 of \cite{bris2008existence}, which provides the theoretical underpinning for our analysis.
	
\begin{assumption} \label{assump_unique}
    For any $\mathbf{x} \in \mathcal{X}$, the functions $\boldsymbol{b}^*(\mathbf{x}, \cdot, \cdot)$ and $u(\cdot) \boldsymbol{s}^*(\mathbf{x}, \cdot, \cdot)$ satisfy
    \begin{align*}
        &\boldsymbol{b}^*(\mathbf{x}, \cdot, \cdot), u(\cdot) \boldsymbol{s}^*(\mathbf{x}, \cdot, \cdot) \in  (L^1(W^{1,1}_{loc}(\mathbb{R}^d), (0,1)))^d, \\
        &\nabla \cdot \boldsymbol{b}^*(\mathbf{x}, \cdot, \cdot), u(\cdot) \nabla \cdot \boldsymbol{s}^*(\mathbf{x}, \cdot, \cdot) \in L^1( L^\infty(\mathbb{R}^d), (0,1)), \\
        &\frac{\boldsymbol{b}^*(\mathbf{x}, \cdot, \cdot)}{1 + \Vert \mathbf{x} \Vert}, \frac{u(\cdot)  \boldsymbol{s}^*(\mathbf{x}, \cdot, \cdot)}{1 + \Vert \mathbf{x} \Vert} \in  (L^1(L^1 +L^\infty(\mathbb{R}^d), (0,1)))^d,
    \end{align*}
    where $\Vert \cdot \Vert$ is
    the Euclidean norm, the $u(t)$ satisfies the conditions in Definition \ref{def_process}, and the local Sobolev space $W^{1, 1}(\mathbb{R}^d)$ is defined as
    \begin{align*}
        W^{1, 1}_{loc}(\mathbb{R}^d):=\left\{f \in L^1_{loc}(\mathbb{R}^d): {D}^{\boldsymbol{\alpha}} f \in L^1_{loc} (\mathbb{R}^d), \forall \Vert {\boldsymbol{\alpha}} \Vert_1 \leq 1 \right\}.
    \end{align*}
    Here, $L_{\mathrm{loc}}^1(\mathbb{R}^d) :=\left\{f: \mathbb{R}^d \rightarrow \mathbb{R} \text { measurable }:\left.f\right|_K \in L^1(K), \forall K \subset \mathbb{R}^d, K \text { compact }\right\}$  is locally integrable function sapce.
\end{assumption}

\begin{lemma} \label{theorem_fp}
If Assumptions \ref{assumption0}, \ref{assumption0b} and  \ref{assump_unique} hold, then the marginal preserving property holds in the sense that, for any $\mathbf{x} \in \mathcal{X}$,
$$\rho^*_{\mathbf{x}}(\mathbf{z}, t) = \rho^{\text{ode}}_{\mathbf{x}}(\mathbf{z}, t) = \rho^{\text{sde}}_{\mathbf{x}}(\mathbf{z}, t), \ \text{ for any } \   ( \mathbf{z}, t) \in
 \mathbb{R}^d \times [0,1].
$$
\end{lemma}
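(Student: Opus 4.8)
The plan is to show that the target conditional density $\rho^*_{\mathbf{x}}$ is itself a solution of each of the two Cauchy problems (\ref{fp_ode}) and (\ref{fp_equa}) that \emph{define} $\rho^{\ode}_{\mathbf{x}}$ and $\rho^{\sde}_{\mathbf{x}}$, and then to appeal to the uniqueness of such solutions, which is precisely what Assumption \ref{assump_unique} buys us through Proposition 2 of \cite{bris2008existence}. First, fix $\mathbf{x}\in\mathcal{X}$. By Theorem \ref{pro_b}, $\rho^*_{\mathbf{x}}$ satisfies the transport equation $\partial_t\rho^*_{\mathbf{x}}+\nabla_{\mathbf{z}}\cdot(\boldsymbol{b}^*_{\mathbf{x}}\rho^*_{\mathbf{x}})=0$ with initial datum $\rho^*_{\mathbf{x}}(\cdot,0)$, which is exactly the problem (\ref{fp_ode}) solved by $\rho^{\ode}_{\mathbf{x}}$. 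Under Assumption \ref{assump_unique} the drift $\boldsymbol{b}^*(\mathbf{x},\cdot,\cdot)$ lies in the DiPerna--Lions regularity class, so (\ref{fp_ode}) has a unique solution in the relevant function space, giving $\rho^*_{\mathbf{x}}=\rho^{\ode}_{\mathbf{x}}$ on $\mathbb{R}^d\times[0,1]$.

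\textbf{The SDE part.}
Next I verify that $\rho^*_{\mathbf{x}}$ also solves the Fokker--Planck equation (\ref{fp_equa}). The key algebraic identity is that, by Definition \ref{defi1}, $\boldsymbol{s}^*_{\mathbf{x}}=\nabla_{\mathbf{z}}\log\rho^*_{\mathbf{x}}$, hence $\boldsymbol{s}^*_{\mathbf{x}}\rho^*_{\mathbf{x}}=\nabla_{\mathbf{z}}\rho^*_{\mathbf{x}}$ and therefore $\nabla_{\mathbf{z}}\cdot(\boldsymbol{s}^*_{\mathbf{x}}\rho^*_{\mathbf{x}})=\Delta_{\mathbf{z}}\rho^*_{\mathbf{x}}$. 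Using the decomposition $\boldsymbol{b}^*_{u,\mathbf{x}}=\boldsymbol{b}^*_{\mathbf{x}}+u(t)\boldsymbol{s}^*_{\mathbf{x}}$ together with the transport equation just established,
\[
\partial_t\rho^*_{\mathbf{x}}+\nabla_{\mathbf{z}}\cdot(\boldsymbol{b}^*_{u,\mathbf{x}}\rho^*_{\mathbf{x}})-u(t)\Delta_{\mathbf{z}}\rho^*_{\mathbf{x}}
= u(t)\big[\nabla_{\mathbf{z}}\cdot(\boldsymbol{s}^*_{\mathbf{x}}\rho^*_{\mathbf{x}})-\Delta_{\mathbf{z}}\rho^*_{\mathbf{x}}\big]=0,
\]
with the same initial datum $\rho^*_{\mathbf{x}}(\cdot,0)$. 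Thus $\rho^*_{\mathbf{x}}$ solves (\ref{fp_equa}); applying the uniqueness statement of Assumption \ref{assump_unique} to the pair $(\boldsymbol{b}^*_{u,\mathbf{x}},u)$ yields $\rho^*_{\mathbf{x}}=\rho^{\sde}_{\mathbf{x}}$. Combining the two parts gives $\rho^*_{\mathbf{x}}=\rho^{\ode}_{\mathbf{x}}=\rho^{\sde}_{\mathbf{x}}$.

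\textbf{Main obstacle.}
The delicate point is purely functional-analytic: ensuring that all the manipulations above are legitimate in the weak sense and that the uniqueness theorem genuinely applies. Concretely I must check (i) that $\rho^*_{\mathbf{x}}$ belongs to the solution class in which \cite{bris2008existence} has uniqueness --- on $(0,1)$ this is unproblematic when $\gamma(t)>0$, since then $\rho^*_{\mathbf{x}}(\cdot,t)$ is a Gaussian mollification of a probability measure, hence $C^\infty$, strictly positive, with all the required integrability, and the endpoints $t=0,1$ are handled by continuity of $t\mapsto\rho^*_{\mathbf{x}}(\cdot,t)$ (in the degenerate case $\gamma\equiv 0$ only the ODE flow is relevant, and uniqueness there needs only $L^\infty$ control plus the transported initial condition, not smoothness of the solution); (ii) that the score identity $\boldsymbol{s}^*_{\mathbf{x}}\rho^*_{\mathbf{x}}=\nabla_{\mathbf{z}}\rho^*_{\mathbf{x}}$ and the associated chain rule hold pointwise on $(0,1)$ by the same smoothness, and then distributionally up to the boundary; and (iii) that the drift/score regularity postulated in Assumption \ref{assump_unique} is exactly the hypothesis required by Proposition 2 of \cite{bris2008existence}, so that no additional conditions are needed. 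Verifying these regularity bookkeeping items, rather than the formal computation, is where the real work lies.
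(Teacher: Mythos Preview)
Your proposal is correct and follows essentially the same approach as the paper: show that $\rho^*_{\mathbf{x}}$ solves both the transport equation (\ref{fp_ode}) and the Fokker--Planck equation (\ref{fp_equa}) via the identity $\nabla_{\mathbf{z}}\cdot(\boldsymbol{s}^*_{\mathbf{x}}\rho^*_{\mathbf{x}})=\Delta_{\mathbf{z}}\rho^*_{\mathbf{x}}$, then invoke the uniqueness furnished by Assumption \ref{assump_unique} and Proposition 2 of \cite{bris2008existence}. The paper merely phrases the SDE half as a proof by contradiction, but the underlying computation is identical to yours; your added discussion of the regularity bookkeeping is more explicit than what the paper provides.
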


Lemma \ref{theorem_fp} shows the conditional density functions corresponding to the ODE Flow and
SDE Diffusion is the same as the target conditional distribution. Therefore,  In the rest of the paper, we refer to either $\rho^{\ode}_{\mathbf{x}}$ or $\rho^{\sde}_{\mathbf{x}}$ as $\rho^{*}_{\mathbf{x}}.$

\subsection{Error bounds for generated conditional distributions}
In this section, we study the error bounds for the
conditional distributions of the samples generated by the ODE-based and SDE-based models.
We use the 2-Wasserstein distance and the Kullback-Leibler (KL) divergence to measure the
difference between the distributions produced by the ODE-based and SDE-based models and the target conditional distribution, respectively.

For ODE-based models (the CSI Flow), the error bounds for the estimated drift function
can lead to a control of the 2-Wasserstein distance between the estimated density and its target. To further clarify this connection, we first state a lemma that is particularly relevant to the behavior of the ODE-based generators.

\begin{lemma} \label{lemma_w2}
    Suppose that Assumptions \ref{assumption0} and \ref{assumption0b} are satisfied. For any drift function $\boldsymbol{b}(\mathbf{x}, \mathbf{z}, t)$ satisfying $\left\|\boldsymbol{b}(\mathbf{x}, \mathbf{z}, t)-\boldsymbol{b}(\mathbf{x}, \mathbf{y}, t)\right\|_{\infty} \leq l\|\mathbf{z}-\mathbf{y}\|_{\infty}$ for some $l >0.$
    Let $Z_{t,X} := Z_t \mid X$  be obtained through $ \mathrm{d} Z_{t,X} = \boldsymbol{b}(X,Z_{t,X}, t)\mathrm{d}t$ for $t \in (0,1)$ with $Z_{0,X} = Y_0.$ Denote the conditional density of $Z_t = \mathbf{z} \mid X= \mathbf{x}$ by ${\rho}(\mathbf{x}, \mathbf{z}, t)= \rho_{\mathbf{x}}(\cdot,t)$. Then, we have
       \begin{align} \label{w2}
        \mathbb{E}_{\mathbf{x}}[W_2^2(\rho^*_{\mathbf{x}}(\cdot,t), \rho_{\mathbf{x}}(\cdot,t))]  \leq \exp \left(2 l+1\right) \int_{0}^{t} \mathbb{E} \Vert \boldsymbol{b}^*(\mathbf{x}, Z_{s,\mathbf{x}}^{ode}, s)-\boldsymbol{b}(\mathbf{x}, Z_{s,\mathbf{x}}^{ode}, s) \Vert^2 ds,
       \end{align}
       where $W_2^2(\cdot,\cdot)$ denotes the 2-Wasserstein distance.
\end{lemma}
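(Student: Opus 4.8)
The plan is to realize both conditional densities as laws of ordinary differential equation flows driven by the same initial law $\mathbb{P}_{Y_0}$, couple the two flows synchronously, and close the estimate with a Gr\"onwall argument. First I would invoke Theorem~\ref{pro_b}: for fixed $\mathbf{x}$ the target conditional density $\rho^{*}_{\mathbf{x}}$ solves the transport equation $\partial_t\rho^{*}_{\mathbf{x}}+\nabla_{\mathbf{y}}\cdot(\boldsymbol{b}^{*}_{\mathbf{x}}\rho^{*}_{\mathbf{x}})=0$ with initial datum $\rho^{*}_{\mathbf{x}}(\cdot,0)=p_{Y_0}$, so by the method of characteristics (equivalently, the superposition principle for the continuity equation) $\rho^{*}_{\mathbf{x}}(\cdot,t)$ is the law of the population flow $Z^{*}_{t,\mathbf{x}}$ solving $\mathrm{d}Z^{*}_{t,\mathbf{x}}=\boldsymbol{b}^{*}_{\mathbf{x}}(Z^{*}_{t,\mathbf{x}},t)\,\mathrm{d}t$ with $Z^{*}_{0,\mathbf{x}}\sim\mathbb{P}_{Y_0}$, as in Lemma~\ref{theorem_fp}. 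Drawing $Z^{*}_{0,\mathbf{x}}$ and $Z_{0,\mathbf{x}}$ as the \emph{same} random vector makes $(Z^{*}_{t,\mathbf{x}},Z_{t,\mathbf{x}})$ a valid coupling of $(\rho^{*}_{\mathbf{x}}(\cdot,t),\rho_{\mathbf{x}}(\cdot,t))$, hence
\[
W_2^2\big(\rho^{*}_{\mathbf{x}}(\cdot,t),\rho_{\mathbf{x}}(\cdot,t)\big)\le \mathbb{E}\big\|Z^{*}_{t,\mathbf{x}}-Z_{t,\mathbf{x}}\big\|^2 .
\]

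Next I would run Gr\"onwall on $\Delta_t:=Z^{*}_{t,\mathbf{x}}-Z_{t,\mathbf{x}}$, which satisfies $\Delta_0=0$ and $\dot\Delta_t=\boldsymbol{b}^{*}(\mathbf{x},Z^{*}_{t,\mathbf{x}},t)-\boldsymbol{b}(\mathbf{x},Z_{t,\mathbf{x}},t)$. Adding and subtracting $\boldsymbol{b}(\mathbf{x},Z^{*}_{t,\mathbf{x}},t)$, bounding $\boldsymbol{b}(\mathbf{x},Z^{*}_{t,\mathbf{x}},t)-\boldsymbol{b}(\mathbf{x},Z_{t,\mathbf{x}},t)$ by the Lipschitz hypothesis (with the sup-norm-to-Euclidean-norm constant folded into $l$) and using $2ab\le a^2+b^2$ on the resulting cross term, I get
\[
\frac{\mathrm{d}}{\mathrm{d}t}\|\Delta_t\|^2\le (2l+1)\,\|\Delta_t\|^2+\big\|\boldsymbol{b}^{*}(\mathbf{x},Z^{*}_{t,\mathbf{x}},t)-\boldsymbol{b}(\mathbf{x},Z^{*}_{t,\mathbf{x}},t)\big\|^2 .
\]
Taking expectations, applying Gr\"onwall's inequality on $[0,t]$ with $\Delta_0=0$, combining with the coupling bound and averaging over $\mathbf{x}\sim\mathbb{P}_X$ yields
\[
\mathbb{E}_{\mathbf{x}}\big[W_2^2(\rho^{*}_{\mathbf{x}}(\cdot,t),\rho_{\mathbf{x}}(\cdot,t))\big]\le e^{2l+1}\int_0^t\mathbb{E}\big\|\boldsymbol{b}^{*}(\mathbf{x},Z^{*}_{s,\mathbf{x}},s)-\boldsymbol{b}(\mathbf{x},Z^{*}_{s,\mathbf{x}},s)\big\|^2\,\mathrm{d}s ,
\]
which gives the stated estimate; since $Z^{*}_{s,\mathbf{x}}\sim\rho^{*}_{\mathbf{x}}(\cdot,s)$ is the interpolation $Y_s\mid X=\mathbf{x}$, the integrand is exactly the $L^2$ drift error controlled in Theorem~\ref{corollary_bound}.

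The delicate point is the first step. Under Assumptions~\ref{assumption0}--\ref{assumption0b} alone the drift $\boldsymbol{b}^{*}_{\mathbf{x}}$ need not be Lipschitz, so the well-posedness of its flow map — and hence the characteristic representation of $\rho^{*}_{\mathbf{x}}$ — must be justified through the DiPerna--Lions/Ambrosio theory for continuity equations with Sobolev-type drifts, which is precisely the role of Assumption~\ref{assump_unique} in Lemma~\ref{theorem_fp}; the same regularity should be invoked (or assumed) here. An alternative that avoids the flow altogether is to differentiate $W_2^2(\rho^{*}_{\mathbf{x}}(\cdot,t),\rho_{\mathbf{x}}(\cdot,t))$ directly along the optimal transport plan between the two solutions of their respective continuity equations, using the Benamou--Brenier first-variation formula; this reproduces the same constant $2l+1$ and keeps the argument at the level of densities. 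I expect this well-posedness/transport step to be the only real obstacle, the Gr\"onwall estimate and the norm bookkeeping (sup-norm Lipschitz bound versus the Euclidean norm in $W_2$, which affects only the numerical constant) being routine.
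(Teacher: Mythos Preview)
Your proposal is correct and follows essentially the same route as the paper: couple the population flow $Z^{ode}_{t,\mathbf{x}}$ (driven by $\boldsymbol{b}^*$) and the approximate flow $Z_{t,\mathbf{x}}$ (driven by $\boldsymbol{b}$) through a common initial point, differentiate $\mathbb{E}\|Z^{ode}_{t,\mathbf{x}}-Z_{t,\mathbf{x}}\|^2$, add and subtract $\boldsymbol{b}(\mathbf{x},Z^{ode}_{t,\mathbf{x}},t)$ to split into a Lipschitz term and an $L^2$ drift-error term, and close with Gr\"onwall. Your version retains the time integral $\int_0^t\cdots\,\mathrm{d}s$ (which the paper absorbs since $t\le 1$) and explicitly flags the well-posedness of the $\boldsymbol{b}^*$-flow via DiPerna--Lions/Ambrosio, a point the paper handles only implicitly through Assumption~\ref{assump_unique}.
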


By combining Lemma \ref{lemma_w2} with  Theorem \ref{corollary_bound}, we are able to derive error bounds for the W2-distance  between the sample distribution generated by the ODE Flow model and the target distribution.

\begin{theorem} \label{theorem_w2}
    Suppose $H_b(t)$ is integrable on $[0,1]$. Assuming that the conditions in Theorem \ref{corollary_bound}  and Lemma \ref{lemma_w2} are satisfied, we have
    \begin{align}
 \mathbb{E}_{S,X}[W_2^2(\rho^{*}_{X}(\cdot,1), \hat{{\rho}}_{n,X}^{ode}(\cdot,1))] = O \left(n^{\frac{-2\beta}{2\beta+k+d}} \log^{\max\{8,\beta\}}n \right).
    \end{align}
\end{theorem}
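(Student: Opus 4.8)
The plan is to chain together three results already established in the excerpt: the drift-estimation bound of Theorem \ref{corollary_bound}(i), the $W_2$ stability estimate of Lemma \ref{lemma_w2}, and the marginal-preserving identity of Lemma \ref{theorem_fp}. The strategy is to apply Lemma \ref{lemma_w2} to the estimated ODE Flow, reducing the $W_2$ error of the generated conditional distribution at $t=1$ to the $L^2$ error of the estimated drift at $t=1$, and then to substitute the rate from Theorem \ref{corollary_bound}(i) evaluated at the endpoint.

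In detail, I would first apply Lemma \ref{lemma_w2} with the drift $\boldsymbol{b}=\hat{\boldsymbol{b}}_n$. By the standing hypotheses (those of Lemma \ref{lemma_w2}), $\hat{\boldsymbol{b}}_n$ is Lipschitz with a constant $l$ that does not depend on the training sample $S$, and the process it generates is exactly $\widehat{Z}^{\ode}_{t,X}$ with conditional density $\hat{\rho}^{\ode}_{n,X}(\cdot,t)$. Taking $t=1$, Lemma \ref{lemma_w2} gives
\begin{align*}
 \mathbb{E}_{X}\big[W_2^2\big(\rho^{*}_{X}(\cdot,1),\,\hat{\rho}^{\ode}_{n,X}(\cdot,1)\big)\big]
 \;\le\; e^{2l+1}\,\mathbb{E}\big\|\boldsymbol{b}^{*}(X,\,\cdot\,,1)-\hat{\boldsymbol{b}}_n(X,\,\cdot\,,1)\big\|^2 ,
\end{align*}
the right-hand side being the expected squared drift-approximation error, integrated in the space variable against the conditional law that the Gr\"onwall estimate inside Lemma \ref{lemma_w2} produces. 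Here I would use the marginal-preserving property (Lemma \ref{theorem_fp}) to identify that conditional law at $t=1$ with $\rho^{*}_{X}(\cdot,1)=\mathbb{P}_{Y_1\mid X}$, i.e.\ the law of the interpolation endpoint $Y_1$ given $X$, so that the right-hand side coincides with the quantity bounded in Theorem \ref{corollary_bound}(i) at $t=1$.

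Next I would take $\mathbb{E}_S$ on both sides. Since $l$, and hence $e^{2l+1}$, is a deterministic constant, it factors out of the sample expectation, leaving $e^{2l+1}\,\mathbb{E}_S\mathbb{E}\|\hat{\boldsymbol{b}}_n(X,Y_1,1)-\boldsymbol{b}^{*}(X,Y_1,1)\|^2$. Theorem \ref{corollary_bound}(i), with the stated choice $U=V=n^{(k+d+1)/(8\beta+4k+4d+4)}$, bounds this uniformly over $t\in[0,1]$, hence in particular at $t=1$, by $O(n^{-2\beta/(2\beta+k+d+1)}\log^{10}n)$. Absorbing the constant $e^{2l+1}$ into the $O(\cdot)$ yields
\begin{align*}
 \mathbb{E}_{S,X}\big[W_2^2\big(\rho^{*}_{X}(\cdot,1),\,\hat{\rho}^{\ode}_{n,X}(\cdot,1)\big)\big]
 = O\!\left(n^{\frac{-2\beta}{2\beta+k+d+1}}\log^{10}n\right),
\end{align*}
which is the assertion.

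The step I expect to require the most care is the distribution-matching in the second paragraph: Lemma \ref{lemma_w2} delivers the drift error integrated against a conditional law coming from an ODE trajectory, and one must verify through the marginal-preserving property that this law is the interpolation law $\rho^{*}_{X}(\cdot,t)$ under which Theorem \ref{corollary_bound} is stated, so that the two bounds chain with no extra change-of-measure term; because $\hat{\boldsymbol{b}}_n$ and $\boldsymbol{b}^{*}$ are both uniformly bounded by $\mathcal{B}$, any residual discrepancy here could alternatively be controlled by a total-variation term, but the clean route is the identification above. The second delicate point is the sample-independent control of the Lipschitz constant $l$ of $\hat{\boldsymbol{b}}_n$, which is exactly what permits treating $e^{2l+1}$ as a universal constant; this is inherited from the hypotheses of Lemma \ref{lemma_w2}. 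All the error-trade-off bookkeeping — the truncation term $\Gamma(t,A_n)$ and the balance of stochastic and approximation errors behind the choice of $U$ and $V$ — is internal to Theorem \ref{corollary_bound}, so no further estimation is needed.
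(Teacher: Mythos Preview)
Your proposal is correct and follows essentially the same approach as the paper: apply Lemma~\ref{lemma_w2} with $\boldsymbol{b}=\hat{\boldsymbol{b}}_n$, invoke the marginal-preserving property (Lemma~\ref{theorem_fp}) to identify the law against which the drift error is integrated with the interpolation law $\rho^*_X(\cdot,t)$ (i.e., the law of $Y_t\mid X$), take expectation over $S$, and substitute the rate from Theorem~\ref{corollary_bound}(i). Your explicit attention to the distribution-matching step and to the sample-independence of the Lipschitz constant $l$ is exactly the care the paper's short proof leaves implicit.
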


For SDE-based models (the CSI Diffusion), the upper bound of the KL divergence between the induced density estimator and its target is determined by the error bounds associated with the estimators $\hat{\boldsymbol{b}}_n$ and $\hat{\boldsymbol{s}}_n$, which is stated in the following lemma.

\begin{lemma} \label{kl_lemma}
    Suppose that Assumptions \ref{assumption0}, \ref{assumption0b} are satisfied. Let $Z_{t,X}$
be defined according to $\mathrm{d} Z_{t,X} = \boldsymbol{b}_u(X,Z_{t,X}, t)\mathrm{d}t + \sqrt{2u(t)} \mathrm{d} W_t$ with $Z_{0,X} \sim P_{Y_0}$. Denote the corresponding time-dependent conditional density of $Z_{t,X}$ by ${\rho}_\mathbf{x}(\mathbf{z}, t)= {\rho}(\mathbf{x}, \mathbf{z}, t).$ Then, for any integrable functions $u $ and $u ^{-1}$ and any $t\in [0,1],$ we have
\begin{align} \label{kl}
        {\mathbb{E}_{\mathbf{x}}[\mathrm{KL}(\rho_{\mathbf{x}}^{*}(\cdot,t) \| {\rho}_{\mathbf{x}}(\cdot,t))]} \leq & \int_0^t \frac{1}{2u(s)}
        \mathbb{E} \Vert {\boldsymbol{b}(\mathbf{x},Z_{s,\mathbf{x}}^{sde}, s)} - \boldsymbol{b}^*(\mathbf{x},Z_{s,\mathbf{x}}^{sde}, s) \Vert^2 ds \\
        &+ \int_0^t \frac{u(s)}{2} \mathbb{E} \Vert {\boldsymbol{s}(\mathbf{x},Z_{s,\mathbf{x}}^{sde}, s)} - \boldsymbol{s}^*(\mathbf{x},Z_{s,\mathbf{x}}^{sde}, s) \Vert^2 ds. \nonumber
\end{align}
\end{lemma}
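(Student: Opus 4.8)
The plan is to differentiate the conditional relative entropy along the two Fokker--Planck flows and to use the dissipation supplied by the diffusion term to absorb both the drift- and the score-approximation error. Fix $\mathbf{x}\in\mathcal{X}$ and write $p(\cdot,t):=\rho^{*}_{\mathbf{x}}(\cdot,t)$, $q(\cdot,t):=\rho_{\mathbf{x}}(\cdot,t)$. By Lemma \ref{theorem_fp} (marginal preserving property), $p$ is the conditional density of the population SDE diffusion $Z^{\sde}_{t,\mathbf{x}}$ and solves $\partial_t p+\nabla\cdot(\boldsymbol{b}^{*}_{u,\mathbf{x}}p)-u(t)\Delta p=0$ with $\boldsymbol{b}^{*}_{u,\mathbf{x}}=\boldsymbol{b}^{*}_{\mathbf{x}}+u(t)\boldsymbol{s}^{*}_{\mathbf{x}}$, while $q$ solves the same equation with $\boldsymbol{b}_{u,\mathbf{x}}=\boldsymbol{b}_{\mathbf{x}}+u(t)\boldsymbol{s}_{\mathbf{x}}$; since both have initial datum $\rho^{*}_{\mathbf{x}}(\cdot,0)=\mathbb{P}_{Y_0}$, the quantity $t\mapsto\mathrm{KL}(p(\cdot,t)\,\|\,q(\cdot,t))$ vanishes at $t=0$.

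First I would compute the time derivative of $\int p\log(p/q)\,d\mathbf{z}$. Using $\int\partial_t p\,d\mathbf{z}=0$, one has $\frac{d}{dt}\mathrm{KL}(p\|q)=\int(\partial_t p)\log(p/q)\,d\mathbf{z}-\int(p/q)(\partial_t q)\,d\mathbf{z}$. Writing each Fokker--Planck equation in divergence form, $\partial_t p=-\nabla\cdot(\boldsymbol{b}^{*}_{u,\mathbf{x}}p-u\nabla p)$ and $\partial_t q=-\nabla\cdot(\boldsymbol{b}_{u,\mathbf{x}}q-u\nabla q)$, integrating by parts, and using $\nabla p=p\nabla\log p$ and $\nabla(p/q)=(p/q)\nabla\log(p/q)$, everything collapses to
\begin{align*}
 \frac{d}{dt}\,\mathrm{KL}(p\|q)=\int p\,\nabla\log\frac{p}{q}\cdot\bigl(\boldsymbol{b}^{*}_{u,\mathbf{x}}-\boldsymbol{b}_{u,\mathbf{x}}\bigr)\,d\mathbf{z}\;-\;u(t)\int p\,\Bigl\|\nabla\log\frac{p}{q}\Bigr\|^{2}\,d\mathbf{z},
\end{align*}
the last term being $-u(t)$ times the relative Fisher information, which serves as a dissipation reservoir.

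Next I would split $\boldsymbol{b}^{*}_{u,\mathbf{x}}-\boldsymbol{b}_{u,\mathbf{x}}=(\boldsymbol{b}^{*}_{\mathbf{x}}-\boldsymbol{b}_{\mathbf{x}})+u(t)(\boldsymbol{s}^{*}_{\mathbf{x}}-\boldsymbol{s}_{\mathbf{x}})$ and apply $a\cdot b\le\tfrac{\varepsilon}{2}\|a\|^{2}+\tfrac{1}{2\varepsilon}\|b\|^{2}$ to each of the two resulting cross terms, choosing the free parameter so that each contributes exactly $\tfrac{u(t)}{2}\int p\|\nabla\log(p/q)\|^{2}\,d\mathbf{z}$ (take $\varepsilon=u(t)$ for the drift term, and $\varepsilon=1$ for the score term after factoring out $u(t)$). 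The two such contributions add up to $u(t)\int p\|\nabla\log(p/q)\|^{2}$, exactly cancelling the dissipation term and leaving
\begin{align*}
 \frac{d}{dt}\,\mathrm{KL}(p\|q)\le \frac{1}{2u(t)}\int p\,\bigl\|\boldsymbol{b}^{*}_{\mathbf{x}}-\boldsymbol{b}_{\mathbf{x}}\bigr\|^{2}\,d\mathbf{z}+\frac{u(t)}{2}\int p\,\bigl\|\boldsymbol{s}^{*}_{\mathbf{x}}-\boldsymbol{s}_{\mathbf{x}}\bigr\|^{2}\,d\mathbf{z}.
\end{align*}
Integrating from $0$ to $t$, using $\mathrm{KL}(p(\cdot,0)\|q(\cdot,0))=0$ together with the integrability of $u$ and $u^{-1}$, and identifying $\int p(\cdot,s)\,\|\cdot\|^{2}\,d\mathbf{z}=\mathbb{E}\,\|\cdot(\mathbf{x},Z^{\sde}_{s,\mathbf{x}},s)\|^{2}$ (again by Lemma \ref{theorem_fp}, since $p(\cdot,s)$ is the law of $Z^{\sde}_{s,\mathbf{x}}$), yields (\ref{kl}); taking $\mathbb{E}_{\mathbf{x}}$ completes the argument.

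The main obstacle is rigor rather than algebra: I must justify differentiating under the integral sign and, crucially, that the integrations by parts leave no boundary contribution at infinity, i.e.\ that the fluxes $(\boldsymbol{b}^{*}_{u,\mathbf{x}}p-u\nabla p)\log(p/q)$ and $(p/q)(\boldsymbol{b}_{u,\mathbf{x}}q-u\nabla q)$ decay suitably and that $\nabla\log(p/q)\in L^{2}(p\,d\mathbf{z})$ for a.e.\ $t$. This is precisely where Assumption \ref{assump_unique} and the well-posedness result of \citet{bris2008existence} enter: they guarantee that $p$ and $q$ are weak solutions in the appropriate local Sobolev/integrability classes, with enough spatial decay to legitimize the manipulations. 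I expect to need a standard regularization/truncation argument — e.g.\ working first with a spatially mollified or truncated version of the entropy and passing to the limit via Fatou and dominated convergence — to turn the formal computation above into a proof; the convexity estimates themselves are routine.
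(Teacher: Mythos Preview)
Your proposal is correct and follows essentially the same route as the paper: differentiate $\mathrm{KL}(\rho^*_\mathbf{x}\|\rho_\mathbf{x})$ in time using the two Fokker--Planck equations, integrate by parts to obtain the cross term minus the relative Fisher information, then use Young's inequality so that the Fisher dissipation exactly absorbs the quadratic cost of $\nabla\log(p/q)$. The only cosmetic difference is that the paper first groups $(\boldsymbol{b}-\boldsymbol{b}^*)+u(\boldsymbol{s}-\boldsymbol{s}^*)$ and applies a single Cauchy--Schwarz followed by $\tfrac12\|a+b\|^2\le\|a\|^2+\|b\|^2$, whereas you apply the weighted Young inequality to the drift and score contributions separately; both yield identical constants, and your remarks about the regularity needed to justify the integrations by parts are apt (the paper proceeds formally there).
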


By integrating the results from Lemma \ref{kl_lemma} with those from Theorem \ref{corollary_bound}, we are able to derive error bounds for the Kullback-Leibler (KL) divergence between the sample distribution generated by the SDE Diffusion model and the target distribution.

\begin{theorem}\label{kl_theorem}
    Suppose that the conditions stated in Theorem \ref{corollary_bound} are satisfied.
    Furthermore, suppose that $H_b(t)u(t)^{-1}$ and $u(t)H_\kappa(t)\gamma(t)^{-1}$  are integrable on $[0,1],$ we have
        \begin{align*}
            {\mathbb{E}_{S,X}[\mathrm{KL}(\rho_{X}^{*}(\cdot,1) \| \hat{\rho}_{n, X}^{sde}(\cdot,1))]} = O \left(n^{\frac{-2\tilde{\beta}}{2\tilde{\beta}+k+d}} \log^{\max\{8,\tilde{\beta}\}}n \right) ,
        \end{align*} where $\tilde{\beta}=\min\{\beta,\beta^\prime\}$.
\end{theorem}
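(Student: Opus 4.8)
The plan is to feed the pointwise drift/score estimation rates of Theorem~\ref{corollary_bound} into the KL decomposition of Lemma~\ref{kl_lemma}. First I would apply Lemma~\ref{kl_lemma} with $\boldsymbol{b}=\hat{\boldsymbol{b}}_n$ and $\boldsymbol{s}=\hat{\boldsymbol{s}}_n$, so that the process $Z_{t,X}$ there is precisely the estimated SDE Diffusion $\widehat{Z}^{\sde}_{t,X}$ of (\ref{sde_equ1}), its conditional density is $\hat{\rho}^{\sde}_{n,\mathbf{x}}$, and, by the marginal preserving property (Lemma~\ref{theorem_fp}), the reference density $\rho^{*}_{\mathbf{x}}$ is the conditional density of the population SDE Diffusion. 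The integrability hypotheses of Lemma~\ref{kl_lemma} (that $u$ and $u^{-1}$ be integrable on $[0,1]$) follow from the stated assumptions because $\gamma$ is continuous, hence bounded, on $[0,1]$: $u^{-1}=\gamma\cdot(u\gamma)^{-1}\le\|\gamma\|_\infty(u\gamma)^{-1}$ and $u=\gamma\cdot(u\gamma^{-1})\le\|\gamma\|_\infty\,u\gamma^{-1}$. Evaluating (\ref{kl}) at $t=1$ and taking $\mathbb{E}_S$ gives
\begin{align*}
\mathbb{E}_{S,X}[\mathrm{KL}(\rho^{*}_{X}(\cdot,1)\,\|\,\hat{\rho}^{\sde}_{n,X}(\cdot,1))]
\le \int_0^1\frac{\mathbb{E}_S\|\hat{\boldsymbol{b}}_n-\boldsymbol{b}^*\|^2}{2u(s)}\,\mathrm{d}s
+\int_0^1\frac{u(s)}{2}\,\mathbb{E}_S\|\hat{\boldsymbol{s}}_n-\boldsymbol{s}^*\|^2\,\mathrm{d}s,
\end{align*}
where both norms are evaluated at $(X,\widehat{Z}^{\sde}_{s,X},s)$.

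For the first integral I would replace $\mathbb{E}_S\|\hat{\boldsymbol{b}}_n-\boldsymbol{b}^*\|^2$ by its supremum over $s\in[0,1]$, which is $O(n^{-2\beta/(2\beta+k+d+1)}\log^{10}n)$ by Theorem~\ref{corollary_bound}(i), leaving the finite factor $\int_0^1 u(s)^{-1}\,\mathrm{d}s\le\|\gamma\|_\infty\int_0^1(u(s)\gamma(s))^{-1}\,\mathrm{d}s<\infty$. For the second integral I would substitute the score rate $\mathbb{E}_S\|\hat{\boldsymbol{s}}_n-\boldsymbol{s}^*\|^2=\gamma^{-1}(s)\,O(n^{-2\beta/(2\beta+k+d+1)}\log^{10}n)$ from Theorem~\ref{corollary_bound}(ii), turning the integrand into $\tfrac12 u(s)\gamma^{-1}(s)\cdot O(\cdot)$, whose time integral is finite because $u\gamma^{-1}$ is assumed integrable. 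Adding the two $O(\cdot)$ contributions yields the claimed bound; Fubini's theorem legitimises the interchange of $\mathbb{E}_S$ and $\int_0^1\,\mathrm{d}s$ since the integrands are nonnegative.

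The step that I expect to require the most care is reconciling the two kinds of expectation. Theorem~\ref{corollary_bound} controls $\mathbb{E}_S\,\mathbb{E}\|\hat{\boldsymbol{b}}_n(X,Y_t,t)-\boldsymbol{b}^*(X,Y_t,t)\|^2$ with $Y_t$ drawn from the interpolation marginal $\rho^{*}_{\mathbf{x}}$, whereas Lemma~\ref{kl_lemma} needs the analogous expectation under the law of the \emph{estimated} process $\widehat{Z}^{\sde}_{s,X}$, whose conditional density $\hat{\rho}^{\sde}_{n,\mathbf{x}}$ differs from $\rho^{*}_{\mathbf{x}}$. I would close this gap by a change-of-measure argument: the uniform bounds $\|\hat{\boldsymbol{b}}_n\|_\infty\le\mathcal{B}$ and $\|\hat{\boldsymbol{\kappa}}_n\|_\infty\le\mathcal{B}^{\prime}$ keep the drift of $\widehat{Z}^{\sde}$ bounded, so a Girsanov/Gronwall estimate bounds the Radon--Nikodym derivative $\mathrm{d}\hat{\rho}^{\sde}_{n,\mathbf{x}}/\mathrm{d}\rho^{*}_{\mathbf{x}}$ uniformly on the truncation region $\{\|\mathbf{y}\|_\infty\le A_n\}$, with the complement contributing only a negligible truncation term of the type already isolated in Lemma~\ref{lemma1}. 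Everything else is a routine combination of the cited bounds.
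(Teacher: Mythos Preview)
Your high-level strategy---feed Theorem~\ref{corollary_bound} into Lemma~\ref{kl_lemma} and integrate in $s$---is exactly what the paper does. But you have misread the right-hand side of Lemma~\ref{kl_lemma}, and this misreading creates the ``reconciling two kinds of expectation'' obstacle that you then propose to overcome with a Girsanov/Gr\"onwall argument. That obstacle is not there.

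In Lemma~\ref{kl_lemma} the process $Z_{t,X}$ driven by the arbitrary $\boldsymbol{b}_u$ is the one whose density is $\rho_{\mathbf{x}}$ on the \emph{left} of the KL, but the symbol $Z_{s,\mathbf{x}}^{\sde}$ appearing on the \emph{right} is the population SDE of Definition~\ref{def_process}, not the estimated one. If you look at the proof of the lemma, every integral in the final bound (\ref{b12_2}) is taken against $\rho^{*}_{\mathbf{x}}$, the target density. Hence when you apply the lemma with $\boldsymbol{b}=\hat{\boldsymbol{b}}_n$, $\boldsymbol{s}=\hat{\boldsymbol{s}}_n$, the right-hand side is
\[
\int_0^1\frac{1}{2u(s)}\,\mathbb{E}\bigl[\|\hat{\boldsymbol{b}}_n-\boldsymbol{b}^*\|^2(X,Z^{\sde}_{s,X},s)\bigr]\,\mathrm{d}s
+\int_0^1\frac{u(s)}{2}\,\mathbb{E}\bigl[\|\hat{\boldsymbol{s}}_n-\boldsymbol{s}^*\|^2(X,Z^{\sde}_{s,X},s)\bigr]\,\mathrm{d}s,
\]
with $Z^{\sde}_{s,X}$ the \emph{population} diffusion. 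By the marginal preserving property (Lemma~\ref{theorem_fp}), $Z^{\sde}_{s,X}\stackrel{d}{=}Y_s$ conditionally on $X$, so after taking $\mathbb{E}_S$ these are precisely the quantities bounded in Theorem~\ref{corollary_bound}. No change of measure is required; the paper's proof is literally these two sentences followed by your integrability observations about $u\gamma^{-1}$ and $(u\gamma)^{-1}$.

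Your proposed Girsanov step, besides being unnecessary, would also be hard to make rigorous as written: a uniform bound on the marginal density ratio $\mathrm{d}\hat{\rho}^{\sde}_{n,\mathbf{x}}/\mathrm{d}\rho^{*}_{\mathbf{x}}$ does not follow directly from bounded drifts---the Girsanov density on path space is an exponential martingale, not a bounded function. So drop that paragraph and the proof becomes complete and matches the paper.
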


Theorems \ref{theorem_w2} and \ref{kl_theorem} show that the ODE-Flow and SDE-Diffusion-based distribution estimators are consistent and achieve a nonparametric rate of convergence. Nonetheless, these rates are not minimax optimal because the derived bounds for the distribution estimators
$\hat{r}^{ode}_{n,X}$ and $\hat{r}^{sde}_{n,X}$
are obtained indirectly through the control of the drift estimator
$\hat{b}_n$ and score estimator $\hat{s}_n.$ Importantly, the goal of generative learning is not to estimate the functional form of the target distribution but to learn a sampling procedure from the target distribution, which presents a more complex challenge.

\section{Numerical examples} %Experiments}
\label{sec_experiment}
In this section, we use two examples to illustrate the proposed CSI in learning conditional distributions.

\subsection{Conditional distribution based on a regression model} %Simulation Studies}
\label{Reg1}
We consider the conditional distribution based on the  nonlinear regression model:
$Y = f(X) + \epsilon,$
where $\epsilon \sim N(0,1)$ and $f $ is a regression function. The reference distribution $\mathbb{P}_{Y_0}$ is set to be $N(0, 1)$ and we adopt additive interpolation process, i.e. $\mathcal{I}(Y_0,Y_1,t) = a(t)Y_0 + b(t)Y_1$. Under this setting,
the ground truths of conditional drift and score function are

\begin{align*}
    \boldsymbol{b}(\mathbf{x},y,t) = \phi(t)(y-b(t)f(\mathbf{x})) + \dot{b}(t)f(\mathbf{x}), \
    \boldsymbol{s}(\mathbf{x},y,t) = \frac{-(y-b(t)f(\mathbf{x}))}{a(t)^2 + b(t)^2 + \gamma(t)^2}, \
    t \in [0, 1],
\end{align*} where $\phi(t) :=  \frac{\dot{a}(t) %\times
a(t) + \dot{b}(t)  b(t) + \dot{\gamma}(t)  \gamma(t)}{a(t)^2 + b(t)^2 + \gamma(t)^2}.$
The details of the derivation are given in Section \ref{app_for_example} of the Appendix.
%Supplementary Materials. 
Based on Definitions \ref{def_flow} and \ref{def_process}, we can obtain the generators based on ODE and SDE:

\begin{align*}
    \mathrm{d} Z_{t,\mathbf{x}}^{\ode} &= \left(\phi(t)(Z_{t,\mathbf{x}}^{\ode}-b(t)f(\mathbf{x})) + \dot{b}(t)f(\mathbf{x})\right) \mathrm{d}t, \\
    \mathrm{d} Z_{t,\mathbf{x}}^{sde} &= \left(\phi(t)(Z_{t,\mathbf{x}}^{sde}-b(t)f(\mathbf{x})) + \dot{b}(t)f(\mathbf{x}) -u(t) \frac{Z_{t,\mathbf{x}}^{sde}-b(t)f(\mathbf{x})}{a(t)^2 + b(t)^2 + \gamma(t)^2} \right) \mathrm{d}t + \sqrt{2u(t)} \mathrm{d} W_t,
\end{align*} where $Z_{0,X}^{\ode}, Z_{0,X}^{\sde} \sim \mathbb{P}_{Y_0}.$
As an example, we take
$$f(\mathbf{x}) = \vert 2+ (\mathbf{x}^{(1)})^2/3 \vert - \vert \mathbf{x}^{(2)} \vert +  \max\{ (\mathbf{x}^{(3)})^3, \mathbf{x}^{(4)} \times \exp(\mathbf{x}^{(5)}/2)\},$$ and select $a(t)=\cos(\pi t/2)$ and $b(t)=\sin(\pi t/2)$. The perturbation function is given by $\gamma(t) = \log(t - t^2 + 1)$. And the diffusion function for the SDE-based generator is defined as $u(t) = t^2(1-t)^2/8.$

For simulation, we consider two distinct conditions: $X_0 = [0,0,0,0,0]^T$ and $X_1 = [2,2,2,2,2]^T$ and display the conditional distributions of $Y_t \mid X_0$ and $Y_t \mid X_1$ at intermediate times $t=0.2$, $0.4$, $0.6$, and $0.8$. The sample size is
$5000$. The estimated time-dependent conditional distributions based on the three process: conditional stochastic interpolation, ODE-based, and SDE-based generators are plotted in Figure \ref{fig1}. From the figure, the distributions of the three processes are similar at each moment. Moreover, at $t=1$, the conditional probability densities are similar. This experiment supports the result of Lemma \ref{theorem_fp}.

\begin{figure}[H]
\centering
\includegraphics[width=0.85\textwidth, height=1.8 in]{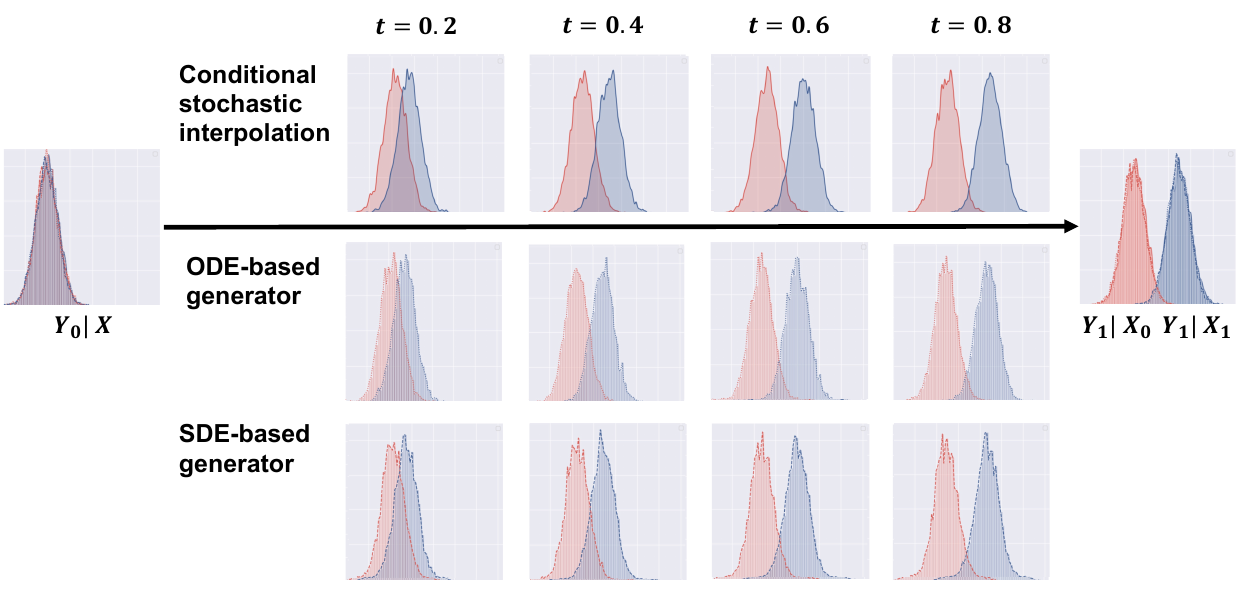}
\caption{The sampling distribution of three processes (interpolation, flow and stochastic process) at different moments. Dashed, dotted, and solid lines represent the time-dependent distribution obtained by conditional stochastic interpolation, ODE-based generator, and SDE-based generator, respectively. The red portion represents the conditional distribution of $Y_1 \mid X = \mathbf{0}$, and the blue portion represents the conditional distribution of $Y_1 \mid X = 2 \cdot \mathbf{1}$.}\label{fig1}
\end{figure}

\subsection{The STL-10 dataset} \label{sec_stl10}
We now illustrate the application of CSI  to a high-dimensional data problem. Our objective is to generate images conditionally on a given label. We provide an illustrative example using the STL-10 dataset \citep{coates2011analysis}, which contains $13,000$ color images.
% for training and $8,000$ images for testing.
These images are categorized into $10$ distinct classes (e.g., airplane, bird, car, cat, deer, etc.) and the class sizes are equal. Each image is stored as a $96 \times 96 \times 3$ tensor for RGB channels and paired with a label in $\{0,1,...,9\}$. We employ one-hot vectors in $\mathbb{R}^{10}$ to encode the ten categories. We use 10,200 images for
 training, and reserve the remaining images for testing and evaluation.

To construct a CSI process, we set  $\mathcal{I}(Y_0,Y_1,t)= tY_1 + (1-t)Y_0$ and $\gamma(t)= \sqrt{2t(1-t)}$ as given in Example \ref{ex1}(a).
The reference distribution $\mathbb{P}_{Y_0}$ is chosen to be $N(\mathbf{0},\mathbf{I}_d).$
We estimate both the ODE-based and SDE-based generators.
We adopt the diffusion coefficient $u(t)=0.1(1-t)$ for SDE-based generators. Furthermore, we test SDE-based generators constructed using two different estimation methods: one derives the score function through the formula in  (\ref{score_b}), while the other estimates the score function through the optimization procedure in   (\ref{loss}) using a neural network. We employ the U-net architecture \citep{ronneberger2015u} for the estimation of the conditional drift and score functions, and use the PyTorch platform to train the neural networks. Additional details on the hyperparameter settings for the neural networks and the experimental setup are given in Appendix C.

For sampling, we utilize the Euler-Maruyama method to approximate the trajectories of the ODE and SDE models, with a time step of $\Delta t = 1 \times 10^{-3}$. The generative results are shown in Figure \ref{fig2}. The synthesized images closely resemble the actual ones.
In addition,  CSI can effectively discern the distinguishing characteristics among different classes of image. Furthermore, using the score function derived from the formula in (\ref{score_b}) is computationally efficient, and its sample quality matches that of the score function trained by the neural network optimization procedure in   (\ref{loss}).

\begin{figure}[H]
\centering
\includegraphics[width=4.0 in, height=4.3 in]
{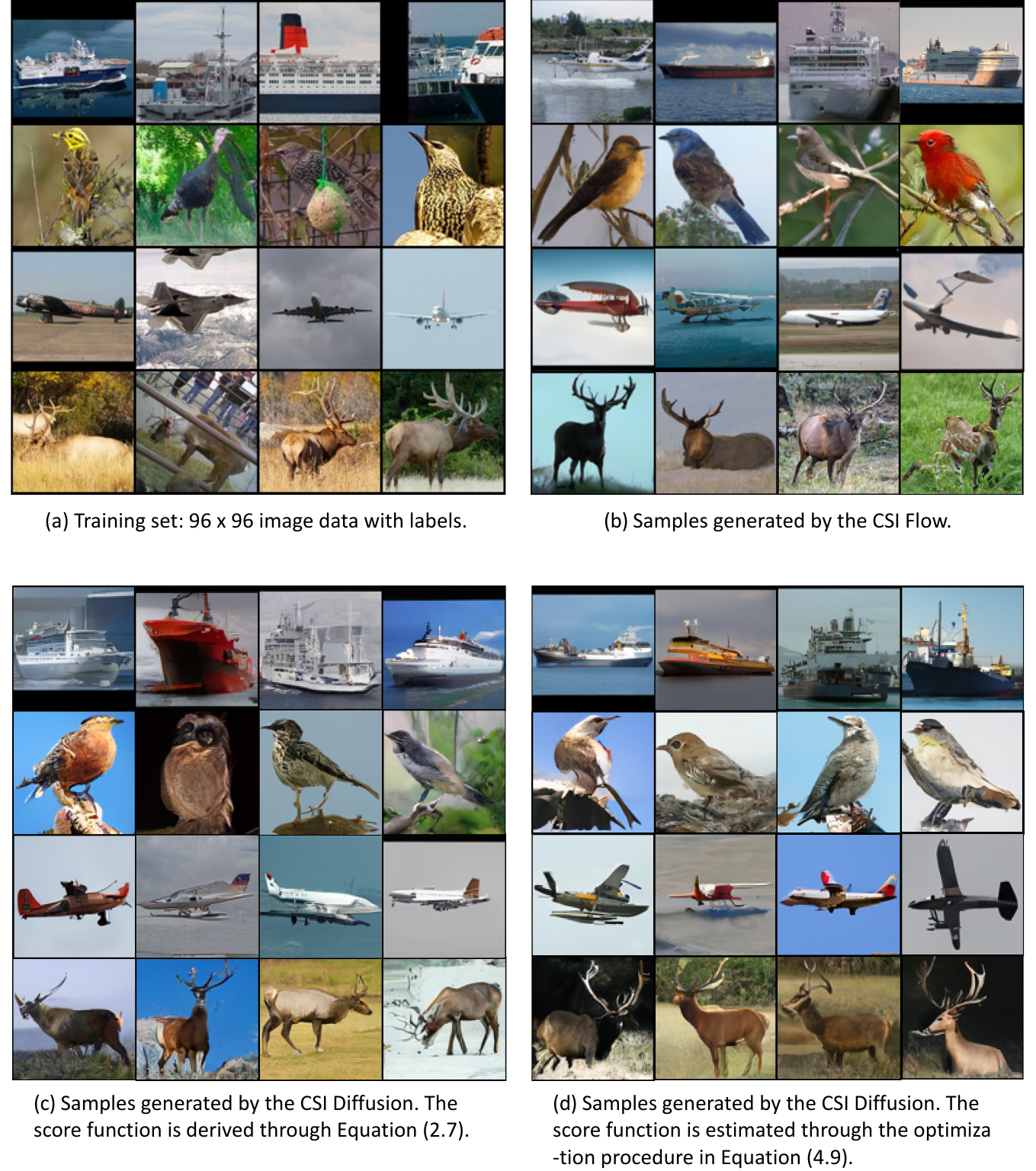}
\caption{The real images (top left) and samples generated through CSI generators.}\label{fig2}
\end{figure}

\section{Related work}\label{sec_relatedwork}
In this section, we discuss the connections and differences between
our work and the related papers in the literature. Since there is now a big literature on process-based generative modeling approaches, it is beyond the scope of this paper to have a complete review of the existing literature, we only consider the most related ones
below.

\subsection{Denoising diffusion probabilistic model}
Diffusion models have their roots in the concept of image denoising, known as denoising diffusion probabilistic model (DDPM) \citep{sohl2015deep,ho2020denoising}. This methodology initially involves a parameterized Markov chain trained through variational inference and generates samples matching the data within a discrete time frame $\{k\}_{k=1}^T$, where $T \in \mathbb{N}^+$.  Contrary to the models involved in CSI, diffusion models theoretically transport the sample $Y_0$ from the target distribution $\pi_{data}$ to a sample from
$\mathcal{N}(\mathbf{0}, \mathbf{I}_d)$ as time progresses from $0$ to infinity. The diffusion process or forward process $Y_k$ involves the incremental addition of noise to the data $Y_0$ according to a variance schedule $\{\beta_1, ..., \beta_T\} \subset \mathbb{R}^+ $,
\begin{align}\label{sec41}
    Y_k =\sqrt{1-\beta_k} Y_{k-1} + \beta_k \boldsymbol{\eta}.
\end{align}
This process can be sampled at any time step $k$ using a closed-form expression that combines the initial value $Y_0$ with the noise term, weighted by scaling factors $\sqrt{\bar{\alpha}_k}$ and $\sqrt{1-\bar{\alpha}_k}$:
\begin{align}\label{sec42}
    Y_k = \sqrt{\bar{\alpha}_k} Y_0 + \sqrt{1-\bar{\alpha}_k} \boldsymbol{\eta}, \text{ with } \bar{\alpha}_k := \prod_{i=1}^{k} (1- \beta_i).
\end{align}
By Tweedie's formula \citep{efron2011tweedie},
$
    \mathbb{E}[Y_0 \mid Y_k] = \bar{\alpha}_k^{-1/2} [Y_k + (1-\bar{\alpha}_k)\boldsymbol{s}(Y_k, k)],
$
where the score function $\boldsymbol{s}(y,k) := \nabla_y \log p(y,k).$

Transitions of this process, denoted as $Z_k$, is defined as a Markov chain with learned Gaussian transitions starting at $Z_T \sim \mathcal{N}(\mathbf{0},\mathbf{I}_d)$:
$
    Z_{k-1} = \boldsymbol{\mu}_{\boldsymbol{\theta}}(Z_k, k) + {\Sigma}_{\boldsymbol{\theta}}^{{1}/{2}}(Z_k, k) \boldsymbol{\eta},
$
where $\boldsymbol{\mu}_{\boldsymbol{\theta}}(Z_k,k) := (1- \beta_k)^{-1/2}[Z_k + {\beta_k} \boldsymbol{s}_{\boldsymbol{\theta}}(Z_k,k)]$ and ${\Sigma}_{\boldsymbol{\theta}} := \tilde{\beta}_k \mathbf{I}_d$. The score function $\boldsymbol{s}_{\boldsymbol{\theta}}(Z_k,k)$ and the covariance matrix are learned through reparameterization using neural networks, with $\boldsymbol{\theta}$ representing the network parameters.

As $T\to \infty,$ the distribution of $Y_T$ converges to $N(\mathbf{0}, \mathbf{I}_d).$
In practice, it takes a large number of function evaluations to reach Gasussian distribution. On the other hand, in a finite time interval, $Y_T$ and $Z_T$ are different, so there is an inevitable bias in the generative samples.

\subsection{Continuous process-based generative models}
The idea of using a continuous process for generative modeling was initially introduced by \citet{song2020score}.
%A fundamental distinction between discrete models and continuous models lies in their respective %sampling approaches. DDPM employ Bayesian posterior distributions as the basis for sampling,
%whereas continuous models employ the inverse process of differential equations for sampling.
%
By defining the scale function $\beta : [0,1] \rightarrow [0, T\beta_T]$ as $\beta(k/T) := T\beta_k, k \in \{0,1,..., T\}$. The equation (\ref{sec41}) converges to the following stochastic differential equation (SDE) when $T \rightarrow +\infty$:
\begin{align} \label{ddpm_sde}
    \mathrm{d} Y_\tau = - \frac{1}{2} \beta (\tau) Y_\tau \mathrm{d}\tau + \sqrt{\beta(\tau)} \mathrm{d} W_\tau, \ Y_0 \sim \pi_{data},  \tau \in [0,1],
\end{align} where $W_\tau$ is a Wiener process on $[0,1]$.
\iffalse
The solution to the SDE defined in (\ref{ddpm_sde}) can be expressed as a stochastic interpolation:
\begin{align} \label{ddpm_csi_0}
    Y_\tau = \exp \big( -\frac{1}{2} \int_0^\tau \beta(s) ds   \big) Y_0 + \Big[1- \exp\big(-\int_0^\tau \beta(s) ds\big)\Big]^{1/2}\, \boldsymbol{\eta} , \text{ } Y_0 \sim \pi_{data}.
\end{align}
These models adopt a time setting that is opposite to our approach, where the time parameter ranges from $\tau=0$ to $1$, moving from the data distribution towards a reference distribution.
To facilitate comparisons between different methods, we adopt the convention of using $t=1-\tau \in [0,1]$ to unify the time setting throughout the paper. Consequently,
 (\ref{ddpm_csi_0}) can be rewritten as follows:
\begin{align*} % \label{ddpm_csi}
    Y_{1-t} = \exp \big( -\frac{1}{2} \int_0^{1-t} \beta(s) ds   \big) Y_1 +
    \Big[1- \exp\big(-\int_0^{1-t} \beta(s) ds\big)\Big]^{1/2}\, \boldsymbol{\eta}, \text{ } Y_1 \sim \pi_{data}.
\end{align*}
\fi
This model relies on reverse SDEs for sampling, which is different from the CSI approach. Specifically, for SDE (\ref{ddpm_sde}), the reverse SDE can be expressed as:
\begin{align*}
    \mathrm{d}Z_{1-t} = \frac{1}{2} \beta(1-t) Z_{1-t} + \beta(1-t) \boldsymbol{s}_{\boldsymbol{\theta}}(Z_{1-t},1-t) \mathrm{d}t + \sqrt{\beta(1-t)} \mathrm{d} V_{1-t}, t \in [1, 0],
\end{align*}
where $V_{1-t}$ is a reversed Wiener process.
It should be noted that $Z_1$ and $Y_1$ obey different distributions, which introduces bias.
All these sampling procedures rely on the score function, which makes them known as score-based diffusion methods (SBDM).
%The numerical solution introduces bias resulting from the mismatch between the forward process %$Y_t$ and the backward process $Z_t$ at time $t=1$.

 In comparison, interpolation-type generative models can circumvent the bias issue of SBMDs
 \citep{albergo2023stochastic, lipman2022flow}.
 \citet{liu2022flow} propose a linear interpolation to connect the target distribution and a reference distribution and a non-linear interpolation were further proposed \citep{albergo2022building}.

In this study, we build upon the stochastic interpolation method introduced by \citet{albergo2023stochastic}, expanding it to encompass conditional process-based modeling. Our proposed CSI approach sheds light on how conditioning influences the support set of $Y_t$ during its transition, which in turn affects the resulting distribution. A limitation of the model proposed by \citet{albergo2023stochastic} is its reliance on a fixed variance scale for the stochastic term. This fixed scale can lead to an explosive behavior of the score function near the data distribution, introducing truncation bias in practical applications. To address this issue, our approach integrates an adaptive diffusion term, specifically the term $\sqrt{2 u(t)}$ as detailed in (\ref{sde_equ}) in Definition \ref{def_process}. This adaptation effectively counters the adverse effects associated with the explosion of the score function during the training phase.

\subsection{Theoretical studies of process based generative models}
Theoretical studies of SBDM with SDE have aimed to establish convergence rates, assuming a given estimation error for the score function. These studies rely on strong assumptions concerning the data distribution, such as functional inequalities \citep{lee2022convergence, wibisono2022convergence} or uniform Lipschitz condition \citep{lee2022convergence, chen2022sampling}.
Some other works relax the smoothness constraint on the score function but introduce assumptions regarding infinite KL-divergence or Fisher information between the data distribution and a multivariate standard normal distribution \citep{chen2022sampling, conforti2023score}.
Both of these conditions imply that the support of the data distribution fills the entire  $\mathbb{R}^d$ space. Violation of this assumption leads to the explosion of score functions.
Theoretical results on DDPM are limited. To the best of our knowledge, only \citep{li2023towards} derive a convergence rate for the DDPM-type sampler but
it is obtained under a specific setting of hyperparameters $\{\beta_t \}_{t=1}^T$.

Under a manifold support assumption,
\citet{pidstrigach2022score} shows that the approximate backward process converges to a random variable whose distribution is supported on the manifold of interest. \citet{de2022convergence} complement these results by studying the discretization scheme and providing quantitative bounds between the output of the diffusion model and the target distribution. However, these techniques either suffer from exponential dependence on problem parameters \citep{de2022convergence} or high iteration complexity \citep{lee2023convergence,chen2023improved}. A recent work by \citet{benton2023linear} improves upon the result of \citet{chen2023improved} and obtains
error bounds that are linear in the data dimension.

For interpolation based generative models, \citet{liu2022flow} present a derivation of the transport equation within the framework of deterministic and linear interpolation, elucidating the connection between the velocity field and the distribution field. \citet{chen2023restoration} further generalize the results to encompass broader scenarios involving stochastic and nonlinear situations using Fourier transform techniques. However, their approach is constrained by the requirements of function space and smoothness conditions imposed on the data density function to ensure the validity of the inverse Fourier Transform. In contrast, we employ a different technique to obtain the transport equation and circumvent the need for these strict assumptions about data distribution.

Moreover, we obtain the closed form of the conditional score function using a novel technique, without the need for additional assumptions. These theoretical developments form the foundation for ensuring that the distribution transfer achieved through stochastic interpolation is consistent with the Markov processes generated by the corresponding drift and score functions. This consistency is a critical factor in the realm of interpolation-based generative models.
In this context, it is imperative to investigate the existence and uniqueness of solutions to the Fokker-Planck equation, as highlighted by \citet {liu2022flow}. To guarantee the existence and uniqueness of these solutions, we invoke the regularity condition stated in Assumption (\ref{assump_unique}). This assumption is essential for the theoretical validation of the models we consider.

Regarding the convergence of the  distribution of generated samples, the existing results were established based on pre-specified estimators of drift function and score function with a given accuracy of estimation. The upper bounds expressed in terms of the pre-sepcified estimation accuracy were derived for  the ODE sampler in terms of 2-Wasserstein distance \citep{benton2023error}, and  the SDE sampler in terms of  KL divergence \citep{albergo2023stochastic}. In addition, the former work relies on a Lipschitz type condition, while the latter neglects the impact of an unstable score function near the data distribution on the transportation process, leading to an unavoidable introduction of early stopping. Our proposed CSI approach effectively mitigates the influence of score function explosion and bias resulting from imprecise transportation.

For interpolation-type models, we first provide the learning guarantee based on the neural network function class, where we establish upper bounds for both stochastic error and approximation error in the estimation of the drift and score function, respectively. The approximation capability of neural networks is linked to the smoothness of functions, and our analysis relies on this assumption solely for assessing approximation errors. It is worth highlighting that the estimation error bound for the score function depends on $t.$ Therefore, our error analysis considers the difficulty of approximating real functions at different time points, particularly when the score function approaches explosion. This aspect aligns more closely with practical scenarios, where obtaining accurate estimates at the boundary is more challenging. Furthermore, we establish non-asymptotic upper error bounds for the learned conditional distribution in terms of the 2-Wasserstein distance and the KL divergence.

\section{Conclusions and discussion} \label{sec_conclusion}
In this work, we have proposed a CSI  method for conditional generative learning, which provides a unified basis for using conditional ODE-based and SDE-based models.
We have provided learning guarantees for the estimation accuracy of the CSI generative models. Under mild conditions, we have established the non-asymptotic error bounds for the estimated drift and score functions based on the conditional stochastic interpolation process. In particular, we have derived an upper bound on the KL divergence between the distributions of generated samples and the truth for the SDE-based generative model.

We have also conducted numerical experiments to evaluate CSI,  which show that it works well in the examples we have considered, including a nonparametric conditional density estimation problems to a more complex image generation problem. However, the scope of our numerical experiments are limited, more extensive studies with simulated and real data are needed to further examine the performance of CSI.

Several questions deserve further investigation.
First, it would be interesting to establish a stronger version of the marginal preserving property, or under weaker conditions. For generative purposes, it is sufficient that the sample distribution at the terminal $t=1$ of the generator is consistent with the target, while marginal preserving property ensures time-dependent conditional distributions of the three processes (interpolation, flow, and stochastic process) are equivalent at each moment. Second, extending the proposed CSI approach to infinite integral fields would be interesting since score-based diffusion models can be regarded as a variant in CSI framework, {namely the infinite-horizon conditional stochastic interpolation.} This extension would enrich the theory and improve the understanding and application of diffusion models. We leave these research questions for the future work.

%\section*{Acknowledgements}

\bibliographystyle{imsart-nameyear}
\bibliography{CSIarXivRev2}

\begin{thebibliography}{56}
% BibTex style file: imsart-nameyear.bst, 2017-11-03
% Default style options (sort=1,type=nameyear).
% Used options (sort=1,type=nameyear).

\bibitem[\protect\citeauthoryear{Albergo, Boffi and
  Vanden-Eijnden}{2023}]{albergo2023stochastic}
\begin{barticle}[author]
\bauthor{\bsnm{Albergo},~\bfnm{Michael~S}\binits{M.~S.}},
  \bauthor{\bsnm{Boffi},~\bfnm{Nicholas~M}\binits{N.~M.}} \AND
  \bauthor{\bsnm{Vanden-Eijnden},~\bfnm{Eric}\binits{E.}}
(\byear{2023}).
\btitle{Stochastic interpolants: A unifying framework for flows and
  diffusions}.
\bjournal{arXiv preprint arXiv:2303.08797}.
\end{barticle}
\endbibitem

\bibitem[\protect\citeauthoryear{Albergo and
  Vanden-Eijnden}{2023}]{albergo2022building}
\begin{binproceedings}[author]
\bauthor{\bsnm{Albergo},~\bfnm{Michael~S}\binits{M.~S.}} \AND
  \bauthor{\bsnm{Vanden-Eijnden},~\bfnm{Eric}\binits{E.}}
(\byear{2023}).
\btitle{Building normalizing flows with stochastic interpolants}.
In \bbooktitle{International Conference on Learning Representations}.
\end{binproceedings}
\endbibitem

\bibitem[\protect\citeauthoryear{Anthony et~al.}{1999}]{anthony1999neural}
\begin{bbook}[author]
\bauthor{\bsnm{Anthony},~\bfnm{Martin}\binits{M.}},
  \bauthor{\bsnm{Bartlett},~\bfnm{Peter~L}\binits{P.~L.}},
  \bauthor{\bsnm{Bartlett},~\bfnm{Peter~L}\binits{P.~L.}} \betal{et~al.}
(\byear{1999}).
\btitle{Neural Network Learning: Theoretical Foundations}.
\bpublisher{Cambridge University Press}.
\end{bbook}
\endbibitem

\bibitem[\protect\citeauthoryear{Arjovsky, Chintala and
  Bottou}{2017}]{arjovsky2017wasserstein}
\begin{binproceedings}[author]
\bauthor{\bsnm{Arjovsky},~\bfnm{Martin}\binits{M.}},
  \bauthor{\bsnm{Chintala},~\bfnm{Soumith}\binits{S.}} \AND
  \bauthor{\bsnm{Bottou},~\bfnm{L{\'e}on}\binits{L.}}
(\byear{2017}).
\btitle{Wasserstein generative adversarial networks}.
In \bbooktitle{International Conference on Machine Learning}
\bpages{214--223}.
\bpublisher{PMLR}.
\end{binproceedings}
\endbibitem

\bibitem[\protect\citeauthoryear{Bartlett et~al.}{2019}]{bartlett2019nearly}
\begin{barticle}[author]
\bauthor{\bsnm{Bartlett},~\bfnm{Peter~L}\binits{P.~L.}},
  \bauthor{\bsnm{Harvey},~\bfnm{Nick}\binits{N.}},
  \bauthor{\bsnm{Liaw},~\bfnm{Christopher}\binits{C.}} \AND
  \bauthor{\bsnm{Mehrabian},~\bfnm{Abbas}\binits{A.}}
(\byear{2019}).
\btitle{Nearly-tight VC-dimension and pseudodimension bounds for piecewise
  linear neural networks}.
\bjournal{The Journal of Machine Learning Research}
\bvolume{20}
\bpages{2285--2301}.
\end{barticle}
\endbibitem

\bibitem[\protect\citeauthoryear{Benton, Deligiannidis and
  Doucet}{2023}]{benton2023error}
\begin{barticle}[author]
\bauthor{\bsnm{Benton},~\bfnm{Joe}\binits{J.}},
  \bauthor{\bsnm{Deligiannidis},~\bfnm{George}\binits{G.}} \AND
  \bauthor{\bsnm{Doucet},~\bfnm{Arnaud}\binits{A.}}
(\byear{2023}).
\btitle{Error bounds for flow matching methods}.
\bjournal{arXiv preprint arXiv:2305.16860}.
\end{barticle}
\endbibitem

\bibitem[\protect\citeauthoryear{Benton et~al.}{2023}]{benton2023linear}
\begin{barticle}[author]
\bauthor{\bsnm{Benton},~\bfnm{Joe}\binits{J.}},
  \bauthor{\bsnm{De~Bortoli},~\bfnm{Valentin}\binits{V.}},
  \bauthor{\bsnm{Doucet},~\bfnm{Arnaud}\binits{A.}} \AND
  \bauthor{\bsnm{Deligiannidis},~\bfnm{George}\binits{G.}}
(\byear{2023}).
\btitle{Linear convergence bounds for diffusion models via stochastic
  localization}.
\bjournal{arXiv preprint arXiv:2308.03686}.
\end{barticle}
\endbibitem

\bibitem[\protect\citeauthoryear{Bris and Lions}{2008}]{bris2008existence}
\begin{barticle}[author]
\bauthor{\bsnm{Bris},~\bfnm{C~Le}\binits{C.~L.}} \AND
  \bauthor{\bsnm{Lions},~\bfnm{P-L}\binits{P.-L.}}
(\byear{2008}).
\btitle{Existence and uniqueness of solutions to Fokker--Planck type equations
  with irregular coefficients}.
\bjournal{Communications in Partial Differential Equations}
\bvolume{33}
\bpages{1272--1317}.
\end{barticle}
\endbibitem

\bibitem[\protect\citeauthoryear{Chen, Daras and
  Dimakis}{2023}]{chen2023restoration}
\begin{binproceedings}[author]
\bauthor{\bsnm{Chen},~\bfnm{Sitan}\binits{S.}},
  \bauthor{\bsnm{Daras},~\bfnm{Giannis}\binits{G.}} \AND
  \bauthor{\bsnm{Dimakis},~\bfnm{Alex}\binits{A.}}
(\byear{2023}).
\btitle{Restoration-degradation beyond linear diffusions: A non-asymptotic
  analysis for DDIM-type samplers}.
In \bbooktitle{International Conference on Machine Learning}
\bpages{4462--4484}.
\bpublisher{PMLR}.
\end{binproceedings}
\endbibitem

\bibitem[\protect\citeauthoryear{Chen, Fan and Panda}{2021}]{chen2021crossvit}
\begin{binproceedings}[author]
\bauthor{\bsnm{Chen},~\bfnm{Chun-Fu~Richard}\binits{C.-F.~R.}},
  \bauthor{\bsnm{Fan},~\bfnm{Quanfu}\binits{Q.}} \AND
  \bauthor{\bsnm{Panda},~\bfnm{Rameswar}\binits{R.}}
(\byear{2021}).
\btitle{Crossvit: Cross-attention multi-scale vision transformer for image
  classification}.
In \bbooktitle{Proceedings of the IEEE/CVF International Conference on Computer
  Vision}
\bpages{357--366}.
\end{binproceedings}
\endbibitem

\bibitem[\protect\citeauthoryear{Chen, Lee and Lu}{2023}]{chen2023improved}
\begin{binproceedings}[author]
\bauthor{\bsnm{Chen},~\bfnm{Hongrui}\binits{H.}},
  \bauthor{\bsnm{Lee},~\bfnm{Holden}\binits{H.}} \AND
  \bauthor{\bsnm{Lu},~\bfnm{Jianfeng}\binits{J.}}
(\byear{2023}).
\btitle{Improved analysis of score-based generative modeling: User-friendly
  bounds under minimal smoothness assumptions}.
In \bbooktitle{International Conference on Machine Learning}
\bpages{4735--4763}.
\bpublisher{PMLR}.
\end{binproceedings}
\endbibitem

\bibitem[\protect\citeauthoryear{Chen et~al.}{2022}]{chen2022sampling}
\begin{barticle}[author]
\bauthor{\bsnm{Chen},~\bfnm{Sitan}\binits{S.}},
  \bauthor{\bsnm{Chewi},~\bfnm{Sinho}\binits{S.}},
  \bauthor{\bsnm{Li},~\bfnm{Jerry}\binits{J.}},
  \bauthor{\bsnm{Li},~\bfnm{Yuanzhi}\binits{Y.}},
  \bauthor{\bsnm{Salim},~\bfnm{Adil}\binits{A.}} \AND
  \bauthor{\bsnm{Zhang},~\bfnm{Anru~R}\binits{A.~R.}}
(\byear{2022}).
\btitle{Sampling is as easy as learning the score: Theory for diffusion models
  with minimal data assumptions}.
\bjournal{arXiv preprint arXiv:2209.11215}.
\end{barticle}
\endbibitem

\bibitem[\protect\citeauthoryear{Coates, Ng and Lee}{2011}]{coates2011analysis}
\begin{binproceedings}[author]
\bauthor{\bsnm{Coates},~\bfnm{Adam}\binits{A.}},
  \bauthor{\bsnm{Ng},~\bfnm{Andrew}\binits{A.}} \AND
  \bauthor{\bsnm{Lee},~\bfnm{Honglak}\binits{H.}}
(\byear{2011}).
\btitle{An analysis of single-layer networks in unsupervised feature learning}.
In \bbooktitle{Proceedings of the Fourteenth International Conference on
  Artificial Intelligence and Statistics}
\bpages{215--223}.
\bpublisher{JMLR Workshop and Conference Proceedings}.
\end{binproceedings}
\endbibitem

\bibitem[\protect\citeauthoryear{Conforti, Durmus and
  Silveri}{2023}]{conforti2023score}
\begin{barticle}[author]
\bauthor{\bsnm{Conforti},~\bfnm{Giovanni}\binits{G.}},
  \bauthor{\bsnm{Durmus},~\bfnm{Alain}\binits{A.}} \AND
  \bauthor{\bsnm{Silveri},~\bfnm{Marta~Gentiloni}\binits{M.~G.}}
(\byear{2023}).
\btitle{Score diffusion models without early stopping: finite Fisher
  information is all you need}.
\bjournal{arXiv preprint arXiv:2308.12240}.
\end{barticle}
\endbibitem

\bibitem[\protect\citeauthoryear{Dao et~al.}{2023}]{dao2023flow}
\begin{barticle}[author]
\bauthor{\bsnm{Dao},~\bfnm{Quan}\binits{Q.}},
  \bauthor{\bsnm{Phung},~\bfnm{Hao}\binits{H.}},
  \bauthor{\bsnm{Nguyen},~\bfnm{Binh}\binits{B.}} \AND
  \bauthor{\bsnm{Tran},~\bfnm{Anh}\binits{A.}}
(\byear{2023}).
\btitle{Flow matching in latent space}.
\bjournal{arXiv preprint arXiv:2307.08698}.
\end{barticle}
\endbibitem

\bibitem[\protect\citeauthoryear{De~Bortoli}{2022}]{de2022convergence}
\begin{barticle}[author]
\bauthor{\bsnm{De~Bortoli},~\bfnm{Valentin}\binits{V.}}
(\byear{2022}).
\btitle{Convergence of denoising diffusion models under the manifold
  hypothesis}.
\bjournal{arXiv preprint arXiv:2208.05314}.
\end{barticle}
\endbibitem

\bibitem[\protect\citeauthoryear{Dhariwal and
  Nichol}{2021}]{dhariwal2021diffusion}
\begin{barticle}[author]
\bauthor{\bsnm{Dhariwal},~\bfnm{Prafulla}\binits{P.}} \AND
  \bauthor{\bsnm{Nichol},~\bfnm{Alexander}\binits{A.}}
(\byear{2021}).
\btitle{Diffusion models beat gans on image synthesis}.
\bjournal{Advances in Neural Information Processing Systems}
\bvolume{34}
\bpages{8780--8794}.
\end{barticle}
\endbibitem

\bibitem[\protect\citeauthoryear{Efron}{2011}]{efron2011tweedie}
\begin{barticle}[author]
\bauthor{\bsnm{Efron},~\bfnm{Bradley}\binits{B.}}
(\byear{2011}).
\btitle{Tweedie’s formula and selection bias}.
\bjournal{Journal of the American Statistical Association}
\bvolume{106}
\bpages{1602--1614}.
\end{barticle}
\endbibitem

\bibitem[\protect\citeauthoryear{Goodfellow et~al.}{2014}]{goodfellow14}
\begin{bincollection}[author]
\bauthor{\bsnm{Goodfellow},~\bfnm{Ian}\binits{I.}},
  \bauthor{\bsnm{Pouget-Abadie},~\bfnm{Jean}\binits{J.}},
  \bauthor{\bsnm{Mirza},~\bfnm{Mehdi}\binits{M.}},
  \bauthor{\bsnm{Xu},~\bfnm{Bing}\binits{B.}},
  \bauthor{\bsnm{Warde-Farley},~\bfnm{David}\binits{D.}},
  \bauthor{\bsnm{Ozair},~\bfnm{Sherjil}\binits{S.}},
  \bauthor{\bsnm{Courville},~\bfnm{Aaron}\binits{A.}} \AND
  \bauthor{\bsnm{Bengio},~\bfnm{Yoshua}\binits{Y.}}
(\byear{2014}).
\btitle{Generative adversarial nets}.
In \bbooktitle{Advances in Neural Information Processing Systems},
\bvolume{27}
\bpages{2672--2680}.
\end{bincollection}
\endbibitem

\bibitem[\protect\citeauthoryear{Gy{\"o}rfi
  et~al.}{2002}]{gyorfi2002distribution}
\begin{bbook}[author]
\bauthor{\bsnm{Gy{\"o}rfi},~\bfnm{L{\'a}szl{\'o}}\binits{L.}},
  \bauthor{\bsnm{Kohler},~\bfnm{Michael}\binits{M.}},
  \bauthor{\bsnm{Krzyzak},~\bfnm{Adam}\binits{A.}},
  \bauthor{\bsnm{Walk},~\bfnm{Harro}\binits{H.}} \betal{et~al.}
(\byear{2002}).
\btitle{A distribution-Free Theory of Nonparametric Regression}.
\bpublisher{Springer}.
\end{bbook}
\endbibitem

\bibitem[\protect\citeauthoryear{Herbert}{1956}]{herbert1956empirical}
\begin{binproceedings}[author]
\bauthor{\bsnm{Herbert},~\bfnm{Robbins}\binits{R.}}
(\byear{1956}).
\btitle{An empirical Bayes approach to statistics}.
In \bbooktitle{Proceedings of the third berkeley symposium on mathematical
  statistics and probability}
\bvolume{1}
\bpages{157--163}.
\end{binproceedings}
\endbibitem

\bibitem[\protect\citeauthoryear{Ho, Jain and Abbeel}{2020}]{ho2020denoising}
\begin{barticle}[author]
\bauthor{\bsnm{Ho},~\bfnm{Jonathan}\binits{J.}},
  \bauthor{\bsnm{Jain},~\bfnm{Ajay}\binits{A.}} \AND
  \bauthor{\bsnm{Abbeel},~\bfnm{Pieter}\binits{P.}}
(\byear{2020}).
\btitle{Denoising diffusion probabilistic models}.
\bjournal{Advances in Neural Information Processing Systems}
\bvolume{33}
\bpages{6840--6851}.
\end{barticle}
\endbibitem

\bibitem[\protect\citeauthoryear{Jensen}{1906}]{jensen1906fonctions}
\begin{barticle}[author]
\bauthor{\bsnm{Jensen},~\bfnm{Johan Ludwig William~Valdemar}\binits{J.~L.
  W.~V.}}
(\byear{1906}).
\btitle{Sur les fonctions convexes et les in{\'e}galit{\'e}s entre les valeurs
  moyennes}.
\bjournal{Acta mathematica}
\bvolume{30}
\bpages{175--193}.
\end{barticle}
\endbibitem

\bibitem[\protect\citeauthoryear{Jiao et~al.}{2023}]{jiao2023deep}
\begin{barticle}[author]
\bauthor{\bsnm{Jiao},~\bfnm{Yuling}\binits{Y.}},
  \bauthor{\bsnm{Shen},~\bfnm{Guohao}\binits{G.}},
  \bauthor{\bsnm{Lin},~\bfnm{Yuanyuan}\binits{Y.}} \AND
  \bauthor{\bsnm{Huang},~\bfnm{Jian}\binits{J.}}
(\byear{2023}).
\btitle{Deep nonparametric regression on approximate manifolds: Nonasymptotic
  error bounds with polynomial prefactors}.
\bjournal{The Annals of Statistics}
\bvolume{51}
\bpages{691--716}.
\end{barticle}
\endbibitem

\bibitem[\protect\citeauthoryear{Karras, Laine and
  Aila}{2019}]{karras2019style}
\begin{binproceedings}[author]
\bauthor{\bsnm{Karras},~\bfnm{Tero}\binits{T.}},
  \bauthor{\bsnm{Laine},~\bfnm{Samuli}\binits{S.}} \AND
  \bauthor{\bsnm{Aila},~\bfnm{Timo}\binits{T.}}
(\byear{2019}).
\btitle{A style-based generator architecture for generative adversarial
  networks}.
In \bbooktitle{Proceedings of the IEEE/CVF Conference on Computer Vision and
  Pattern Recognition}
\bpages{4401--4410}.
\end{binproceedings}
\endbibitem

\bibitem[\protect\citeauthoryear{Kim et~al.}{2021}]{kim2021soft}
\begin{barticle}[author]
\bauthor{\bsnm{Kim},~\bfnm{Dongjun}\binits{D.}},
  \bauthor{\bsnm{Shin},~\bfnm{Seungjae}\binits{S.}},
  \bauthor{\bsnm{Song},~\bfnm{Kyungwoo}\binits{K.}},
  \bauthor{\bsnm{Kang},~\bfnm{Wanmo}\binits{W.}} \AND
  \bauthor{\bsnm{Moon},~\bfnm{Il-Chul}\binits{I.-C.}}
(\byear{2021}).
\btitle{Soft truncation: A universal training technique of score-based
  diffusion model for high precision score estimation}.
\bjournal{arXiv preprint arXiv:2106.05527}.
\end{barticle}
\endbibitem

\bibitem[\protect\citeauthoryear{Lee, Lu and Tan}{2022}]{lee2022convergence}
\begin{barticle}[author]
\bauthor{\bsnm{Lee},~\bfnm{Holden}\binits{H.}},
  \bauthor{\bsnm{Lu},~\bfnm{Jianfeng}\binits{J.}} \AND
  \bauthor{\bsnm{Tan},~\bfnm{Yixin}\binits{Y.}}
(\byear{2022}).
\btitle{Convergence for score-based generative modeling with polynomial
  complexity}.
\bjournal{Advances in Neural Information Processing Systems}
\bvolume{35}
\bpages{22870--22882}.
\end{barticle}
\endbibitem

\bibitem[\protect\citeauthoryear{Lee, Lu and Tan}{2023}]{lee2023convergence}
\begin{binproceedings}[author]
\bauthor{\bsnm{Lee},~\bfnm{Holden}\binits{H.}},
  \bauthor{\bsnm{Lu},~\bfnm{Jianfeng}\binits{J.}} \AND
  \bauthor{\bsnm{Tan},~\bfnm{Yixin}\binits{Y.}}
(\byear{2023}).
\btitle{Convergence of score-based generative modeling for general data
  distributions}.
In \bbooktitle{International Conference on Algorithmic Learning Theory}
\bpages{946--985}.
\bpublisher{PMLR}.
\end{binproceedings}
\endbibitem

\bibitem[\protect\citeauthoryear{Li et~al.}{2023}]{li2023towards}
\begin{barticle}[author]
\bauthor{\bsnm{Li},~\bfnm{Gen}\binits{G.}},
  \bauthor{\bsnm{Wei},~\bfnm{Yuting}\binits{Y.}},
  \bauthor{\bsnm{Chen},~\bfnm{Yuxin}\binits{Y.}} \AND
  \bauthor{\bsnm{Chi},~\bfnm{Yuejie}\binits{Y.}}
(\byear{2023}).
\btitle{Towards faster non-asymptotic convergence for diffusion-based
  generative models}.
\bjournal{arXiv preprint arXiv:2306.09251}.
\end{barticle}
\endbibitem

\bibitem[\protect\citeauthoryear{Lipman et~al.}{2022}]{lipman2022flow}
\begin{barticle}[author]
\bauthor{\bsnm{Lipman},~\bfnm{Yaron}\binits{Y.}},
  \bauthor{\bsnm{Chen},~\bfnm{Ricky~TQ}\binits{R.~T.}},
  \bauthor{\bsnm{Ben-Hamu},~\bfnm{Heli}\binits{H.}},
  \bauthor{\bsnm{Nickel},~\bfnm{Maximilian}\binits{M.}} \AND
  \bauthor{\bsnm{Le},~\bfnm{Matt}\binits{M.}}
(\byear{2022}).
\btitle{Flow matching for generative modeling}.
\bjournal{arXiv preprint arXiv:2210.02747}.
\end{barticle}
\endbibitem

\bibitem[\protect\citeauthoryear{Liu, Gong and Liu}{2022}]{liu2022flow}
\begin{barticle}[author]
\bauthor{\bsnm{Liu},~\bfnm{Xingchao}\binits{X.}},
  \bauthor{\bsnm{Gong},~\bfnm{Chengyue}\binits{C.}} \AND
  \bauthor{\bsnm{Liu},~\bfnm{Qiang}\binits{Q.}}
(\byear{2022}).
\btitle{Flow straight and fast: Learning to generate and transfer data with
  rectified flow}.
\bjournal{arXiv preprint arXiv:2209.03003}.
\end{barticle}
\endbibitem

\bibitem[\protect\citeauthoryear{Liu et~al.}{2023}]{liu2023flowgrad}
\begin{binproceedings}[author]
\bauthor{\bsnm{Liu},~\bfnm{Xingchao}\binits{X.}},
  \bauthor{\bsnm{Wu},~\bfnm{Lemeng}\binits{L.}},
  \bauthor{\bsnm{Zhang},~\bfnm{Shujian}\binits{S.}},
  \bauthor{\bsnm{Gong},~\bfnm{Chengyue}\binits{C.}},
  \bauthor{\bsnm{Ping},~\bfnm{Wei}\binits{W.}} \AND
  \bauthor{\bsnm{Liu},~\bfnm{Qiang}\binits{Q.}}
(\byear{2023}).
\btitle{FlowGrad: Controlling the output of generative ODEs with gradients}.
In \bbooktitle{Proceedings of the IEEE/CVF Conference on Computer Vision and
  Pattern Recognition}
\bpages{24335--24344}.
\end{binproceedings}
\endbibitem

\bibitem[\protect\citeauthoryear{Lu et~al.}{2021}]{lu2021deep}
\begin{barticle}[author]
\bauthor{\bsnm{Lu},~\bfnm{Jianfeng}\binits{J.}},
  \bauthor{\bsnm{Shen},~\bfnm{Zuowei}\binits{Z.}},
  \bauthor{\bsnm{Yang},~\bfnm{Haizhao}\binits{H.}} \AND
  \bauthor{\bsnm{Zhang},~\bfnm{Shijun}\binits{S.}}
(\byear{2021}).
\btitle{Deep network approximation for smooth functions}.
\bjournal{SIAM Journal on Mathematical Analysis}
\bvolume{53}
\bpages{5465--5506}.
\end{barticle}
\endbibitem

\bibitem[\protect\citeauthoryear{Lu et~al.}{2022}]{lu2022dpm}
\begin{barticle}[author]
\bauthor{\bsnm{Lu},~\bfnm{Cheng}\binits{C.}},
  \bauthor{\bsnm{Zhou},~\bfnm{Yuhao}\binits{Y.}},
  \bauthor{\bsnm{Bao},~\bfnm{Fan}\binits{F.}},
  \bauthor{\bsnm{Chen},~\bfnm{Jianfei}\binits{J.}},
  \bauthor{\bsnm{Li},~\bfnm{Chongxuan}\binits{C.}} \AND
  \bauthor{\bsnm{Zhu},~\bfnm{Jun}\binits{J.}}
(\byear{2022}).
\btitle{Dpm-solver: A fast ODE solver for diffusion probabilistic model
  sampling in around 10 steps}.
\bjournal{Advances in Neural Information Processing Systems}
\bvolume{35}
\bpages{5775--5787}.
\end{barticle}
\endbibitem

\bibitem[\protect\citeauthoryear{Meng et~al.}{2021}]{meng2021sdedit}
\begin{barticle}[author]
\bauthor{\bsnm{Meng},~\bfnm{Chenlin}\binits{C.}},
  \bauthor{\bsnm{Song},~\bfnm{Yang}\binits{Y.}},
  \bauthor{\bsnm{Song},~\bfnm{Jiaming}\binits{J.}},
  \bauthor{\bsnm{Wu},~\bfnm{Jiajun}\binits{J.}},
  \bauthor{\bsnm{Zhu},~\bfnm{Jun-Yan}\binits{J.-Y.}} \AND
  \bauthor{\bsnm{Ermon},~\bfnm{Stefano}\binits{S.}}
(\byear{2021}).
\btitle{Sdedit: Image synthesis and editing with stochastic differential
  equations}.
\bjournal{arXiv preprint arXiv:2108.01073}.
\end{barticle}
\endbibitem

\bibitem[\protect\citeauthoryear{Mirza and
  Osindero}{2014}]{mirza2014conditional}
\begin{barticle}[author]
\bauthor{\bsnm{Mirza},~\bfnm{Mehdi}\binits{M.}} \AND
  \bauthor{\bsnm{Osindero},~\bfnm{Simon}\binits{S.}}
(\byear{2014}).
\btitle{Conditional generative adversarial nets}.
\bjournal{arXiv preprint arXiv:1411.1784}.
\end{barticle}
\endbibitem

\bibitem[\protect\citeauthoryear{{\"O}ttinger}{2012}]{ottinger2012stochastic}
\begin{bbook}[author]
\bauthor{\bsnm{{\"O}ttinger},~\bfnm{Hans~C}\binits{H.~C.}}
(\byear{2012}).
\btitle{Stochastic Processes in Polymeric Fluids: Tools and Examples for
  Developing Simulation Algorithms}.
\bpublisher{Springer Science \& Business Media}.
\end{bbook}
\endbibitem

\bibitem[\protect\citeauthoryear{Petersen and
  Voigtlaender}{2018}]{petersen2018optimal}
\begin{barticle}[author]
\bauthor{\bsnm{Petersen},~\bfnm{Philipp}\binits{P.}} \AND
  \bauthor{\bsnm{Voigtlaender},~\bfnm{Felix}\binits{F.}}
(\byear{2018}).
\btitle{Optimal approximation of piecewise smooth functions using deep ReLU
  neural networks}.
\bjournal{Neural Networks}
\bvolume{108}
\bpages{296--330}.
\end{barticle}
\endbibitem

\bibitem[\protect\citeauthoryear{Pidstrigach}{2022}]{pidstrigach2022score}
\begin{barticle}[author]
\bauthor{\bsnm{Pidstrigach},~\bfnm{Jakiw}\binits{J.}}
(\byear{2022}).
\btitle{Score-based generative models detect manifolds}.
\bjournal{Advances in Neural Information Processing Systems}
\bvolume{35}
\bpages{35852--35865}.
\end{barticle}
\endbibitem

\bibitem[\protect\citeauthoryear{Platen and
  Bruti-Liberati}{2010}]{platen2010numerical}
\begin{bbook}[author]
\bauthor{\bsnm{Platen},~\bfnm{Eckhard}\binits{E.}} \AND
  \bauthor{\bsnm{Bruti-Liberati},~\bfnm{Nicola}\binits{N.}}
(\byear{2010}).
\btitle{Numerical Solution of Stochastic Differential Equations with Jumps in
  Finance}.
\bpublisher{Springer Science \& Business Media}.
\end{bbook}
\endbibitem

\bibitem[\protect\citeauthoryear{Reed et~al.}{2016}]{reed16}
\begin{binproceedings}[author]
\bauthor{\bsnm{Reed},~\bfnm{Scott}\binits{S.}},
  \bauthor{\bsnm{Akata},~\bfnm{Zeynep}\binits{Z.}},
  \bauthor{\bsnm{Yan},~\bfnm{Xinchen}\binits{X.}},
  \bauthor{\bsnm{Logeswaran},~\bfnm{Lajanugen}\binits{L.}},
  \bauthor{\bsnm{Schiele},~\bfnm{Bernt}\binits{B.}} \AND
  \bauthor{\bsnm{Lee},~\bfnm{Honglak}\binits{H.}}
(\byear{2016}).
\btitle{Generative adversarial text to image synthesis}.
In \bbooktitle{ICML}.
\end{binproceedings}
\endbibitem

\bibitem[\protect\citeauthoryear{Ronneberger, Fischer and
  Brox}{2015}]{ronneberger2015u}
\begin{binproceedings}[author]
\bauthor{\bsnm{Ronneberger},~\bfnm{Olaf}\binits{O.}},
  \bauthor{\bsnm{Fischer},~\bfnm{Philipp}\binits{P.}} \AND
  \bauthor{\bsnm{Brox},~\bfnm{Thomas}\binits{T.}}
(\byear{2015}).
\btitle{U-net: Convolutional networks for biomedical image segmentation}.
In \bbooktitle{Medical Image Computing and Computer-Assisted
  Intervention--MICCAI 2015,}
\bpages{234--241}.
\bpublisher{Springer}.
\end{binproceedings}
\endbibitem

\bibitem[\protect\citeauthoryear{Schmidt-Hieber}{2020}]{schmidt2020nonparametric}
\begin{barticle}[author]
\bauthor{\bsnm{Schmidt-Hieber},~\bfnm{Johannes}\binits{J.}}
(\byear{2020}).
\btitle{{N}onparametric regression using deep neural networks with {R}e{LU}
  activation function (with discussion)}.
\bjournal{Ann. Statist.}
\bvolume{48}
\bpages{1875--1897}.
\end{barticle}
\endbibitem

\bibitem[\protect\citeauthoryear{Shen, Yang and Zhang}{2019}]{shen2019deep}
\begin{barticle}[author]
\bauthor{\bsnm{Shen},~\bfnm{Zuowei}\binits{Z.}},
  \bauthor{\bsnm{Yang},~\bfnm{Haizhao}\binits{H.}} \AND
  \bauthor{\bsnm{Zhang},~\bfnm{Shijun}\binits{S.}}
(\byear{2019}).
\btitle{Deep network approximation characterized by number of neurons}.
\bjournal{arXiv preprint arXiv:1906.05497}.
\end{barticle}
\endbibitem

\bibitem[\protect\citeauthoryear{Shen et~al.}{2022}]{shen2022approximation}
\begin{binproceedings}[author]
\bauthor{\bsnm{Shen},~\bfnm{Guohao}\binits{G.}},
  \bauthor{\bsnm{Jiao},~\bfnm{Yuling}\binits{Y.}},
  \bauthor{\bsnm{Lin},~\bfnm{Yuanyuan}\binits{Y.}} \AND
  \bauthor{\bsnm{Huang},~\bfnm{Jian}\binits{J.}}
(\byear{2022}).
\btitle{Approximation with CNNs in Sobolev Space: with Applications to
  Classification}.
In \bbooktitle{Advances in Neural Information Processing Systems}
(\beditor{\bfnm{S.}\binits{S.}~\bsnm{Koyejo}},
  \beditor{\bfnm{S.}\binits{S.}~\bsnm{Mohamed}},
  \beditor{\bfnm{A.}\binits{A.}~\bsnm{Agarwal}},
  \beditor{\bfnm{D.}\binits{D.}~\bsnm{Belgrave}},
  \beditor{\bfnm{K.}\binits{K.}~\bsnm{Cho}} \AND
  \beditor{\bfnm{A.}\binits{A.}~\bsnm{Oh}}, eds.)
\bvolume{35}
\bpages{2876--2888}.
\bpublisher{Curran Associates, Inc.}
\end{binproceedings}
\endbibitem

\bibitem[\protect\citeauthoryear{Sohl-Dickstein et~al.}{2015}]{sohl2015deep}
\begin{binproceedings}[author]
\bauthor{\bsnm{Sohl-Dickstein},~\bfnm{Jascha}\binits{J.}},
  \bauthor{\bsnm{Weiss},~\bfnm{Eric}\binits{E.}},
  \bauthor{\bsnm{Maheswaranathan},~\bfnm{Niru}\binits{N.}} \AND
  \bauthor{\bsnm{Ganguli},~\bfnm{Surya}\binits{S.}}
(\byear{2015}).
\btitle{Deep unsupervised learning using nonequilibrium thermodynamics}.
In \bbooktitle{International Conference on Machine Learning}
\bpages{2256--2265}.
\bpublisher{PMLR}.
\end{binproceedings}
\endbibitem

\bibitem[\protect\citeauthoryear{Song, Meng and
  Ermon}{2020}]{song2020denoising}
\begin{barticle}[author]
\bauthor{\bsnm{Song},~\bfnm{Jiaming}\binits{J.}},
  \bauthor{\bsnm{Meng},~\bfnm{Chenlin}\binits{C.}} \AND
  \bauthor{\bsnm{Ermon},~\bfnm{Stefano}\binits{S.}}
(\byear{2020}).
\btitle{Denoising diffusion implicit models}.
\bjournal{arXiv preprint arXiv:2010.02502}.
\end{barticle}
\endbibitem

\bibitem[\protect\citeauthoryear{Song et~al.}{2020}]{song2020score}
\begin{barticle}[author]
\bauthor{\bsnm{Song},~\bfnm{Yang}\binits{Y.}},
  \bauthor{\bsnm{Sohl-Dickstein},~\bfnm{Jascha}\binits{J.}},
  \bauthor{\bsnm{Kingma},~\bfnm{Diederik~P}\binits{D.~P.}},
  \bauthor{\bsnm{Kumar},~\bfnm{Abhishek}\binits{A.}},
  \bauthor{\bsnm{Ermon},~\bfnm{Stefano}\binits{S.}} \AND
  \bauthor{\bsnm{Poole},~\bfnm{Ben}\binits{B.}}
(\byear{2020}).
\btitle{Score-based generative modeling through stochastic differential
  equations}.
\bjournal{arXiv preprint arXiv:2011.13456}.
\end{barticle}
\endbibitem

\bibitem[\protect\citeauthoryear{Stone}{1982}]{stone1982optimal}
\begin{barticle}[author]
\bauthor{\bsnm{Stone},~\bfnm{Charles~J}\binits{C.~J.}}
(\byear{1982}).
\btitle{Optimal global rates of convergence for nonparametric regression}.
\bjournal{The Annals of Statistics}
\bvolume{10}
\bpages{1040--1053}.
\end{barticle}
\endbibitem

\bibitem[\protect\citeauthoryear{Vershynin}{2018}]{vershynin2018high}
\begin{bbook}[author]
\bauthor{\bsnm{Vershynin},~\bfnm{Roman}\binits{R.}}
(\byear{2018}).
\btitle{High-dimensional probability: An introduction with applications in data
  science}
\bvolume{47}.
\bpublisher{Cambridge university press}.
\end{bbook}
\endbibitem

\bibitem[\protect\citeauthoryear{Wainwright}{2019}]{wainwright2019high}
\begin{bbook}[author]
\bauthor{\bsnm{Wainwright},~\bfnm{Martin~J}\binits{M.~J.}}
(\byear{2019}).
\btitle{High-dimensional statistics: A non-asymptotic viewpoint}
\bvolume{48}.
\bpublisher{Cambridge university press}.
\end{bbook}
\endbibitem

\bibitem[\protect\citeauthoryear{Wibisono and
  Yang}{2022}]{wibisono2022convergence}
\begin{barticle}[author]
\bauthor{\bsnm{Wibisono},~\bfnm{Andre}\binits{A.}} \AND
  \bauthor{\bsnm{Yang},~\bfnm{Kaylee~Yingxi}\binits{K.~Y.}}
(\byear{2022}).
\btitle{Convergence in KL divergence of the inexact Langevin algorithm with
  application to score-based generative models}.
\bjournal{arXiv preprint arXiv:2211.01512}.
\end{barticle}
\endbibitem

\bibitem[\protect\citeauthoryear{Xu et~al.}{2022}]{xu2022poisson}
\begin{barticle}[author]
\bauthor{\bsnm{Xu},~\bfnm{Yilun}\binits{Y.}},
  \bauthor{\bsnm{Liu},~\bfnm{Ziming}\binits{Z.}},
  \bauthor{\bsnm{Tegmark},~\bfnm{Max}\binits{M.}} \AND
  \bauthor{\bsnm{Jaakkola},~\bfnm{Tommi}\binits{T.}}
(\byear{2022}).
\btitle{Poisson flow generative models}.
\bjournal{arXiv preprint arXiv:2209.11178}.
\end{barticle}
\endbibitem

\bibitem[\protect\citeauthoryear{Zhao et~al.}{2018}]{zhao2018bias}
\begin{barticle}[author]
\bauthor{\bsnm{Zhao},~\bfnm{Shengjia}\binits{S.}},
  \bauthor{\bsnm{Ren},~\bfnm{Hongyu}\binits{H.}},
  \bauthor{\bsnm{Yuan},~\bfnm{Arianna}\binits{A.}},
  \bauthor{\bsnm{Song},~\bfnm{Jiaming}\binits{J.}},
  \bauthor{\bsnm{Goodman},~\bfnm{Noah}\binits{N.}} \AND
  \bauthor{\bsnm{Ermon},~\bfnm{Stefano}\binits{S.}}
(\byear{2018}).
\btitle{Bias and generalization in deep generative models: An empirical study}.
\bjournal{Advances in Neural Information Processing Systems}
\bvolume{31}.
\end{barticle}
\endbibitem

\bibitem[\protect\citeauthoryear{Zhou et~al.}{2023}]{zhou2022deep}
\begin{barticle}[author]
\bauthor{\bsnm{Zhou},~\bfnm{Xingyu}\binits{X.}},
  \bauthor{\bsnm{Jiao},~\bfnm{Yuling}\binits{Y.}},
  \bauthor{\bsnm{Liu},~\bfnm{Jin}\binits{J.}} \AND
  \bauthor{\bsnm{Huang},~\bfnm{Jian}\binits{J.}}
(\byear{2023}).
\btitle{A deep generative approach to conditional sampling}.
\bjournal{Journal of the American Statistical Association}
\bvolume{118}
\bpages{1837--1848}.
\end{barticle}
\endbibitem

\bibitem[\protect\citeauthoryear{Zhu et~al.}{2017}]{zhu17}
\begin{binproceedings}[author]
\bauthor{\bsnm{Zhu},~\bfnm{Jun-Yan}\binits{J.-Y.}},
  \bauthor{\bsnm{Park},~\bfnm{Taesung}\binits{T.}},
  \bauthor{\bsnm{Isola},~\bfnm{Phillip}\binits{P.}} \AND
  \bauthor{\bsnm{Efros},~\bfnm{Alexei~A}\binits{A.~A.}}
(\byear{2017}).
\btitle{Unpaired image-to-image translation using cycle-consistent adversarial
  networks}.
In \bbooktitle{ICCV}.
\end{binproceedings}
\endbibitem

\end{thebibliography}

%\medskip
\newpage
\appendix
\setcounter{section}{0}
\renewcommand{\thesection}{\Alph{section}}
\setcounter{equation}{0}
\renewcommand{\theequation}{S.\arabic{equation}}
\setcounter{page}{1}
\renewcommand{\thepage}{\arabic{page}}

\def\thetable{S\arabic{table}}
\def\thefigure{S\arabic{figure}}
\begin{center}
\textbf{\LARGE Appendix} %Supplementary Materials}
\end{center}

\medskip
The Appendix contains the technical details and proofs of the theoretical results stated in the paper.

\section{Definitions} \label{app_defi}
%\subsection{Definition of the covering number}
\begin{definition}[Covering number] \label{covering_number}
    For a given sequence $\mathbf{x}=\left(x_1, \ldots, x_n\right) \in \mathcal{Y}$, let $\mathcal{F}_n\vert_\mathbf{x}=\{(f(x_1), \ldots, f(x_n)): f \in \mathcal{F}_n\}$ be a subset of $\mathbb{R}^n$. For a positive number $\delta$, let $\mathcal{N}(\delta,\Vert \cdot \Vert_{\infty},\mathcal{F}_n \vert_\mathbf{x})$ denote the covering number of $\mathcal{F}_n\vert_\mathbf{x}$ under the norm $\Vert\cdot\Vert_{\infty}$ with radius $\delta$. Define the uniform covering number $\mathcal{N}_n(\delta, \Vert \cdot \Vert_{\infty}, \mathcal{F}_n)$ by the maximum of the covering number $\mathcal{N}(\delta,\Vert\cdot\Vert_{\infty},\mathcal{F}_n\vert_\mathbf{x})$ over all $\mathbf{x} \in \mathcal{Y}$, i.e.

    \begin{align*}
        \mathcal{N}_n\left(\delta,\Vert\cdot\Vert_{\infty}, \mathcal{F}_n\right)=\max \left\{\mathcal{N}\left(\delta,\Vert\cdot\Vert_{\infty},\mathcal{F}_n\vert_\mathbf{x}\right): \mathbf{x} \in \mathcal{Y} \right\}.
    \end{align*}
\end{definition}

\begin{definition}
     Let $\mathcal{F}$ be a set of functions mapping from a domain $X$ to $\mathbb{R}$ and suppose that $S=\left\{x_1, x_2, \ldots, x_m\right\} \subseteq X$. Then $S$ is pseudo-shattered by $\mathcal{F}$ if there are real numbers $r_1, r_2, \ldots, r_m$ such that for each $b \in\{0,1\}^m$ there is a function $f_b$ in $F$ with $\operatorname{sgn}\left(f_b\left(x_i\right)-r_i\right)=b_i$ for $1 \leq i \leq m$. We say that $r=\left(r_1, r_2, \ldots, r_m\right)$ witnesses the shattering.
\end{definition}

\begin{definition}[Pseudo-dimension]
    Suppose that $\mathcal{F}$ is a set of functions from a domain $X$ to $\mathbb{R}$. Then $\mathcal{F}$ has pseudo-dimension $d$ if $d$ is the maximum cardinality of a subset $S$ of $X$ that is pseudo-shattered by $\mathcal{F}$. If no such maximum exists, we say that $\mathcal{F}$ has infinite pseudo-dimension. The pseudo-dimension of $\mathcal{F}$ is denoted $\operatorname{Pdim}(\mathcal{F})$.
\end{definition}

\begin{definition}[Truncated normal distribution]
    Suppose $X \in \mathbb{R}^d$ follows the multivariate normal distribution with mean $\boldsymbol{\mu}$ and variance $\boldsymbol{\Sigma}$. Let $(a, b)$ be an interval with $-\infty \leq a<b \leq \infty$. Then $X$ conditional on $a<\Vert X \Vert_\infty<b$ has a truncated normal distribution with probability density function $f$ given by
    $$
    f(\mathbf{x} ; \boldsymbol{\mu}, \boldsymbol{\Sigma}, a, b)= \det(\boldsymbol{\Sigma})^{-1/2} \frac{\varphi\left(\boldsymbol{\Sigma}^{-1/2}(\mathbf{x}-\boldsymbol{\mu})\right)}{\Phi\left(\boldsymbol{\Sigma}^{-1/2}(b \mathbf{1}-\boldsymbol{\mu})\right)-\Phi\left(\boldsymbol{\Sigma}^{-1/2}({a \mathbf{1}-\boldsymbol{\mu}})\right)}, \ a<\Vert X \Vert_\infty<b
    $$
    and $f=0$ otherwise.
    Here,
    $$
    \varphi(\boldsymbol{\xi})=(2 \pi)^{-2/d} \exp \left(-\frac{1}{2} \Vert \boldsymbol{\xi} \Vert^2\right), \boldsymbol{\xi} \in \mathbb{R}^d
    $$
    is the probability density function of the multivariate standard normal distribution and $\Phi $ is its cumulative distribution function.
\end{definition}

\section{Proofs}\label{secA1}
%In the proofs below,
For ease of reference, we first restate the results in the main text,
and then give the proofs.

\subsection{Proof of Theorem \ref{pro_b}}

\bigskip
\noindent
\newline

\noindent
Theorem \ref{pro_b} (Transport equation). \textit{
Suppose that Assumptions \ref{assumption0} and \ref{assumption0b} are satisfied.
Given $\mathbf{x} \in \mathcal{X}$, the time-dependent conditional density $\rho^*_\mathbf{x}: \mathbb{R}^d \times [0,1]\to [0, \infty)$ is absolutely continuous with respect to the Lebesgue measure for all $t \in[0,1]$ and satisfies $\rho^*_\mathbf{x} \in C^1\left([0,1]; C^p\left(\mathbb{R}^d\right)\right)$ for any $p \in \mathbb{N}$, $(t, \mathbf{y}_t) \in[0,1] \times \mathbb{R}^d$. In addition, $\rho^*_x$
%solves the transport equation and
satisfies the transport equation
\begin{align} \label{tp_equa}
        \partial_t \rho^*_{\mathbf{x}} +  \nabla_{\mathbf{y}} \cdot (\boldsymbol{b}^*_\mathbf{x} \rho^*_\mathbf{x}) = 0
\end{align}
in a weak sense, where $\partial_t \rho^*_{\mathbf{x}}$ denotes the partial derivative of $\rho^{*}_{\mathbf{x}}$ with respect to $t$, and $\nabla_{\mathbf{y}} \cdot $ denotes the divergence operator, i.e., $\nabla_{\mathbf{y}} \cdot (\boldsymbol{b}^{*}_{\mathbf{x}}\rho^{*}_{\mathbf{x}})=  \Sigma_{i=1}^d \frac{\partial}{\partial \mathbf{y}^{(i)}} (\boldsymbol{b}^{*}_{\mathbf{x}}\rho^{*}_{\mathbf{x}})^{(i)},$  with $\mathbf{y}^{(i)}$ representing the $i$-th component of $\mathbf{y}$.
}

\begin{proof}
For a given $\mathbf{x} \in \mathcal{X}$, we first prove the existence of the differentiability of $\rho_\mathbf{x}$. Following the idea in the proof in \citet{albergo2023stochastic}, let $g(t, \mathbf{\lambda})=\mathbb{E} [e^{i \mathbf{\lambda}^\top \mathbf{y}_t} \mid \mathbf{x}] $, where$ \mathbf{\lambda} \in \mathbb{R}^d$, be the characteristic function of $\rho_\mathbf{x}( \mathbf{y}_t,t)$.

Using the Definition \ref{sub_CSI} and the independence between $Y_0, \left(X, Y_1\right)$ and $\boldsymbol{\eta}$, we have

\begin{align} \label{equa_g}
    g_\mathbf{x}(t, \mathbf{\lambda})=\mathbb{E}[e^{i \mathbf{\lambda}^\top \mathcal{I}\left(t, \mathbf{y}_0, \mathbf{y}_1\right)} \mid \mathbf{x}] \mathbb{E}[e^{i \gamma(t) \mathbf{\lambda}^\top \boldsymbol{\eta}}] =: \varphi(t, \mathbf{\lambda}) e^{-\frac{1}{2} \gamma^2(t)\|\mathbf{\lambda}\|^2}
\end{align}
where The function $\varphi_\mathbf{x}(t, \lambda)$ is the characteristic function of random variable $\mathcal{I}\left(t, Y_0, Y_1\right)$ given $\mathbf{x}$, i.e., $\varphi_\mathbf{x}(t, \mathbf{\lambda}):=\mathbb{E}[e^{i \mathbf{\lambda}^\top  \mathcal{I}\left(t, x_0, x_1\right)} \mid \mathbf{x}]$.

Then, we have
$$
|g_\mathbf{x}(t, \mathbf{\lambda})|=\left|\varphi_\mathbf{x}(t, \mathbf{\lambda})\right| e^{-\frac{1}{2} \gamma^2(t)\|\lambda\|^2} \leq e^{-\frac{1}{2} \gamma^2(t)\|\mathbf{\lambda}\|^2}
$$

Since $\gamma(t)>0$ for all $t \in(0,1)$ by assumption \ref{assumption0}, this shows that $\forall p \in \mathbb{N} \text { and } t \in(0,1) \quad$

$$
\int_{\mathbb{R}^d}\|\mathbf{\lambda}\|^p|g_\mathbf{x}(t, \mathbf{\lambda})| d \mathbf{\lambda}<\infty
$$
implying that $\rho_\mathbf{x}(t, \cdot)$ is in $C^p\left(\mathbb{R}^d\right)$ for any $p \in \mathbb{N}$ and all $t \in(0,1)$. From (\ref{equa_g}), we also have
$$
\begin{aligned}
\left|\partial_t g_\mathbf{x}(t, \mathbf{\lambda})\right|^2 & =\left|\mathbb{E}\left[\left(i \lambda^\top \partial_t \mathcal{I}\left(t, \mathbf{y}_0, \mathbf{y}_1\right)-\gamma(t) \dot{\gamma}(t)\|\mathbf{\lambda}\|^2\right) e^{i \mathbf{\lambda}^\top \mathcal{I}\left(t, \mathbf{y}_0, \mathbf{y}_1\right)} \mid \mathbf{x}\right]\right|^2 e^{-\gamma^2(t)\|\mathbf{\lambda}\|^2} \\
& \leq 2\left(\|\mathbf{\lambda}\|^2 \mathbb{E}\left[\left|\partial_t \mathcal{I}\left(\mathbf{y}_0, \mathbf{y}_1,t\right)\right|^2 \mid \mathbf{x}\right]+|\gamma(t) \dot{\gamma}(t)|^2\|\mathbf{\lambda}\|^4\right) e^{-\gamma^2(t)\|\mathbf{\lambda}\|^2} \\
& \leq 2\left(\|\mathbf{\lambda}\|^2 M_1+4|\gamma(t) \dot{\gamma}(t)|^2\|\mathbf{\lambda}\|^4\right) e^{-\gamma^2(t)\|\mathbf{\lambda}\|^2}
\end{aligned}
$$

and

$$
\begin{aligned}
\left|\partial_t^2 g_\mathbf{x}(t, \mathbf{\lambda})\right|^2 \leq & 4\left(\|\mathbf{\lambda}\|^2 \mathbb{E}\left[\left|\partial_t^2 \mathcal{I}\left(\mathbf{y}_0, \mathbf{y}_1,t \right)\right|^2 \mid \mathbf{x}\right]+\left(|\dot{\gamma}(t)|^2+\gamma(t) \ddot{\gamma}(t)\right)^2\|\mathbf{\lambda}\|^4\right) e^{-\gamma^2(t)\|\mathbf{\lambda}\|^2} \\
& +8\left(\|\mathbf{\lambda}\|^2 \mathbb{E}\left[\left|\partial_t \mathcal{I}\left(\mathbf{y}_0, \mathbf{y}_1,t\right)\right|^4\mid \mathbf{x} \right]+(\gamma(t) \dot{\gamma}(t))^4\|\mathbf{\lambda}\|^8\right) e^{-\gamma^2(t)\|\mathbf{\lambda}\|^2} \\
\leq & 4\left(\|\mathbf{\lambda}\|^2 M_2+\left(|\dot{\gamma}(t)|^2+\gamma(t) \ddot{\gamma}(t)\right)^2\|\mathbf{\lambda}\|^4\right) e^{-\gamma^2(t)\|\mathbf{\lambda}\|^2} \\
& +8\left(\|\mathbf{\lambda}\|^2 M_1+(\gamma(t) \dot{\gamma}(t))^4\|\mathbf{\lambda}\|^8\right) e^{-\gamma^2(t)\|\mathbf{\lambda}\|^2}
\end{aligned}
$$
where in both cases we used Assumption \ref{assumption0b}(b) to get the last inequalities. These imply that

$$
\int_{\mathbb{R}^d}\|\mathbf{\lambda}\|^p\left|\partial_t g_\mathbf{x}(t, \mathbf{\lambda})\right| d \mathbf{\lambda}<\infty ; \quad \int_{\mathbb{R}^d}\|\mathbf{\lambda}\|^p\left|\partial_t^2 g_\mathbf{x}(t, \mathbf{\lambda})\right| d \mathbf{\lambda}<\infty
$$ for all $p \in \mathbb{N}, t \in(0,1)$ and $\mathbf{x} \in \mathcal{X}$, which indicates that $\partial_t \rho_\mathbf{x}(t, \cdot)$ and $\partial_t^2 \rho_\mathbf{x}(t, \cdot)$ are in $C^p\left(\mathbb{R}^d\right)$ for any $p \in \mathbb{N}$, i.e. $\rho_\mathbf{x} \in C^1\left((0,1) ; C^p\left(\mathbb{R}^d\right)\right)$ as claimed.

We now prove that the conditional density $\rho^*_\mathbf{x}(\mathbf{z},t)$ solves the continuity equation for $\rho: \mathbb{R}^d \times (0,1)$,
     \begin{align} \label{B0_1}
         \partial_t \rho+\nabla_\mathbf{z} \cdot\left(\boldsymbol{b}^*_\mathbf{x}\rho\right)=0,
     \end{align}
 where  $\boldsymbol{b}$ is the conditional drift function. Similar to the proof idea of \citet{liu2022flow}, we introduce a continuously differentiable test function  $\varphi: \mathbb{R}^d \rightarrow \mathbb{R}$ with compact support. Then for any given $\mathbf{x}\in\mathcal{X}$ and $t\in(0,1)$, we have
     \begin{align*}
         \int \left(\partial_t \rho^*_\mathbf{x} +\nabla_\mathbf{z} \cdot\left(\boldsymbol{b}^*_\mathbf{x} \rho^*_\mathbf{x} \right)\right) \varphi \mathrm{d} \mathbf{z} &= \int (\varphi \partial_t \rho^*_\mathbf{x} -\nabla \varphi \cdot \boldsymbol{b}^*_\mathbf{x} \rho^*_\mathbf{x}) \mathrm{d} \mathbf{z} \\
         &= \partial_t \mathbb{E}\left[ \varphi \left(Y_t\right) \mid X= \mathbf{x} \right]-\mathbb{E}\left[\nabla \varphi \left(Y_t\right) \cdot \boldsymbol{b}^*_\mathbf{x}(Y_t,t) \mid X=\mathbf{x}\right],
     \end{align*}
 where the first equation follows from integration by parts that $\int \varphi \nabla_\mathbf{z} \cdot\left(\boldsymbol{b}^*_\mathbf{x}\rho^*_\mathbf{x} \right)=-\int \nabla \varphi \cdot \boldsymbol{b}^*_\mathbf{x} \rho^*_\mathbf{x} $. Note that
     \begin{align} \label{B0_2}%\frac{\mathrm{d}}{\mathrm{d} t}
         \partial_t \mathbb{E}\left[\varphi \left(Y_t\right) \mid X=\mathbf{x} \right]
         &=\mathbb{E}\left[\nabla \varphi \left(Y_t\right) \cdot \dot{Y}_t \mid X=\mathbf{x} \right]\\
         &= \mathbb{E}\left[\nabla \varphi \left(Y_t\right) \cdot \mathbb{E}\left[ \dot{Y}_t \mid X=\mathbf{x}, Y_t \right] \mid X=\mathbf{x}  \right]  \nonumber \\
        & =\mathbb{E}\left[\nabla \varphi \left(Y_t\right) \cdot \boldsymbol{b}^*_\mathbf{x}\left(Y_t, t\right) \mid X\right],\nonumber
     \end{align}
 where $\boldsymbol{b}^*_\mathbf{x}(\mathbf{z},t):=\mathbb{E}\left[\dot{Y}_t \mid X=\mathbf{x}, Y_t = \mathbf{z}\right]$ by definition. Then
 $$\int \left(\partial_t \rho^*_\mathbf{x} +\nabla_\mathbf{z} \cdot\left(\boldsymbol{b}^*_\mathbf{x} \rho^*_\mathbf{x} \right)\right) \varphi \mathbf{dz}=0$$
    for any $\varphi.$ This completes the proof.
\end{proof}

\subsection{Proof of Theorem \ref{pro_score}}
Before proving this theorem, we first prove a conditional version of  Tweedie's formula,
which is a straightforward extension of the original Tweedy's formula \citep{herbert1956empirical, efron2011tweedie}.

\begin{lemma}[Conditional Tweedie’s formula] \label{condition_tf}
    If random variables $Y \in
    %\mathcal{Y} \subseteq
    \mathbb{R}^d$, $\boldsymbol{\mu} \in
    % \mathcal{U} \subseteq
    \mathbb{R}^d$ satisfies
    $$ \boldsymbol{\mu} \sim \mathbb{P}_{\boldsymbol{\mu}} \text{ and } Y \mid \boldsymbol{\mu} \sim \mathcal{N}(\boldsymbol{\mu}, \sigma^2 I_d),
    $$ and $\mathcal{Y}$ is the sample space of the exponential family, then for any condition $X \in \mathcal{X}$ satisfying $X \perp Y \mid \boldsymbol{\mu} $, the following formula holds:
    $$
    \mathbb{E}[\boldsymbol{\mu} \mid X=\mathbf{x}, Y=\mathbf{y}] = \mathbf{y}+ \sigma^2 \nabla \log p(\mathbf{y} \mid \mathbf{x})
    $$ and
    $$
    \operatorname{Cov}[\boldsymbol{\mu} \mid Y=\mathbf{y},X=\mathbf{x}[= I_d + \sigma^2 \mathbf{H}( \log{p(\mathbf{y} \mid \mathbf{x})}),
    $$ where $\mathbf{H}(f)$ represents Hessian matrix  of function $f: \mathbb{R}^d \rightarrow \mathbb{R}$, and $p(\mathbf{y} \mid \mathbf{x})$ denotes the conditional density function of $Y$ given $X=\mathbf{x}$.
\end{lemma}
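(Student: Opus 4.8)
\textit{Proof proposal.} The plan is to derive both identities from Bayes' rule together with differentiation under the integral sign, with the conditional independence $X \perp Y \mid \boldsymbol{\mu}$ entering only to remove $\mathbf{x}$ from the Gaussian likelihood. Fix a regular version of the conditional law $\mathbb{P}_{\boldsymbol{\mu}\mid X=\mathbf{x}}$, and write $\phi_\sigma(\mathbf{z}) = (2\pi\sigma^2)^{-d/2}\exp(-\|\mathbf{z}\|^2/(2\sigma^2))$. Since $Y\mid\boldsymbol{\mu}\sim\mathcal{N}(\boldsymbol{\mu},\sigma^2 I_d)$ and $X\perp Y\mid\boldsymbol{\mu}$, the conditional density of $Y$ given $X=\mathbf{x}$ is the Gaussian mixture $p(\mathbf{y}\mid\mathbf{x}) = \int \phi_\sigma(\mathbf{y}-\boldsymbol{\mu})\,\mathbb{P}_{\boldsymbol{\mu}\mid X=\mathbf{x}}(d\boldsymbol{\mu})$, which is strictly positive and $C^\infty$ in $\mathbf{y}$ with no assumption on $\mathbb{P}_{\boldsymbol{\mu}}$, and the posterior $\mathbb{P}_{\boldsymbol{\mu}\mid X=\mathbf{x},Y=\mathbf{y}}$ has density $\phi_\sigma(\mathbf{y}-\boldsymbol{\mu})/p(\mathbf{y}\mid\mathbf{x})$ with respect to $\mathbb{P}_{\boldsymbol{\mu}\mid X=\mathbf{x}}$. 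This reduces the statement, for each fixed $\mathbf{x}$, to the classical unconditional Tweedie identity with prior $\mathbb{P}_{\boldsymbol{\mu}\mid X=\mathbf{x}}$.

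For the mean formula, I would differentiate $p(\mathbf{y}\mid\mathbf{x})$ under the integral sign using $\nabla_{\mathbf{y}}\phi_\sigma(\mathbf{y}-\boldsymbol{\mu}) = -\sigma^{-2}(\mathbf{y}-\boldsymbol{\mu})\phi_\sigma(\mathbf{y}-\boldsymbol{\mu})$, giving $\nabla_{\mathbf{y}} p(\mathbf{y}\mid\mathbf{x}) = -\sigma^{-2}\int (\mathbf{y}-\boldsymbol{\mu})\phi_\sigma(\mathbf{y}-\boldsymbol{\mu})\,\mathbb{P}_{\boldsymbol{\mu}\mid X=\mathbf{x}}(d\boldsymbol{\mu})$. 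Dividing by $p(\mathbf{y}\mid\mathbf{x})$ and recognizing the weighted measure as the posterior yields $\nabla_{\mathbf{y}}\log p(\mathbf{y}\mid\mathbf{x}) = -\sigma^{-2}\big(\mathbf{y}-\mathbb{E}[\boldsymbol{\mu}\mid X=\mathbf{x},Y=\mathbf{y}]\big)$, which rearranges to $\mathbb{E}[\boldsymbol{\mu}\mid X=\mathbf{x},Y=\mathbf{y}] = \mathbf{y}+\sigma^2\nabla_{\mathbf{y}}\log p(\mathbf{y}\mid\mathbf{x})$. For the covariance I would take a second derivative and use $\mathbf{H}(\log p) = \mathbf{H}(p)/p - (\nabla\log p)(\nabla\log p)^\top$; since $\mathbf{H}_{\mathbf{y}}\phi_\sigma(\mathbf{y}-\boldsymbol{\mu}) = \big(\sigma^{-4}(\mathbf{y}-\boldsymbol{\mu})(\mathbf{y}-\boldsymbol{\mu})^\top - \sigma^{-2}I_d\big)\phi_\sigma(\mathbf{y}-\boldsymbol{\mu})$, one gets $\mathbf{H}(p)/p = \sigma^{-4}\,\mathbb{E}[(\mathbf{y}-\boldsymbol{\mu})(\mathbf{y}-\boldsymbol{\mu})^\top\mid X=\mathbf{x},Y=\mathbf{y}] - \sigma^{-2}I_d$. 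Writing the posterior second moment as $\operatorname{Cov}[\boldsymbol{\mu}\mid X=\mathbf{x},Y=\mathbf{y}] + (\mathbf{y}-\mathbb{E}[\boldsymbol{\mu}\mid X=\mathbf{x},Y=\mathbf{y}])(\mathbf{y}-\mathbb{E}[\boldsymbol{\mu}\mid X=\mathbf{x},Y=\mathbf{y}])^\top$ and substituting $\mathbf{y}-\mathbb{E}[\boldsymbol{\mu}\mid\cdot] = -\sigma^2\nabla_{\mathbf{y}}\log p$ from the first part, the rank-one terms cancel against $(\nabla\log p)(\nabla\log p)^\top$, leaving $\operatorname{Cov}[\boldsymbol{\mu}\mid X=\mathbf{x},Y=\mathbf{y}] = \sigma^2 I_d + \sigma^4\,\mathbf{H}(\log p(\mathbf{y}\mid\mathbf{x}))$ (which specializes to the displayed identity; in general the $I_d$ and $\mathbf{H}$ terms should carry the $\sigma^2$ and $\sigma^4$ factors).

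The only delicate point is justifying the interchange of $\nabla_{\mathbf{y}}$ (and $\mathbf{H}_{\mathbf{y}}$) with the integral. This is routine: on a small closed ball around any fixed $\mathbf{y}$, the maps $\boldsymbol{\mu}\mapsto\|\mathbf{y}'-\boldsymbol{\mu}\|\phi_\sigma(\mathbf{y}'-\boldsymbol{\mu})$ and $\boldsymbol{\mu}\mapsto\|\mathbf{y}'-\boldsymbol{\mu}\|^2\phi_\sigma(\mathbf{y}'-\boldsymbol{\mu})$ are bounded uniformly over $\mathbf{y}'$ in that ball by a fixed constant, hence dominated (the constant function is $\mathbb{P}_{\boldsymbol{\mu}\mid X=\mathbf{x}}$-integrable), so the standard dominated-convergence criterion for differentiation under the integral applies; no moment or density assumptions on $\mathbb{P}_{\boldsymbol{\mu}}$ are needed. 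I do not expect this to be a real obstacle. Finally, in the write-up I would stress the case actually used downstream, namely the mean identity with $\sigma^2 = \gamma(t)^2$, $\boldsymbol{\mu} = \mathcal{I}(Y_0,Y_1,t)$ and $Y = Y_t$, which upon rearranging gives $\gamma(t)^{-1}\mathbb{E}[\boldsymbol{\eta}\mid X=\mathbf{x},Y_t=\mathbf{y}] = -\nabla_{\mathbf{y}}\log\rho^*_{\mathbf{x}}(\mathbf{y},t)$ and thus Theorem \ref{pro_score}; the covariance identity can be kept as a remark.
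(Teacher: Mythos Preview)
Your argument is correct and takes a genuinely different route from the paper. The paper reparametrizes to the natural parameter $\boldsymbol{\upsilon}=\sigma^{-2}\boldsymbol{\mu}$, writes the likelihood in exponential-family form, applies Bayes' rule, and then reads off the posterior mean and covariance as the gradient and Hessian of the cumulant generating function $\lambda_{\mathbf{x}}(\mathbf{y})=\log\bigl(p(\mathbf{y}\mid\mathbf{x})/f_0(\mathbf{y})\bigr)$. You instead go directly: write $p(\mathbf{y}\mid\mathbf{x})$ as a Gaussian mixture against the prior $\mathbb{P}_{\boldsymbol{\mu}\mid X=\mathbf{x}}$, differentiate under the integral, and recognize the posterior in the resulting quotient. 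Your approach is more elementary and self-contained (no appeal to exponential-family cumulant identities), and your care with the dominated-convergence justification is a plus, since the paper's proof implicitly assumes a prior density $g(\boldsymbol{\upsilon}\mid\mathbf{x})$ whereas your argument works for an arbitrary prior law. The paper's route is shorter once one accepts the exponential-family machinery. You are also right that the covariance identity should read $\operatorname{Cov}[\boldsymbol{\mu}\mid X=\mathbf{x},Y=\mathbf{y}]=\sigma^{2}I_d+\sigma^{4}\mathbf{H}\bigl(\log p(\mathbf{y}\mid\mathbf{x})\bigr)$; the displayed formula in the lemma statement (and the last line of the paper's proof) drops a factor of $\sigma^{2}$, since $\operatorname{Cov}[\boldsymbol{\mu}\mid\cdot]=\sigma^{4}\operatorname{Cov}[\boldsymbol{\upsilon}\mid\cdot]$. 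This does not affect the downstream use, which relies only on the mean identity.
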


\begin{proof}
 Let $\boldsymbol{\upsilon}:= \frac{1}{\sigma^2} \boldsymbol{\mu}$.
The conditional density function of $Y$ given $\boldsymbol{\upsilon}$ is
\begin{align} \label{lemma_p1}
        q(\mathbf{y}\mid \boldsymbol{\upsilon})=\exp({\boldsymbol{\upsilon}^\top \mathbf{y} -\psi(\boldsymbol{\upsilon})}) f_0(\mathbf{y}).
\end{align}
Here, $\boldsymbol{\upsilon}$ is the natural or canonical parameter of the family, $\psi(\boldsymbol{\upsilon})$ is the cumulant generating function (which makes $q(\mathbf{y}\mid \boldsymbol{\upsilon})$ integrate to 1 ) and $\psi(\boldsymbol{\upsilon})=\frac{1}{2} \sigma^2 \Vert \boldsymbol{\upsilon} \Vert^2$, $f_0(\mathbf{y})$ is $\mathcal{N}(0, \sigma^2 I_d)$ density function, which yields the normal translation family $\mathcal{N}(\boldsymbol{\mu}, \sigma^2 I_d)$. %Considering the given condition $X = \mathbf{x}$ and $Y = \mathbf{y}$,
By Bayes' theorem,   the posterior density $g(\boldsymbol{\upsilon} \mid \mathbf{y}, \mathbf{x})$,
\begin{align} \label{lemma_p2}
    g(\boldsymbol{\upsilon} \mid \mathbf{y}, \mathbf{x})= \frac{\tilde{q}(\mathbf{y}\mid \boldsymbol{\upsilon}, \mathbf{x}) g(\boldsymbol{\upsilon} \mid \mathbf{x})}{p(\mathbf{y} \mid  \mathbf{x})} = \frac{q(\mathbf{y}\mid \boldsymbol{\upsilon}) g(\boldsymbol{\upsilon} \mid \mathbf{x})}{p(\mathbf{y} \mid  \mathbf{x})},
\end{align}
where $\tilde{q}(\mathbf{y}\mid \boldsymbol{\upsilon}, \mathbf{x})$ denotes the conditional density function of $Y$ given $(\boldsymbol{\upsilon}, X)$, $g(\boldsymbol{\upsilon} \mid \mathbf{x})$ is the conditional density function of $\boldsymbol{\upsilon}$ given $X$.  Since $X \perp Y \mid \boldsymbol{\mu} $, $\tilde{q}(\mathbf{y}\mid \boldsymbol{\upsilon}, \mathbf{x}) = q(\mathbf{y}\mid \boldsymbol{\upsilon})$  and
\begin{align} \label{lemma_p3}
    p(\mathbf{y} \mid \mathbf{x}) = \int \tilde{q}(\mathbf{y} \mid \boldsymbol{\upsilon}, \mathbf{x}) g(\boldsymbol{\upsilon} \mid \mathbf{x}) \mathbf{d} \mathbf{\boldsymbol{\upsilon}}  =  \int q(\mathbf{y} \mid \boldsymbol{\upsilon}) g(\boldsymbol{\upsilon} \mid \mathbf{x}) \mathbf{d} \mathbf{\boldsymbol{\upsilon}}.
\end{align}
Then (\ref{lemma_p1}) and (\ref{lemma_p2}) imply
\begin{align}
    g(\boldsymbol{\upsilon} \mid \mathbf{y}, \mathbf{x}) = \exp(\boldsymbol{\upsilon}^\top\mathbf{y} - \lambda_\mathbf{x}(\mathbf{y}))[g(\boldsymbol{\upsilon} \mid \mathbf{x})\exp(-\psi(\boldsymbol{\upsilon}))],
\end{align}
where $\lambda_\mathbf{x}(\mathbf{y}) := \log({p(\mathbf{y} \mid \mathbf{x})}/{f_0(\mathbf{y})})$. This is an exponential family with canonical parameters $\mathbf{y}$, $\mathbf{x},$ and the cumulant generating function  $\lambda_\mathbf{x}(\mathbf{y}).$
%Then, the posterior cumulant of $\boldsymbol{\upsilon}$ given $\mathbf{y}$ and $\mathbf{x}$ ,
It follows from the properties of an exponential family that
$$
E[\boldsymbol{\upsilon} \mid \mathbf{y}, \mathbf{x}]=\nabla \lambda_\mathbf{x}(\mathbf{y}) = \nabla \log{p(\mathbf{y} \mid \mathbf{x})} - \nabla \log{f_0(\mathbf{y})} = \nabla \log{p(\mathbf{y} \mid \mathbf{x})} + \frac{1}{\sigma^2} \mathbf{y}
$$ and
$$
\operatorname{Cov}[\boldsymbol{\upsilon} \mid \mathbf{y},\mathbf{x}]= \mathbf{H}( \lambda_\mathbf{x}(\mathbf{y})) = \mathbf{H}(\log{p(\mathbf{y} \mid \mathbf{x})}) + \frac{1}{\sigma^2} I_d.
$$
It follows that
$$
E[\boldsymbol{\mu} \mid \mathbf{y}, \mathbf{x}] = \mathbf{y} + \sigma^2  \nabla \log{p(\mathbf{y} \mid \mathbf{x})}
$$ and
$$
\operatorname{Cov}[\boldsymbol{\mu} \mid \mathbf{y},\mathbf{x}]= I_d + \sigma^2 \mathbf{H}(\log{p(\mathbf{y} \mid \mathbf{x})}).
$$
This completes the proof.
\end{proof}

\noindent
Theorem \ref{pro_score} (Conditional score function).
\textit{Suppose that Assumptions \ref{assumption0} and \ref{assumption0b} are satisfied.
If $\gamma(t) \neq 0$ for every $t \in (0, 1)$, then the conditional score function $\boldsymbol{s}^*$ can be expressed as
\begin{align} \label{score_equa}
\boldsymbol{s}^{*}_{\mathbf{x}}(\mathbf{y},t):=\boldsymbol{s}^*(\mathbf{x},\mathbf{y},t)
    = -\frac{1}{\gamma(t)} \mathbb{E}\left[\boldsymbol{\eta} \mid X= \mathbf{x}, Y_t=\mathbf{y} \right],\  t \in (0, 1).
\end{align}
}

\begin{proof}
First, we have
    \begin{align*}
        Y_t \mid \mathcal{I}(Y_0,Y_1,t) \sim \mathcal{N}(\mathcal{I}(Y_0,Y_1,t), \gamma(t)^2 I_d).
    \end{align*}
    Since $X \perp Y_t \mid \mathcal{I}(Y_0,Y_1,t)$, by Lemma \ref{condition_tf} we have
    \begin{align}
    \label{s_00a}
        \mathbb{E}[\mathcal{I}(Y_0,Y_1,t) \mid X, Y_t] = Y_t + \gamma(t)^2 \boldsymbol{s}^*(X,Y_t,t).
    \end{align}
Taking the conditional expectation of both sides of  (\ref{cond_stoch_interp_equa}) with respect to $X$ and $Y_t$, we obtain
    \begin{align} \label{s_00}
        Y_t = \mathbb{E}[\mathcal{I}(Y_0,Y_1,t) \mid X, Y_t] + \gamma(t) \mathbb{E}[\eta \mid X, Y_t].
    \end{align}
Combining (\ref{s_00a}) and (\ref{s_00}), we have
    \begin{align} \label{s_00000}
        \boldsymbol{s}^*(X,Y_t,t) = - \frac{1}{\gamma(t)}\mathbb{E}[\eta \mid X, Y_t], \ t \in (0, 1).
    \end{align}
This completes the proof.

\end{proof}

\subsection{Proof of Theorem \ref{boundary_score}}

%\bigskip
\noindent
\newline
Theorem \ref{boundary_score} (Boundary problem). \textit{
Suppose that Assumptions \ref{assumption0} and \ref{assumption0b} are satisfied and $Y_0 \sim \mathcal{N}(\boldsymbol{0},I_d)$ with its density function denoted by $p_{Y_0}.$
For a fixed $\mathbf{x}\in \mathcal{X}, $  let $p_{\mathcal{I}(Y_0,Y_1,t) \mid \mathbf{x}}$ denote the conditional density function of  $\mathcal{I}(Y_0,Y_1,t)$ given  $ X=\mathbf{x}.$
Suppose $\Vert p_{\mathcal{I}(Y_0,Y_1,t) \mid \mathbf{x}} - p_{Y_0}\Vert_\infty = o(\gamma(t))$ as $t\to 0$  for any given $\mathbf{x} \in\mathcal{X}$, then the limitation $\lim_{t \to 0}\boldsymbol{s}^*_{\mathbf{x}}(\mathbf{y},t)$ exits with
    $
    \boldsymbol{s}^*_{\mathbf{x}}(\mathbf{y},0) = -\mathbf{y},
    $
    and the value of the drift function at $t = 0$ is
    $    \boldsymbol{b}^*_{\mathbf{x}}(\mathbf{y},0) = \mathbb{E}[\partial_t {\mathcal{I}}(\mathbf{y},Y_1, 0) \mid  X = \mathbf{x}].
    $
}
\begin{proof}
 {Let $p_{\mathcal{I}(Y_0,Y_1,t)\mid \mathbf{x}} $ denote the conditional density function of $\mathcal{I}(Y_0,Y_1,t)$ given $X=\mathbf{x}$. By definition of the score function (\ref{score_equa}), and the independence between $Y_0$ and  $(X, Y_1)$, we have}
    \begin{align*}
        \boldsymbol{s}(\mathbf{x},\mathbf{y},t) &= -\frac{1}{\gamma(t)} \mathbb{E}\left(\boldsymbol{\eta} \mid X= \mathbf{x}, \mathcal{I}(Y_0,Y_1,t)+\gamma(t)\boldsymbol{\eta}= \mathbf{y} \right)\\
        &= -\frac{1}{\gamma(t)} \displaystyle \int \boldsymbol{\eta} \frac{p_{\mathcal{I}(Y_0,Y_1,t)\mid \mathbf{x}}(\mathbf{y}-\gamma(t)\boldsymbol{\eta})p_{\boldsymbol{\eta}}(\boldsymbol{\eta})} { \int p_{\mathcal{I}(Y_0,Y_1,t)\mid \mathbf{x}}(\mathbf{y}-\gamma(t)\mathbf{z})p_{\boldsymbol{\eta}} (\mathbf{z}) \mathbf{dz}} d \boldsymbol{\eta} \\
        &=  -\frac{1}{\gamma(t)} \frac{ \int {\boldsymbol{\eta}} p_{\mathcal{I}(Y_0,Y_1,t)\mid \mathbf{x}}(\mathbf{y}-\gamma(t){\boldsymbol{\eta}})p_{\boldsymbol{\eta}}({\boldsymbol{\eta}}) d{\boldsymbol{\eta}} }{\int p_{\mathcal{I}(Y_0,Y_1,t)\mid \mathbf{x}}(\mathbf{y}-\gamma(t)\mathbf{z})p_{\boldsymbol{\eta}}(\mathbf{z}) \mathbf{dz} }.
    \end{align*}
 Next, we derive bounds for the components of the score function ${\boldsymbol{s}} $. Let
 \begin{align*}
 	A(\mathbf{x},\mathbf{y},t,{\boldsymbol{\eta}})&=p_{Y_0}(\mathbf{y}-\gamma(t){\boldsymbol{\eta}}) p_{\boldsymbol{\eta}}({\boldsymbol{\eta}})\\
 	B(\mathbf{x},\mathbf{y},t,{\boldsymbol{\eta}})&=(p_{\mathcal{I}(Y_0,Y_1,t)\mid \mathbf{x}}(\mathbf{y}-\gamma(t){\boldsymbol{\eta}}) - p_{Y_0}(\mathbf{y}-\gamma(t){\boldsymbol{\eta}})) p_{\boldsymbol{\eta}}({\boldsymbol{\eta}}).
 \end{align*}
 We note that
  \begin{align} \label{b3_0}
 	\left\Vert \int B(\mathbf{x},\mathbf{y},\boldsymbol{\eta},t) \mathbf{d}\boldsymbol{\eta} \right\Vert_\infty\le \Vert p_{\mathcal{I}(Y_0,Y_1,t)\mid \mathbf{x}} - p_{Y_0}\Vert_\infty
 \end{align}
and
 \begin{align*}
 	\left\Vert\int \boldsymbol{\eta} B(\mathbf{x},\mathbf{y},\boldsymbol{\eta},t) \mathbf{d}\boldsymbol{\eta} \right\Vert_\infty
 	&=\left\Vert  \int \boldsymbol{\eta} \left(p_{\mathcal{I}(Y_0,Y_1,t)\mid \mathbf{x}}(\mathbf{y}-\gamma(t)\boldsymbol{\eta}) - p_{Y_0}(\mathbf{y}-\gamma(t)\boldsymbol{\eta})\right) p_{\boldsymbol{\eta}}(\boldsymbol{\eta}) d \boldsymbol{\eta} \right\Vert_\infty \\
 	&\leq \int \Vert \boldsymbol{\eta} \Vert_1 \left\vert p_{\mathcal{I}(Y_0,Y_1,t)\mid \mathbf{x}}(\mathbf{y}-\gamma(t)\boldsymbol{\eta}) - p_{Y_0}(\mathbf{y}-\gamma(t)\boldsymbol{\eta})\right\vert p_{\boldsymbol{\eta}}(\boldsymbol{\eta}) d \boldsymbol{\eta} \\
 	& \leq C_{\boldsymbol{\eta}} \times \Vert p_{\mathcal{I}(Y_0,Y_1,t)\mid \mathbf{x}} - p_{Y_0}\Vert_\infty,
 \end{align*}
 where $C_{\boldsymbol{\eta}} :=  \int \Vert \boldsymbol{\eta} \Vert_\infty p_{\boldsymbol{\eta}}(\boldsymbol{\eta}) \mathbf{d}\boldsymbol{\eta}<\infty$ is a finite constant given $\boldsymbol{\eta} \sim \mathcal{N}(0,I_d)$. Given $\gamma(t)\geq 0$, then for any $j\in\{1,\ldots,d\}$, the $j$th component of $\boldsymbol{s}(\mathbf{x},\mathbf{y},t)$ satisfies
 {\small
    \begin{align} \notag
    &    \boldsymbol{s}(\mathbf{x},\mathbf{y},t)_j  = -\frac{1}{\gamma(t)}\frac{(\int{\boldsymbol{\eta}} [A(\mathbf{x},\mathbf{y},t,{\boldsymbol{\eta}})+B(\mathbf{x},\mathbf{y},t,{\boldsymbol{\eta}})]d{\boldsymbol{\eta}})_j }{\int [A(\mathbf{x},\mathbf{y},t,\mathbf{z})+B(\mathbf{x},\mathbf{y},t,\mathbf{z})]\mathbf{dz}}\\ \notag
        &=-\frac{1}{\gamma(t)} \frac{\int A(\mathbf{x},\mathbf{y},t,\mathbf{z})\mathbf{dz}}{\int [A(\mathbf{x},\mathbf{y},t,\mathbf{z})+B(\mathbf{x},\mathbf{y},t,\mathbf{z})]\mathbf{dz}} \left(\frac{ (\int {\boldsymbol{\eta}} A(\mathbf{x},\mathbf{y},t,{\boldsymbol{\eta}})d{\boldsymbol{\eta}})_j + (\int{\boldsymbol{\eta}} B(\mathbf{x},\mathbf{y},t,{\boldsymbol{\eta}})d{\boldsymbol{\eta}})_j}{\int A(\mathbf{x},\mathbf{y},t,\mathbf{z})\mathbf{dz}}\right)\\ \label{b3_1}
        &\leq -\frac{1}{\gamma(t)}\frac{\int A(\mathbf{x},\mathbf{y},t,\mathbf{z})\mathbf{dz}}{\int A(\mathbf{x},\mathbf{y},t,\mathbf{z})\mathbf{dz}+\Vert \int B(\mathbf{x},\mathbf{y},t,\mathbf{z})\mathbf{dz}\Vert_\infty}
        \left(\frac{(\int {\boldsymbol{\eta}} A(\mathbf{x},\mathbf{y},t,{\boldsymbol{\eta}})d{\boldsymbol{\eta}})_j - \Vert \int{\boldsymbol{\eta}} B(\mathbf{x},\mathbf{y},t,{\boldsymbol{\eta}})d{\boldsymbol{\eta}}\Vert_1}{\int A(\mathbf{x},\mathbf{y},t,\mathbf{z})\mathbf{dz}}\right).
    \end{align}
 }
Based on the condition that $\Vert p_{\mathcal{I}(Y_0,Y_1,t) \mid X} - p_{Y_0}\Vert_\infty = o(\gamma(t))$ and inequality (\ref{b3_0}), we know there exists a neighborhood $(0, \epsilon)$ of $t=0$ such that ${\int A(\mathbf{x},\mathbf{y},t,\mathbf{z})\mathbf{dz}-\Vert \int B(\mathbf{x},\mathbf{y},t,\mathbf{z})\mathbf{dz}\Vert_\infty} >0$. Then for $t\in(0,\epsilon)$, we can similarly obtain
{\small
    \begin{align} \label{b3_2}
    &    \boldsymbol{s}(\mathbf{x},\mathbf{y},t)_j \nonumber \\
    & \geq -\frac{1}{\gamma(t)}\frac{\int A(\mathbf{x},\mathbf{y},t,\mathbf{z})\mathbf{dz}}{\int A(\mathbf{x},\mathbf{y},t,\mathbf{z})\mathbf{dz}-\Vert \int B(\mathbf{x},\mathbf{y},t,\mathbf{z})\mathbf{dz}\Vert_\infty} \left(\frac{(\int {\boldsymbol{\eta}} A(\mathbf{x},\mathbf{y},t,{\boldsymbol{\eta}})d{\boldsymbol{\eta}})_j + \Vert \int{\boldsymbol{\eta}} B(\mathbf{x},\mathbf{y},t,{\boldsymbol{\eta}})d{\boldsymbol{\eta}}\Vert_1}{\int A(\mathbf{x},\mathbf{y},t,\mathbf{z})\mathbf{dz}}\right).
    \end{align}
}
    Now recall that $Y_0$ and ${\boldsymbol{\eta}}$ both follow standard Gaussian distributions and we find that
    \begin{align*}
        \frac{A(\mathbf{x},\mathbf{y},t,{\boldsymbol{\eta}})}{\int A(\mathbf{x},\mathbf{y},t,\mathbf{z}) \mathbf{dz}}&=\frac{p_{Y_0}(\mathbf{y}-\gamma(t){\boldsymbol{\eta}})p_{\boldsymbol{\eta}}({\boldsymbol{\eta}})}{ \int p_{Y_0}(\mathbf{y}-\gamma(t)\mathbf{z})p_{\boldsymbol{\eta}}(z) \mathbf{dz} }\\
        &\propto \exp \left\{ -\frac{1}{2}\left( \Vert \mathbf{y} - \gamma(t){\boldsymbol{\eta}} \Vert_2^2 + \Vert {\boldsymbol{\eta}} \Vert_2^2 \right) \right\}\\
        &= \exp \left\{ -\frac{1}{2}\left( (\gamma(t)^2+1) \Vert {\boldsymbol{\eta}} \Vert_2^2 + \Vert \mathbf{y} \Vert_2^2 -2\gamma(t)\mathbf{y} \cdot {\boldsymbol{\eta}}  \right) \right\} \\
        &\propto \exp \left(-\frac{\gamma(t)^2+1}{2} \Vert {\boldsymbol{\eta}} - \frac{\gamma(t)}{\gamma(t)^2+1}\mathbf{y} \Vert_2^2 \right),
    \end{align*}
	{which is a kernel of a Gaussian density function. Therefore, we have}
    \begin{align} \label{b3_3}
        \int {\boldsymbol{\eta}}\frac{ A(\mathbf{x},\mathbf{y},t,{\boldsymbol{\eta}})}{\int A(\mathbf{x},\mathbf{y},t,\mathbf{z}) \mathbf{dz}}d{\boldsymbol{\eta}}= \int {\boldsymbol{\eta}} \frac{p_{Y_0}(\mathbf{y}-\gamma(t){\boldsymbol{\eta}})p_{\boldsymbol{\eta}}({\boldsymbol{\eta}})}{\int p_{Y_0}(\mathbf{y}-\gamma(t)\mathbf{z})p_{\boldsymbol{\eta}}(\mathbf{z}) \mathbf{dz} } d{\boldsymbol{\eta}} = \frac{\gamma(t)}{\gamma(t)^2+1}\mathbf{y}.
    \end{align}
    Also, we can calculate the term
{\small
    \begin{align} \label{b3_4}
        &\int A(\mathbf{x},\mathbf{y},t,\mathbf{z}) \mathbf{dz} =  \int p_{Y_0}(\mathbf{y}-\gamma(t)\mathbf{z})p_{\boldsymbol{\eta}}(\mathbf{z}) \mathbf{dz} \nonumber \\
        =& \displaystyle \int \frac{1}{(2\pi)^d} \exp \left\{ -\frac{1}{2}\left( (\gamma(t)^2+1) \Vert \mathbf{z} \Vert_2^2 + \Vert \mathbf{y} \Vert_2^2 -2\gamma(t)\mathbf{y} \cdot \mathbf{z}  \right) \right\} \mathbf{dz} \nonumber \\
        =& \frac{1}{(2\pi)^d} \exp \left(-\frac{\Vert \mathbf{y} \Vert_2^2}{2}\right) \displaystyle \int \exp\left\{-\frac{\gamma(t)^2+1}{2} \left(\Vert \mathbf{z} - \frac{\gamma(t)}{\gamma(t)^2+1}\mathbf{y} \Vert_2^2 - \left( \frac{\gamma(t)}{\gamma(t)^2+1}\right)^2 \Vert \mathbf{y} \Vert_2^2\right) \right\} \mathbf{dz}
        \nonumber \\
        =& \frac{1}{(2\pi)^d} \exp \left(-\frac{\Vert \mathbf{y} \Vert_2^2}{2(\gamma(t)^2+1)}\right) \displaystyle \int \exp\left(-\frac{\gamma(t)^2+1}{2} \Vert \mathbf{z} - \frac{\gamma(t)}{\gamma(t)^2+1}\mathbf{y} \Vert_2^2  \right) \mathbf{dz} \nonumber \\
        =& (2\pi)^{-\frac{d}{2}} \left(\gamma(t)^2+1\right)^{-\frac{d}{2}} \exp \left(-\frac{\Vert \mathbf{y} \Vert_2^2}{2(\gamma(t)^2+1)}\right).
    \end{align}
}
    Plugging (\ref{b3_3}), (\ref{b3_4}) into (\ref{b3_1}) and (\ref{b3_2}), for $j\in\{1,\ldots,d\}$ and $t$ near 0, we have
    \begin{align*}
      &  \boldsymbol{s}(\mathbf{x},\mathbf{y},t)_j\\
       &\leq - \left( \frac{1}{\gamma(t)^2+1}\mathbf{y}_j - C_{\boldsymbol{\eta}} (2\pi)^{\frac{d}{2}}(\gamma(t)^2+1)^{\frac{d}{2}} \exp \left(\frac{\Vert \mathbf{y} \Vert_2^2}{2(\gamma(t)^2+1)}\right) \frac{\Vert p_{\mathcal{I}(Y_0,Y_1,t)\mid \mathbf{x}} - p_{Y_0}\Vert_\infty}{\gamma(t)} \right)\\
        &\quad\quad\quad \cdot  \frac{(2\pi)^{-\frac{d}{2}} \left(\gamma(t)^2+1\right)^{-\frac{d}{2}} \exp \left(-\frac{\Vert \mathbf{y} \Vert_2^2}{2(\gamma(t)^2+1)}\right)}{(2\pi)^{-\frac{d}{2}} \left(\gamma(t)^2+1\right)^{-\frac{d}{2}} \exp \left(-\frac{\Vert \mathbf{y} \Vert_2^2}{2(\gamma(t)^2+1)}\right)+\Vert p_{\mathcal{I}(Y_0,Y_1,t)\mid \mathbf{x}} - p_{Y_0}\Vert_\infty}
    \end{align*}
    and
    \begin{align*}
     &   \boldsymbol{s}(\mathbf{x},\mathbf{y},t)_j \\
     &\geq - \left( \frac{1}{\gamma(t)^2+1}\mathbf{y}_j + C_{\boldsymbol{\eta}} (2\pi)^{\frac{d}{2}}(\gamma(t)^2+1)^{\frac{d}{2}} \exp \left(\frac{\Vert \mathbf{y} \Vert_2^2}{2(\gamma(t)^2+1)}\right) \frac{\Vert p_{\mathcal{I}(Y_0,Y_1,t)\mid \mathbf{x}} - p_{Y_0}\Vert_\infty}{\gamma(t)} \right)\\
        &\quad\quad\quad \times  \frac{(2\pi)^{-\frac{d}{2}} \left(\gamma(t)^2+1\right)^{-\frac{d}{2}} \exp \left(-\frac{\Vert \mathbf{y} \Vert_2^2}{2(\gamma(t)^2+1)}\right)}{(2\pi)^{-\frac{d}{2}} \left(\gamma(t)^2+1\right)^{-\frac{d}{2}} \exp \left(-\frac{\Vert \mathbf{y} \Vert_2^2}{2(\gamma(t)^2+1)}\right)-\Vert p_{\mathcal{I}(Y_0,Y_1,t)\mid \mathbf{x}} - p_{Y_0}\Vert_\infty}.
    \end{align*}
    Recall that  $\gamma(0)=0$ and $\Vert p_{\mathcal{I}(Y_0,Y_1,t)\mid \mathbf{x}} - p_{Y_0}\Vert_\infty = o(\gamma(t))$ as $t$ tends toward $0$. Let $t$ in above two inequalities approach to $0$ and we can obtain
    \begin{align} \label{b3_5}
        -\mathbf{y}_j \le \lim_{t \to 0} \boldsymbol{s}(\mathbf{x},\mathbf{y},t)_j\le -\mathbf{y}_j,
    \end{align}
    for any $j\in\{1,\ldots,d\}$.
    Therefore, the limit exists and $\boldsymbol{s}(\mathbf{x},\mathbf{y},0) = -\mathbf{y}$ is well-defined. Based on the equation (\ref{b3_5}) (\ref{drif_equa}), we can also obtain the value of drift function at $t = 0$, i.e.,
\begin{align*}
    \boldsymbol{b}(\mathbf{x},\mathbf{y},0) &= \mathbb{E}[\partial_t \mathcal{I}(Y_0,Y_1,0) \mid X = \mathbf{x}, Y_0 = \mathbf{y}] + \dot{\gamma}(0){\gamma}(0)\mathbf{y} \\
    &= \mathbb{E}[\partial_t \mathcal{I}(\mathbf{y},Y_1,0) \mid X = \mathbf{x}],
\end{align*} where the last equality holds because  $\gamma(0)=0.$
%satisfies the condition \ref{condition3}
\end{proof}

\subsection{Proof of Corollary \ref{pro_score2}}
\noindent
\newline

\noindent
Corollary \ref{pro_score2}. \textit{
Suppose that Assumptions \ref{assumption0} and \ref{assumption0b} hold, $Y_0\sim \mathcal{N}(\mathbf{0},\mathbf{I}_d) $ and the interpolation takes the additive form (\ref{interpb}). Denote
$$
A(t):= a(t)[a(t)\dot{b}(t)-\dot{a}(t)b(t)] + \gamma(t)[\gamma(t)\dot{b}(t)-\dot{\gamma}(t)b(t)].
$$
Then the conditional score function can be expressed as
%in terms of the conditional drift function as follows:
     \begin{align} \label{score_b}
      \boldsymbol{s}^{*}_{\mathbf{x}}(\mathbf{y},t):=
    %    \boldsymbol{s}^*(\mathbf{x},\mathbf{y},t) =
        \frac{b(t)}{A(t)} \boldsymbol{b}^*(\mathbf{x},\mathbf{y},t) - \frac{\dot{b}(t)}{A(t)} \mathbf{y}, \ t \in (0, 1),
    \end{align}
provided that the function ${b(t)}/{A(t)}$ and ${\dot{b}(t)}/{A(t)}$ are well-defined in $(0,1).$
}

\begin{proof}
    Since $\mathcal{I}(Y_0,Y_1,t) = a(t) Y_0 + b(t) Y_1$ and $Y_0 \sim \mathcal{N}(\mathbf{0},I_d),$ then we have $Y_t = a(t) Y_0 + b(t) Y_1 + \gamma(t)\boldsymbol{\eta}$ and  $ a(t) Y_0 + \gamma(t)\boldsymbol{\eta} \sim \mathcal{N}(\mathbf{0}, (a(t)^2 + \gamma(t)^2) I_d).$

    Using Lemma \ref{condition_tf} again, we have
    \begin{align} \label{s_1}
        b(t)\mathbb{E}[Y_1 \mid X, Y_t] = Y_t + (a(t)^2 + \gamma(t)^2) \boldsymbol{s}^*(X,Y_t,t).
    \end{align}

    In this case, the drift function is
    \begin{align} \label{s_000}
        \boldsymbol{b}^*(X,Y_t,t) &= \mathbb{E}[\dot{a}(t) Y_0 + \dot{b}(t)Y_1 + \dot{\gamma}(t)\boldsymbol{\eta} \mid X,Y_t,t] \nonumber\\
        &= \dot{a}(t) \mathbb{E}[ Y_0 \mid X,Y_t,t] + \dot{b}(t) \mathbb{E}[ Y_1 \mid X,Y_t,t] + \dot{\gamma}(t) \mathbb{E}[ \boldsymbol{\eta} \mid X,Y_t,t],
    \end{align}
and (\ref{s_00}) can be written as
    \begin{align} \label{s_0000}
        Y_t = {a}(t) \mathbb{E}[ Y_0 \mid X,Y_t] + {b}(t) \mathbb{E}[ Y_1 \mid X,Y_t] + {\gamma}(t) \mathbb{E}[ \boldsymbol{\eta} \mid X,Y_t].
    \end{align}

    By solving the system of linear equations given by (\ref{s_00000}), (\ref{s_1}), (\ref{s_000}), and (\ref{s_0000}), we obtain
    \begin{align*}
        \boldsymbol{s}^*(\mathbf{x},\mathbf{y},t) = \frac{b(t)}{A(t)} \boldsymbol{b}^*(\mathbf{x},\mathbf{y},t) - \frac{\dot{b}(t)}{A(t)} \mathbf{y},
    \end{align*}
    when factors ${b(t)}/{A(t)}$ and ${\dot{b}(t)}/{A(t)}$ are well-defined on $[0,1]$.
\end{proof}

\subsection{Proof of Corollary \ref{corollary_boundary_score}}

\noindent
\newline

\noindent
    Corollary \ref{corollary_boundary_score}. \textit{
    Suppose that assumptions \ref{assumption0}, \ref{assumption0b}, \ref{assump_ab} are satisfied, $Y_0\sim \mathcal{N}(\mathbf{0},\mathbf{I}_d)$, and the interpolation takes the additive form (\ref{interpb}). Then, $\Vert p_{\mathcal{I}(Y_0, Y_1,t) \mid \mathbf{x}} - p_{Y_0}\Vert_\infty = o(\gamma(t))$ and $\boldsymbol{s}^*_{\mathbf{x}}(\mathbf{y},0) = -\mathbf{y}$, $    \boldsymbol{b}^*_{\mathbf{x}}(\mathbf{y},0) = \mathbb{E}[\partial_t {\mathcal{I}}(\mathbf{y},Y_1, 0) \mid  X = \mathbf{x}].
    $
}
\begin{proof}
   We show that the conditions in Corollary \ref{corollary_boundary_score} are sufficient to establish $\Vert p_{\mathcal{I}(Y_0,Y_1,t)\mid \mathbf{x}} - p_{Y_0}\Vert_\infty = o(\gamma(t))$ as $t$ tends toward $0$.

   Firstly, the conditions $1-a(t)=o(\gamma(t))$, $b(t) = o(\gamma(t))$ and $\gamma(0)=0$ imply that $a(0)=1$ and $b(0)=0$. Since $a(t)$ is continuous at $0$ and $a(0)>0$, then there exists a neighborhood $[0,\epsilon)$ such that $a(t)>0$ for all $t\in[0,\epsilon)$. We also note that $\mathcal{I}(Y_0,Y_1,t) = a(t)Y_0 + b(t)Y_1$ and $\Vert Y_1 \Vert \leq B_1$. Then for all $t\in[0,\epsilon)$, we have
    \begin{align} \label{b4_1}
     &   \Vert p_{\mathcal{I}(Y_0,Y_1,t)\mid \mathbf{x}}(\mathbf{y}) - p_{Y_0}(\mathbf{y}) \Vert_\infty \nonumber \\
     &= \max_\mathbf{y} \left\vert \displaystyle \int_{\Vert \mathbf{z} \Vert\leq b(t)B_1} p_{Y_0}\left(\frac{\mathbf{y}-\mathbf{z}}{a(t)}\right)p_{b(t)Y_1 \mid \mathbf{x}}(\mathbf{z}) \mathbf{dz} - p_{Y_0}(\mathbf{y}) \right\vert \nonumber\\
        &\leq \displaystyle \int_{\Vert \mathbf{z} \Vert\leq b(t)B_1} \max_\mathbf{y} \left\vert p_{Y_0}\left(\frac{\mathbf{y}-\mathbf{z}}{a(t)}\right) - p_{Y_0}(\mathbf{y})  \right\vert p_{b(t)Y_1 \mid \mathbf{x}}(\mathbf{z}) \mathbf{dz}\nonumber\\
        &\leq \displaystyle \int_{\Vert \mathbf{z} \Vert\leq b(t)B_1} \max_\mathbf{y} \max_{\Vert \mathbf{u} \Vert\leq b(t)B_1} \left\vert p_{Y_0}\left(\frac{\mathbf{y}-\mathbf{u}}{a(t)}\right) - p_{Y_0}(\mathbf{y})  \right\vert p_{b(t)Y_1 \mid \mathbf{x}}(\mathbf{z}) \mathbf{dz}\nonumber\\
        &\leq \max_y \max_{\Vert \mathbf{u} \Vert\leq b(t)B_1} \left\vert p_{Y_0}\left(\frac{\mathbf{y}-\mathbf{u}}{a(t)}\right) - p_{Y_0}(\mathbf{y})  \right\vert.
    \end{align}

  Define $\mathbf{v} = (\mathbf{y}-\mathbf{u})/a(t)-\mathbf{y}$, then (\ref{b4_1}) can be rewritten as
    \begin{align} \label{b4_2}
        \Vert p_{\mathcal{I}(Y_0,Y_1,t)\mid \mathbf{x}}(\mathbf{y}) - p_{Y_0}(\mathbf{y}) \Vert_\infty \leq \max_\mathbf{y} \max_{\Vert \mathbf{v} \Vert\leq r_t(\mathbf{y})} \left\vert p_{Y_0}\left(\mathbf{y}+\mathbf{v}\right) - p_{Y_0}(\mathbf{y})  \right\vert,
    \end{align} where $r_t(\mathbf{y})$ is defined by
    \begin{align*}
        r_t(\mathbf{y}):= \frac{\vert 1-a(t) \vert}{a(t)}\Vert \mathbf{y} \Vert + \frac{\vert b(t)\vert}{a(t)} B_1\ge \left\Vert \frac{\mathbf{y}-\mathbf{u}}{a(t)} -\mathbf{y} \right\Vert.
    \end{align*}
By the mean value theorem for multivariate function, we have
    \begin{align*}
    &    \max_{\Vert \mathbf{v} \Vert\leq r_t(\mathbf{y})}
    \left\vert p_{Y_0}\left(\mathbf{y}+\mathbf{v}\right) - p_{Y_0}(\mathbf{y})  \right\vert
     \nonumber \\
    &= \max_{\Vert \mathbf{v} \Vert\leq r_t(\mathbf{y})} (2\pi)^{-d/2}\left\vert \exp\left(-\frac{1}{2}\left\Vert \mathbf{y}+\mathbf{v} \right\Vert^2\right) - \exp\left(-\frac{1}{2}\Vert \mathbf{y} \Vert^2\right) \right\vert \\
        &\leq (2\pi)^{-d/2}  \max_{\mathbf{u}: \Vert \mathbf{u}-\mathbf{y}\Vert\le r_t(\mathbf{y})} \Vert u\Vert \exp\left(-\frac{1}{2}\Vert \mathbf{u} \Vert^2\right)  \\
        &\leq (2\pi)^{-d/2} r_t(\mathbf{y}) (\Vert \mathbf{y}\Vert + r_t(\mathbf{y}))\exp\left(-\frac{1}{2}\max(0,\Vert \mathbf{y}\Vert-r_t(\mathbf{y}))^2\right),
    \end{align*}
    where $l_t(\mathbf{y}):= (2\pi)^{-d/2} (\Vert \mathbf{y}\Vert + r_t(\mathbf{y}))\exp\left(-\frac{1}{2}\max(0,\Vert \mathbf{y}\Vert-r_t(\mathbf{y}))^2\right)$ in the last row is the locally Lipschitz constant of function $p_{Y_0} $. For notation simplification, we introduce $\alpha(t):= \vert 1-a(t)\vert/{a(t)}$. Integrating the results above, for $t\in[0,\epsilon)$ we have
    \begin{align*}
   &     \max_\mathbf{y} \max_{\Vert \mathbf{v} \Vert\leq r_t(\mathbf{y})} \vert p_{Y_0}(\mathbf{y}+\mathbf{v}) - p_{Y_0}(\mathbf{y})  \vert \leq \max_\mathbf{y} r_t(\mathbf{y})l_t(\mathbf{y}) \\
        &\leq (2\pi)^{-d/2}\max_{\Vert \mathbf{y}\Vert} \left(\alpha(t)\Vert \mathbf{y}\Vert+\frac{\vert b(t)\vert}{a(t)}B_1 \right) \left((\alpha(t)+1)\Vert \mathbf{y}\Vert+\frac{\vert b(t)\vert}{a(t)}B_1 \right)\\
        & \quad \quad \quad \cdot \exp\left(-\frac{1}{2}\max(0,(1-\alpha(t))\Vert \mathbf{y} \Vert-\frac{\vert b(t)\vert}{a(t)}B_1 )^2\right)\\
        & \leq (2\pi)^{-d/2} \max_{\Vert \mathbf{y}\Vert} (\alpha(t)+1)\left(\alpha(t)\Vert \mathbf{y}\Vert+\frac{\vert b(t)\vert}{a(t)}B_1 \right) \left(\Vert \mathbf{y}\Vert+\frac{\vert b(t)\vert B_1}{a(t)(\alpha(t)+1)} \right)\\
        & \quad \quad \quad \cdot \exp\left(-\frac{1}{2}\max(0,(1-\alpha(t))\Vert \mathbf{y}\Vert-\frac{\vert b(t)\vert}{a(t)}B_1 )^2\right)\\
        &\leq (2\pi)^{-d/2} \max_{z \ge \frac{\vert b(t)\vert B_1}{a(t)(\alpha(t)+1)}} (\alpha(t)+1) \left(\alpha(t)z + \frac{\vert b(t)\vert B_1}{a(t)(\alpha(t)+1)}\right)z \\
          & \quad \quad \quad \cdot \exp\left(-\frac{1}{2}\max(0,(1-\alpha(t))z-\frac{2\vert b(t)\vert B_1}{a(t)(\alpha(t)+1)} )^2\right)\\
        &\leq (2\pi)^{-d/2}\left[ (\alpha(t)+1)\alpha(t)\max_{z\geq 0}z^2 \exp(-f(z)) + \frac{\vert b(t)\vert B_1}{a(t)}\max_{z\geq 0}  z \exp(-f(z))\right] \\
        &\leq c_1 (\alpha(t)+1)\alpha(t) + c_2 \frac{\vert b(t)\vert}{a(t)} = o(\gamma(t)),
    \end{align*}
    where $f(z) := \max(0,(1-\alpha(t))z-\frac{2\vert b(t)\vert B_1}{a(t)(\alpha(t)+1)} )^2/2$, $c_1:=\max_{z\geq 0}z^2 \exp(-f(z))<\infty$ and $c_1:=\max_{z\geq 0}z^2 \exp(-f(z))<\infty$. Note that $\alpha(t) = o(\gamma(t))$ and $\vert b(t) \vert = o(\gamma(t))$ as $t$ tends towards $0$. This completes the proof.
\end{proof}

\subsection{Proof of Lemma \ref{lemma1}}
\bigskip
\noindent
\newline

\noindent
Lemma \ref{lemma1}. \textit{
Suppose that Assumptions \ref{assumption0}, \ref{assumption0b} and \ref{assump_bound} hold. Given random sample  $S = \{D_k = (\mathbf{y}_{0,k}, (\mathbf{x}_k, \mathbf{y}_{1,k}), \boldsymbol{\eta}_k)\}_{k=1}^{n}$ and two specific classes of functions $\mathcal{F}_n$, $\mathcal{F}_n^\prime$, we have
    \begin{align*}
        \mathbb{E}_S \{ R^b_t(\hat{\boldsymbol{b}}_n)-R^b_t(\boldsymbol{b}^*)\}  \leq& \mathbb{E}_S\{R^b_t(\hat{\boldsymbol{b}}_n) - 2R^b_{t,n}(\hat{\boldsymbol{b}}_n) + R^b_t(\boldsymbol{b}^*)\} \\
        &+ 2 d \inf_{\boldsymbol{f} \in \mathcal{F}_n }  \mathbb{E} \Vert \boldsymbol{f}(X,Y_t,t) - \boldsymbol{b}^*(X,Y_t,t) \Vert^2, \\
        \mathbb{E}_S \{ R^s_t(\hat{\boldsymbol{s}}_n)-R^s_t(\boldsymbol{s}^*)\}
        \leq& \gamma(t)^{-1}  \mathbb{E}_S\{R^\kappa_t(\hat{\boldsymbol{\kappa}}_n) - 2R^\kappa_{t,n}(\hat{\boldsymbol{\kappa}}_n) + R^\kappa_t(\boldsymbol{\kappa}^*)\} \\
        &+ 2 d \gamma(t)^{-1} \cdot \inf_{\boldsymbol{f} \in \mathcal{F}_n^\prime} \mathbb{E} \Vert \boldsymbol{f}(X,Y_t,t) - \boldsymbol{\kappa}^*(X,Y_t,t) \Vert^2,
    \end{align*} for $t \in(0,1).$
}

\begin{proof}
    Firstly, by the independence between $\mathbf{\eta}$ and $(X,Y_0,Y_1)$, for any $\boldsymbol{b}$ it is easy to check
    \begin{align} \label{R_b1}
     R^b_t(\boldsymbol{b})- R^b_t(\boldsymbol{b}^*) = &\mathbb{E} \left\| \partial_t \mathcal{I}(Y_0, Y_1, t) + \dot{\gamma}(t) {\boldsymbol{\eta}} - \boldsymbol{b}(X,Y_t,t) \right\|^2  \nonumber \\
    &- \mathbb{E} \left\| \partial_t \mathcal{I}(Y_0, Y_1, t) + \dot{\gamma}(t) {\boldsymbol{\eta}}  - \boldsymbol{b}^*(X,Y_t,t) \right\|^2 \nonumber \\
    &=  \mathbb{E} \Vert \boldsymbol{b}(X,Y_t,t) - \boldsymbol{b}^*(X,Y_t,t) \Vert^2
    \end{align}
    Recall that $\hat{\boldsymbol{b}}_n(\cdot, t)$ is empirical risk minimizer for $t \in [0,1]$ based on the effective sample $S$. Then for any $\boldsymbol{b} \in \mathcal{F}_n$ we have
    \begin{align} \label{lemma1_proof}
      \mathbb{E}_S \{ R^b_t(\hat{\boldsymbol{b}}_n)- &R^b_t(\boldsymbol{b}^*)\}
       \leq  \mathbb{E}_S \{R^b_t(\hat{\boldsymbol{b}}_n)-R^b_t(\boldsymbol{b}^*) \} + 2 \mathbb{E}_S \{R^b_{t,n}(\boldsymbol{b}) - R^b_{t,n}(\hat{\boldsymbol{b}}_n)\} \nonumber \\
        =& \mathbb{E}_S \{R^b_t(\hat{\boldsymbol{b}}_n)-R^b_t(\boldsymbol{b}^*)\} + 2 \mathbb{E}_S\{R^b_{t,n}(\boldsymbol{b}) - R^b_{t,n}(\boldsymbol{b}^*) + R^b_{t,n}(\boldsymbol{b}^*) - R^b_{t,n}(\hat{\boldsymbol{b}}_n)\} \nonumber \\
        =& \mathbb{E}_S\{R^b_t(\hat{\boldsymbol{b}}_n) - R^b_t(\boldsymbol{b}^*) + 2R^b_{t,n}(\boldsymbol{b}^*) - 2R^b_{t,n}(\hat{\boldsymbol{b}}_n)\} + 2 \mathbb{E}\{R^b_{t,n}(\boldsymbol{b}) - R^b_{t,n}(\boldsymbol{b}^*)\} \nonumber\\
        =& \mathbb{E}_S\{R^b_t(\hat{\boldsymbol{b}}_n) - 2R^b_{t,n}(\hat{\boldsymbol{b}}_n) + R^b_t(\boldsymbol{b}^*)\} + 2 \mathbb{E}\{ R^b_{t,n}(\boldsymbol{b}) - R^b_{t,n}(\boldsymbol{b}^*)\}, \nonumber\\
        =& \mathbb{E}_S\{R^b_t(\hat{\boldsymbol{b}}_n) - 2R^b_{t,n}(\hat{\boldsymbol{b}}_n) + R^b_t(\boldsymbol{b}^*)\} + 2 \mathbb{E} \Vert \boldsymbol{b}(X,Y_t,t) - \boldsymbol{b}^*(X,Y_t,t) \Vert^2,
    \end{align}
    where the first inequality follows from $R^b_{t,n}(\hat{\boldsymbol{b}}_n) \leq R^b_{t,n}(\boldsymbol{b})$ based on the definition of empirical risk minimizer, and the last row follows from (\ref{R_b1}). %$\Vert \boldsymbol{b}\Vert_\infty  \leq \mathcal{B}, \Vert \boldsymbol{b}^*\Vert_\infty \leq \mathcal{B}$

     Since (\ref{lemma1_proof}) holds for any $\boldsymbol{b} \in \mathcal{F}_n$, we then have
    \begin{align*}
     &   \mathbb{E}_S \{ R^b_t(\hat{\boldsymbol{b}}_n)-R^b_t(\boldsymbol{b}^*)\}\\
       &\leq \mathbb{E}_S\{R^b_t(\hat{\boldsymbol{b}}_n) - 2R^b_{t,n}(\hat{\boldsymbol{b}}_n) + R^b_t(\boldsymbol{b}^*)\} + 2 d \inf_{\boldsymbol{b} \in \mathcal{F}_n } \mathbb{E} \Vert \boldsymbol{b}(X,Y_t,t) - \boldsymbol{b}^*(X,Y_t,t) \Vert^2.
    \end{align*}
    Similarly to (\ref{lemma1_proof}), we can obtain
    \begin{align*}
    &    \mathbb{E}_S \{ R^s_t(\hat{\boldsymbol{s}}_n)-R^s_t(\boldsymbol{s}^*)\}\\
     &\leq \mathbb{E}_S\{R^s_t(\hat{\boldsymbol{s}}_n) - 2R^s_{t,n}(\hat{\boldsymbol{s}}_n) + R^s_t(\boldsymbol{s}^*)\} + 2 \mathbb{E} \Vert \boldsymbol{s}(X,Y_t,t) - \boldsymbol{s}^*(X,Y_t,t) \Vert^2,
      \end{align*} where $\boldsymbol{s}(\mathbf{x},\mathbf{y},t) = \gamma(t)^{-1} \boldsymbol{\kappa}(\mathbf{x},\mathbf{y},t)$ and  $\boldsymbol{\kappa}(\mathbf{x},\mathbf{y},t) \in \mathcal{F}_n^\prime$. Recall that $\hat{\boldsymbol{s}}_n(\mathbf{x},\mathbf{y},t) = \gamma(t)^{-1}\hat{\boldsymbol{\kappa}}_n(\mathbf{x},\mathbf{y},t)$ and ${\boldsymbol{s}}^*(\mathbf{x},\mathbf{y},t) = \gamma(t)^{-1}{\boldsymbol{\kappa}}^*(\mathbf{x},\mathbf{y},t)$. We have,
      \begin{align*}
        \mathbb{E}_S\{R^s_t(\hat{\boldsymbol{s}}_n) - 2R^s_{t,n}(\hat{\boldsymbol{s}}_n) + R^s_t(\boldsymbol{s}^*)\} &= \gamma(t)^{-1} \mathbb{E}_S\{R^\kappa_t(\hat{\boldsymbol{\kappa}}_n) - 2R^\kappa_{t,n}(\hat{\boldsymbol{\kappa}}_n) + R^\kappa_t(\boldsymbol{\kappa}^*)\} \\
        \mathbb{E} \Vert \boldsymbol{s}(X,Y_t,t) - \gamma(t)^{-1}  \boldsymbol{s}^*(X,Y_t,t) \Vert^2 &= \gamma(t)^{-1}  \mathbb{E} \Vert \boldsymbol{\kappa}(X,Y_t,t) - \boldsymbol{\kappa}^*(X,Y_t,t) \Vert^2.
      \end{align*}
It follows that
    \begin{align*}
        \mathbb{E}_S \{ R^s_t(\hat{\boldsymbol{s}}_n)-R^s_t(\boldsymbol{s}^*)\}
   \leq& \gamma(t)^{-1}  \mathbb{E}_S\{R^\kappa_t(\hat{\boldsymbol{\kappa}}_n) - 2R^\kappa_{t,n}(\hat{\boldsymbol{\kappa}}_n) + R^\kappa_t(\boldsymbol{\kappa}^*)\} \\
   &+ 2 d \gamma(t)^{-1} \inf_{\boldsymbol{\kappa} \in \mathcal{F}_n^\prime} \mathbb{E} \Vert \boldsymbol{\kappa}(X,Y_t,t) - \boldsymbol{\kappa}^*(X,Y_t,t) \Vert^2.
    \end{align*}
This completes the proof.
\end{proof}

\subsection{Proof of Lemma \ref{stoc_b}}
Before proving this lemma, we first introduce a lemma to handle some simple calculations.

\begin{lemma} \label{lemma_gauss}
    If a random variable $z \in \mathbb{R}^d$ follows a normal distribution $\mathcal{N}(\mu,\sigma^2)$, then we have
    \begin{align*}
        \mathbb{E}[\exp(\vert z \vert)] = e^{\mu+\frac{\sigma^2}{2}} \Phi\left[\frac{\mu+\sigma^2}{\sigma}\right] + e^{-\mu+\frac{\sigma^2}{2}} \Phi\left[\frac{-\mu+\sigma^2}{\sigma}\right].
    \end{align*}
\end{lemma}

\begin{proof}
    We introduce a log-normal random variable $y = \exp(z)$. Then, the conditional expectations of $y$ with respect to a threshold $\lambda$ are:
    \begin{align}
        E[y \mid y \geqslant \lambda]=e^{\mu+\frac{\sigma^2}{2}} \cdot \frac{\Phi\left[\frac{\mu+\sigma^2-\ln (\lambda)}{\sigma}\right]}{1-\Phi\left[\frac{\ln (\lambda)-\mu}{\sigma}\right]} \label{cal}
    \end{align}where $\Phi$ is the cumulative distribution function of the standard normal distribution (i.e., $\mathcal{N}(0,1)$).

    Then, we have
    \begin{align*}
        \mathbb{E}[\exp(\vert z \vert)] &= P(z \geqslant 0) \mathbb{E}[\exp(z) | z \geqslant 0] +  P(z < 0) \mathbb{E}[\exp(-z) | z<0] \\
        &= \Phi(\frac{\mu}{\sigma}) \mathbb{E}[y| y \geqslant 1] +  \Phi(-\frac{\mu}{\sigma}) \mathbb{E}[\exp(-z) | -z > 0] \\
        &= \Phi(\frac{\mu}{\sigma}) \mathbb{E}[y| y \geqslant 1] +  \Phi(-\frac{\mu}{\sigma}) \mathbb{E}[y^\prime | y^\prime > 1],
    \end{align*}
where $y^\prime = \exp(-z)$ and $-z \sim \mathcal{N}(-\mu, \sigma^2)$. Combined with Equation \ref{cal} and setting $\lambda$ to 1, the expectation can be derived as
    \begin{align*}
        \mathbb{E}[\exp(\vert z \vert)] &= \Phi(\frac{\mu}{\sigma}) e^{\mu+\frac{\sigma^2}{2}} \cdot \frac{\Phi\left[\frac{\mu+\sigma^2}{\sigma}\right]}{1-\Phi\left[\frac{-\mu}{\sigma}\right]} + \Phi(-\frac{\mu}{\sigma}) e^{-\mu+\frac{\sigma^2}{2}} \cdot \frac{\Phi\left[\frac{-\mu+\sigma^2}{\sigma}\right]}{1-\Phi\left[\frac{\mu}{\sigma}\right]} \\
        &= e^{\mu+\frac{\sigma^2}{2}} \Phi\left[\frac{\mu+\sigma^2}{\sigma}\right] + e^{-\mu+\frac{\sigma^2}{2}} \Phi\left[\frac{-\mu+\sigma^2}{\sigma}\right],
    \end{align*} which completes the proof.

\end{proof}
 We now prove Lemma \ref{stoc_b}.

\noindent
\noindent
\newline
\noindent
Lemma \ref{stoc_b}. \textit{
Let $\boldsymbol{b}^*$ and $\boldsymbol{s}^*$ be the target functions defined in (\ref{bstar}) and (\ref{sstar})  under the conditional stochastic interpolation model. Suppose that Assumptions \ref{assumption0}, \ref{assumption0b} and \ref{assump_bound} hold. Then for any $t \in (0,1)$, the ERMs $\hat{\boldsymbol{b}}_n$ and $\hat{\boldsymbol{\kappa}}_n$ defined in (\ref{bhat}) and (\ref{shat}) satisfy
\begin{align*}
        \mathbb{E}_S\{R^b_t(\hat{\boldsymbol{b}}_n ) - 2R^b_{t,n}(\hat{\boldsymbol{b}}_n ) + R^b_t(\boldsymbol{b}^* )\}
    \leq c_0 \mathcal{B}_b(t)^5 \log n^3 \frac{\mathcal{S D} \log (\mathcal{S})}{n}
\end{align*}
    and
\begin{align*}
        \mathbb{E}_S\{R^\kappa_t(\hat{\boldsymbol{\kappa}}_n ) - 2R^\kappa_{t,n}(\hat{\boldsymbol{\kappa}}_n ) + R^\kappa_t(\boldsymbol{\kappa}^* )\}
		\leq c_0^\prime \mathcal{B}_\kappa(t)^5 \log n^3 \frac{\mathcal{S^\prime D^\prime} \log (\mathcal{S^\prime})}{n^\prime}.
\end{align*}
for $n \geq \max_{i = 1, ..., d} \{\operatorname{Pdim}( \tilde{\mathcal{F}}_{ni}), \operatorname{Pdim}( \tilde{\mathcal{F}}_{ni}^\prime ) \} / 2$, where $c_0, c_0^\prime>0$ are constants independent of $d, n, \mathcal{D}, \mathcal{W}, \mathcal{S}, \mathcal{D}^\prime, \mathcal{W}^\prime$ , and $\mathcal{S}^\prime$.
}
\begin{proof}
	We present the proof in two parts. We derive stochastic error bounds for $\hat{\boldsymbol{b}}_n$ in Part (I) and $\hat{\boldsymbol{\kappa}}_n$ in Part (II).
	
{\noindent \bf Part (I)}: For any $t \in (0,1)$,  the stochastic error of the estimator $\hat{\boldsymbol{b}}_n$ is given by:

\begin{align*}
\mathbb{E}_S\left\{R^b_t(\hat{\boldsymbol{b}}_n) - 2R^b_{t,n}(\hat{\boldsymbol{b}}_n) + R^b_t(\boldsymbol{b})\right\}.
\end{align*}

Given the sample $S = \{D_k = (\mathbf{y}_{0,k}, (\mathbf{x}_k, \mathbf{y}_{1,k}), {\boldsymbol{\eta}}_k)\}_{k=1}^{n}$. Defining
$$\mathcal{A}_{t}^{(i)}(D_k) := \partial_t {\mathcal{I}(\mathbf{y}_{0,k}, \mathbf{y}_{1,k},t )}^{(i)} +\dot{\gamma}(t){\boldsymbol{\eta}}_k^{(i)},$$
we consider coordinate-wise scalar expressions of the risks
\begin{align*}
    R_t^{b,(i)}(\boldsymbol{b}) &:= \mathbb{E}_{S} \vert \mathcal{A}_{t}^{(i)}(D_k) - \boldsymbol{b}^{(i)}(\mathbf{x}_k,\mathbf{y}_{t,k},t)  \vert^2 \\
    R_{t,n}^{b,(i)}(\boldsymbol{b}) &:= \frac{1}{n} \sum_{k=1}^n \left\vert \mathcal{A}_{t}^{(i)}(D_k) -\boldsymbol{b}^{(i)}(\mathbf{x}_k,\mathbf{y}_{t,k},t)  \right\vert^2,
\end{align*}
where $\mathbf{v}^{(i)}$ denotes the $i$th component of a vector $\mathbf{v}$.
It is evident that $R^b_t(\boldsymbol{b}) = \sum_{i = 1}^d R_t^{b,(i)}(\boldsymbol{b})$ and $R^b_{t,n}(\boldsymbol{b}) = \sum_{i = 1}^d R_{t,n}^{b,(i)}(\boldsymbol{b})$. Let $S^\prime := \{D_k^\prime=(\mathbf{y}^\prime_{0,k}, \mathbf{y}^\prime_{1,k}, t^\prime_k)\}_{k=1}^n$ be an independent ghost sample of $S$. We then have
\begin{align} \label{bound0}
    \mathbb{E}_S\{R^b_t(\hat{\boldsymbol{b}}_n )& - 2R^b_{t,n}(\hat{\boldsymbol{b}}_n ) + R^b_t(\boldsymbol{b}^* )\}
    = \mathbb{E}_S\left\{R^b_t(\hat{\boldsymbol{b}}_n ) - R^b_t(\boldsymbol{b}^* ) - 2\{R^b_{t,n}(\hat{\boldsymbol{b}}_n ) - R^b_{t,n}(\boldsymbol{b}^* )\}\right\}  \nonumber\\
    =& \sum_{i = 1}^d \mathbb{E}_S\left\{R_t^{b,(i)}(\hat{\boldsymbol{b}}_n ) - R_t^{b,(i)}(\boldsymbol{b}^* ) + 2\{R_{t,n}^{b,(i)}(\boldsymbol{b}^* ) - R^{b,(i)}_{t,n}(\hat{\boldsymbol{b}}_n )\}\right\} \nonumber\\
    =& \sum_{i = 1}^d \mathbb{E}_S \left\{ \Bigg[
    \mathbb{E}_{S^\prime} \left\vert \mathcal{A}_{t}^{(i)}(D_k^\prime)  - \hat{\boldsymbol{b}}_n^{(i)}(Y_{t}^\prime, t)\right\vert^2 -\mathbb{E}_{S^\prime} \left\vert \mathcal{A}_{t}^{(i)}(D_k^\prime)  - \boldsymbol{b}^{*(i)}(Y_{t}^\prime, t)\right\vert^2 \Bigg]\right.\nonumber \\
    &\left. -\frac{2}{n}\sum_{k=1}^n \Bigg[ \left\vert \mathcal{A}_{t}^{(i)}(D_k) - \hat{\boldsymbol{b}}_n^{(i)}(\mathbf{x}_k,\mathbf{y}_{t,k},t)   \right\vert^2  - \left\vert \mathcal{A}_{t}^{(i)}(D_k) - {\boldsymbol{b}}^{*(i)}(\mathbf{x}_k,\mathbf{y}_{t,k},t)   \right\vert^2  \Bigg] \right\} \nonumber \\
    := &\sum_{i = 1}^d \mathbb{E}_S \frac{1}{n} \sum_{k=1}^n \left[
    \mathbb{E}_{S^\prime} [\boldsymbol{g}^{(i)}(\hat{\boldsymbol{b}}_n , D_k^\prime)]- 2 \boldsymbol{g}^{(i)}(\hat{\boldsymbol{b}}_n ,D_k)\right] \nonumber \\
    := &\sum_{i = 1}^d \mathbb{E}_S \frac{1}{n} \sum_{k=1}^n \boldsymbol{G}^{(i)}(\hat{\boldsymbol{b}}_n ,D_k),
\end{align}
where $Y_t^\prime = \mathcal{I}(Y_0^\prime, Y_1^\prime, t) + \gamma(t){\boldsymbol{\eta}}^\prime$, $\boldsymbol{g}^{(i)}(\boldsymbol{b},D_k) := \vert \mathcal{A}_{t}^{(i)}(D_k) - {\boldsymbol{b}}^{(i)}(\mathbf{x}_k,\mathbf{y}_{t,k},t)   \vert^2- \vert \mathcal{A}_{t}^{(i)}(D_k) - \boldsymbol{b}^{*(i)}(\mathbf{x}_k,\mathbf{y}_{t,k},t)  \vert^2$, $\boldsymbol{G}^{(i)}(\hat{\boldsymbol{b}}_n ,D_k) := \mathbb{E}_{S^\prime} [\boldsymbol{g}^{(i)}(\hat{\boldsymbol{b}}_n , D_k^\prime)]- 2 \boldsymbol{g}^{(i)}(\hat{\boldsymbol{b}}_n ,D_k)$ for $i=1,\ldots,d$ and $k=1,\ldots,n$.

Given time $t$, let $\beta_n(t) := \mathcal{B}_b(t) \sqrt{\log n}$ be a positive number and $\mathcal{T}_{\beta_n(t)}: \mathbb{R}^d \rightarrow \mathbb{R}$ be the truncation function with threshold $\beta_n(t)$, i.e., for any $y \in \mathbb{R}, \mathcal{T}_{\beta_n(t)}(y)=y$ if $\vert y\vert \leq \beta_n(t)$ and $\mathcal{T}_{\beta_n(t)} (y)=\beta_n(t) \cdot \operatorname{sign}(y)$ otherwise. We define $\boldsymbol{b}^{*(i)}_{\beta_n(t)}(\mathbf{x},\mathbf{y},t) := \mathbb{E}[\mathcal{T}_{\beta_n(t)}(\partial_t \mathcal{I}^{(i)}(Y_0,Y_1,t) + \dot{\gamma}(t)\eta^{(i)}) \mid X = \mathbf{x}, Y_t=\mathbf{y}]$, $\boldsymbol{g}^{(i)}_{\beta_n(t)} (\boldsymbol{b},D_k) := \vert \mathcal{T}_{\beta_n(t)}(\mathcal{A}_{t}^{(i)}(D_k)) - {\boldsymbol{b}}^{(i)}(\mathbf{x}_k,\mathbf{y}_{t,k},t)   \vert^2- \vert \mathcal{T}_{\beta_n(t)}(\mathcal{A}_{t}^{(i)}(D_k)) - \boldsymbol{b}^{*(i)}_{\beta_n(t)}(\mathbf{x}_k,\mathbf{y}_{t,k},t)  \vert^2$ for $i=1,\ldots,d$ and $k=1,\ldots,n$. Then for any $\boldsymbol{b} \in \mathcal{F}_n$, we have
\begin{align*}
  &  |\boldsymbol{g}^{(i)}(\boldsymbol{b}, D_k)- \boldsymbol{g}^{(i)}_{\beta_n(t)}(\boldsymbol{b}, D_k)|\\
  &  = \vert 2 \{\boldsymbol{b}^{(i)}(\mathbf{x}_k, \mathbf{y}_{t,k}, t)-\boldsymbol{b}^{*(i)}(\mathbf{x}_k, \mathbf{y}_{t,k}, t)\} (\mathcal{T}_{\beta_n(t)}(\mathcal{A}_t^{(i)}(D_k))-\mathcal{A}_t^{(i)}(D_k)) \\
  &\ \ \ +(\boldsymbol{b}^{*(i)}_{\beta_n(t)}(\mathbf{x}_k, \mathbf{y}_{t,k},t)-\mathcal{T}_{\beta_n(t)} \mathcal{A}_t^{(i)}(D_k))^2-
  (\boldsymbol{b}^{*(i)}(\mathbf{x}_k,\mathbf{y}_{t,k},t)-\mathcal{T}_{\beta_n(t)}(\mathcal{A}_t^{(i)}(D_k)) )^2 \vert \\
 &   \leq |2\{\boldsymbol{b}^{(i)}(\mathbf{x}_k,\mathbf{y}_{t,k},t)-\boldsymbol{b}^{*(i)}(\mathbf{x}_k,\mathbf{y}_{t,k},t)\}(\mathcal{T}_{\beta_n(t)}(\mathcal{A}_t^{(i)}(D_k))-\mathcal{A}_t^{(i)}(D_k))| \\
    &\ \ \ + |(\boldsymbol{b}^{*(i)}_{\beta_n(t)}(\mathbf{x}_k,\mathbf{y}_{t,k},t)-\mathcal{T}_{\beta_n(t)}(\mathcal{A}_t^{(i)}(D_k)))^2-(\boldsymbol{b}^{*(i)}(\mathbf{x}_k,\mathbf{y}_{t,k},t)-\mathcal{T}_{\beta_n(t)}(\mathcal{A}^{(i)}_t(D_k)))^2| \\
 &   \leq  4\mathcal{B}_b(t)|\mathcal{T}_{\beta_n(t)}(\mathcal{A}_t^{(i)}(D_k))-\mathcal{A}_t^{(i)}(D_k)| +|\boldsymbol{b}^{*(i)}_{\beta_n(t)}(\mathbf{x}_k,\mathbf{y}_{t,k},t)-\boldsymbol{b}^{*(i)}(\mathbf{x}_k,\mathbf{y}_{t,k},t)| \\
    &\ \ \ \times  |\boldsymbol{b}^{*(i)}_{\beta_n(t)}(\mathbf{x}_k,\mathbf{y}_{t,k},t) +\boldsymbol{b}^{*(i)}(\mathbf{x}_k,\mathbf{y}_{t,k},t)-2 \mathcal{T}_{\beta_n(t)}(\mathcal{A}_t^{(i)}(D_k))| \\
 &   \leq  4\mathcal{B}_b(t)|\mathcal{A}_t^{(i)}(D_k)| \chi(|\mathcal{A}_t^{(i)}(D_k)|>\beta_n(t))+(3 \beta_n(t) + \mathcal{B}_b(t)) |\boldsymbol{b}^{*(i)}_{\beta_n(t)}(\mathbf{x}_k,\mathbf{y}_{t,k},t)-\boldsymbol{b}^{*(i)}(\mathbf{x}_k,\mathbf{y}_{t,k},t)| \\
 &   \leq 4\mathcal{B}_b(t)|\mathcal{A}_t^{(i)}(D_k)| \chi(|\mathcal{A}_t^{(i)}(D_k)|>\beta_n(t)) \\
    & \ \ \ +(3 \beta_n(t) + \mathcal{B}_b(t)) \mathbb{E}[|\mathcal{T}_{\beta_n(t)}(\mathcal{A}_t^{(i)}(D_k)) - \mathcal{A}_t^{(i)}(D_k) | X = \mathbf{x}_k, Y_t = \mathbf{y}_{t,k} ] \\
&    \leq 4\mathcal{B}_b(t)|\mathcal{A}_t^{(i)}(D_k)| \chi(|\mathcal{A}_t^{(i)}(D_k)|>\beta_n(t)) \\
    &\ \ \ +(3 \beta_n(t) + \mathcal{B}_b(t)) \mathbb{E}[|\mathcal{A}_t^{(i)}(D_k)| \chi(|\mathcal{A}_t^{(i)}(D_k)|>\beta_n(t)) | X = \mathbf{x}_k, Y_t = \mathbf{y}_{t,k} ] \\
&    \leq  4\mathcal{B}_b(t) |\mathcal{A}_t^{(i)}(D_k)| \chi(|\mathcal{A}_t^{(i)}(D_k)|>\beta_n(t))+(3 \beta_n(t) + \mathcal{B}_b(t))|\mathcal{A}_t^{(i)}(D_k)| \chi(|\mathcal{A}_t^{(i)}(D_k)|>\beta_n(t)),
\end{align*} where $\chi$ is the indicator function. Then, we obtain
\begin{align*}
   \mathbb{E}_S \vert \boldsymbol{g}^{(i)}(\boldsymbol{b}, D_k) - \boldsymbol{g}_{\beta_n(t)}^{(i)}(\boldsymbol{b}, D_k) \vert \leq  (3 \beta_n(t)+5\mathcal{B}_b(t))\mathbb{E}_S[|\mathcal{A}_t^{(i)}(D_k)| \chi(|\mathcal{A}_t^{(i)}(D_k)|>\beta_n(t))].  % \\
    %&\ \ \
%     +(3 \beta_n + \mathcal{B}_b(t))\mathbb{E}_S[|\mathcal{A}_t^{(i)}(D_k)| \chi(|\mathcal{A}_t^{(i)}(D_k)|>\beta_n)]\\
%    & \leq (3 \beta_n+5\mathcal{B}_b(t)) \frac{2}{\dot{\gamma}(t)} \mathbb{E}_S\left[\frac{\dot{\gamma}(t)}{2}|\mathcal{A}_t^{(i)}(D_k)| \exp \left\{\frac{\dot{\gamma}(t)}{2}(|\mathcal{A}_t^{(i)}(D_k)|-\beta_n)\right\}\right] \\
%    & \leq  \frac{10 \beta_n+6\mathcal{B}_b(t)}{\dot{\gamma}(t)} \exp (-\dot{\gamma}(t) \beta_n / 2) \mathbb{E}_S [\exp (\dot{\gamma}(t)|\mathcal{A}_t^{(i)}(D_k)|)].
\end{align*}

Recall that random variable $\partial_t \mathcal{I}(Y_0,Y_1,t)$ satisfies $\mathcal{B}_{I^\prime}(t)$-light tail condition.  Using Lemma \ref{lemma_subgauss} and \ref{lemma_gauss}, we find that $\mathcal{A}_t^{(i)}(D_k)$  satisfies $|\dot{\gamma}(t)|+ \mathcal{B}_{I^\prime}(t)$-light tail condition. Recall that $\mathcal{B}_b(t) > |\dot{\gamma}(t)|+\mathcal{B}_{I^\prime}(t)$, then we have
% \begin{align*}
%     \mathbb{E}_S [\exp (\dot{\gamma}(t)|\mathcal{A}_t^{(i)}(D_k)|)]
%     &\leq  \mathbb{E}_S [\exp (\dot{\gamma}(t)|\partial_t \mathcal{I}^{(i)}(\mathbf{y}_{0,k},\mathbf{y}_{1,k},t)|) + \exp (|\dot{\gamma}(t)^2\boldsymbol{\eta}^{(i)}|))] \\
%     & \leq \exp(\dot{\gamma}(t)^2 \mathcal{B}_{I^\prime}^2/2) + 2\exp(\dot{\gamma}(t)^4 /2) \Phi[\dot{\gamma}(t)^2]
% \end{align*}
\begin{align*}
 &   \mathbb{E}_S[|\mathcal{A}_t^{(i)}(D_k)| \chi(|\mathcal{A}_t^{(i)}(D_k)|>\beta_n(t))] \\
 &= \int_{\beta_n(t)}^\infty x P(|\mathcal{A}_t^{(i)}(D_k)|>x)  dx \\
    &\leq 4 \int_{\beta_n(t)}^\infty x  \exp\left( -\frac{x^2}{2((\mathcal{B}_{I^\prime}(t))+|\dot{\gamma}(t)|)^2}\right)  dx \\
    &= 4 (\mathcal{B}_{I^\prime}(t)+|\dot{\gamma}(t)|)^2 \exp\left(-\frac{\beta_n(t)^2}{(\mathcal{B}_{I^\prime}(t)+|\dot{\gamma}(t)|)^2}\right) \leq  \frac{4 \mathcal{B}_b(t)^2}{n} .
\end{align*}

% Now by the definition of $\mathcal{A}_t^{(i)}(D_k)$, we have
% $$\dot{\gamma}(t) \mathcal{A}_t^{(i)}(D_k) \sim \mathcal{N}(\partial_t \mathcal{I}^{(i)}(\mathbf{y}_{0,k},\mathbf{y}_{1,k},t), \dot{\gamma}(t)^4 ).$$
%  Using Lemma \ref{lemma_b2}, we obtain
% \begin{align*}
%     \mathbb{E}_S [\exp (\dot{\gamma}(t)|\mathcal{A}_t^{(i)}(D_k)|)] =& \mathbb{E}_{S}\left[ e^{\partial_t \mathcal{I}^{(i)}(\mathbf{y}_{0,k},\mathbf{y}_{1,k},t)+\frac{\dot{\gamma}(t)^4}{2}} \Phi\left[\frac{\partial_t \mathcal{I}^{(i)}(\mathbf{y}_{0,k},\mathbf{y}_{1,k},t)+\dot{\gamma}(t)^4}{\dot{\gamma}(t)^2}\right] \right. \\
%     & \left. + e^{-\partial_t \mathcal{I}^{(i)}(\mathbf{y}_{0,k},\mathbf{y}_{1,k},t)+\frac{\dot{\gamma}(t)^4}{2}} \Phi\left[\frac{-\partial_t \mathcal{I}^{(i)}(\mathbf{y}_{0,k},\mathbf{y}_{1,k},t)+\dot{\gamma}(t)^4}{\dot{\gamma}(t)^2}\right]\right].
% \end{align*}
% Recall that $\Vert \partial_t \mathcal{I} \Vert_\infty \leq B_I^\prime$, $\dot{\gamma}  \in L^\infty([0,1])$ and $0 \leq \Phi(z) \leq 1$, then $\mathbb{E}_S [\exp (\dot{\gamma}(t)|\mathcal{A}_t^{(i)}(D_k)|)]$ is bounded.
We introduce $\boldsymbol{G}_{\beta_n(t)}^{(i)} (\boldsymbol{b},D_k) :=\mathbb{E}_{S^\prime} [\boldsymbol{g}^{(i)}_{\beta_n(t)}(\hat{\boldsymbol{b}}_n , D_k^\prime)]- 2 \boldsymbol{g}^{(i)}_{\beta_n(t)}(\hat{\boldsymbol{b}}_n ,D_k)$ for $i=1,\ldots,d$ and $k=1,\ldots,n$, then it is easy to obtain
\begin{align} \label{bound00}
    \mathbb{E}_S\left[ \frac{1}{n} \sum_{k=1}^n \boldsymbol{G}^{(i)}(\hat{\boldsymbol{b}}_n ,D_k) \right]
    &\leq \mathbb{E}_S\left[ \frac{1}{n} \sum_{k=1}^n \boldsymbol{G}^{(i)}_{\beta_n(t)}(\hat{\boldsymbol{b}}_n ,D_k) \right]
    % + c_t \beta_n \exp (-\dot{\gamma(t)} \beta_n / 2)
    % +& \frac{10 \beta_n+6\mathcal{B}_b(t)}{\dot{\gamma}(t)} \exp (-\dot{\gamma}(t) \beta_n/2)[\exp(\dot{\gamma}(t)^2 \mathcal{B}_{I^\prime}^2/2) + 2\exp(\dot{\gamma}(t)^4 /2) ].
    + \frac{32\mathcal{B}_b(t)^3 \sqrt{\log n} }{n}
\end{align}
% where $c_t$ is a constant not depending on $n$.

Next, we derive upper bounds for $\frac{1}{n} \sum_{k=1}^n \boldsymbol{G}^{(i)}_{\beta_n(t)}(\hat{\boldsymbol{b}}_n ,D_k)$ for $i= 1,2, \ldots, d$. Recall that $$\hat{\boldsymbol{b}}^{(i)}_n \in \mathcal{F}_{ni} = \{\boldsymbol{b}^{(i)}: \mathbb{R}^{k+d+1} \to \mathbb{R} \mid  \boldsymbol{b} \in \mathcal{F}_{n} \}$$
and
$$\tilde{\mathcal{F}}_{ni}:= \{\boldsymbol{b}^{(i)}: \mathbb{R}^{k+d+1} \rightarrow \mathbb{R}\mid \boldsymbol{b}^{(i)}{\rm\ has\ the\ same\ architecture\ as\ that\ of\ }\boldsymbol{b}^{(i)}\in\tilde{\mathcal{F}}_{ni} \}.$$
 Then for any $z \geq 0$, we have
\begin{align*}
    P&\left\{  \frac{1}{n} \sum_{k=1}^n \boldsymbol{G}^{(i)}_{\beta_n(t)}(\hat{\boldsymbol{b}}_n ,D_k) \geq z \right\} \\
    =& P\left\{  \mathbb{E}_{S^\prime} [\boldsymbol{g}^{(i)}_{\beta_n(t)}(\hat{\boldsymbol{b}}_n , D_k^\prime)] -\frac{1}{n} \sum_{k=1}^n \boldsymbol{g}^{(i)}_{\beta_n(t)}(\hat{\boldsymbol{b}}_n ,D_k)
     \geq \frac{z}{2} + \frac{1}{2} \mathbb{E}_{S^\prime} [\boldsymbol{g}^{(i)}_{\beta_n(t)}(\hat{\boldsymbol{b}}_n ,D_k^\prime)] \right\}\\
     \leq& P\left\{ \exists \boldsymbol{b}^{(i)} \in \tilde{\mathcal{F}}_{ni}:  \mathbb{E}_{S^\prime} [\boldsymbol{g}^{(i)}_{\beta_n(t)}(\hat{\boldsymbol{b}}_n ,D_k^\prime)] -\frac{1}{n} \sum_{k=1}^n \boldsymbol{g}^{(i)}_{\beta_n(t)}(\hat{\boldsymbol{b}}_n ,D_k)
     \geq \frac{1}{2} \left( \frac{z}{2} + \frac{z}{2} +  \mathbb{E}_{S^\prime} [\boldsymbol{g}^{(i)}_{\beta_n(t)}(\hat{\boldsymbol{b}}_n , D_k^\prime)] \right) \right\}.
\end{align*}

Note that for any $\boldsymbol{b}$ with $\boldsymbol{b}^{(i)} \in \tilde{\mathcal{F}}_{ni}$, we have $\Vert \boldsymbol{b} \Vert \leq \mathcal{B}_b(t) \leq \beta_n(t)$. Additionally, $| \mathcal{T}_{\beta_n(t)}(\mathcal{A}_t^{(i)}(D_k))| \leq \beta_n(t),\|\boldsymbol{g}_{\beta_n(t)}\|_{\infty} \leq \beta_n(t)$. Then, by Theorem 11.4 of \citet{gyorfi2002distribution},  %for each $n \geq 1$ and any $u\ge0$,
we have
\begin{align*}
    P & \left\{ \exists \boldsymbol{b}^{(i)} \in \tilde{\mathcal{F}}_{ni}:  \mathbb{E}_{S^\prime} [\boldsymbol{g}^{(i)}(\hat{\boldsymbol{b}}_n ,D_k^\prime)] -\frac{1}{n} \sum_{k=1}^n \boldsymbol{g}^{(i)}(\hat{\boldsymbol{b}}_n ,D_k)
    \geq \frac{1}{2} \left( \frac{z}{2} + \frac{z}{2} + \mathbb{E}_{S^\prime} [\boldsymbol{g}^{(i)}(\hat{\boldsymbol{b}}_n ,D_k^\prime)] \right) \right\} \\
     &\qquad\qquad\qquad\qquad\leq 14 \mathcal{N}_{n}\left(\frac{z}{80\beta_n(t)}, \Vert \cdot \Vert_\infty, \tilde{\mathcal{F}}_{ni}\right) \exp{\left(- \frac{nz}{5136 \beta_n(t)^4}\right)},
\end{align*}
where the covering number $\mathcal{N}_n $ can be found in Appendix (\ref{app_defi}). Then for $a_{n,t} \geq 0$, we have
\begin{align} \label{bound1}
 &   \mathbb{E}_S  \left\{
        \frac{1}{n} \sum_{k=1}^n \boldsymbol{G}^{(i)}_{\beta_n(t)}(\hat{\boldsymbol{b}}_n ,D_k)
    \right\}\nonumber \\
     &\leq a_{n,t} +  {\displaystyle \int_{a_{n,t}}^{\infty} P\left\{ \frac{1}{n} \sum_{k=1}^n \boldsymbol{G}^{(i)}_{\beta_n(t)}(\hat{\boldsymbol{b}}_n ,D_k)  > z \right\}} \, dz \nonumber\\
    &\leq a_{n,t} +  {\displaystyle \int_{a_{n,t}}^{\infty} 14\mathcal{N}_{n}\left(\frac{z}{80\beta_n(t)}, \Vert \cdot \Vert_\infty, \tilde{\mathcal{F}}_{ni}\right) \exp{\left(- \frac{nz}{5136 \beta_n(t)^4}\right)} } \, dz \nonumber\\
    &\leq a_{n,t} +  {\displaystyle \int_{a_{n,t}}^{\infty} 14\mathcal{N}_{n}\left(\frac{a_{n,t}}{80\beta_n(t)}, \Vert \cdot \Vert_\infty, \tilde{\mathcal{F}}_{ni}\right) \exp{\left(- \frac{nz}{5136 \beta_n(t)^4}\right)} } \, dz \nonumber \\
   & = a_{n,t} + 14\mathcal{N}_{n}\left(\frac{a_{n,t}}{80\beta_n(t)}, \Vert \cdot \Vert_\infty, \tilde{\mathcal{F}}_{ni}\right) \exp{\left(- \frac{a_{n,t} n}{5136 \beta_n(t)^4}\right)}\frac{5136\beta_n(t)^4}{n}.
\end{align}
Choosing $a_{n,t} = \log(14\mathcal{N}_{n}(\frac{1}{n}, \Vert \cdot \Vert_\infty, \tilde{\mathcal{F}}_{ni})) \cdot  5136 \beta_n(t)^4/n$, we can verify that $a_{n,t}/(80\beta_n(t)) \geq 1/n$ and $\mathcal{N}_{n}(\frac{a_{n,t}}{80\beta_n(t)}, \Vert \cdot \Vert_\infty, \tilde{\mathcal{F}}_{ni})\le \mathcal{N}_{n}(\frac{1}{n}, \Vert \cdot \Vert_\infty, \tilde{\mathcal{F}}_{ni})$. Then we have
\begin{align} \label{bound2}
    \mathbb{E}_S  \left\{
        \frac{1}{n} \sum_{k=1}^n \boldsymbol{G}^{(i)}_{\beta_n(t)}(\hat{\boldsymbol{b}}_n ,D_k)
    \right\} &\leq \frac{5136\beta_n(t)^4(\log(14\mathcal{N}_{n}(\frac{1}{n}, \Vert \cdot \Vert_\infty, \tilde{\mathcal{F}}_{ni}))+1)}{n} \nonumber \\
    &=  \frac{5136\mathcal{B}_b(t)^4 {\log n}^2  (\log(14\mathcal{N}_{n}(\frac{1}{n}, \Vert \cdot \Vert_\infty, \tilde{\mathcal{F}}_{ni}))+1)}{n}
\end{align}

% Setting $\beta_n = c_1 \mathcal{B} \log n$, combining (\ref{bound00}) and (\ref{bound2}), we obtain
% \begin{align*}
%     \mathbb{E}_S  \left\{
%         \frac{1}{n} \sum_{k=1}^n \boldsymbol{G}^{(i)}_{\beta_n}(\hat{\boldsymbol{b}}_n ,D_k)
%     \right\} \leq \frac{c_2\mathcal{B}^4 \log n^4 \log(14\mathcal{N}_{n} (\frac{1}{n}, \Vert \cdot \Vert_\infty, \tilde{\mathcal{F}}_{ni}))}{n},
% \end{align*} where $c_2 > 0$ is a constant not depending on $n$ and $\mathcal{B}$.

According to Theorem 12.2 in \citep{anthony1999neural}, the covering number of $\tilde{\mathcal{F}}_{ni}$ can be further linked to its Pseudo dimension $\operatorname{Pdim}(\tilde{\mathcal{F}}_{ni})$ by
\begin{align}\label{cover-pseudo}
\mathcal{N}_{n}\left(\frac{1}{n},\Vert\cdot\Vert_{\infty}, \tilde{\mathcal{F}}_{ni}\right) \leq\left(\frac{4 e \mathcal{B}_b(t) n^2}{\operatorname{Pdim}\left( \tilde{\mathcal{F}}_{ni} \right)}\right)^{\operatorname{Pdim}\left(\tilde{\mathcal{F}}_{ni} \right)}
\end{align}
provided that $2 n \geq \operatorname{Pdim}(\tilde{\mathcal{F}}_{ni})$, where the definition of Pseudo-dimension can be found in Appendix (\ref{app_defi}). Furthermore, based on Theorem 3 and 6 in \citep{bartlett2019nearly}, there exist universal constants $c$ and $C$ such that
\begin{align} \label{pseudo-para}
    c \cdot \mathcal{S D} \log (\mathcal{S} / \mathcal{D}) \leq \operatorname{Pdim}\left(\tilde{\mathcal{F}}_{ni}\right) \leq C \cdot \mathcal{S D} \log (\mathcal{S})
\end{align}
Finally, combining (\ref{bound0}), (\ref{bound2}), (\ref{cover-pseudo}) and (\ref{pseudo-para}), for each $i=1,\ldots,d$ we obtain an upper bound expressed in the network parameters
\begin{align} \label{bound3}
    \mathbb{E}_S  \left\{
        \frac{1}{n} \sum_{k=1}^n \boldsymbol{G}^{(i)}_{\beta_n(t)}(\hat{\boldsymbol{b}}_n ,D_k)
    \right\} \leq c_0 \mathcal{B}_b(t)^5 \log n^3 \frac{\mathcal{S D} \log (\mathcal{S})}{n}
\end{align}
for some constant $c_0>0$ not depending on $n, d, \mathcal{S}$ or $\mathcal{D}$. Based on (\ref{bound0}), (\ref{bound00}) and (\ref{bound3}), we finally get
\begin{align*}
    \mathbb{E}_S\{R^b_t(\hat{\boldsymbol{b}}_n ) - 2R^b_{t,n}(\hat{\boldsymbol{b}}_n ) + R^b_t(\boldsymbol{b}^* )\}
    \leq c_0 d \mathcal{B}_b(t)^5 \log n^3 \frac{\mathcal{S D} \log (\mathcal{S})}{n}.
\end{align*}

{\noindent \bf Part (II)}: To tackle the stochastic error of $\hat{\boldsymbol{\kappa}}_n $ ( or  $\hat{\boldsymbol{s}}_n $), we define
\begin{align*}
    \boldsymbol{h}^{(i)}(\boldsymbol{\kappa},D_k) &:= \vert {\boldsymbol{\eta}}_k^{(i)} + \hat{\boldsymbol{\kappa}}_n ^{(i)}(\mathbf{x}_k,\mathbf{y}_{t,k},t)  \vert^2- \vert  {\boldsymbol{\eta}}_k^{(i)} + \boldsymbol{\kappa}^{*(i)}(\mathbf{x}_k,\mathbf{y}_{t,k},t)  \vert^2, \\
    \boldsymbol{H}^{(i)}(\boldsymbol{\kappa} ,D_k) &:= \mathbb{E}_{S^\prime} [\boldsymbol{h}^{(i)}(\boldsymbol{\kappa} , D_k^\prime)]- 2 \boldsymbol{h}^{(i)}(\boldsymbol{\kappa} ,D_k),
\end{align*} which is similar to the definitions of $\boldsymbol{h}^{(i)}$ and $\boldsymbol{H}^{(i)}$ in Part (I). Let $\beta_n^\prime(t) = \mathcal{B}_\kappa(t) \sqrt{\log n}$, we define
\begin{align*}
&\boldsymbol{\kappa}^{*(i)}_{\beta_n^\prime(t)}(\mathbf{x},\mathbf{y},t) := \mathbb{E}[\mathcal{T}_{\beta_n^\prime(t)}(\boldsymbol{\eta}^{(i)}) \mid X = \mathbf{x}, Y_t=\mathbf{y}],\\
&\boldsymbol{h}^{(i)}_{\beta_n^\prime(t)} (\boldsymbol{\kappa},D_k) := \vert \mathcal{T}_{\beta_n^\prime(t)}(\boldsymbol{\eta}^{(i)}) - {\boldsymbol{\kappa}}^{(i)}(\mathbf{x}_k,\mathbf{y}_{t,k},t)   \vert^2- \vert \mathcal{T}_{\beta_n^\prime(t)}(\boldsymbol{\eta}^{(i)}) - \boldsymbol{\kappa}^{*(i)}_{\beta_n^\prime(t)}(\mathbf{x}_k,\mathbf{y}_{t,k},t)  \vert^2,\\
&\boldsymbol{H}_{\beta_n^\prime(t)}^{(i)} (\boldsymbol{\kappa},D_k) :=\mathbb{E}_{S^\prime} [\boldsymbol{h}^{(i)}_{\beta_n^\prime(t)}(\hat{\boldsymbol{\kappa}}_n , D_k^\prime)]- 2 \boldsymbol{h}^{(i)}_{\beta_n^\prime(t)}(\hat{\boldsymbol{\kappa}}_n ,D_k)
\end{align*}
for $i=1,\ldots,d$ and $k=1,\ldots,n$.
Similarly, for any $\boldsymbol{\kappa} \in \mathcal{F}_n^\prime$, we have
\begin{align*}
    |\boldsymbol{h}^{(i)}(\boldsymbol{\kappa}, D_k)- \boldsymbol{h}^{(i)}_{\beta_n^\prime(t)}(\boldsymbol{\kappa}, D_k)|
    \leq  4 \mathcal{B}_\kappa(t) | \boldsymbol{\eta}^{(i)}_k | \chi(|\boldsymbol{\eta}^{(i)}_k|>\beta_n^\prime(t))+4 \beta_n^\prime(t)|\boldsymbol{\eta}^{(i)}_k| \chi(|\boldsymbol{\eta}^{(i)}_k|>\beta_n).
\end{align*} Then, we have
\begin{align*}
 &   \mathbb{E}_S |\boldsymbol{h}^{(i)}(\boldsymbol{\kappa}, D_k)- \boldsymbol{h}^{(i)}_{\beta_n^\prime(t)}(\boldsymbol{\kappa}, D_k)| \\
 &\leq  4 \mathcal{B}_\kappa(t) \mathbb{E}_S[| \boldsymbol{\eta}^{(i)}_k | \chi(|\boldsymbol{\eta}^{(i)}_k|>\beta_n^\prime(t))] +4 \beta_n^\prime(t) \mathbb{E}_S[| \boldsymbol{\eta}^{(i)}_k | \chi(|\boldsymbol{\eta}^{(i)}_k|>\beta_n^\prime(t))].%\\
    % \leq& 16 \beta_n \mathbb{E}_S\left[\frac{\vert \boldsymbol{\eta}^{(i)}_k \vert}{2} \exp \left\{\frac{\vert \boldsymbol{\eta}^{(i)}_k \vert-\beta_n^\prime}{2}\right\}\right] \\
    % \leq& 16 \beta_n^\prime \exp (- \beta_n^\prime / 2) \mathbb{E}_S [\exp (|\boldsymbol{\eta}^{(i)}_k|)].
\end{align*}
Recall that $\boldsymbol{\eta}^{(i)}_k \sim \mathcal{N}(0, 1)$.  Using Lemma \ref{lemma_subgauss} and \ref{lemma_gauss}, we have
\begin{align*}
    \mathbb{E}_S[| \boldsymbol{\eta}^{(i)}_k | \chi(|\boldsymbol{\eta}^{(i)}_k|>\beta_n^\prime(t))] &= \int_{\beta_n^\prime(t)}^\infty x P(|\boldsymbol{\eta}^{(i)}_k|>x)  dx \\
    &\leq 4 \int_{\beta_n^\prime(t)}^\infty x  \exp\left( -\frac{x^2}{2}\right)  dx = 4 \exp(-\beta_n^\prime(t)^2) \leq \frac{4}{n} .
\end{align*}
% Recall that $\boldsymbol{\eta}^{(i)}_k \sim \mathcal{N}(0, 1)$. Using Lemma \ref{lemma_b2}, we obtain
% \begin{align*}
%     \mathbb{E}_S [\exp (|\boldsymbol{\eta}^{(i)}_k|)] = \Phi(1)(e^{\frac{1}{2}}  + e^{-\frac{1}{2}}) < \infty.
% \end{align*}
Then, it is easy to obtain
\begin{align} % \label{bound00}
    \mathbb{E}_S\left[ \frac{1}{n} \sum_{k=1}^n \boldsymbol{H}^{(i)}(\hat{\boldsymbol{\kappa}}_n ,D_k) \right] \leq \mathbb{E}_S\left[ \frac{1}{n} \sum_{k=1}^n \boldsymbol{H}^{(i)}_{\beta_n^\prime(t)}(\hat{\boldsymbol{\kappa}}_n ,D_k) \right]+ \frac{32\mathcal{B}_\kappa(t) \sqrt{\log n}}{n}
\end{align} %where $c^\prime$ is a constant not depending on $n$ and $t$.
By Theorem 11.4 of \citep{gyorfi2002distribution}, for any $u\ge0$
	\begin{align*}
		P&\left\{ \frac{1}{n} \sum_{k=1}^n \boldsymbol{H}^{(i)}_{\beta_n^\prime(t)}(\hat{\boldsymbol{\kappa}}_n ,D_k)  \geq u \right\} \\
		=& P\left\{ \mathbb{E}_{S^\prime} [\boldsymbol{h}^{(i)}_{\beta_n^\prime(t)}(\hat{\boldsymbol{\kappa}}_n ,D_k^\prime)] -\frac{1}{n} \sum_{k=1}^n \boldsymbol{h}^{(i)}_{\beta_n^\prime(t)}(\hat{\boldsymbol{\kappa}}_n ,D_k)
		\geq \frac{u}{2} + \frac{1}{2} \mathbb{E}_{S^\prime} [\boldsymbol{h}^{(i)}_{\beta_n^\prime(t)}(\hat{\boldsymbol{\kappa}}_n ,D_k^\prime)] \right\}\\
		\leq& P \left\{ \exists \boldsymbol{\kappa}^{(i)} \in \tilde{\mathcal{F}}_{ni}^\prime:  \mathbb{E}_{S^\prime} [\boldsymbol{h}^{(i)}_{\beta_n^\prime(t)}(\hat{\boldsymbol{\kappa}}_n ,D_k^\prime)] -\frac{1}{n} \sum_{k=1}^n \boldsymbol{h}^{(i)}_{\beta_n^\prime(t)}(\hat{\boldsymbol{s}}_n ,D_k)
		\geq \frac{1}{2} \left( \frac{u}{2} + \frac{u}{2} + \mathbb{E}_{S^\prime} [\boldsymbol{h}^{(i)}_{\beta_n^\prime(t)}(\hat{\boldsymbol{\kappa}}_n ,D_k^\prime)] \right) \right\} \\
        \leq& 14 \mathcal{N}_{n}\left(\frac{z}{80\beta_n^\prime(t)}, \Vert \cdot \Vert_\infty, \tilde{\mathcal{F}}_{ni}^\prime\right) \exp{\left(- \frac{nz}{5136 \beta_n^{\prime 4}(t)}\right)},
	\end{align*}
	Similar to the proof in Part (I),  we then have
	\begin{align*}
		\mathbb{E}_S\{R^\kappa_t(\hat{\boldsymbol{\kappa}}_n ) - 2R^\kappa_{t,n}(\hat{\boldsymbol{\kappa}}_n ) + R^\kappa_t(\boldsymbol{\kappa}^* )\}
		\leq c_0^\prime d \mathcal{B}_\kappa(t)^5 \log n^3 \frac{\mathcal{S}^\prime \mathcal{D}^\prime \log (\mathcal{S}^\prime)}{n}.
	\end{align*} where $c_0^\prime>0$ is a constant not depending on $d,n,\mathcal{S}^\prime,\mathcal{D}^\prime$ and $t$.

This completes the proof.
\end{proof}

\subsection{Proof of Lemma \ref{theorem2}}

Before proving this lemma, we first introduce a lemma to handle some simple calculations.

\begin{lemma} \label{lemma_subgauss}
    If there exist $\sigma_i \in \mathbb{R}^+$  for $i = 1, ..., n$ such that independent random variables $z_i \in \mathbb{R}$ satisfying
    \begin{align*}
        \mathbb{E}[\exp(\lambda z_i)] \leq 2\exp\left(\frac{1}{2} \sigma_i^2 \lambda^2\right), \text{ for all } \lambda \in \mathbb{R}, i = 1, ..., n,
    \end{align*}
    we have
    \begin{align*}
        \mathbb{E}\left[\exp \left(\lambda \sum_{i=1}^n z_i\right)\right] \leq 2\exp \left(\frac{\lambda^2}{2}\left(\sum_{i=1}^n \sigma_i\right)^2\right)
    \end{align*} and
$$
P\left(\left\vert \sum_{i=1}^n z_i \right\vert \geq \lambda  \left(\sum_{i=1}^n \sigma_i\right) \right) \leq  4 C \exp (- \lambda^ 2/2),
$$
where $C>0$ depends only on $\sigma_i$, $i=1,   ..., n$.
\end{lemma}

\begin{proof} By H\"{o}lder's inequality, we have
\begin{align*}
& \mathbb{E}\left[\exp \left(\lambda \sum_{i=1}^n z_i\right)\right]=\mathbb{E}\left[\prod_{i=1}^n \exp \left(\lambda z_i\right)\right] \leq \prod_{i=1}^n\left(\mathbb{E}\left[\exp \left(\lambda X_i p_i\right)\right]\right)^{1 / p_i} \\
& \leq \prod_{i=1}^n\left(2 \exp \left(\lambda^2 \sigma_i^2 p_i^2 / 2\right)\right)^{1 / p_i} = 2 \prod_{i=1}^n \exp \left(\lambda^2 \sigma_i^2 p_i / 2\right) \\
& =2 \exp \left(\frac{\lambda^2}{2} \sum_{i=1}^n \sigma_i^2 p_i\right)
\end{align*}

Let $p_i=\frac{\sum_{j=1}^n \sigma_j}{\sigma_i}$, which satisfy the required conditions: $p_i \in(1, \infty], \sum_{i=1}^n \frac{1}{p_i}=1$. We compute

$$
\sum_{i=1}^n \sigma_i^2 p_i=\sum_{i=1}^n \sigma_i^2 \frac{\sum_{j=1}^n \sigma_j}{\sigma_i}=\left(\sum_{i=1}^n \sigma_i\right)^2,
$$

with which we then conclude

\begin{align} \label{equa_subgauss}
    \mathbb{E}\left[\exp \left(\lambda \sum_{i=1}^n z_i\right)\right] \leq 2 \exp \left(\frac{\lambda^2}{2}\left(\sum_{i=1}^n \sigma_i\right)^2\right).
\end{align}

% Denote $y_n := \sum_{i=1}^n z_i$, $K_n := \sum_{i=1}^n \sigma_i$. Using Markov Inequality, we have

% \begin{align}
%     P( y_n \geq \lambda^\prime ) &= P\left( \exp\left(\frac{\lambda^\prime}{K_n}y_n\right) \geq \exp\left(\frac{\lambda^{\prime 2}}{K_n}\right) \right) \nonumber \leq \frac{\mathbb{E}[\exp({\lambda^\prime}y_n/{K_n})]}{\exp({\lambda^{\prime 2}}/{K_n})}.
% \end{align}

% We plug $\lambda = \lambda^\prime / K_n$ into (\ref{equa_subgauss}), then
% \begin{align}
%     \mathbb{E}[\exp( \lambda^\prime y_n / K_n)] \leq \exp()
% \end{align}

According to the equivalent characterizations of subgaussianity \citep{vershynin2018high, wainwright2019high}, we have
$$
P\left(\left\vert \sum_{i=1}^n z_i \right\vert \geq \lambda  \left(\sum_{i=1}^n \sigma_i\right) \right) \leq  4 C \exp (- \lambda^ 2/2)
$$
for a constant $C>0$ depending only on $\sigma_i$, $i=1,   ..., n$.
\end{proof}

We now prove  Lemma \ref{theorem2}.
% \bigskip
\noindent
\newline

\noindent
Lemma \ref{theorem2}. \textit{
Suppose that Assumptions \ref{assumption0} and \ref{assumption0b} are satisfied and there exists a constant $B_X$ such that $\Vert X \Vert_\infty \leq B_X$. For any $A_n \geq \max\{B_I, B_X, 1\}$, $t \in (0,1)$, assume that the drift function $\boldsymbol{b}^{*(i)}(\cdot,t)$ and denoising function $\boldsymbol{\kappa}^{*(i)}(\cdot,t)$ belong to the locally H\"{o}lder smooth class $\mathcal{H}^\beta_{\text{loc}}\left(\mathbb{R}^{k+d}, \mathcal{B}_b(t)\right)$ and $\mathcal{H}^{\beta^\prime}_{\text{loc}}\left(\mathbb{R}^{k+d}, \mathcal{B}_\kappa(t)\right)$ with $\beta>0$, $\beta^\prime>0$ for $i = 1,\ldots, d$.
}
     \renewcommand{\theenumi}{\roman{enumi}}
     \begin{enumerate}
         \item For any $U, V  \in \mathbb{N}^{+}$,  let $\mathcal{F}_{n}=\mathcal{F}_{\mathcal{D}, \mathcal{W}, \mathcal{U}, \mathcal{S}, \mathcal{B}_b(t)}$ be a class of neural networks with width $\mathcal{W}= 38(\lfloor\beta\rfloor+1)^2 3^{(k+d)} (k+d)^{\lfloor\beta\rfloor+2} (3+\left\lceil \log _2 U\right\rceil)U$, and depth $\mathcal{D}=21(\lfloor\beta\rfloor+1)^2 (3+\left\lceil \log _2 V\right\rceil)V+2 (k+d)$,  then the approximation error
         \begin{align*}
     &       \inf_{\boldsymbol{f} \in \mathcal{F}_n }  \mathbb{E} \Vert \boldsymbol{f}(X,Y_t,t) - \boldsymbol{b}^*(X,Y_t,t) \Vert^2\\
      & \leq \inf_{\boldsymbol{f} \in \mathcal{F}_n, \atop \Vert \mathbf{y} \Vert_\infty \leq A_n } \Vert \boldsymbol{f}(\cdot,t) - \boldsymbol{b}^*(\cdot,t)\Vert_\infty^2 + 4 \mathcal{B}_b(t) \exp \left(- \frac{A_n^2}{2 (\mathcal{B}_I(t) + \gamma(t))^2}\right)
        \end{align*}
where
\begin{align*}
            \mathop{\inf}_{
                \boldsymbol{f} \in \mathcal{F}_n,
        \atop
        \Vert \mathbf{y} \Vert_\infty \leq A_n} \Vert \boldsymbol{f}(\cdot,t) - \boldsymbol{b}^*(\cdot,t) \Vert_\infty \leq  19 \mathcal{B}_b(t)
        C(\beta) (2A_n)^\beta(U V)^{\frac{-2 \beta}{k+d}},
\end{align*} with $C(\beta) =  (\lfloor\beta\rfloor+1)^2 (k+d+1)^{\lfloor\beta\rfloor+(\beta \vee 1) / 2},
\text{ and } \beta \vee 1 = \max\{\beta, 1\}.$

        \item For any $U^\prime, V^\prime  \in \mathbb{N}^{+}$, let $\mathcal{F}_{n}^\prime=\mathcal{F}^\prime_{\mathcal{D}^\prime, \mathcal{W}^\prime, \mathcal{U}^\prime, \mathcal{S}^\prime, \mathcal{B}_\kappa(t)}$ be a class of neural networks with width $\mathcal{W}^\prime= 38(\lfloor\beta^\prime\rfloor+1)^2 3^{(k+d)} (k+d)^{\lfloor\beta^\prime\rfloor+2} (3+\left\lceil \log _2 U^\prime\right\rceil)U^\prime$, and depth $\mathcal{D}^\prime=21(\lfloor\beta^\prime\rfloor+1)^2 (3+\left\lceil \log _2 V^\prime\right\rceil)V^\prime+2 (k+d)$,  then the approximation error
        \begin{align*}
  &          \mathop{\inf}_{
                \boldsymbol{f} \in \mathcal{F}_n^\prime}  \mathbb{E} \Vert \boldsymbol{f}(X,Y_t,t) - \boldsymbol{\kappa}^*(X,Y_t,t) \Vert^2\\
                 &\leq   \inf_{\boldsymbol{f} \in \mathcal{F}_n^\prime, \atop \Vert \mathbf{y} \Vert_\infty \leq A_n  } \Vert \boldsymbol{f}(\cdot,t) - \boldsymbol{\kappa}^*(\cdot,t) \Vert_\infty^2 + 4 \mathcal{B}_\kappa(t) \exp \left(- \frac{A_n^2}{2 (\mathcal{B}_I(t) + \gamma(t))^2}\right).
        \end{align*}
        where
         \begin{align*}
            \mathop{\inf}_{
                \boldsymbol{f} \in \mathcal{F}_n^\prime,
        \atop
        \Vert \mathbf{y} \Vert_\infty \leq A_n} \Vert \boldsymbol{f}(\cdot,t) - \boldsymbol{\kappa}^*(\cdot,t) \Vert_\infty \leq  19 \mathcal{B}_\kappa(t)
    C(\beta^{\prime})
        (2A_n)^{\beta^\prime}(U^\prime V^\prime)^{\frac{-2 \beta^\prime}{k+d}}.
        \end{align*}
 %where $t_1 \vee t_2:=\max \{t_1, t_2\}$.
     \end{enumerate}

\begin{proof}[Proof of Lemma \ref{theorem2}]
    % With Lemma \ref{corollary_3} in hand, we now prove Lemma \ref{theorem2}.
    First, for a given $A_n$, we have
    \begin{align}\label{equ_lemma53_0}
   &      \mathbb{E} [\Vert \boldsymbol{b}(X,Y_t,t) - \boldsymbol{b}^*(X,Y_t,t) \Vert^2 ] \\
         &\leq \mathbb{E} [\Vert \boldsymbol{b}(X,Y_t,t) - \boldsymbol{b}^*(X,Y_t,t) \Vert^2 \mid \Vert Y_t \Vert_\infty > A_n] P(\Vert Y_t \Vert_\infty  > A_n) \nonumber \\
        & + \mathbb{E} [\Vert \boldsymbol{b}(X,Y_t,t) - \boldsymbol{b}^*(X,Y_t,t) \Vert^2 \mid \Vert Y_t \Vert_\infty  \leq A_n] P(\Vert Y_t \Vert_\infty  \leq A_n) \nonumber \\
       & \leq 2 \mathcal{B}_b(t) P(\Vert Y_t \Vert_\infty  > A_n) + \mathbb{E} [\Vert \boldsymbol{b}(X,Y_t,t) - \boldsymbol{b}^*(X,Y_t,t) \Vert^2 \mid \Vert Y_t \Vert_\infty  \leq A_n].
    \end{align}
    Recall that $\mathcal{I}(Y_0,Y_1,t)^{(i)}$ is $\mathcal{B}_I(t)$-subgaussian and we find that $\mathbb{E}[exp(\lambda^\prime \gamma(t) \boldsymbol{\eta}^{(i)})] = \exp(\gamma(t)^2 \lambda^{\prime 2} /2) $ for $i=1, ..., d$, any $\lambda^\prime \in \mathbb{R}$. Let $\lambda^\prime = An / (\mathcal{B}_I(t) + \gamma(t))$. Using Lemma \ref{lemma_subgauss}, we obtain
    \begin{align*}
        P(\vert Y_t^{(i)} \vert  > A_n) = P(\vert \mathcal{I}(Y_0,Y_1,t)^{(i)} + \gamma(t) \boldsymbol{\eta}^{(i)} \vert  > A_n) \leq 2 \exp \left(- \frac{A_n^2}{2 (\mathcal{B}_I(t) + \gamma(t))^2}\right),
    \end{align*} for $i=1, ..., d$. Then, we have
    \begin{align*}
        P(\Vert Y_t \Vert_\infty  > A_n)  \leq d P(\vert Y_t^{(i)} \vert  > A_n) \leq 2d \exp \left(- \frac{A_n^2}{2 (\mathcal{B}_I(t) + \gamma(t))^2}\right).
    \end{align*} Combining it with (\ref{equ_lemma53_0}), we obtain
    \begin{align*}
        \mathbb{E} [\Vert \boldsymbol{b}(X,Y_t,t) - \boldsymbol{b}^*(X,Y_t,t) \Vert^2 ] \leq& 4d \mathcal{B}_b(t) \exp \left(- \frac{A_n^2}{2 (\mathcal{B}_I(t) + \gamma(t))^2}\right) \\
        &+ \mathbb{E} [\Vert \boldsymbol{b}(X,Y_t,t) - \boldsymbol{b}^*(X,Y_t,t) \Vert^2 \mid \Vert Y_t \Vert_\infty  \leq A_n].
    \end{align*} Taking the infimum on both sides of the inequality, we obtain:
    \begin{align*}
   &     \inf_{\boldsymbol{f} \in\mathcal{F}_n}\mathbb{E} [\Vert \boldsymbol{f}(X,Y_t,t) - \boldsymbol{b}^*(X,Y_t,t) \Vert^2 ]\\
         &\leq 4d \mathcal{B}_b(t) \exp \left(- \frac{A_n^2}{2 (\mathcal{B}_I(t) + \gamma(t))^2}\right) \\
        &+ \inf_{\boldsymbol{f} \in\mathcal{F}_n} \mathbb{E} [\Vert \boldsymbol{f}(X,Y_t,t) - \boldsymbol{b}^*(X,Y_t,t) \Vert^2 \mid \Vert Y_t \Vert_\infty  \leq A_n] \\
     &   \leq 4d \mathcal{B}_b(t) \exp \left(- \frac{A_n^2}{2 (\mathcal{B}_I(t) + \gamma(t))^2}\right) + \inf_{\boldsymbol{f} \in\mathcal{F}_n \atop \Vert \mathbf{y} \Vert_\infty \leq A_n} \Vert \boldsymbol{f}(\cdot,t) - \boldsymbol{b}^*(\cdot,t) \Vert^2.
    \end{align*}
    Similarly, we have
    \begin{align*}
   &     \inf_{\boldsymbol{f} \in\mathcal{F}^\prime_n}\mathbb{E} [\Vert \boldsymbol{f}(X,Y_t,t) - \boldsymbol{\kappa}^*(X,Y_t,t) \Vert^2 ] \\
   &\leq 4d \mathcal{B}_\kappa(t) \exp \left(- \frac{A_n^2}{2 (\mathcal{B}_I(t) + \gamma(t))^2}\right) + \inf_{\boldsymbol{f} \in\mathcal{F}^\prime_n \atop \Vert \mathbf{y} \Vert_\infty \leq A_n} \Vert \boldsymbol{f}(\cdot,t) - \boldsymbol{\kappa}^*(\cdot,t) \Vert^2.
    \end{align*}
    Then, we derive the upper bound for terms
    $$\inf_{\boldsymbol{f} \in\mathcal{F}_n \atop \Vert \mathbf{y} \Vert_\infty \leq A_n} \Vert \boldsymbol{f}(\cdot,t) - \boldsymbol{b}^*(\cdot,t) \Vert^2
     \text{ and } \inf_{\boldsymbol{f} \in\mathcal{F}^\prime_n \atop \Vert \mathbf{y} \Vert_\infty \leq A_n} \Vert \boldsymbol{f}(\cdot,t) - \boldsymbol{\kappa}^*(\cdot,t) \Vert^2.$$

    Recall that the target function of our estimation is a vector-valued function. We consider a Cartesian product of function classes with the same model complexity for different components of the output of vector-valued functions. Specifically, we define $\bar{\mathcal{F}}_{n} := \bar{\mathcal{F}}_{n1} \times ... \times  \bar{\mathcal{F}}_{nd}=\{\boldsymbol{f}:\mathbb{R}^{k+d+1}\to\mathbb{R}^{d}\mid \boldsymbol{f}=(f_1,\ldots,f_d), f_i\in\bar{\mathcal{F}}_{ni}, i=1,\ldots,d\}$ as neural networks with $d$-dimensional output constructed by paralleling one-dimensional output networks in $\bar{\mathcal{F}}_{ni}, i=1,\ldots,d$. Here $\bar{\mathcal{F}}_{ni} := \mathcal{F}_{\mathcal{D}, \bar{\mathcal{W}}, \bar{\mathcal{U}},\bar{\mathcal{S}}}$ is a class of neural networks with depth $\mathcal{D}$ width $\bar{\mathcal{W}}$, neurons $\bar{\mathcal{U}}$ and size $\bar{\mathcal{S}}$. By the definition of $\bar{\mathcal{F}}_{n}$, the neural networks in $\bar{\mathcal{F}}_{n}$ has depth $\mathcal{D}$, width $\mathcal{W} = d \bar{\mathcal{W}}$, neurons $d\mathcal{U}$ and size $d \mathcal{S}$. If we let $\mathcal{F}_n$ be a class of neural networks with output dimension $d$, depth $\mathcal{D}$, width $\mathcal{W} = d \bar{\mathcal{W}}$, neurons $d\mathcal{U}$ and size $d \mathcal{S}$, then
    \begin{align*}
        \inf_{\boldsymbol{f} \in \mathcal{F}_n} \Vert \boldsymbol{f}(\cdot,t) - \boldsymbol{b}^*(\cdot,t) \Vert_\infty
        \leq \inf_{ \boldsymbol{f} \in \bar{\mathcal{F}}_{n} } \Vert \boldsymbol{f}(\cdot,t) - \boldsymbol{b}^*(\cdot,t) \Vert_\infty, \text{ for } t \in (0,1).
    \end{align*}

Note that the components of neural networks $\bar{\mathcal{F}}_{n}$ are calculated by neural networks in different function classes $\bar{\mathcal{F}}_{ni}, i=1,2, ...,d$, which do not interact with each other. Then
    \begin{align*}
         \inf_{ \boldsymbol{f} \in \bar{\mathcal{F}}_{n}} \Vert \boldsymbol{f}(\cdot,t) - \boldsymbol{b}^*(\cdot,t)  \Vert_\infty = \max_{i=1,\ldots,d}\inf_{ \boldsymbol{f} \in \bar{\mathcal{F}}_{ni^{*} }} \vert f(\cdot,t) - \boldsymbol{b}^{*(i)}(\cdot,t) \vert, \text{ for } t \in (0,1),
    \end{align*}
    and the approximation analysis for vector-valued functions can be reformulated into an approximation analysis for one-dimensional real-valued functions.

    Recall the drift functions $\boldsymbol{b}^{*(i)}(\cdot,t)$ belongs to the local H\"{o}lder class $\mathcal{H}^\beta\left(\mathbb{R}^{k+d}, \mathcal{B}_b(t)\right)$ with $\beta>0$ for $ i = 1,\ldots, d$.
    By Lemma \ref{corollary_3}, for any $U,V\in\mathbb{N}^+$, let the ReLU networks in $\bar{\mathcal{F}}_{ni},i=1,\ldots,d$ has width $\bar{\mathcal{W}}=38(\lfloor\beta\rfloor+1)^2 3^{k+d} (k+d)^{\lfloor\beta\rfloor+1} U_{}\left\lceil\log _2(8 U)\right\rceil$ and a depth of $\mathcal{D}=21(\lfloor\beta\rfloor+1)^2 V\left\lceil\log _2(8 V)\right\rceil+2 (k+d)$.  Then, for $i=,1,\ldots,d$ we can obtain
    \begin{align*}
         \inf_{ f \in \bar{\mathcal{F}}_{ni^{*} }} \vert f - \boldsymbol{b}^{*(i^*)} \vert =& \inf_{ g \in \bar{\mathcal{F}}_{ni^{*} }} \vert g - \boldsymbol{b}^{*(i^*)} \vert \\
         \leq& 19 B_0(\lfloor\beta\rfloor+1)^2 (k+d)^{\lfloor\beta\rfloor+(\beta \vee 1) / 2}(2A_n)^\beta(U V)^{-2 \beta / (k+d)}.
    \end{align*}
    Similarly, for the approximation of $\boldsymbol{\kappa}^* $, we can obtain
    \begin{align*}
         \inf_{ \boldsymbol{f} \in \bar{\mathcal{F}}^\prime_{n }} \Vert \boldsymbol{f} - \boldsymbol{\kappa}^*  \Vert_\infty
         \leq 19 B_0^\prime(\lfloor\beta^\prime\rfloor+1)^2 (k+d)^{\lfloor\beta^\prime\rfloor+(\beta^\prime \vee 1) / 2}(2A_n)^{\beta^\prime}(U^\prime V^\prime)^{-2 \beta^\prime / (k+d)},
    \end{align*}
    where the notations of $B_0^\prime, \beta^\prime, U^\prime$ and $V^\prime$ are defined accordingly. This completes the proof.
\end{proof}

\subsection{Proof of Lemma \ref{theorem_bound}}

\noindent
\newline

\noindent
Lemma \ref{theorem_bound}.
\textit{Suppose the conditions in Lemmas \ref{stoc_b}, \ref{theorem2}  are satisfied.}
    \renewcommand{\theenumi}{\roman{enumi}}
    \begin{enumerate}
        \item For any $U, V  \in \mathbb{N}^{+}$, let $\mathcal{F}_{n}=\mathcal{F}_{\mathcal{D}, \mathcal{W}, \mathcal{U}, \mathcal{S}, \mathcal{B}_b(t)}$ be a class of neural networks with width $\mathcal{W}= 38(\lfloor\beta\rfloor+1)^2 3^{(k+d)} (k+d)^{\lfloor\beta\rfloor+2} (3+\left\lceil \log _2 U\right\rceil)U$, and depth $\mathcal{D}=21(\lfloor\beta\rfloor+1)^2 (3+\left\lceil \log _2 V\right\rceil)V+2 (k+d)$,  then  for  $n \geq \max_{i = 1, ..., d} \operatorname{Pdim}( \tilde{\mathcal{F}}_{ni})/2$ and any $t \in (0, 1)$, we have
        \begin{align*}
    & \mathbb{E}_S \Vert \hat{\boldsymbol{b}}_n(X,Y_t,t) - \boldsymbol{b}^*(X,Y_t,t) \Vert^2 \\ &\leq   722 d \mathcal{B}_b(t)^2(\lfloor\beta\rfloor+1)^4 (k+d)^{2\lfloor\beta\rfloor+(\beta \vee 1)}(2 A_n)^{2 \beta}(U V)^{\frac{-4 \beta}{k+d}} \\
    &\ \ \ + c_0 d \mathcal{B}_b(t)^5 \log n^3 \frac{\mathcal{S D} \log (\mathcal{S})}{n} + 8 d \mathcal{B}_b(t) \exp \left(- \frac{A_n^2}{2 (\mathcal{B}_{I}(t) + \gamma(t))^2}\right),
        \end{align*}
        where $c_0>0$ is a universal constant.
        \vspace*{0.2cm}

        \item For any $U^\prime, V^\prime  \in \mathbb{N}^{+}$, let $\mathcal{F}_{n}^\prime=\mathcal{F}^\prime_{\mathcal{D}^\prime, \mathcal{W}^\prime, \mathcal{U}^\prime, \mathcal{S}^\prime, \mathcal{B}_\kappa(t)}$ be a class of neural networks with width $\mathcal{W}^\prime= 38(\lfloor\beta^\prime\rfloor+1)^2 3^{(k+d)} (k+d)^{\lfloor\beta^\prime\rfloor+2} (3+\left\lceil \log _2 U^\prime\right\rceil)U^\prime$, and depth $\mathcal{D}^\prime=21(\lfloor\beta^\prime\rfloor+1)^2 (3+\left\lceil \log _2 V^\prime\right\rceil)V^\prime+2 (k+d)$,  then for  $n \geq \max_{i = 1, ..., d} \operatorname{Pdim}( \tilde{\mathcal{F}}_{ni})/2$, we have
        \begin{align*}
    & \mathbb{E}_S \Vert \hat{\boldsymbol{s}}_n(X,Y_t,t) - \boldsymbol{s}^*(X,Y_t,t) \Vert^2 \\
    &\leq  722 d \mathcal{B}_\kappa(t)^2 \gamma(t)^{-1} (\lfloor\beta^\prime \rfloor+1)^4(k+d)^{2\lfloor\beta^\prime \rfloor+(\beta^\prime \vee 1)}(2A_n)^{2\beta^\prime}(U^\prime V^\prime)^{\frac{-4 \beta^\prime}{k+d}}\\
        &\ \ \ + c_0^\prime \gamma(t)^{-1}d \mathcal{B}_\kappa(t)^5 \log n^3 \frac{\mathcal{S}^\prime \mathcal{D}^\prime \log (\mathcal{S}^\prime)}{n} + 8 d \gamma(t)^{-1} \mathcal{B}_\kappa(t) \exp \left(- \frac{A_n^2}{2 (\mathcal{B}_{I}(t) + \gamma(t))^2}\right)
        \end{align*}
    \end{enumerate}
  \textit{  and $c_0^\prime>0$ is a universal constant.}
%\end{theorem}

\begin{proof}
    According to Lemma \ref{lemma1} and \ref{theorem2}, for any $t\in(0,1)$ we have
    \begin{align*}
 &   \mathbb{E}_S \Vert \hat{\boldsymbol{b}}_n (X,Y_t,t) - \boldsymbol{b}^* (X,Y_t,t) &\Vert^2\\
     &= \mathbb{E}_S \{R_t^b(\hat{\boldsymbol{b}}) - R_t^b(\boldsymbol{b}^*)\} \\
    &    \leq  \mathbb{E}_S\{R^b_t(\hat{\boldsymbol{b}}_n) - 2R^b_{t,n}(\hat{\boldsymbol{b}}_n) + R^b_t(\boldsymbol{b}^*)\} + 2 d \inf_{\boldsymbol{b} \in \mathcal{F}_n, \Vert \mathbf{y} \Vert_\infty \leq A_n } \Vert \boldsymbol{b} - \boldsymbol{b}^* \Vert_\infty^2  \\
        &\ \ \ + 8 d \mathcal{B}_b(t) \exp \left(- \frac{A_n^2}{2 (\mathcal{B}_{I}(t) + \gamma(t))^2}\right),
    \end{align*}
    Combining Lemmas \ref{stoc_b}, \ref{theorem2} completes the proof.
\end{proof}

\subsection{Proof of Theorem \ref{corollary_bound}}

\noindent
\newline

\noindent
Theorem  \ref{corollary_bound}.
\textit{Suppose the conditions in Lemmas \ref{stoc_b}, \ref{theorem2}  are satisfied.
\renewcommand{\theenumi}{(\roman{enumi})}
    \begin{enumerate}
        \item Let $U= n^{(k+d)/(8\beta + 4k+4d)}$ and $V=n^{(k+d)/(8\beta + 4k+4d)},$ for $t \in (0,1)$,
        we have
        \begin{align*}
  \mathbb{E}_S \Vert \hat{\boldsymbol{b}}_n(X,Y_t,t) - \boldsymbol{b}^*(X,Y_t,t) \Vert^2 = H_b(t) O \left(n^{\frac{-2\beta}{2\beta+k+d}} \log^{\max\{8,\beta\}} n\right),
        \end{align*}
        where $H_b(t) := \mathcal{B}_b(t)^5 (\mathcal{B}_I(t)+\gamma(t))^{2\beta}$.
        \item  Let $U^\prime=n^{(k+d)/(8\beta + 4k+4d)}$ and $V^\prime= n^{(k+d)/(8\beta + 4k+4d)},$ for $t \in (0,1)$,
        we have
        \begin{align*}
            \mathbb{E}_S \Vert \hat{\boldsymbol{s}}_n(X, Y_t,t) - \boldsymbol{s}^*(X,Y_t,t) \Vert^2=\gamma(t)^{-1} H_\kappa(t) O \left(n^{\frac{-2\beta^\prime}{2\beta^\prime+k+d}} \log^{\max\{8,\beta^\prime\}} n\right),
        \end{align*}
        where $H_\kappa(t) := \mathcal{B}_\kappa(t)^5 (\mathcal{B}_I(t)+\gamma(t))^{2\beta^\prime}$.
    \end{enumerate}}
% \begin{enumerate}
%     \item Let $U= n^{(k+d)/(8\beta + 4k+4d)}$ and $V=n^{(k+d)/(8\beta + 4k+4d)},$
%     we have
%     \begin{align*}
% \mathbb{E}_S \Vert \hat{\boldsymbol{b}}_n(X,Y_t,t) - \boldsymbol{b}^*(X,Y_t,t) \Vert^2 = O \left(n^{\frac{-2\beta}{2\beta+k+d+1}} \log^5 n\right) \ \text{ for any } \  t \in [0, 1].
%     \end{align*}

%     \item  Let $U^\prime=n^{(k+d)/(8\beta + 4k+4d)}$ and $V^\prime= n^{(k+d)/(8\beta + 4k+4d)},$
%     we have
%     $$
%     \mathbb{E}_S \Vert \hat{\boldsymbol{s}}_n(X, Y_t,t) - \boldsymbol{s}^*(X,Y_t,t) \Vert^2=\left\{
%     \begin{aligned}
%         O \left(t^{-(1-5\zeta)} n^{\frac{-2\beta}{2\beta+k+d+1}} \log^5 n\right),\quad& t \in [0,\varepsilon),\\
%         O \left(n^{\frac{-2\beta}{2\beta+k+d+1}} \log^5 n\right),\quad& t \in [\varepsilon, 1- \varepsilon],\\
%          O \left((1-t)^{-(1-5\zeta)} n^{\frac{-2\beta}{2\beta+k+d+1}} \log^5 n\right),\quad& t \in (1- \varepsilon, 1],
%     \end{aligned}
%     \right.$$
%     where $\varepsilon = (\log n)^{-\frac{1}{1/5-\zeta}} n^{-\frac{2 \beta^\prime}{(2 \beta^\prime+k+d)(k+d+1)(1-5\zeta)}}$.
% \end{enumerate}
\begin{proof}
    For any $U, V \in \mathbb{N}^{+}$, the class of neural networks in
    $\mathcal{F}_n=\mathcal{F}_{\mathcal{D}, \mathcal{W}, \mathcal{U}, \mathcal{S}, \mathcal{B}_b(t)}$ have width $\mathcal{W}= 38(\lfloor\beta\rfloor+1)^2 3^d d^{\lfloor\beta\rfloor+2} \left\lceil 3+ \log _2 U\right\rceil U$ and depth $\mathcal{D}=21(\lfloor\beta\rfloor+1)^2 \left\lceil 3+\log _2 V\right\rceil V+2 (k+d)$. For $n \geq \max_{i = 1, ..., d} \operatorname{Pdim}( \tilde{\mathcal{F}}_{ni}) / 2$ and $t\in (0,1)$, the excess risk of $\hat{\boldsymbol{b}}_n $ satisfies
    \begin{align*}
    & \mathbb{E}_S \Vert \hat{\boldsymbol{b}}_n (X,Y_t,t) - \boldsymbol{b}^* (X,Y_t,t) \Vert^2 \\
      &  \leq   722 d \mathcal{B}_b(t)^2(\lfloor\beta\rfloor+1)^4 (k+d)^{2\lfloor\beta\rfloor+(\beta \vee 1)}(2 A_n)^{2 \beta}(U V)^{\frac{-4 \beta}{k+d}} \\
     &\ \ \ + c_0 d \mathcal{B}_b(t)^5 \log n^3 \frac{\mathcal{S D} \log (\mathcal{S})}{n} + 8 d \mathcal{B}_b(t) \exp \left(- \frac{A_n^2}{2 (\mathcal{B}_{I}(t) + \gamma(t))^2}\right).
    \end{align*}
    Recall that for any multi-layer neural network in $\mathcal{F}_n$, its parameters naturally satisfy
    \begin{align*}
        \max \{\mathcal{W}, \mathcal{D}\} \leq \mathcal{S} \leq (\mathcal{D}-1)\mathcal{W}^2 + (\mathcal{D}+k+2d+1)\mathcal{W} + d \leq 2 \mathcal{D}\mathcal{W}^2.
    \end{align*}
    Then, by plugging $\mathcal{W} \leq 38(\lfloor\beta\rfloor+1)^2 3^{(k+d)} d^{\lfloor\beta\rfloor+2} (3+ \log _2 U )U$ and $\mathcal{D}\leq 21(\lfloor\beta\rfloor+1)^2 (3+ \log _2 V )V+2(k + d + 1)$ into the right-hand side, we obtain
        \begin{align*}
    	c \mathcal{S} \mathcal{D} \log(\mathcal{S})
    	\leq  c_1 U^2 V^2 \log^2 U \log^2 V(\log U + \log V).
    \end{align*}
    For given $t \in (0,1)$, we set $A_n = (\mathcal{B}_I(t)+\gamma(t))\sqrt{\frac{4\beta}{2\beta+k+d}\log n}$, then we obtain
    \begin{align*}
      &  \mathbb{E}_S \Vert \hat{\boldsymbol{b}}_n (X,Y_t,t) - \boldsymbol{b}^* (X,Y_t,t) \Vert^2 \leq  8d\mathcal{B}_b(t) n^{-\frac{2\beta}{2\beta+k+d}}+ \mathcal{B}_b(t)^5 (\mathcal{B}_I(t)+\gamma(t))^{2\beta} \\
      &\cdot \left\{ c_1 \log n^3  \frac{U^2 V^2 \log^2 U \log^2 V(\log U + \log V)}{n}  + c_2 \log^\beta n(U V)^{-4 \beta / (k+d)} \right\},
    \end{align*}
     for some constants $c_1>0$ and $c_2 := 722 \cdot \frac{16\beta}{2\beta+k+d}^\beta d (\lfloor\beta\rfloor+1)^4 (k+d)^{2\lfloor\beta\rfloor+(\beta \vee 1)}$.

     To achieve the optimal rate with respect to $n$, let us consider the right-hand side of the inequality as a function of $U$ and $V$ neglecting constants, denoted by $f(U,V) :=  \log n^3{U^2 V^2 \log^2 U \log^2 V(\log U + \log V)}/n + \log^\beta n(U V)^{-4 \beta / (k+d)}$.  Note that $f(U,V)$ is convex and the relative positions of $U$ and $V$ in the equation are symmetric. Consequently, the optimal solutions will include the case $U=V$, and we can focus on $f(z) := f(z,z) = z^4 \log^5 z \log^3 n /n + \log^\beta n z^{-8 \beta / (k+d)}$. By taking the derivative of $f(z)$, we can determine the minimal achieves at $z^*$ that satisfies
    \begin{align*}
        \frac{8\beta }{k+d} \frac{\log^\beta n n}{\log^3 n} = (z^*)^{\frac{8\beta}{k+d}+4} \log^4 z^* (4\log(z^*)+5),
    \end{align*}
    which does not have a closed form solution. We let $z^{\frac{8\beta}{k+d}+4} = n$ then
    \begin{align*}
        \min_{z} f(z) \leq f(n^\frac{k+d}{8\beta+4k+4d+4}) = \frac{k+d}{8\beta + 4(k+d)}n^{\frac{-2\beta}{2\beta+k+d}} \log^{8} n + \log^\beta n n^{\frac{-2\beta}{2\beta+k+d}}.
    \end{align*}
    We finally obtain
    \begin{align*}
        \mathbb{E}_S \Vert \hat{\boldsymbol{b}}_n (X,Y_t,t) - \boldsymbol{b}^* (X,Y_t,t) \Vert^2
        \leq& \mathcal{B}_b(t)^5 (\mathcal{B}_I(t)+\gamma(t))^{2\beta} O \left(n^{\frac{-2\beta}{2\beta+k+d}} \log^{\max\{8,\beta\}} n\right)
        \\&+ 8d\mathcal{B}_b(t) n^{-\frac{2\beta}{2\beta+k+d}} \\
        =& \mathcal{B}_b(t)^5 (\mathcal{B}_I(t)+\gamma(t))^{2\beta} O \left(n^{\frac{-2\beta}{2\beta+k+d}} \log^{\max\{8,\beta\}} n\right) .
    \end{align*}

    Regarding the convergence rate of $\hat{\boldsymbol{s}}_n$, the excess risk bound depends on the value of $t$,
    \begin{align*}
        \mathbb{E}_S \Vert \hat{\boldsymbol{s}}_n &(X,Y_t,t) - \boldsymbol{s}^*(X,Y_t,t) \Vert^2 \\
        \leq& \gamma(t)^{-1}\left\{
            722 d \mathcal{B}_\kappa(t)^2(\lfloor\beta^\prime \rfloor+1)^4(k+d)^{2\lfloor\beta^\prime \rfloor+(\beta^\prime \vee 1)}(2A_n)^{2\beta^\prime}(U^\prime V^\prime)^{\frac{-4 \beta^\prime}{k+d}} \right.\\
             & \left.+ c_0^\prime d \mathcal{B}_\kappa(t)^5 \log n^3 \frac{\mathcal{S}^\prime \mathcal{D}^\prime \log (\mathcal{S}^\prime)}{n} + 8 d \mathcal{B}_\kappa(t) \exp \left(- \frac{A_n^2}{2 (\mathcal{B}_{I}(t) + \gamma(t))^2}\right)
        \right\}.
    \end{align*}
    Similarly, we obtain
    \begin{align*}
  &      \mathbb{E}_S \Vert \hat{\boldsymbol{s}}_n (X,Y_t,t) - \boldsymbol{s}^* (X,Y_t,t) \Vert^2\\
    &    \leq \gamma(t)^{-1} \mathcal{B}_\kappa(t)^5 (\mathcal{B}_I(t)+\gamma(t))^{2\beta^\prime} O \left(n^{\frac{-2\beta^\prime}{2\beta^\prime+k+d}} \log^{\max\{8,\beta^\prime\}} n\right).
    \end{align*}

\end{proof}

\subsection{Proof of Corollary  \ref{corollary_linear}}

\noindent
\newline

\noindent
Corollary  \ref{corollary_linear}.
\textit{Suppose the conditions in Lemmas \ref{stoc_b}, \ref{theorem2}  are satisfied. Suppose CSI satisfies (\ref{interpb}),  $Y_0\sim \mathcal{N}(\mathbf{0},\mathbf{I}_d) $,
% (\ref{bs1})
and $\gamma(t) \neq 0, t \in (0, 1)$.
Let $U^\prime=n^{(k+d)/(8\beta + 4k+4d)}$ and $V^\prime= n^{(k+d)/(8\beta + 4k+4d)}$. Define the conditional score estimator
\begin{align} \label{estimator_s_2}
   \tilde{\boldsymbol{s}}_n(\mathbf{x},\mathbf{y},t) := \frac{b(t)}{A(t)}\hat{\boldsymbol{b}}_n(\mathbf{x},\mathbf{y},t) -\frac{\dot{b}(t)}{A(t)}\mathbf{y}.
\end{align}
We have
\begin{align*}
    \mathbb{E}_S \Vert \tilde{\boldsymbol{s}}_n(X, Y_t,t) - \boldsymbol{s}^*(X,Y_t,t) \Vert^2=  \frac{H_b(t)b^2(t)}{A^2(t)} O \left(n^{\frac{-2\beta}{2\beta+k+d}} \log^{\max\{8,\beta\}}n \right),  \  t \in (0, 1).
\end{align*}
Suppose that $b(t)/A(t)$ is a bounded function in $(0, 1)$. Then
\begin{align*}
\sup_{t \in (0, 1)}     \mathbb{E}_S \Vert \tilde{\boldsymbol{s}}_n(X, Y_t,t) - \boldsymbol{s}^*(X,Y_t,t) \Vert^2= H_b(t)O \left(n^{\frac{-2\beta}{2\beta+k+d}} \log^{\max\{8,\beta\}}n \right).
\end{align*}}

\begin{proof}
   According to Corollary \ref{pro_score2},  the conditional score function is given by
    $$ \boldsymbol{s}^{*}(\mathbf{x},\mathbf{y},t):= \frac{b(t)}{A(t)} \boldsymbol{b}^*(\mathbf{x},\mathbf{y},t) - \frac{\dot{b}(t)}{A(t)} \mathbf{y}.$$
 Then, we have
    \begin{align*}
        \mathbb{E}_S \Vert \tilde{\boldsymbol{s}}_n(X, Y_t,t) - \boldsymbol{s}^*(X,Y_t,t) \Vert^2
        &= \frac{b(t)}{A(t)} \mathbb{E}_S \Vert \hat{\boldsymbol{b}}_n(\mathbf{x},\mathbf{y},t) - \boldsymbol{b}^*(\mathbf{x},\mathbf{y},t) \Vert^2.
    \end{align*}
 By Theorem \ref{corollary_bound}, we get
    \begin{align*}
        \mathbb{E}_S \Vert \hat{\boldsymbol{b}}_n(X,Y_t,t) - \boldsymbol{b}^*(X,Y_t,t) \Vert^2 = H_b(t)O \left(n^{\frac{-2\beta}{2\beta+k+d}} \log^{\max\{8,\beta\}}n\right),\ t \in (0, 1).
    \end{align*}
 It follows that
    \begin{align*}
        \mathbb{E}_S \Vert \tilde{\boldsymbol{s}}_n(X, Y_t,t) - \boldsymbol{s}^*(X,Y_t,t) \Vert^2=   \frac{H_b(t)b^2(t)}{A^2(t)} O \left(n^{\frac{-2\beta}{2\beta+k+d}} \log^{\max\{8,\beta\}}n\right),  \  t \in (0, 1),
    \end{align*} which complete the proof.
\end{proof}

\subsection{Proof of Lemma \ref{theorem_fp}}
\noindent
\newline

\noindent
Lemma \ref{theorem_fp}. \textit{If Assumptions \ref{assumption0}, \ref{assumption0b} and  \ref{assump_unique} hold, then the marginal preserving property holds in the sense that, for any $\mathbf{x} \in \mathcal{X}$,
$$\rho^*_{\mathbf{x}}(\mathbf{z}, t) = \rho^{\text{ode}}_{\mathbf{x}}(\mathbf{z}, t) = \rho^{\text{sde}}_{\mathbf{x}}(\mathbf{z}, t), \ \text{ for any } \   ( \mathbf{z}, t) \in
 \mathbb{R}^d \times [0,1].
$$
}
\begin{proof}
    For any given $\mathbf{x} \in \mathcal{X}$, recall that the probability density function $\rho^{\text{ode}}_\mathbf{x}$ satisfies the Fokker-Planck equation (\ref{fp_equa}), while both $\rho^*_\mathbf{x}$ and $\rho^{\text{ode}}_\mathbf{x}$ satisfy the Transport equation (\ref{tp_equa}). Under Assumption \ref{assump_unique}, it is evident that $\rho_{\mathbf{x}}^*(\cdot, \cdot)$, $\rho^{\text{ode}}_{\mathbf{x}}(\cdot, \cdot)$, and $\rho^{\text{ode}}_{\mathbf{x}}( \cdot, \cdot)$ belong to $L^\infty(L^1 \cap L^\infty, [0,1])$ according to Proposition 2 in \citep{bris2008existence}. Firstly, at $t=0$, it is obvious that $\rho^*_{\mathbf{x}}(\mathbf{z}, 0) = \rho^{\text{ode}}_{\mathbf{x}}(\mathbf{z}, 0) = \rho^{\text{sde}}_{\mathbf{x}}(\mathbf{z}, 0)$. Furthermore, by the uniqueness of the solution to the Fokker-Planck equation and the Transport equation in the function space $L^\infty(L^1 \cap L^\infty, (0,1))$, we conclude that $\rho_{\mathbf{x}}^* = \rho^{\text{ode}}_{\mathbf{x}}$ for any given $\mathbf{x} \in \mathcal{X}$.

   Next we show that $\rho_{\mathbf{x}}^* = \rho^{\text{sde}}_{\mathbf{x}}$. Assume that for a given $\mathbf{x} \in \mathcal{X}$, there exists $\mathbf{y} \in \mathbb{R}^d$ and $t \in (0,1)$ such that $\rho^{\text{sde}}_\mathbf{x}(\mathbf{y},t) \neq \rho^*_\mathbf{x}(\mathbf{y},t)$. By substituting the definition of $\boldsymbol{b}_{u,\mathbf{x}}^*(\mathbf{z}, t)$ into the Transport equation, we obtain:
    \begin{align*}
        0 =& \partial_t \rho_{\mathbf{x}}^* +  \nabla_{\mathbf{z}} \cdot (\boldsymbol{b}^*_\mathbf{x}\rho_{\mathbf{x}}^*)\\
        =& \partial_t \rho_{\mathbf{x}}^* +  \nabla_{\mathbf{z}} \cdot \{(\boldsymbol{b}^*_\mathbf{x} + u(t)\nabla_{\mathbf{z}} \log \rho_{\mathbf{x}}^* - u(t) \nabla_{\mathbf{z}} \log \rho_{\mathbf{x}}^*) \rho_{\mathbf{x}}^*\} \\
        =& \partial_t \rho_{\mathbf{x}}^* +  \nabla_{\mathbf{z}} \cdot \{\boldsymbol{b}_u\rho_{\mathbf{x}}^* - u(t) \left(\frac{\nabla_{\mathbf{z}} \rho_{\mathbf{x}}^*}{\rho_{\mathbf{x}}^*}\right) \rho_{\mathbf{x}}^*\} \\
        =& \partial_t \rho_{\mathbf{x}}^* +  \nabla_{\mathbf{z}} \cdot (\boldsymbol{b}_u\rho_{\mathbf{x}}^*) - u(t) \Delta_{\mathbf{z}} \rho_{\mathbf{x}}^*.
    \end{align*} This means that $\rho^*_{\mathbf{x}}$ is also the solution of the Fokker-Planck equation (\ref{fp_equa}), which contradicts the uniqueness of the solution. Therefore,  we can conclude that $\rho_{\mathbf{x}}^* = \rho^{\text{sde}}_{\mathbf{x}}$. Then, we obtain  $\rho^*_{\mathbf{x}}(\mathbf{z}, 1) = \rho^{\text{ode}}_{\mathbf{x}}(\mathbf{z}, 1) = \rho^{\text{sde}}_{\mathbf{x}}(\mathbf{z}, 1)$ by integrating $\partial_t \rho{\mathbf{x}}^*$, $\partial_t \rho_{\mathbf{x}}^{\text{ode}}$, $\partial_t \rho_{\mathbf{x}}^{\text{sde}}$ over the interval $[0,1]$. Finally, we obtain $\rho^*_{\mathbf{x}} = \rho^{\text{ode}}_{\mathbf{x}} = \rho^{\text{sde}}_{\mathbf{x}}$ for any $\mathbf{x} \in \mathcal{X}$. This completes the proof.
\end{proof}

\subsection{ Proof of Lemma \ref{lemma_w2}}
\noindent
\newline

\noindent
Lemma \ref{lemma_w2}. \textit{
Suppose that Assumptions \ref{assumption0} and \ref{assumption0b} are satisfied. For any drift function $\boldsymbol{b}(\mathbf{x}, \mathbf{z}, t)$ satisfying $\left\|\boldsymbol{b}(\mathbf{x}, \mathbf{z}, t)-\boldsymbol{b}(\mathbf{x}, \mathbf{y}, t)\right\|_{\infty} \leq l\|\mathbf{z}-\mathbf{y}\|_{\infty}$ for some $l >0.$
    Let $Z_{t,X} := Z_t \mid X$  be obtained through $ \mathrm{d} Z_{t,X} = \boldsymbol{b}(X,Z_{t,X}, t)\mathrm{d}t$ for $t \in (0,1)$ with $Z_{0,X} = Y_0.$ Denote the conditional density of $Z_t = \mathbf{z} \mid X= \mathbf{x}$ by ${\rho}(\mathbf{x}, \mathbf{z}, t)= \rho_{\mathbf{x}}(\cdot,t)$. Then, we have
       {\begin{align} \label{w2}
        \mathbb{E}_{\mathbf{x}}[W_2^2(\rho^*_{\mathbf{x}}(\cdot,t), \rho_{\mathbf{x}}(\cdot,t))]  \leq \exp \left(2 l+1\right) \int_{0}^{t} \mathbb{E} \Vert \boldsymbol{b}^*(\mathbf{x}, Z_{s,\mathbf{x}}^{ode}, s)-\boldsymbol{b}(\mathbf{x}, Z_{s,\mathbf{x}}^{ode}, s) \Vert^2 ds,
       \end{align}}
       where $W_2^2(\cdot,\cdot)$ denotes the 2-Wasserstein distance.
}

\begin{proof} By the definition of the Wasserstein-2 distance and Jensen's inequality \citep{jensen1906fonctions},  it holds that
    \begin{align}
        \mathbb{E}_{\mathbf{x}}[W_2^2(\rho^*_{\mathbf{x}}(\cdot,t), \rho_{\mathbf{x}}(\cdot,t))]
        &\leq  \mathbb{E}_{\mathbf{z} \sim \rho(\cdot,0), \mathbf{x}} \Vert Z_{t,\mathbf{x}}^{ode}(\mathbf{z})- Z_{t,\mathbf{x}}(\mathbf{z}) \Vert^2 \nonumber  \\
        &= \mathbb{E}_{\mathbf{z} \sim \rho(\cdot,0),\mathbf{x}} \left\Vert \int_0^t Z_{s,\mathbf{x}}^{ode}(\mathbf{z})- Z_{s,\mathbf{x}}(\mathbf{z}) \mathrm{d}s \right\Vert^2 \nonumber  \nonumber \\
        &\leq  \int_0^t \mathbb{E}_{\mathbf{z} \sim \rho(\cdot,0), \mathbf{x}} \Vert Z_{s,\mathbf{x}}^{ode}(\mathbf{z})- Z_{s,\mathbf{x}}(\mathbf{z}) \Vert^2 \mathrm{d}s =: \int_0^t \mathbb{E}_{\mathbf{x}}[g(s,\mathbf{x})] \mathrm{d}s,  \label{ja}
    \end{align}
   Similar to the proof idea of \citet{albergo2022building}, considering (\ref{flow_equa}),
   it follows that
    \begin{align}
    \partial_s H(s,\mathbf{x}) = & \partial_s \mathbb{E}_{\mathbf{z} \sim \rho_\mathbf{x}(\cdot,0)} \left\Vert Z_{s,\mathbf{x}}^{ode}(\mathbf{z})- Z_{s,\mathbf{x}}(\mathbf{z}) \right\Vert^2 = \int_{\mathbb{R}^d} \partial_s  \left\Vert Z_{s,\mathbf{x}}^{ode}(\mathbf{z})- Z_{s,\mathbf{x}}(\mathbf{z}) \right\Vert^2 \rho_\mathbf{x}(\mathbf{z},0) \mathbf{dz} \nonumber\\
    = & \int_{\mathbb{R}^d} 2\left\langle \boldsymbol{b}^*(\mathbf{x}, Z_{s,\mathbf{x}}^{ode}(\mathbf{z}), s)-\boldsymbol{b}(\mathbf{x}, Z_{s,\mathbf{x}}(\mathbf{z}), s), Z_{s,\mathbf{x}}^{ode}(\mathbf{z})- Z_{s,\mathbf{x}}(\mathbf{z}) \right\rangle \rho_\mathbf{x}(\mathbf{z},0) \mathbf{dz} \nonumber\\
    = & \int_{\mathbb{R}^d} 2\left\langle \boldsymbol{b}^*(\mathbf{x}, Z_{s,\mathbf{x}}^{ode}(\mathbf{z}), s)-\boldsymbol{b}(\mathbf{x}, Z_{s,\mathbf{x}}^{ode}(\mathbf{z}), s), Z_{s,\mathbf{x}}^{ode}(\mathbf{z})- Z_{s,\mathbf{x}}(\mathbf{z})\right\rangle \rho_\mathbf{x}(\mathbf{z},0) \mathbf{dz} \label{w2_1}\\
    & +\int_{\mathbb{R}^d} 2\left\langle \boldsymbol{b}(\mathbf{x}, Z_{s,\mathbf{x}}^{ode}(\mathbf{z}), s)-\boldsymbol{b}(\mathbf{x}, Z_{s,\mathbf{x}}(\mathbf{z}), s), Z_{s,\mathbf{x}}^{ode}(\mathbf{z})- Z_{s,\mathbf{x}}(\mathbf{z}) \right\rangle \rho_\mathbf{x}(\mathbf{z},0) \mathbf{dz}. \label{w2_2}
    \end{align}
	 By Cauchy–Schwarz inequality, term (\ref{w2_1}) satisfies
\begin{align*}
    &\int_{\mathbb{R}^d} 2\left\langle \boldsymbol{b}^*(\mathbf{x}, Z_{s,\mathbf{x}}^{ode}(\mathbf{z}), s)-\boldsymbol{b}(\mathbf{x}, Z_{s,\mathbf{x}}^{ode}(\mathbf{z}), s), Z_{s,\mathbf{x}}^{ode}(\mathbf{z})- Z_{s,\mathbf{x}}(\mathbf{z})\right\rangle \rho_\mathbf{x}(\mathbf{z},0) \mathbf{dz} \\
    &\leq \mathbb{E}[ \Vert \boldsymbol{b}^*(\mathbf{x}, Z_{s,\mathbf{x}}^{ode}, s)-\boldsymbol{b}(\mathbf{x}, Z_{s,\mathbf{x}}^{ode}, s) \Vert^2 \mid \mathbf{x}] +g(s,\mathbf{x}).
\end{align*}
By the Lipschitz continuity of function $\boldsymbol{b}$, term (\ref{w2_2}) can be bounded by
\begin{align*}
    \int_{\mathbb{R}^d} 2\left\langle \boldsymbol{b}(\mathbf{x}, Z_{s,\mathbf{x}}^{ode}(\mathbf{z}), s)-\boldsymbol{b}(\mathbf{x}, Z_{s,\mathbf{x}}(\mathbf{z}), s), Z_{s,\mathbf{x}}^{ode}(\mathbf{z})- Z_{s,\mathbf{x}}(\mathbf{z}) \right\rangle \rho_\mathbf{x}(\mathbf{z},0) \mathbf{dz} \leq 2l g(s,\mathbf{x}).
\end{align*}
Therefore, we have
\begin{align*}
     \partial_s g(s,\mathbf{x}) \leq\left(2 l+1\right) g(s,\mathbf{x})+  \mathbb{E}[ \Vert \boldsymbol{b}^*(\mathbf{x}, Z_{s,\mathbf{x}}^{ode}, s)-\boldsymbol{b}(\mathbf{x}, Z_{s,\mathbf{x}}^{ode}, s) \Vert^2 \mid \mathbf{x}] .
\end{align*}
By Gr$\ddot{\text{o}}$nwall's inequality, it further yields
\begin{align*}
    g(s,\mathbf{x}) \leq \exp (2 l+1) \int_{0}^{s}  \mathbb{E}[ \Vert \boldsymbol{b}^*(\mathbf{x}, Z_{s,\mathbf{x}}^{ode}, s)-\boldsymbol{b}(\mathbf{x}, Z_{s,\mathbf{x}}^{ode}, s) \Vert^2 \mid \mathbf{x}] dt.
\end{align*}
Combining the above inequalities, we obtain
\begin{align} \label{w2_3}
    \mathbb{E}_{\mathbf{x}}[W_2^2(\rho^*_{\mathbf{x}}(\cdot,t), \rho_{\mathbf{x}}(\cdot,t))]  \leq \exp \left(2 l+1\right) \int_{0}^{t} \mathbb{E} \Vert \boldsymbol{b}^*(\mathbf{x}, Z_{t,\mathbf{x}}^{ode}, t)-\boldsymbol{b}(\mathbf{x}, Z_{t,\mathbf{x}}^{ode}, t) \Vert^2 dt,
\end{align}
which completes the proof.

\end{proof}

\subsection{Proof of Theorem \ref{lemma_w2}}
\noindent
\newline

\noindent
    Theorem \ref{lemma_w2}. \textit{
    Suppose $H_b(t)$ is integrable in $[0,1)$. Assuming that the conditions in Theorem \ref{corollary_bound}  and Lemma \ref{lemma_w2} are satisfied, we have
    \begin{align}
 \mathbb{E}_{S,X}[W_2^2(\rho^{*}_{X}(\cdot,1), \hat{{\rho}}_{n,X}^{ode}(\cdot,1))] = O \left(n^{\frac{-2\beta}{2\beta+k+d}} \log^{\max\{8,\beta\}}n \right).
 \end{align}
}

\begin{proof}
        As Lemma \ref{theorem_fp} states, $Z_{t}^{ode}$ and $Y_t$ follow the same distribution. By inequality (\ref{w2_3}), we obtain
        \begin{align*}
            \mathbb{E}_{\mathbf{x}}[W_2^2(\rho^*_{\mathbf{x}}(\cdot,t), \hat{\rho}_{n,\mathbf{x}}^{ode}(\cdot,t)) \mid S]
            &\leq \exp \left(2 l+1\right) \int_{0}^{t}\mathbb{E}[ \Vert \boldsymbol{b}^*(\mathbf{x}, Z_{t,\mathbf{x}}^{ode}, t)-\hat{\boldsymbol{b}}_{n}(\mathbf{x}, Z_{t,\mathbf{x}}^{ode}, t) \Vert^2 \mid S] dt,\\
            &= \exp \left(2 l+1\right)  \int_{0}^{t} \mathbb{E} [\Vert \boldsymbol{b}^*(\mathbf{x}, \mathbf{y}_t, t)-\hat{\boldsymbol{b}}_{n}(\mathbf{x}, \mathbf{y}_t, t) \Vert^2 \mid S] dt.
        \end{align*}
     Taking expectations with respect to the sample $S$ on both sides of above inequality, we have
     \begin{align*}
        \mathbb{E}_{,\mathbf{x}}[W_2^2(\rho^{ode}_{\mathbf{x}}(\cdot,t), \hat{\rho}_{n,\mathbf{x}}^{ode}(\cdot,t))] \leq \exp \left(2 l+1\right)  \int_{0}^{t}\mathbb{E}_S \Vert \boldsymbol{b}^*(\mathbf{x}, \mathbf{y}_t, t)-\boldsymbol{b}(\mathbf{x}, \mathbf{y}_t, t) \Vert^2 dt.
     \end{align*}
        Combining this bound with Corollary \ref{corollary_bound} and Lemma \ref{lemma_w2} completes the proof.
\end{proof}

\subsection{ Proof of Lemma \ref{kl_lemma}}

\noindent
\newline

\noindent
Lemma \ref{kl_lemma}. \textit{
Suppose that Assumptions \ref{assumption0} and \ref{assumption0b} are satisfied. For any velocity ${\boldsymbol{b}}(\mathbf{x}, \mathbf{z}, t)$ and distribution field ${\boldsymbol{s}}(\mathbf{x},\mathbf{z},t)$, we define ${\boldsymbol{b}}_{u}(\mathbf{x}, \mathbf{z},t):={\boldsymbol{b}}(\mathbf{x}, \mathbf{z},t)+ u(t){\boldsymbol{s}}(\mathbf{x},\mathbf{z},t)$. Let $Z_{t,X} := Z_t \mid X, t \in (0,1]$, be defined according to $\mathrm{d} Z_{t,X} = \boldsymbol{b}_u(X,Z_{t,X}, t)\mathrm{d}t + \sqrt{2u(t)} \mathrm{d} W_t$ with $Z_{0,X} = Y_0$.
Denote the corresponding time-dependent conditional density
 of $Z_{t,X}$ by ${\rho}(\mathbf{x}, \mathbf{z}, t)={\rho}_\mathbf{x}(\mathbf{z}, t).$ Then, for any integrable functions $u $ and $u ^{-1}$ and any $t\in [0,1],$ we have
   \begin{align} \label{kl}
       {\mathbb{E}_{\mathbf{x}}[\mathrm{KL}(\rho_{\mathbf{x}}^{*}(\cdot,t) \| {\rho}_{\mathbf{x}}(\cdot,t))]} \leq \int_0^t &\frac{1}{2u(s)}
       \mathbb{E} \Vert {\boldsymbol{b}(\mathbf{x},Z_{s,\mathbf{x}}^{sde}, s)} - \boldsymbol{b}^*(\mathbf{x},Z_{s,\mathbf{x}}^{sde}, s) \Vert^2 ds \\
       &+ \int_0^t \frac{u(s)}{2} \mathbb{E} \Vert {\boldsymbol{s}(\mathbf{x},Z_{s,\mathbf{x}}^{sde}, s)} - \boldsymbol{s}^*(\mathbf{x},Z_{s,\mathbf{x}}^{sde}, s) \Vert^2 ds. \nonumber
   \end{align}
}
\begin{proof}
     We introduce $\boldsymbol{b}_{u,\mathbf{x}}(\mathbf{z},t) := \boldsymbol{b}_u (\mathbf{x},\mathbf{z},t)$ and new notation $v$ to represent time. For notational simplicity, we omit the argument $(\mathbf{z},v)$ of all functions. Using (\ref{fp_equa}) and (\ref{fp_estmator}), we compute analytically, for any given $\mathbf{x} \in \mathcal{X}$,
{\small
    \begin{align*}
   & \frac{d}{d v} \mathrm{KL}(\rho_{\mathbf{x}}^*(\cdot,v) \| {\rho}_{{\mathbf{x}}}(\cdot,v))  =\frac{d}{d v} \int_{\mathbb{R}^d} \log \left(\frac{{\rho}_{\mathbf{x}}^*}{{\rho}_{\mathbf{x}}}\right) \rho_{\mathbf{x}}^* \mathbf{dz} \\
    & =\int_{\mathbb{R}^d} {\rho}_{\mathbf{x}}\left(\frac{\partial_v \rho_{\mathbf{x}}^*}{{\rho}_{\mathbf{x}}}-\frac{\rho_{\mathbf{x}}^*}{({\rho}_{\mathbf{x}})^2} \partial_v {\rho}_{\mathbf{x}}\right) \mathbf{dz}+\int \log \left(\frac{\rho_{\mathbf{x}}^*}{{\rho}_{\mathbf{x}}}\right) \partial_v \rho_{\mathbf{x}}^* \mathbf{dz} \\
    & =-\int_{\mathbb{R}^d}\left(\frac{\rho_{\mathbf{x}}^*}{{\rho}_{\mathbf{x}}}\right) \partial_v {\rho}_{\mathbf{x}} \mathbf{dz}+\int_{\mathbb{R}^d} \log \left(\frac{\rho_{\mathbf{x}}^*}{{\rho}_{\mathbf{x}}}\right) \partial_v \rho_{\mathbf{x}}^* \mathbf{dz} \\
    & =\int_{\mathbb{R}^d}\left(\frac{\rho_{\mathbf{x}}^*}{{\rho}_{\mathbf{x}}}\right) \left\{\nabla \cdot({\boldsymbol{b}}_{u,\mathbf{x}}  {\rho}_{\mathbf{x}}) -u(v) \Delta {\rho}_{\mathbf{x}} \right\} \mathbf{dz} \\
    &\quad \quad  + \int \log \left(\frac{\rho_{\mathbf{x}}^*}{{\rho}_{\mathbf{x}}}\right) \left\{-\nabla \cdot(\boldsymbol{b}_{u, \mathbf{x}}^* \rho_{\mathbf{x}}^*)+u(v)\Delta\rho_{\mathbf{x}}^* \right\} \mathbf{dz} \\
    & =\int_{\mathbb{R}^d} \frac{\rho_{\mathbf{x}}^*}{{\rho}_{\mathbf{x}}} \nabla \cdot ({\boldsymbol{b}}_{u,\mathbf{x}}  {\rho}_{\mathbf{x}}) \mathbf{dz} -\int_{\mathbb{R}^d} \log \left(\frac{\rho_{\mathbf{x}}^*}{{\rho}_{\mathbf{x}}}\right) \nabla \cdot (\boldsymbol{b}_{u,\mathbf{x}}^*  \rho_{\mathbf{x}}^*) \mathbf{dz} \\
    &\quad \quad - u(v) \int_{\mathbb{R}^d} \frac{\rho_{\mathbf{x}}^*}{{\rho}_{\mathbf{x}}} \nabla \cdot({\rho}_{\mathbf{x}} \nabla \log {\rho}_{\mathbf{x}}) \mathbf{dz} + u(v) \int_{\mathbb{R}^d} \log \frac{\rho_{\mathbf{x}}^*}{{\rho}_{\mathbf{x}}} \nabla \cdot(\rho_{\mathbf{x}}^* \nabla \log \rho_{\mathbf{x}}^*) \mathbf{dz}\\
    & =-\int_{\mathbb{R}^d} \nabla\left(\frac{\rho_{\mathbf{x}}^*}{{\rho}_{\mathbf{x}}}\right) \cdot {\boldsymbol{b}}_{u,\mathbf{x}}  {\rho}_{\mathbf{x}} \mathbf{dz} +\int_{\mathbb{R}^d} \nabla \log \left(\frac{\rho_{\mathbf{x}}^*}{{\rho}_{\mathbf{x}}}\right)\cdot \boldsymbol{b}_{u,\mathbf{x}}^* \rho_{\mathbf{x}}^* \mathbf{dz} \\
    &\quad \quad + u(v) \int_{\mathbb{R}^d} \nabla \left(\frac{\rho_{\mathbf{x}}^*}{{\rho}_{\mathbf{x}}}\right) \cdot({\rho}_{\mathbf{x}} \nabla \log {\rho}_{\mathbf{x}}) \mathbf{dz} - u(v) \int_{\mathbb{R}^d} \nabla \log \left(\frac{\rho_{\mathbf{x}}^*}{{\rho}_{\mathbf{x}}}\right) \cdot(\rho_{\mathbf{x}}^* \nabla \log \rho_{\mathbf{x}}^*) \mathbf{dz}\\
    & =-\int_{\mathbb{R}^d}\left(\frac{\nabla \rho_{\mathbf{x}}^*}{{\rho}_{\mathbf{x}}}-\frac{\rho_{\mathbf{x}}^* \nabla {\rho}_{\mathbf{x}}}{{\rho}_{\mathbf{x}}^2}\right) \cdot {\boldsymbol{b}}_{u, \mathbf{x}}  {\rho}_{\mathbf{x}} \mathbf{dz}+\int_{\mathbb{R}^d}(\nabla \log \rho_{\mathbf{x}}^*-\nabla \log {\rho}_{\mathbf{x}}) \cdot \boldsymbol{b}_{u,\mathbf{x}}^* \rho_{\mathbf{x}}^* \mathbf{dz} \\
    &\quad \quad + u(v) \int_{\mathbb{R}^d} \left(\frac{\nabla \rho_{\mathbf{x}}^*}{{\rho}_{\mathbf{x}}}-\frac{\rho_{\mathbf{x}}^* \nabla {\rho}_{\mathbf{x}}}{{\rho}_{\mathbf{x}}^2}\right) \cdot({\rho}_{\mathbf{x}} \nabla \log {\rho}_{\mathbf{x}}) \mathbf{dz} - u(v) \int_{\mathbb{R}^d} \nabla \log \left(\frac{\rho_{\mathbf{x}}^*}{{\rho}_{\mathbf{x}}}\right) \cdot(\rho_{\mathbf{x}}^* \nabla \log \rho_{\mathbf{x}}^*) \mathbf{dz}\\
    & =-\int_{\mathbb{R}^d}(\nabla \log \rho_{\mathbf{x}}^* - \nabla \log {\rho}_{\mathbf{x}}) \cdot {\boldsymbol{b}}_{u, \mathbf{x}}\rho_{\mathbf{x}}^* \mathbf{dz}+\int_{\mathbb{R}^d}(\nabla \log \rho_{\mathbf{x}}^*-\nabla \log {\rho}_{\mathbf{x}}) \cdot \boldsymbol{b}_{u,\mathbf{x}, \mathbf{x}}^*  \rho_{\mathbf{x}}^* \mathbf{dz} \\
    & + u(v) \int_{\mathbb{R}^d}  (\nabla \log \rho_{\mathbf{x}}^* - \nabla \log {\rho}_{\mathbf{x}}) \cdot \nabla \log {\rho}_{\mathbf{x}} \rho_{\mathbf{x}}^* \mathbf{dz} - u(v) \int_{\mathbb{R}^d} (\nabla \log \rho_{\mathbf{x}}^* - \nabla \log {\rho}_{\mathbf{x}}) \cdot( \nabla \log \rho_{\mathbf{x}}^*) \rho_{\mathbf{x}}^* \mathbf{dz}\\
    & =\int_{\mathbb{R}^d}(\nabla \log {\rho}_{\mathbf{x}}-\nabla \log \rho_{\mathbf{x}}^*) \cdot({\boldsymbol{b}}_{u,\mathbf{x}} -\boldsymbol{b}_{u,\mathbf{x}}^* ) \rho_{\mathbf{x}}^* d \mathbf{z}- u(v)\int_{\mathbb{R}^d}\Vert\nabla \log {\rho}_{\mathbf{x}}-\nabla \log \rho_{\mathbf{x}}^*\Vert ^2\rho_{\mathbf{x}}^* \mathbf{dz},
    \end{align*}
}
where for any differentiable function $f: \mathbb{R}^d \rightarrow \mathbb{R}^1$, it holds $\Delta f = \nabla \cdot (f \nabla \log f)$. Integrating both sides with respect to time from $0$ to $t$, we obtain
    \begin{align} \label{b12_1}
        \mathrm{KL}(\rho_{\mathbf{x}}^*(\cdot,t) \| {\rho}_{\mathbf{x}}(\cdot,t)) = & \int_0^t \int_{\mathbb{R}^d}(\nabla \log {\rho}_{\mathbf{x}}-\nabla \log \rho_{\mathbf{x}}^*) \cdot({\boldsymbol{b}}_{u,\mathbf{x}} -\boldsymbol{b}_{u,\mathbf{x}}^* ) \rho_{\mathbf{x}}^* \mathbf{dz} dv \nonumber \\
        &- \int_0^t u(v)\int_{\mathbb{R}^d}\Vert\nabla \log {\rho}_{\mathbf{x}}-\nabla \log \rho_{\mathbf{x}}^*\Vert ^2\rho_{\mathbf{x}}^* \mathbf{dz} dv.
    \end{align}
    Using the definition of $\boldsymbol{b}^*_u$ and ${\boldsymbol{b}}_{u}$, we have
{\small
    \begin{align} \label{b12_2}
  &      \int_0^t \int_{\mathbb{R}^d}(\nabla \log {\rho}_{\mathbf{x}}-\nabla \log \rho_{\mathbf{x}}^*) ({\boldsymbol{b}}_{u, \mathbf{x}} -\boldsymbol{b}_{u,\mathbf{x}}^* ) \rho_{\mathbf{x}}^* \mathbf{dz} dv \nonumber \\
        &= \int_0^t \int_{\mathbb{R}^d}(\nabla \log {\rho}_{\mathbf{x}}-\nabla \log \rho_{\mathbf{x}}^*)  \left[({\boldsymbol{b}}-\boldsymbol{b}^*  + u\hat{\boldsymbol{s}}_n- u \boldsymbol{s}^*)(\mathbf{x}, \cdot) \right] \rho_{\mathbf{x}}^* \mathbf{dz} dv \nonumber \\
        &= \int_0^t \int_{\mathbb{R}^d}\sqrt{2u(v)}(\nabla \log {\rho}_{\mathbf{x}}-\nabla \log \rho_{\mathbf{x}}^*) \left\{\frac{1}{\sqrt{2u(v)}}({\boldsymbol{b}}-\boldsymbol{b}^* )(\mathbf{x}, \cdot) + \sqrt{\frac{u(v)}{2}}({\boldsymbol{s}}- \boldsymbol{s}^* )(\mathbf{x}, \cdot) \right\} \rho_{\mathbf{x}}^* \mathbf{dz} dv \nonumber \\
        &\leq \int_0^t \int_{\mathbb{R}^d} u(v)\Vert \nabla\log {\rho}_{\mathbf{x}}- \nabla\log \rho_{\mathbf{x}}^* \Vert^2 \rho_{\mathbf{x}}^* \mathbf{dz} dv \nonumber \\
        & \quad \quad+ \frac{1}{2}\int_0^t \int_{\mathbb{R}^d} \Vert \frac{1}{\sqrt{2u(v)}} ({\boldsymbol{b}}-\boldsymbol{b}^* )(\mathbf{x}, \cdot) + \sqrt{\frac{u(v)}{2}}({\boldsymbol{s}}- \boldsymbol{s}^* )(\mathbf{x}, \cdot) \Vert^2 \rho_{\mathbf{x}}^* \mathbf{dz} dv \nonumber \\
        &\leq \int_0^t \int_{\mathbb{R}^d} u(v)\Vert \nabla\log {\rho}_{\mathbf{x}}- \nabla\log \rho \Vert^2 \rho_{\mathbf{x}}^* \mathbf{dz} dv \nonumber \\
        &  + \int_0^t \int_{\mathbb{R}^d} \frac{1}{2u(v)}\Vert ({\boldsymbol{b}}  - \boldsymbol{b}^*) (\mathbf{x}, \cdot) \Vert^2 \rho_{\mathbf{x}}^* \mathbf{dz} dv
        + \int_0^t \int_{\mathbb{R}^d} \frac{u(v)}{2}\Vert ({\boldsymbol{s}}  - \boldsymbol{s}^*)(\mathbf{x}, \cdot) \Vert^2 \rho_{\mathbf{x}}^* \mathbf{dz} dv \nonumber \\
        &= \int_0^t \int_{\mathbb{R}^d} u(v)\Vert \nabla\log {\rho}_{\mathbf{x}}- \nabla\log \rho_{\mathbf{x}}^* \Vert^2 \rho_{\mathbf{x}}^* \mathbf{dz} dv \nonumber \\
        &  + \int_0^t \frac{1}{2u(v)} \mathbb{E}\left[\Vert {\boldsymbol{b}}  - \boldsymbol{b}^* \Vert^2 \mid X = \mathbf{x} \right]  dv
        + \int_0^t \frac{u(v)}{2} \mathbb{E}\left[\Vert {\boldsymbol{s}}  - \boldsymbol{s}^* \Vert^2 \mid X = \mathbf{x}\right]  dv.
    \end{align}
}

Combining  (\ref{b12_1}) and (\ref{b12_2}),  and taking expectation with respect to $\mathbf{x}$ on both sides of the above inequality, we obtain
\begin{align*}
    \mathbb{E}_{\mathbf{x}}[\mathrm{KL}(\rho_{\mathbf{x}}^*(\cdot,t) \| {\rho}_{\mathbf{x}}(\cdot,t))]  \leq \int_0^t \frac{1}{2u(v)} \mathbb{E}\Vert {\boldsymbol{b}}  - \boldsymbol{b}^* \Vert^2  dv + \int_0^t \frac{u(v)}{2} \mathbb{E}\Vert {\boldsymbol{s}}  - \boldsymbol{s}^* \Vert^2  dv.
\end{align*}
This completes the proof.
\end{proof}

\subsection{ Proof of Theorem \ref{kl_theorem}}

\noindent
\newline

\noindent
Theorem \ref{kl_theorem}. \textit{
Suppose that the conditions stated in Theorem \ref{corollary_bound} are satisfied.
Furthermore, suppose that $H_b(t)u(t)^{-1}$ and $u(t)H_\kappa(t)\gamma(t)^{-1}$  are integrable on $[0,1],$ we have
    \begin{align*}
        {\mathbb{E}_{S,X}[\mathrm{KL}(\rho_{X}^{*}(\cdot,1) \| \hat{\rho}_{n, X}^{sde}(\cdot,1))]} = O \left(n^{\frac{-2\tilde{\beta}}{2\tilde{\beta}+k+d}} \log^{\max\{8,\tilde{\beta}\}}n\right),
    \end{align*} where $\tilde{\beta}=\min\{\beta,\beta^\prime\}$.
}
\begin{proof}
   As Lemma \ref{theorem_fp} states, $Z_t^{sde}$ and $Y_t$ follow the same distribution. Combining Inequality (\ref{kl}), we obtain
   \begin{align*}
   \mathbb{E}_{\mathbf{x}}[\mathrm{KL}(\rho_{\mathbf{x}}^*(\cdot,t) \| \hat{\rho}_{n,\mathbf{x}}^{sde}(\cdot,1)) \mid S]
    &\leq \int_0^t \frac{1}{2u(s)}
    \mathbb{E} [\Vert {\hat{\boldsymbol{b}}_n(\mathbf{x},\mathbf{z}_s, s)} - \boldsymbol{b}^*(\mathbf{x},\mathbf{z}_s, s) \Vert^2 \mid S] ds\\
 & \ \ \    + \int_0^t \frac{u(s)}{2} \mathbb{E} [\Vert {\hat{\boldsymbol{s}}_n(\mathbf{x},\mathbf{z}_s, s)} - \boldsymbol{s}^*(\mathbf{x},\mathbf{z}_s, s) \Vert^2 \mid S ] ds \\
    &= \int_0^t \frac{1}{2u(s)}
    \mathbb{E} [\Vert {\hat{\boldsymbol{b}}_n(\mathbf{x},\mathbf{y}_s, s)} - \boldsymbol{b}^*(\mathbf{x},\mathbf{y}_s, s) \Vert^2 \mid S] ds \\
 & \ \ \    + \int_0^t \frac{u(s)}{2} \mathbb{E} [\Vert {\hat{\boldsymbol{s}}_n(\mathbf{x},\mathbf{y}_s, s)} - \boldsymbol{s}^*(\mathbf{x},\mathbf{y}_s, s) \Vert^2 \mid S] ds
\end{align*}
Then, take expectations with respect to randomness in the sample $S$ on both sides of the inequality, we have
\begin{align*}
&
{\mathbb{E}_{S,\mathbf{x}}[\mathrm{KL}(\rho_{\mathbf{x}}^{sde}(\cdot,t) \| {\rho}_{\mathbf{x}}(\cdot,t))]} \\
    &\leq \int_0^t \frac{1}{2u(s)}
    \mathbb{E}_S \Vert {\boldsymbol{b}(\mathbf{x},\mathbf{y}_s, s)} - \boldsymbol{b}^*(\mathbf{x},\mathbf{y}_s, s) \Vert^2 ds
    + \int_0^t \frac{u(s)}{2} \mathbb{E}_S \Vert {\boldsymbol{s}(\mathbf{x},\mathbf{y}_s, s)} - \boldsymbol{s}^*(\mathbf{x},\mathbf{y}_s, s) \Vert^2 ds.
\end{align*}
   Combining this bound with Corollary \ref{corollary_bound} and Lemma \ref{kl_lemma} completes the proof.
\end{proof}

\section{Supporting lemmas}

In this section, we present several lemmas used in our proofs. Specifically, Lemmas \ref{lemma_b1}-\ref{lemma_b4} are established results. Furthermore, Lemmas \ref{theorem_33} and \ref{corollary_3} represent extensions of Theorem 3.3 and Corollary 3.1 from \citet{jiao2023deep}, respectively. These lemmas extend the estimation results of neural networks from a fixed interval to a variable interval.

\begin{lemma}[Proposition 4.3 in \citet{lu2021deep}] \label{lemma_b1}
    For any $N, M, d \in \mathbb{N}^{+}$and $\delta \in$ $(0,3 K]$ with $K=\left\lfloor N^{1 / d}\right\rfloor^2\left\lfloor M^{2 / d}\right\rfloor$, there exists a one-dimensional function $\phi$ implemented by a ReLU FNN with width $4\left\lfloor N^{1 / d}\right\rfloor+3$ and depth $4 M+5$ such that
$$
\phi(x)=k, \quad \text { if } x \in\left[\frac{k}{K}, \frac{k+1}{K}-\delta \cdot 1_{k<K-1}\right] \text {, for } k=0,1, \ldots, K-1.
$$
\end{lemma}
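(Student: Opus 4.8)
Although this is quoted as Proposition~4.3 of \citet{lu2021deep} and is used here essentially as a black box, the natural self-contained route is the standard \emph{staircase-by-composition} construction for ReLU networks, which I would organize as follows. First I would set $N_1=\lfloor N^{1/d}\rfloor$ and $M_1=\lfloor M^{2/d}\rfloor$, so that $K=N_1^2M_1$, and, after the affine rescaling $y=Kx$, reduce the problem to building a ReLU network that outputs the index $k$ whenever $x$ lies in the ``good'' plateau $I_k=[k/K,(k+1)/K-\delta\,\mathbf{1}_{\{k<K-1\}}]$, the complement of $\bigcup_k I_k$ being the exceptional set.

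The plan then rests on two ingredients. The first is a single-hidden-layer \emph{clipped staircase}: for an integer $m$ and a small gap $\eta$, the function $S_{m,\eta}(x)=\sum_{j=1}^{m-1}\bigl(\eta^{-1}\sigma(x-\tfrac{j}{m}+\eta)-\eta^{-1}\sigma(x-\tfrac{j}{m})\bigr)$ is realized by a width-$2(m-1)$, depth-$1$ ReLU network and equals $j$ \emph{exactly} on each $[\tfrac{j}{m},\tfrac{j+1}{m}-\eta]$, since there every ramp of index $\le j$ is saturated at $1$ and every ramp of index $>j$ vanishes. Composing two such blocks through the affine ``remainder'' map $x\mapsto N_1\bigl(x-N_1^{-1}S_{N_1,\eta}(x)\bigr)$, which carries each coarse plateau onto $[0,1)$, gives a width-$O(N_1)$, depth-$O(1)$ network $\phi_{\mathrm{c}}$ selecting the correct cell among $N_1^2$ equal subintervals. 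The second ingredient refines each cell into $M_1$ equal pieces by a \emph{width-bounded, depth-growing} composition --- stacking a fixed $O(1)$-width fold/remainder gadget $\Theta(M)$ times --- producing a staircase with $M_1$ plateaus at depth $4M+O(1)$ and width $O(1)$; one then sets $\phi(x)=M_1\,\phi_{\mathrm{c}}(x)+\phi_{\mathrm{f}}\bigl(T(x)\bigr)$ with $T$ the affine remainder onto the current coarse cell. Tallying widths and depths of the blocks gives total width $4N_1+3$ and depth $4M+5$.

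The hard part is not the plateau arithmetic but \emph{controlling the exceptional set under composition}: feeding a staircase whose bad region is a union of width-$\eta$ slits into another staircase may a priori fatten or multiply those slits. The fix --- and the role of the hypothesis $\delta\le 1/(3K)$ --- is to run each sub-network with its own gap chosen much smaller than the finest downstream plateau width and to check inductively that the bad set of the composite stays inside $\bigcup_k(k/K-\delta,k/K)$; combined with the fact that each $S_{m,\eta}$ is genuinely (not approximately) constant on its plateaus, this is what forces $\phi(x)=k$ on $I_k$ on the nose. With that monotone-alignment step in hand the remaining width/depth bookkeeping is routine, and I would close by noting that this reproduces the construction of \citet{lu2021deep}, to which one may refer for the elementary estimates.
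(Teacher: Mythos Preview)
The paper does not give its own proof of this lemma: it is listed in the ``Supporting lemmas'' appendix as a direct quotation of Proposition~4.3 of \citet{lu2021deep} and used as a black box. So there is no paper-side argument to compare against beyond the citation itself.

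Your sketch is a faithful outline of the construction in \citet{lu2021deep}: a width-$O(N_1)$ coarse staircase built from two composed clipped-ramp blocks, followed by a depth-$O(M)$ fine staircase, with the exceptional ``transition'' slits controlled inductively under composition via the $\delta\le 1/(3K)$ constraint (note the paper's ``$\delta\in(0,3K]$'' is a transcription slip for $(0,1/(3K)]$, as is clear from its own Lemma~\ref{theorem_33}). This is more than the paper itself supplies, and the approach is the standard one; the only caveat is that your width and depth tallies ($4N_1+3$ and $4M+5$) are asserted rather than derived, so a reader wanting those exact constants would still need to consult \citet{lu2021deep} for the bookkeeping.
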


\begin{lemma}[Proposition 4.4 in \citet{lu2021deep}] \label{lemma_b2}
    Given any $N, M, s \in \mathbb{N}^{+}$and $\xi_i \in$ $[0,1]$ for $i=0,1, \ldots, N^2 L^2-1$, there exists a function $\phi$ implemented by a ReLU FNN with width $16 s(N+1)\left\lceil\log _2(8 N)\right\rceil$ and depth $5(M+2)\left\lceil\log _2(4 M)\right\rceil$ such that
$$
\left|\phi(i)-\xi_i\right| \leq N^{-2 s} M^{-2 s} \text {, for } i=0,1, \ldots, N^2 M^2-1 \text {, }
$$
and $0 \leq \phi(x) \leq 1$ for any $x \in \mathbb{R}$.
\end{lemma}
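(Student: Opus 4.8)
The plan is to reduce the claim to two standard primitives for ReLU feedforward networks: a shallow interpolation gadget that realizes prescribed values on $N+1$ equispaced points using width $O(N)$ and constant depth, and the bit-extraction gadget, by which a narrow but deep network recovers a designated binary digit of a real number stored in $[0,1]$. Since $\phi$ only needs to be correct at the integers $i=0,1,\dots,N^2M^2-1$, I would first discretize the targets: take $J$ to be the least integer with $2^{-J}\le N^{-2s}M^{-2s}$, so that $J\asymp 2s\log_2(NM)$, and replace $\xi_i$ by its $J$-bit binary truncation $\hat\xi_i=\sum_{\ell=1}^{J}\theta_{i,\ell}2^{-\ell}$, which already satisfies $|\xi_i-\hat\xi_i|\le 2^{-J}\le N^{-2s}M^{-2s}$. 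It then suffices to build a network with $\phi(i)=\hat\xi_i$ at those integers and post-compose with the clip $x\mapsto\sigma(x)-\sigma(x-1)$ to enforce $0\le\phi\le 1$.

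Next I would lay out the roughly $JN^2M^2$ stored bits so that a coarse block-selection costs width $\asymp sN\log N$ and a fine bit-extraction costs depth $\asymp M\log M$. Write the index as a pair $i=pM^2+q$ with $p\in\{0,\dots,N^2-1\}$ and $q\in\{0,\dots,M^2-1\}$; the arithmetic $p=\lfloor i/M^2\rfloor$, $q=i-pM^2$ is exact at integer inputs via the step-function subnetworks already available (cf.\ Lemma \ref{lemma_b1}). For each block $p$ pack the $JM^2$ bits $\{\theta_{pM^2+q,\ell}\}_{q,\ell}$, in a fixed order, into a single number $z_p\in[0,1]$. Stage one of the network produces $z_p$ from $p$: a single shallow interpolation gadget covers $N$ points, so to reach all $N^2$ blocks within width $O(N)$ I would split $p=p_1N+p_2$ and compose a super-block selector driven by $p_1$ with an in-block selector driven by $p_2$, which is where the $\lceil\log_2(8N)\rceil$ depth multiplier enters; the extra factor $s$ in the width comes from carrying the $\asymp s$ digit batches in parallel. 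Stage two feeds $z_p$ and $q$ into the bit-extraction gadget, where $q$ drives a shift-and-threshold recursion that advances to the correct $J$-bit window inside $z_p$, with $\lceil\log_2(4M)\rceil$-fold batching keeping the depth at $5(M+2)\lceil\log_2(4M)\rceil$; reading off the $J$ bits and forming $\sum_{\ell}\theta_{i,\ell}2^{-\ell}$ recovers $\hat\xi_i$ exactly.

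The main obstacle is not any individual gadget but the simultaneous bookkeeping needed to make the two budgets hold at once: the construction exploits the fact that, through dense bit packing and depth-driven extraction, a ReLU network memorizes far more than its parameter count would naively suggest — its effective capacity scales like (number of weights) times (number of layers), in the spirit of the Bartlett--Harvey--Liaw--Mehrabian VC-dimension bounds — so the partition of bits into blocks must be tuned against both the width of the interpolation stage and the depth of the extraction stage, while the constant-factor overhead of the clip and the index-arithmetic subnetworks is absorbed to land exactly on width $16s(N+1)\lceil\log_2(8N)\rceil$ and depth $5(M+2)\lceil\log_2(4M)\rceil$. As the statement is quoted verbatim as Proposition~4.4 of \citet{lu2021deep}, the economical route is simply to cite it; the outline above records why it is true.
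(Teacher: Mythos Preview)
Your proposal is appropriate: the paper does not prove this statement at all --- it is listed in the ``Supporting lemmas'' appendix and simply cited verbatim as Proposition~4.4 of \citet{lu2021deep}, with no argument given. Your concluding sentence already captures this, and the bit-packing / bit-extraction outline you sketch is indeed the mechanism behind the cited result, so nothing further is required.
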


\begin{lemma}[Lemma 4.2, \citet{lu2021deep}] \label{lemma_b3}
    For any $N, L \in \mathbb{N}^{+}$and $a, b \in \mathbb{R}$ with $a<b$, there exists a function $\phi$ implemented by a ReLU FNN with width $9 N+1$ and depth $L$ such that
$$
|\phi(x, y)-x y| \leq 6(b-a)^2 N^{-L} \quad \text { for any } x, y \in[a, b].
$$
\end{lemma}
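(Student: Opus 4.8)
The plan is to realize $\phi$ through an exact affine reduction to the unit square, a polarization step that turns multiplication into squaring, and a self-similar ReLU construction of an approximate squaring map on $[0,1]$ whose hidden layers are spent almost entirely on successive refinement steps. First I would normalize: with $\tilde x=(x-a)/(b-a)$ and $\tilde y=(y-a)/(b-a)$ both in $[0,1]$, the identity $xy=(b-a)^2\,\tilde x\tilde y+a(b-a)(\tilde x+\tilde y)+a^2$ holds exactly, and both the map $x\mapsto\tilde x$ and the final affine recombination are absorbed into the linear (pre-activation) layers of a ReLU network at no cost in width or depth. Hence it suffices to build a ReLU network $\psi$ of width $\le 9N+1$ and depth $\le L$ with $|\psi(\tilde x,\tilde y)-\tilde x\tilde y|\le 6N^{-L}$ on $[0,1]^2$; multiplying through by $(b-a)^2$ then yields $|\phi(x,y)-xy|\le 6(b-a)^2N^{-L}$.

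Next I would polarize: $\tilde x\tilde y=\tfrac14\big[(\tilde x+\tilde y)^2-(\tilde x-\tilde y)^2\big]$. Since $(\tilde x+\tilde y)^2=4\,\mathrm{sq}\!\big(\tfrac{\tilde x+\tilde y}{2}\big)$ with $\tfrac{\tilde x+\tilde y}{2}\in[0,1]$, and $(\tilde x-\tilde y)^2$ is an affine function of $\mathrm{sq}\!\big(\tfrac{\tilde x-\tilde y+1}{2}\big)$ with $\tfrac{\tilde x-\tilde y+1}{2}\in[0,1]$, where $\mathrm{sq}(t)=t^2$, the problem reduces to approximating the one-dimensional map $\mathrm{sq}$ on $[0,1]$. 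Running two copies of a squaring subnetwork in parallel together with one channel carrying an affine term and accumulating the running partial sum is exactly what produces the width bound $9N+1$, each squaring block being implementable with width $O(N)$. So the whole lemma rests on a ReLU net of width $O(N)$ and depth $L$ that approximates $\mathrm{sq}$ on $[0,1]$ with error $\le N^{-2L}$, which is far more than what propagating through the polarization and the $(b-a)^2$ factor requires.

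The core step is the squaring network, and I would exploit the self-similarity of its piecewise-linear interpolation error. Let $T_1$ be the piecewise-linear interpolant of $\mathrm{sq}$ at the nodes $j/N$, $j=0,\dots,N$, and let $S(t)=Nt-\lfloor Nt\rfloor$ be the $N$-tooth sawtooth; writing $t=(j+s)/N$ on $[j/N,(j+1)/N]$ a one-line computation gives $\mathrm{sq}(t)-T_1(t)=(s^2-s)/N^2=-N^{-2}\big(S(t)-\mathrm{sq}(S(t))\big)$. Iterating this on the $\mathrm{sq}(S(t))$ term yields the telescoping formula $\mathrm{sq}(t)=\sum_{k=0}^{K-1}N^{-2k}T_1(S^{\circ k}(t))-\sum_{k=1}^{K}N^{-2k}S^{\circ k}(t)+N^{-2K}\mathrm{sq}(S^{\circ K}(t))$, so truncating after level $K$ leaves error at most $N^{-2K}$. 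Both $S$ and $T_1$ are continuous piecewise-linear with $N$ pieces, hence exactly realizable by a single ReLU layer of width $O(N)$; composing to form the $S^{\circ k}$ and accumulating the two sums in a carried channel makes the depth grow by exactly one per refinement level. Taking $K=L$ (minus a fixed, layer-free number of affine and recombination operations) gives error $\le N^{-2L}$ for $\mathrm{sq}$, and after the two squarings and the $\tfrac14$ and $(b-a)^2$ factors the uniform bound $6(b-a)^2N^{-L}$ follows comfortably.

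The step I expect to be the main obstacle is the exact width-and-depth bookkeeping of the squaring block: one must arrange the sawtooth composition, the evaluation of $T_1$ at the composed points, and the running partial sums so that \emph{every} hidden layer save a fixed handful is a productive refinement layer, and so that the two polarization branches plus the carried channel never exceed $9N+1$ neurons at any single layer. A secondary point needing care is the self-similar error identity itself, in particular its validity \emph{uniformly} on $[0,1]$, including at the breakpoints $j/N$ and the correct treatment of $\lfloor Nt\rfloor$ at $t=1$, and, for very small $L$, checking that the generous constant $6$ still absorbs the constant-depth overhead of the affine maps.
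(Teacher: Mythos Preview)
The paper does not prove this lemma at all: it appears in the appendix under ``Supporting lemmas'' as a verbatim citation of Lemma~4.2 in \citet{lu2021deep} and is used as a black box inside the proof of Lemma~\ref{theorem_33}. There is therefore no in-paper proof to compare your proposal against.

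That said, your sketch is the standard construction underlying the cited result (going back to Yarotsky's sawtooth argument, generalized by Lu et al.\ to an $N$-ary sawtooth): affine reduction to $[0,1]^2$, polarization $\tilde x\tilde y=\tfrac14[(\tilde x+\tilde y)^2-(\tilde x-\tilde y)^2]$, and iterated $N$-tooth sawtooth refinement of the squaring map. Your telescoping identity $\mathrm{sq}(t)-T_1(t)=-N^{-2}\big(S(t)-\mathrm{sq}(S(t))\big)$ is exactly the engine of that argument, and it does give error $N^{-2K}$ after $K$ refinement layers, which is more than enough for the stated $N^{-L}$ bound. The only places requiring genuine care are the two you already flag: packing two parallel squaring branches, the sawtooth, the interpolant $T_1$, and a carried partial-sum channel into at most $9N+1$ neurons per hidden layer; and verifying that the affine normalization and polarization maps are absorbed into the first and last linear layers so that all $L$ hidden layers are available for refinement. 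Both are bookkeeping rather than conceptual obstacles, and the generous constant $6$ in the statement is there precisely to swallow any small fixed overhead.
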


\begin{lemma}[Proposition 4.1 in \citet{lu2021deep}] \label{lemma_b4}
     Assume $P(x)=x^\alpha=x_1^{\alpha_1} x_2^{\alpha_2} \cdots x_d^{\alpha_d}$ for $\alpha \in \mathbb{N}^d$ with $\|\alpha\|_1 \leq k \in \mathbb{N}^{+}$. For any $N, M \in \mathbb{N}^{+}$, there exists a function $\phi$ implemented by a ReLU FNN with width $9(N+1)+k-1$ and depth $7 k^2 M$ such that
$$
|\phi(x)-P(x)| \leq 9 k(N+1)^{-7 k M}, \quad \text { for any } x \in[0,1]^d \text {. }
$$
\end{lemma}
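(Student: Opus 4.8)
\textbf{Proof proposal for Lemma \ref{lemma_b4}.}
The plan is to realize the monomial $P(x)=x_1^{\alpha_1}\cdots x_d^{\alpha_d}$ as an iterated product of at most $k$ coordinates of $x$, and then to compose at most $k-1$ copies of the approximate-multiplication gadget of Lemma \ref{lemma_b3}, routing the coordinates not yet multiplied through identity channels. Write $m:=\|\alpha\|_1\le k$ (the case $m=0$ being the trivial constant $P\equiv 1$) and list the factors with multiplicity as $z_1,\dots,z_m\in\{x_1,\dots,x_d\}$, so that $P(x)=z_1z_2\cdots z_m$. Since $x\in[0,1]^d$, every factor $z_i$ and every exact partial product $P_i:=z_1\cdots z_i$ lies in $[0,1]$; this is what keeps all intermediate computations inside the interval on which the multiplication gadget is accurate.

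First I would invoke Lemma \ref{lemma_b3} with its parameter ``$N$'' set to $N+1$ and its depth parameter ``$L$'' set to $7kM$, obtaining a ReLU sub-network $\phi_\times:[0,1]^2\to\mathbb{R}$ of width $9(N+1)+1$ and depth $7kM$ with $|\phi_\times(s,t)-st|\le 6(N+1)^{-7kM}$ for all $s,t\in[0,1]$. I would also use the one-layer clipping map $\mathrm{clip}(s):=\sigma(s)-\sigma(s-1)$, which maps $\mathbb{R}$ onto $[0,1]$, fixes $[0,1]$, and is $1$-Lipschitz. Then I would build $\phi$ recursively: the first affine layer selects the coordinates $(z_1,\dots,z_m)$; set $\widehat{P}_1:=z_1$, and for $i=2,\dots,m$ put $\widehat{P}_i:=\phi_\times\bigl(\mathrm{clip}(\widehat{P}_{i-1}),\,z_i\bigr)$, while the not-yet-used factors $z_{i+1},\dots,z_m$ are carried forward through identity channels $\sigma(\cdot)$ (legitimate since they are nonnegative on $[0,1]$). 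Output $\phi(x):=\widehat{P}_m$. Stacking the $m-1\le k-1$ copies of $\phi_\times$ sequentially (with a clipping layer absorbed into the first layer of the next gadget) gives depth at most $1+(k-1)(7kM+1)\le 7k^2M$, and during each multiplication phase the width is at most $9(N+1)+1$ for the active gadget plus the at most $k-2$ routed factors, i.e.\ at most $9(N+1)+k-1$.

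For the error, put $\varepsilon_i:=|\widehat{P}_i-P_i|$, so $\varepsilon_1=0$. For $i\ge 2$, using that $\mathrm{clip}$ is $1$-Lipschitz and fixes $P_{i-1}\in[0,1]$, and that $|z_i|\le 1$,
\[
\varepsilon_i \;\le\; \bigl|\phi_\times\bigl(\mathrm{clip}(\widehat{P}_{i-1}),z_i\bigr) - \mathrm{clip}(\widehat{P}_{i-1})\,z_i\bigr| + \bigl|\mathrm{clip}(\widehat{P}_{i-1}) - P_{i-1}\bigr|\,|z_i| \;\le\; 6(N+1)^{-7kM} + \varepsilon_{i-1}.
\]
Iterating gives $\varepsilon_m\le 6(m-1)(N+1)^{-7kM}\le 9k\,(N+1)^{-7kM}$, which is the claimed bound uniformly over $x\in[0,1]^d$.

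The main obstacle is the exact resource accounting: matching the stated width $9(N+1)+k-1$ and depth $7k^2M$ requires laying out the routing and clipping channels so that they never inflate the width beyond the multiplication gadget's width by more than $k-1$, and choosing the per-gadget depth consistently so the composition lands at $7k^2M$. Everything else — the reduction to iterated multiplication and the telescoping error estimate — is routine once one verifies that clipping keeps all intermediate values inside $[0,1]$, so that the interval-dependent accuracy of Lemma \ref{lemma_b3} applies at every stage with $(b-a)=1$.
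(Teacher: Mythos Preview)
The paper does not prove this lemma; it is quoted verbatim as a supporting result from \citet{lu2021deep} (their Proposition~4.1), so there is no in-paper proof to compare against. Your reconstruction is correct and is precisely the standard construction used in that reference: realize $P(x)$ as a product of at most $k$ coordinates, then compose $k-1$ copies of the two-input multiplication gadget from Lemma~\ref{lemma_b3} (with parameter $N+1$ and per-gadget depth $7kM$), routing the unused factors alongside and clipping between stages to keep inputs in $[0,1]$; the telescoping error bound $6(m-1)(N+1)^{-7kM}\le 9k(N+1)^{-7kM}$ and the width/depth accounting you give are exactly what is needed.
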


\begin{lemma} \label{theorem_33}
    Assume that $f \in \mathcal{H}^\beta\left([a,b]^d, B_0\right)$ with $b > a$, $\beta=s+r, s \in \mathbb{N}_0$ and $r \in(0,1]$. For any $M, N \in \mathbb{N}^{+}$, there exists a function $\phi_0$ implemented by a ReLU network with width $\mathcal{W}=38(\lfloor\beta\rfloor+1)^2 d^{\lfloor\beta\rfloor+1} U\left\lceil\log _2(8 U)\right\rceil$ and depth $\mathcal{D}=21(\lfloor\beta\rfloor+1)^2 V\left\lceil\log _2(8 V)\right\rceil$ such that
$$
\left|f(\mathbf{z})-\phi_0(\mathbf{z})\right| \leq 18 B_0(\lfloor\beta\rfloor+1)^2 d^{\lfloor\beta\rfloor+(\beta \vee 1) / 2}(U V)^{-2 \beta / d},
$$
for all $\mathbf{z} \in[a,b]^d \backslash \Omega\left([a,b]^d, J, \delta\right)$, and
$$
\Omega\left([a,b]^d, J, \delta\right)=\bigcup_{i=1}^d\left\{\mathbf{z}=\left[\mathbf{z}^{(1)}, \mathbf{z}^{(2)}, \ldots, \mathbf{z}^{(d)}\right]^{\top}: \mathbf{z}^{(i)} \in \bigcup_{j=1}^{\lceil b-a\rceil J-1}(j / J-\delta, j / J)\right\}
$$
with $J=\lceil(U V)^{2 / d}\rceil / \lceil b-a\rceil $ and $\delta$ an arbitrary number in $(0,1 /(3 J)]$. Here, $\lceil t_1\rceil$ denotes the smallest integer no less than $t_1$.
\end{lemma}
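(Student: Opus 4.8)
The plan is to deduce this statement from the ReLU approximation theorem for Hölder functions on the unit cube established by \citet{jiao2023deep} (itself a refinement of the constructions of Lu--Shen--Yang--Zhang and Shen--Yang--Zhang), and to transport it to the cube $[a,b]^d$ by an affine change of variables combined with a tiling of $[a,b]^d$ into sub-cubes of side at most one. First I would reduce to $a=0$: translating the domain by $a\mathbf{1}$ changes none of $\|D^{\boldsymbol{\alpha}}f\|_\infty$ or the Hölder seminorms, so $f(\cdot+a\mathbf{1})\in\mathcal{H}^\beta([0,b-a]^d,B_0)$ and it suffices to approximate this translate and translate the network input back. Write $L:=\lceil b-a\rceil\ge 1$ and partition $[0,b-a]^d$ into the $L^d$ congruent sub-cubes $Q_{\mathbf{m}}=\prod_{i=1}^d[\,m_i(b-a)/L,\,(m_i+1)(b-a)/L\,]$, $\mathbf{m}\in\{0,\dots,L-1\}^d$, each of side $(b-a)/L\le 1$; when $b-a\le 1$ this is just $L=1$ and the single affine identification $[a,b]^d\cong[0,1]^d$ suffices. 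The key point, which I would check by the chain rule, is that if $T_{\mathbf{m}}\colon[0,1]^d\to Q_{\mathbf{m}}$ is the affine bijection with linear part $\big((b-a)/L\big)\,\mathrm{Id}$, then $f\circ T_{\mathbf{m}}\in\mathcal{H}^\beta([0,1]^d,B_0)$ \emph{with the same constant} $B_0$: since $D^{\boldsymbol{\alpha}}(f\circ T_{\mathbf{m}})=\big((b-a)/L\big)^{\|\boldsymbol{\alpha}\|_1}(D^{\boldsymbol{\alpha}}f)\circ T_{\mathbf{m}}$ and $(b-a)/L\le 1$ with $\|\boldsymbol{\alpha}\|_1\le\lfloor\beta\rfloor<\beta$, both the sup-norms of the derivatives up to order $\lfloor\beta\rfloor$ and the Hölder-$r$ seminorms of the top-order derivatives are multiplied by a factor $\le 1$. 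It is exactly this contraction on $\le 1$-side sub-cubes that keeps the error constant at the stated value $18B_0(\lfloor\beta\rfloor+1)^2 d^{\lfloor\beta\rfloor+(\beta\vee 1)/2}$, with no dependence on $b-a$.

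Next I would apply the unit-cube Hölder approximation theorem of \citet{jiao2023deep} with parameters $U,V$. On each $Q_{\mathbf{m}}$ this produces a ReLU network of width $38(\lfloor\beta\rfloor+1)^2 d^{\lfloor\beta\rfloor+1}U\lceil\log_2(8U)\rceil$ and depth $21(\lfloor\beta\rfloor+1)^2 V\lceil\log_2(8V)\rceil$ approximating $f$ on $Q_{\mathbf{m}}$ to within $18B_0(\lfloor\beta\rfloor+1)^2 d^{\lfloor\beta\rfloor+(\beta\vee 1)/2}(UV)^{-2\beta/d}$ off a trifling set consisting of $\delta$-neighborhoods of a $\lceil(UV)^{2/d}\rceil$-cell grid inside $Q_{\mathbf{m}}$. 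Rescaled back to $[0,b-a]^d$ this grid has spacing $1/J$ with $J=\lceil(UV)^{2/d}\rceil/L$, so after undoing the translation the exceptional set is precisely $\Omega([a,b]^d,J,\delta)$, and $\delta\in(0,1/(3J)]$ is the admissible range inherited from the unit-cube statement. I would then merge the $L^d$ sub-networks into a single $\phi_0$ by re-running the \citet{jiao2023deep} construction directly on $[0,b-a]^d$ with the refined grid of $LJ=\lceil(UV)^{2/d}\rceil$ cells per coordinate: the coarse index $\mathbf{m}$ and the fine index within $Q_{\mathbf{m}}$ are read off together by the same floor/bit-extraction sub-network, and because every cell still has side $\le 1$ the per-cell Taylor-remainder and point-fitting estimates are word-for-word those of the unit-cube proof, so $\phi_0$ inherits the claimed width $\mathcal{W}$ and depth $\mathcal{D}$.

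The step I expect to be the main obstacle is this assembly: verifying that folding the coarse sub-cube selection into the construction does not inflate the architecture past the stated $\mathcal{W},\mathcal{D}$ — in particular that the cell-indexing sub-network handling the extra factor $L$ in the number of cells per coordinate still fits inside the $\lceil\log_2(8U)\rceil$ and $\lceil\log_2(8V)\rceil$ logarithmic budgets — and that the trifling region of the merged network is \emph{exactly} $\Omega([a,b]^d,J,\delta)$ with $J=\lceil(UV)^{2/d}\rceil/\lceil b-a\rceil$. All of this is bookkeeping that mirrors, line by line, the corresponding lemmas in \citet{jiao2023deep}; the only genuinely new ingredients are the change of domain and the observation that tiling into $\le 1$-side sub-cubes leaves the $\mathcal{H}^\beta$ constant, and hence the approximation-error constant, unchanged.
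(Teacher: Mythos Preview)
Your endpoint and the paper's coincide, but the paper gets there more directly. It does not tile $[a,b]^d$ into $\lceil b-a\rceil^d$ sub-cubes and then assemble; instead it re-runs the entire three-part construction of \citet{jiao2023deep} (step-function discretization, point-fitting of Taylor coefficients, polynomial/product sub-networks) from scratch on $[a,b]^d$, using $M=\lceil b-a\rceil J$ cells of side $1/J$ per coordinate and inserting a clamping layer $\varphi(t)=(\sigma((t-a)/(b-a))-\sigma((t-a)/(b-a)-1))(b-a)+a$ to confine inputs. Your tiling step is a valid and illuminating observation---the chain-rule argument that contracting to side $\le 1$ preserves the $\mathcal{H}^\beta$-constant $B_0$ is exactly the reason the error constant carries no $(b-a)$-dependence---but once you write ``merge by re-running the construction directly on $[0,b-a]^d$'' you have arrived at the paper's proof, and the tiling becomes motivation rather than method.

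One arithmetic point to fix in your write-up: if you apply the unit-cube theorem with parameters $U,V$ on each of the $L=\lceil b-a\rceil$ sub-cubes per axis, each sub-cube acquires $\lceil(UV)^{2/d}\rceil$ cells per coordinate, so the full domain carries $L\cdot\lceil(UV)^{2/d}\rceil$ cells per axis with spacing $(b-a)/\bigl(L\lceil(UV)^{2/d}\rceil\bigr)$, not the $LJ=\lceil(UV)^{2/d}\rceil$ cells with spacing $1/J=L/\lceil(UV)^{2/d}\rceil$ that the lemma's $\Omega$ describes. This discrepancy is precisely the assembly obstacle you anticipate: either your merged trifling set is strictly larger than $\Omega([a,b]^d,J,\delta)$, or you must coarsen the per-sub-cube grid and lose control of the network size---at which point the tiling has bought nothing over the paper's direct argument, which simply takes $M=\lceil(UV)^{2/d}\rceil$ cells of size $1/J$ from the outset and checks that the same sub-network lemmas (Lemmas~\ref{lemma_b1}--\ref{lemma_b4}) still apply.
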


\begin{proof}
    By extending the proof techniques from Theorem 3.3 in \citet{jiao2023deep}, we generalize the result to the case that the target function is defined on an arbitrary hypercube $[a,b]^d$. The proof idea is divided into three parts: discretization of the domian, approximation of Taylor coefficients and approximation of $f$. For convenience, we define a Hölder function, $\tilde{f}(\mathbf{x}) := f(\mathbf{x}) / B_0 \in \mathcal{H}^\beta\left([a,b]^d, 1\right)$ and first discuss the case when $\beta>1$.

    For discretization purposes, we consider $M$ small partitions of length $1/J$, where $M := \lceil b - a \rceil J$ and $J \in \mathbb{N}^{+}$. Given $\delta \in(0,1 /(3 J)]$, we define
$$
Q_{\boldsymbol{\tau}}:=\left\{\mathbf{z}=\left(\mathbf{z}^{(1)}, \ldots, \mathbf{z}^{(d)}\right): \mathbf{z}^{(i)} \in\left[\frac{\boldsymbol{\tau}^{(i)}}{J}, \frac{\boldsymbol{\tau}^{(i+1)}}{J}-\delta \cdot 1_{\boldsymbol{\tau}^{(i)}<M-1}\right], i=1, \ldots, d\right\},
$$ for all $\boldsymbol{\tau}=\left(\boldsymbol{\tau}_1, \ldots, \boldsymbol{\tau}_d\right) \in\{0,1, \ldots, M-1\}^d$. Note that $[a,b]^d \backslash \Omega\left([a,b]^d, J, \delta\right)=\bigcup_{\boldsymbol{\tau}} Q_{\boldsymbol{\tau}}$.

By Lemma \ref{lemma_b1}, there exists a ReLU network $\psi_1$ with width $4\left\lfloor U^{1 / d}\right\rfloor+3$ and depth $4 V +5$ such that
$$
\psi_1(\mathbf{z})=\frac{j}{J}, \quad \text { if } z \in\left[\frac{j}{J}, \frac{j+1}{J}-\delta \cdot 1_{\{j<M-1\}}\right], j=0,1, \ldots, M-1 .
$$

We then construct a ReLU network with width $d\left(4\left\lfloor U^{1 / d}\right\rfloor+3\right)$ and depth $4 V+5$, $\psi(\mathbf{z}):=\left(\psi_1\left(\mathbf{z}^{(1)}\right), \ldots, \psi_1\left(\mathbf{z}^{(d)}\right)\right)$. Note that $\psi(\mathbf{z})=\boldsymbol{\tau} / J:=\left(\boldsymbol{\tau}^{(1)}/ J, \ldots, \boldsymbol{\tau}^{(d)} / J\right)^{\top}$ for $\mathbf{z} \in Q_{\boldsymbol{\tau}}$. Using the one-to-one correspondence between $ \boldsymbol{\tau} \in\{0,1, \ldots, M-1 \}^d$ and $i_{\boldsymbol{\tau}}:=\sum_{i=1}^d \boldsymbol{\tau}^{(i)} (M)^{i-1} \in$ $\left\{0,1 \ldots, M^d-1\right\}$, we then define a ReLU network with width $d\left(4\left\lfloor N^{1 / d}\right\rfloor+3\right)$ and depth $4 M+5$,
$$
\psi_0(\mathbf{z}):=\left(M, M^2, \ldots, M^{d}\right) \cdot \psi(\mathbf{z})=\sum_{l=1}^d \psi_1\left(\mathbf{z}^{(l)}\right) M^{l}
$$ for $\mathbf{z} \in \mathbb{R}^d$, then
$$
\psi_0(\mathbf{z})=\sum_{l=1}^d \boldsymbol{\tau}^{(l)} M^{l-1}=i_{\boldsymbol{\tau}},
$$ if $ \mathbf{z} \in Q_{\boldsymbol{\tau}}, \boldsymbol{\tau} \in\{0,1, \ldots, M-1\}^d$.

Next, we construct a network to approximate the Taylor coefficients. For any $\boldsymbol{\alpha}\in \mathbb{N}_0^d$ satisfying $\|\boldsymbol{\alpha}\|_1 \leq$ $s$ and each $i=i_{\boldsymbol{\tau}}\in\left\{0,1, \ldots, M^d-1\right\}$, we denote $\xi_{\boldsymbol{\alpha}, i}:=\left(\partial^{\boldsymbol{\alpha}} \tilde{f}(\boldsymbol{\tau}/ J)+1\right) / 2 \in[0,1]$. Since $M^d \leq U^2 V^2$, by Lemma \ref{lemma_b2}, there exists a ReLU network $\varphi_{\boldsymbol{\alpha}}$ with width $16(s+1)(U+$ 1) $\lceil\log _2(8 U)\rceil$ and depth $5(V+2)\lceil\log _2(4 V)\rceil$ such that
$$
\left|\varphi_{\boldsymbol{\alpha}}(i)-\xi_{\boldsymbol{\alpha}, i}\right| \leq(UV)^{-2(s+1)},
$$
for all $i \in \{0,1, \ldots, M^d-1 \}$. We then obtain a network with width $16 d(s+1)(N+1)\left\lceil\log _2(8 N)\right\rceil \leq$ $32 d(s+1) N\left\lceil\log _2(8 N)\right\rceil$ and depth $5(M+2)\left\lceil\log _2(4 M)\right\rceil+4 M+5 \leq 15 M\left\lceil\log _2(8 M)\right\rceil$,
$$
\phi_{\boldsymbol{\alpha}}(\mathbf{z}):=2 \varphi_{\boldsymbol{\alpha}}(\psi_0(\mathbf{z}))-1 \in[-1,1].
$$

And for any $\boldsymbol{\tau} \in \{0,1, \ldots, M-1\}^d$, we have
\begin{align} \label{lemma_a8}
    \left|\phi_{\boldsymbol{\alpha}}(\mathbf{z})-\partial^{\boldsymbol{\alpha}} f({\boldsymbol{\tau}} / J)\right|=2\left|\varphi_{\boldsymbol{\alpha}}(i_{\boldsymbol{\tau}})-\xi_{{\boldsymbol{\alpha}}, i_{\boldsymbol{\tau}}}\right| \leq 2(U V)^{-2(s+1)}, \text{ for } \mathbf{z} \in Q_{\boldsymbol{\tau}}
\end{align}

Next, we approximate $\tilde{f}$ on $\bigcup_{\boldsymbol{\tau} \in\{0,1, \ldots, M-1\}^d} Q_{\boldsymbol{\tau}}$. By Lemma A. 8 in \citet{petersen2018optimal}, we obtain the following inequality,
\begin{align} \label{a_11}
    \left|\tilde{f}(\mathbf{z})-\tilde{f}\left(\frac{\boldsymbol{\tau}}{J}\right)-\sum_{1 \leq\|\boldsymbol{\alpha}\|_1 \leq s} \frac{\partial^{\boldsymbol{\alpha}} \tilde{f}\left(\frac{\boldsymbol{\tau}}{J}\right)}{\boldsymbol{\alpha}!}\left(\mathbf{z}-\frac{\boldsymbol{\tau}}{J}\right)^{\boldsymbol{\alpha}}\right| \leq d^s\left\|\mathbf{z}-\frac{\boldsymbol{\tau}}{J}\right\|_2^\beta \leq d^{s+\beta / 2} J^{-\beta},
\end{align} where $\boldsymbol{\alpha}! := \boldsymbol{\alpha}^{(1)}\times ... \times \boldsymbol{\alpha}^{(d)}$. Through Lemma \ref{lemma_b3}, we obtain a ReLU network $\phi_{\times}$ with width $9 U+1$ and depth $2(s+1) V$ such that for any $t_1, t_2 \in[-1,1]$,
\begin{align}\label{lemma_a9}
    \left|t_1 t_2-\phi_{\times}\left(t_1, t_2\right)\right| \leq 24 U^{-2(s+1) V},
\end{align}

Also, by Lemma \ref{lemma_b4}, for any $\boldsymbol{\alpha} \in \mathbb{N}_0^d$ with $\| \boldsymbol{\alpha} \| \leq s$, there exists a ReLU network $P_{\boldsymbol{\alpha}}$ with width $9U+s+8$ and depth $7(s+1)^2 V$ such that $P_{\boldsymbol{\alpha}}(\mathbf{z}) \in[-1,1]$ and
\begin{align} \label{lemma_a10}
    \left|P_{\boldsymbol{\alpha}}(\mathbf{z})-\mathbf{z}^{\boldsymbol{\alpha}}\right| \leq 9(s+1)(U+1)^{-7(s+1) V}.
\end{align}

We can now approximate the Taylor expansion of $\tilde{f}(\mathbf{z})$ using a combination of sub-networks. Let $\varphi(t)=(\min \{\max \{\frac{t-a}{b-a}, 0\}, 1\})(b-a)+a=(\sigma(\frac{t-a}{b-a})-\sigma(\frac{t-a}{b-a}-1))(b-a)+a$ for $t \in \mathbb{R}$ where $\sigma(\cdot)$ is the ReLU activation function. And we define
$$
\begin{aligned}
& \tilde{\phi}_0(\mathbf{z}):=\phi_{\mathbf{0}_d}(\mathbf{z})+\sum_{1 \leq\|\boldsymbol{\alpha}\|_1 \leq s} \phi_{\times}\left(\frac{\phi_{\boldsymbol{\alpha}}(\mathbf{z})}{\boldsymbol{\alpha}!}, P_{\boldsymbol{\alpha}}(\varphi(\mathbf{z})-\phi(\mathbf{z}))\right), \\
& \phi_0(\mathbf{z}):=\sigma\left(\tilde{\phi}_0(\mathbf{z})+1\right)-\sigma\left(\tilde{\phi}_0(\mathbf{z})-1\right)-1 \in[-1,1],
\end{aligned}
$$
where $\mathbf{0}_d=(0, \ldots, 0) \in \mathbb{N}_0^d$.
% Observe that the number of terms in the summation can be bounded by
% $$
% \sum_{\alpha \in \mathbb{N}_0^d,\|\alpha\|_1 \leq s} 1=\sum_{j=0}^s \sum_{\alpha \in \mathbb{N}_0^d,\|\alpha\|_1=j} 1 \leq \sum_{j=0}^s d^s \leq(s+1) d^s .
% $$

Recall that the width and depth of the sub-networks are as follows:
\begin{itemize}
    \item $\varphi$: width and depth of $(2d, 1)$,
    \item $\psi$: width and depth of $(d(4\left\lfloor U^{1 / d}\right\rfloor+3), 4 V+ 5)$,
    \item $P_{\boldsymbol{\alpha}}$: width and depth of $(9 U+s+8,7(s+1)^2 V)$,
    \item $\phi_\alpha$: width of  $(16 d(s+1)(U+1)\left\lceil\log _2(8 U)\right\rceil, 5(V+2)\left\lceil\log _2(4 V)\right\rceil+4 V+5)$,
    \item $\phi_{\times}$:  width and depth of $(9 U+1,2(s+1) V)$.
\end{itemize}
Therefore, by our construction, the neural network implementation of $\phi_0$ has a width of $38(s+1)^2 d^{s+1} U\left\lceil\log _2(8 U)\right\rceil$ and a depth of $21(s+1)^2 V\left\lceil\log _2(8 V)\right\rceil$. Note that, for any $\mathbf{z} \in Q_{\boldsymbol{\tau}}$, we have $\psi(\mathbf{z})=\boldsymbol{\tau} / K$ and $\varphi(\mathbf{z}^{(i)})=\mathbf{z}^{(i)}$ for $i = 1, ..., d$. Applying equation (\ref{a_11}), we can bound the approximation error $|\tilde{f}(\mathbf{z})-\phi_0(\mathbf{z})|$,
\begin{align*}
 |\tilde{f}(\mathbf{z})-\phi_0(\mathbf{z})| \leq&|\tilde{f}(\mathbf{z})-\tilde{\phi}_0(\mathbf{z})| \\
\leq & |\tilde{f}(\boldsymbol{\tau} / J)-\phi_{\mathbf{0}_d}(\mathbf{z})|+d^{s+\beta / 2} J^{-\beta} \\
& +\sum_{1 \leq\|\boldsymbol{\alpha}\|_1 \leq s}\left|\frac{\partial^{\boldsymbol{\alpha}} \tilde{f}(\boldsymbol{\tau} / J)}{\boldsymbol{\alpha}!}(\mathbf{z}-\boldsymbol{\tau} / J)^{\boldsymbol{\alpha}}-\phi_{\times}\left(\frac{\phi_{\boldsymbol{\alpha}}(\mathbf{z})}{\boldsymbol{\alpha}!}, P_{\boldsymbol{\alpha}}(\mathbf{z}-\boldsymbol{\tau} / J)\right)\right| \\
= & d^{s+\beta / 2}\left\lfloor(UV)^{2 / d}\right\rfloor^{-\beta}+\sum_{\|\boldsymbol{\alpha}\|_1 \leq s} \epsilon_{\boldsymbol{\alpha}},
\end{align*}
where we denote $\epsilon_{\boldsymbol{\alpha}}=\left|\frac{\partial^{\boldsymbol{\alpha}} \tilde{f}(\boldsymbol{\tau} / J)}{\boldsymbol{\alpha}!}(\mathbf{z}-\boldsymbol{\tau} / J)^{\boldsymbol{\alpha}}-\phi_{\times}\left(\frac{\phi_{\boldsymbol{\alpha}}(\mathbf{z})}{\boldsymbol{\alpha}!}, P_{\boldsymbol{\alpha}}(\mathbf{z}-\boldsymbol{\tau} / J)\right)\right|$ for each $\boldsymbol{\alpha} \in \mathbb{N}_0^d$ with $\|\boldsymbol{\alpha}\|_1 \leq s$. Using the inequality
\begin{align*}
&|t_1 t_2-\phi_{\times}\left(t_3, t_4\right)| \leq\left|t_1 t_2-t_3 t_2\right|+\left|t_3 t_2-t_3 t_4\right|+\mid t_3 t_4- \phi_{\times}\left(t_3, t_4\right)|\\
& \leq| t_1-t_3|+| t_2-t_4|+| t_3 t_4-\phi_{\times}\left(t_3, t_4\right) \mid
\end{align*}
 for any $t_1, t_2, t_3, t_4 \in[-1,1]$,
 and applying  (\ref{lemma_a8}), (\ref{lemma_a9}), and (\ref{lemma_a10}), we have,  for $1 \leq |\boldsymbol{\alpha}|_1 \leq s,$
\begin{align*}
    \epsilon_{\boldsymbol{\alpha}} \leq & \frac{1}{\boldsymbol{\alpha}!}\left|\partial^{\boldsymbol{\alpha}} \tilde{f}(\boldsymbol{\tau} / J)-\phi_{\boldsymbol{\tau}}(\mathbf{z})\right|+\left|(\mathbf{z}-\boldsymbol{z} / J)^{\boldsymbol{\alpha}}-P_{\boldsymbol{\alpha}}(\mathbf{z}-\boldsymbol{\tau} / J)\right| \\
    & +\left\vert \frac{\phi_{\boldsymbol{\alpha}}(\mathbf{z})}{\boldsymbol{\alpha}!} P_{\boldsymbol{\alpha}}(\mathbf{z}-\boldsymbol{\tau} / J)-\phi_{\times}\left(\frac{\phi_{\boldsymbol{\alpha}}(\mathbf{z})}{\boldsymbol{\alpha}!}, P_{\boldsymbol{\alpha}}(\mathbf{z}-\boldsymbol{\tau} / J)\right)\right\vert \\
\leq & 2(UV)^{-2(s+1)}+9(s+1)(U+1)^{-7(s+1) V}+6 U^{-2(s+1) V} \\
\leq & (9 s+17)(U V)^{-2(s+1)} .
\end{align*}

It is easy to check that the bound is also true when $\|\boldsymbol{\alpha}\|_1=0$ and $s=0$. Therefore,
\begin{align*}
    \left|\tilde{f}(\mathbf{z})-\phi_0(\mathbf{z})\right| & \leq \sum_{1 \leq\|\boldsymbol{\alpha}\|_1 \leq s}(9 s+17)(U V)^{-2(s+1)}+d^{s+\beta / 2}(UV)^{-2 \beta / d} \\
& \leq(s+1) d^s(9 s+17)(UV)^{-2(s+1)}+d^{s+\beta / 2}(UV)^{-2 \beta / d} \\
& \leq 18(s+1)^2 d^{s+\beta / 2}(UV)^{-2 \beta / d},
\end{align*} for any $\mathbf{z} \in \bigcup_{\boldsymbol{\tau} \in\{0,1, \ldots, M-1\}^d} Q_\theta$. %And for $f \in \mathcal{H}^\beta\left([0,1]^d, B_0\right)$, by approximate $f / B_0$ firstly, we know there exists a function implemented by a neural network with the same width and depth as $\phi_0$, such that
Then, we have
$$
\left|f(\mathbf{z})-B_0 \phi_0(\mathbf{z})\right| \leq 18 B_0(s+1)^2 d^{s+\beta / 2}(UV)^{-2 \beta / d},
$$
for any $\mathbf{z} \in \bigcup_{\theta \in\{0,1, \ldots, K-1\}^d} Q_\theta$.

For the case of $0<\beta \leq 1$, similar to the techniques  of Theorem 2.1 in \citet{shen2019deep}, we finally obtain that there exists a function $\phi_0$ which is implemented by a neural network with width $\max\{4d\lfloor U^{1/d}\rfloor + 3d, 12U + 8\}$ and
depth $12V + 14$, such that
$$
\left|f(\mathbf{z})-\phi_0(\mathbf{z})\right| \leq 18 \sqrt{d} B_0(UV)^{-2 \beta / d},
$$
for any $x \in \bigcup_{\theta \in\{0,1, \ldots, M-1\}^d} Q_{\boldsymbol{\tau}}$. Combining the results for $\beta \in(0,1]$ and $\beta>1$, we have for $f \in \mathcal{H}^\beta\left([a,b]^d, B_0\right)$, there exists a function $\phi_0$ implemented by a neural network with width $38(s+1)^2 d^{s+1} U\left\lceil\log _2(8 U)\right\rceil$ and depth $21(s+1)^2 V\left\lceil\log _2(8 V)\right\rceil$ such that
$$
\left|f(\mathbf{z})-\phi_0(\mathbf{z})\right| \leq 18 B_0(s+1)^2 d^{s+\beta \vee 1 / 2}(UV)^{-2 \beta / d},
$$
for any $\mathbf{z} \in \bigcup_{\boldsymbol{\tau} \in\{0,1, \ldots, M-1\}^d} Q_{\boldsymbol{\tau}}$ where $s=\lfloor\beta\rfloor$.
\end{proof}

\begin{lemma} \label{corollary_3}
    Assume that $f\in\mathcal{H}^\beta([a,b]^d, B_0)$ with $\beta=s+r$, $s\in\mathbb{N}_0$ and $r\in(0,1]$. For any $U,V\in\mathbb{N}^+$, there exists a function $\phi$ implemented by a ReLU network  with width $\mathcal{W}=38(\lfloor\beta\rfloor+1)^23^dd^{\lfloor\beta\rfloor+1}U\lceil\log_2(8U)\rceil$ and depth $\mathcal{D}=21(\lfloor\beta\rfloor+1)^2V\lceil \log_2(8V)\rceil+2d$ such that
	$$\vert f(\mathbf{z})-\phi(\mathbf{z}) \vert\leq 19B_0(s+1)^2d^{s+(\beta\vee1)/2}(b-a)^\beta(UV)^{-2\beta/d},\ \mathbf{z} \in [a,b]^d.$$
\end{lemma}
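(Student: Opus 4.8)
\textbf{Proof plan for Lemma \ref{corollary_3}.}

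The plan is to reduce the approximation problem on the general hypercube $[a,b]^d$ to the situation handled in Lemma \ref{theorem_33}, whose only defect is that the approximation bound fails on the ``bad set'' $\Omega([a,b]^d, J, \delta)$ near the grid hyperplanes. The standard trick, following \citet{jiao2023deep}, is to average several shifted copies of the network so that for every point at least one shift lands in a good region. Concretely, first I would invoke Lemma \ref{theorem_33} with the choices $M = \lceil b-a\rceil J$, $J = \lceil (UV)^{2/d}\rceil/\lceil b-a\rceil$, and a small $\delta \in (0, 1/(3J)]$, to obtain a ReLU network $\phi_0$ of the stated width and depth with $|f(\mathbf z) - \phi_0(\mathbf z)| \le 18 B_0(\lfloor\beta\rfloor+1)^2 d^{\lfloor\beta\rfloor+(\beta\vee1)/2}(UV)^{-2\beta/d}$ for all $\mathbf z \in [a,b]^d \setminus \Omega([a,b]^d,J,\delta)$.

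Next I would construct $d+1$ shifted versions. For $i = 0,1,\dots,d$, set the shift vector $\boldsymbol{\zeta}_i = (\delta i, \delta i, \dots, \delta i)^\top$ or a coordinate-dependent variant, and apply Lemma \ref{theorem_33} to the translated domain (equivalently, pre-compose $\phi_0$ with a translation, which costs only two extra affine layers, hence the ``$+2d$'' correction in the depth). A counting argument shows that for any $\mathbf z \in [a,b]^d$, the number of indices $i$ for which $\mathbf z$ (shifted) lands in the bad set is at most $d$ out of the $d+1$ choices, so at least one shift is good at $\mathbf z$; the usual remedy is to take the median (or a ReLU-implementable approximation of the median) of the $d+1$ shifted networks. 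Implementing a median of $d+1$ numbers requires only a network of bounded depth and a width that multiplies the original one by a factor $3^d$ at most (this is where the $3^d$ prefactor in $\mathcal W$ comes from). The median of $d+1$ values, of which at most $d$ are ``bad'', equals one of the ``good'' values, so the pointwise error bound propagates: $|f(\mathbf z) - \phi(\mathbf z)| \le 18 B_0 (\lfloor\beta\rfloor+1)^2 d^{\lfloor\beta\rfloor+(\beta\vee1)/2}(UV)^{-2\beta/d}$ for all $\mathbf z \in [a,b]^d$.

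Finally I would absorb the dependence on the side length $b-a$. Because the grid spacing scales with $J = \lceil(UV)^{2/d}\rceil/\lceil b-a\rceil$, re-expanding the bound from Lemma \ref{theorem_33} in terms of $J^{-\beta}$ rather than $(UV)^{-2\beta/d}$ introduces a factor $\lceil b-a\rceil^\beta \le (b-a+1)^\beta$, and a careful accounting of the constant (the Taylor remainder term contributes $d^{s+\beta/2}J^{-\beta}$) turns the prefactor $18$ into $19$ and yields the $(b-a)^\beta$ dependence claimed (with the convention that $(b-a)^\beta$ is understood up to the additive $+1$ already embedded in the ceiling functions, or one simply assumes $b-a \ge 1$ as will be the case in the application with $A_n \ge 1$). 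Tracking the widths and depths of the median sub-network together with the translations gives exactly $\mathcal W = 38(\lfloor\beta\rfloor+1)^2 3^d d^{\lfloor\beta\rfloor+1} U\lceil\log_2(8U)\rceil$ and $\mathcal D = 21(\lfloor\beta\rfloor+1)^2 V\lceil\log_2(8V)\rceil + 2d$.

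\textbf{Main obstacle.} The delicate part is the median-selection construction: one must verify that a ReLU network can compute (or sufficiently well approximate) the median of $d+1$ inputs with the advertised width blow-up of only $3^d$ and no essential increase in depth, and that the ``at most $d$ bad shifts'' combinatorial claim is exactly right for the chosen shift pattern. Getting the constants to land on $19$ and $3^d$ rather than something larger requires being careful about how the shifts interact with the boundary of $[a,b]^d$ (points near $\mathbf z = a$ or $\mathbf z = b$ can leave the cube under a shift, which is handled by the clipping map $\varphi$ already used in Lemma \ref{theorem_33}) — this bookkeeping, rather than any conceptual difficulty, is where the work lies.
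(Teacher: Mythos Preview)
Your overall strategy—Lemma \ref{theorem_33} plus a ``shift-and-select'' device to remove the bad set $\Omega$—is the right one, but the specific mechanism you propose has a real gap. You take $d+1$ diagonally shifted copies and form their median, arguing that at most $d$ shifts are bad so at least one is good. The trouble is that ``at least one good out of $d{+}1$'' does \emph{not} force the median to be good: the median of $d{+}1$ numbers of which $d$ are arbitrary in $[-1,1]$ and only one is close to $f(\mathbf z)$ need not be close to $f(\mathbf z)$. A median argument requires a \emph{majority} of good values, which with your shift pattern would need at least $2d{+}1$ copies (and a correspondingly smaller $\delta$ so that each coordinate spoils at most one copy). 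This also means your explanation of the $3^d$ width factor as the cost of a single median circuit is off.

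The paper's construction is the iterated \emph{mid-of-three} trick applied coordinate by coordinate. Starting from $\phi_0$, one sets
\[
\phi_i(\mathbf z)\;:=\;\mathrm{mid}\bigl(\phi_{i-1}(\mathbf z-\delta\mathbf e_i),\ \phi_{i-1}(\mathbf z),\ \phi_{i-1}(\mathbf z+\delta\mathbf e_i)\bigr),\qquad i=1,\dots,d,
\]
where $\mathrm{mid}$ returns the middle of three scalars (implementable as a width-$14$, depth-$2$ ReLU block via $\max$/$\min$). With $E_i:=\{\mathbf z\in[a,b]^d:\ z^{(j)}\text{ good for all }j>i\}$ one shows by induction that $|\phi_i(\mathbf z)-f(\mathbf z)|\le\epsilon+iB_0\delta^{\beta\wedge1}$ on $E_i$: for $\mathbf z\in E_{i+1}$, only coordinate $i{+}1$ is in question, and since a length-$\delta$ open bad interval can catch at most one of three points spaced $\delta$ apart (using $\delta\le 1/(3J)$), at least \emph{two} of the three shifted points lie in $E_i$, so the middle value is within the good range. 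Each step triples the width (three parallel copies of $\phi_{i-1}$) and adds depth $2$, which is exactly where the $3^d$ and the $+2d$ come from. Taking $\phi=\phi_d$ on $E_d=[a,b]^d$ and choosing $\delta$ of order $J^{-(\beta\vee1)}$ with $J=\lceil(UV)^{2/d}\rceil/\lceil b-a\rceil$ gives $dB_0\delta^{\beta\wedge1}\le B_0(b-a)^\beta(UV)^{-2\beta/d}$, which added to $\epsilon$ turns the constant $18$ into $19$ (under $b-a\ge1$, as in the application with $A_n\ge1$).
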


\begin{proof}
    Following the proof strategy outlined in the corollary 3.1 of \citet{jiao2023deep}, we can establish similar results for an arbitrary hypercube $[a, b]^d$. For constructing a neural network $\phi$ that uniformly approximates $f$, we first define the middle value function ${\rm mid}(t_1, t_2, t_3)$ that returns the middle value of inputs $t_1, t_2, t_3 \in \mathbb{R}$.   It can be shown that:
    $${\rm mid}\{t_1,t_2,t_3\}=\sigma(t_1+t_2+t_3)-\sigma(-t_1-t_2-t_3)-\max\{t_1,t_2,t_3\}-
    \min\{t_1,t_2,t_3\},$$
    where the maximum and minimum functions can be implemented using ReLU networks. Specifically, the maximum of three scalars can be expressed as:
    \begin{align*}
        &\max\{t_1,t_2\}=\frac{1}{2}(\sigma(t_1+t_2)-\sigma(-t_1-t_2)+\sigma(t_1-t_2)+\sigma(t_2-t_1)), \\
        &\max\{t_1,t_2,t_3\}=\max\{\max\{t_1,t_2\},\sigma(t_3)-\sigma(-t_3)\},
    \end{align*}
    and can be implemented by a ReLU network with width 6 and depth 2. The minimum function has a similar construction. Consequently, the middle value function ${\rm mid}(t_1, t_2, t_3)$ can be implemented by a ReLU network with width 14 and depth 2.

    We inductively define
    $$\phi_i(\mathbf{z}):={\rm mid}(\phi_{i-1}(\mathbf{z}-\delta \mathbf{e}_i),\phi_{i-1}(\mathbf{z}),\phi_{i-1}(\mathbf{z}+\delta \mathbf{e}_i))\in[-1,1],\ i=1,\ldots,d,$$
    where $\phi_0$ is defined in the proof of Lemma \ref{theorem_33}, $\{\mathbf{e}_i\}_{i=1}^d$ is the standard orthogonal basis in $\mathbb{R}^d$. Then $\phi_d$ can be implemented by a ReLU network with width $38(s+1)^23^dd^{s+1}U\lceil\log_2(8U)\rceil$ and depth $21(s+1)^2V\lceil\log_2(8V)\rceil+2d$ recalling that $\phi_0$ has width $38(s+1)^2d^{s+1}U\lceil\log_2(8U)\rceil$ and depth $21(s+1)^2V\lceil\log_2(8V)\rceil$. Let $M := \lceil b-a\rceil J$, and define the sets:
    \begin{align*}
        E_i:=\{(\mathbf{z}^{(1)},\ldots,\mathbf{z}^{(d)})\in[a,b]^d:\mathbf{z}^{(j)} \in Q(J,\delta),j>i\}, \text{ for } i=0,\ldots,d,
    \end{align*} where $Q(J,\delta):=\bigcup_{l=0}^{M-1}[\frac{l}{J},\frac{l+1}{J}-\delta\cdot 1_{k<M-1}]$. Then $E_0=\bigcup_{\boldsymbol{\tau}\in\{0,1,\ldots,M-1\}^d} Q_{\boldsymbol{\tau}}$ and $E_d=[a,b]^d$.

    We assert that for all $\mathbf{z} \in E_i, i = 0, \ldots, d$, the following inequality holds:
    $$\vert\phi_i(\mathbf{z})-f(\mathbf{z})\vert\le \epsilon+iB_0\delta^{\beta\wedge1},$$
     where we define $\epsilon := 18B_0(s+1)^2d^{s+\beta/2}(UV)^{-2\beta/d}$, and $t_1\wedge t_2:=\min\{t_1,t_2\}$ for $t_1,t_2\in\mathbb{R}$.

     It is true for $i = 0$ by construction. Assume the assertion is true for some $i$, we will prove that it is also holds for $i + 1$. Note that for any $\mathbf{z}\in E_{i+1}$, at least two of $\mathbf{z}-\delta \mathbf{e}_{i+1}$, $\mathbf{z}$ and $\mathbf{z}+ \delta \mathbf{e}_{i+1}$ are in $E_i$. Therefore,  by assumption and the inequality $\vert f(\mathbf{z})-f(\mathbf{z}\pm\delta \mathbf{e}_{i+1})\vert\leq B_0\delta^{\beta\wedge1}$, at least two of the following inequalities hold,
     \begin{align*}
         \vert \phi_i(\mathbf{z}-\delta \mathbf{e}_{i+1})-f(\mathbf{z})\vert\leq &\vert \phi_i(\mathbf{z}-\delta \mathbf{e}_{i+1})-f(\mathbf{z}-\delta \mathbf{e}_{i+1})\vert +B_0\delta^{\beta\wedge1}\leq \epsilon +(i+1)B_0\delta^{\beta\wedge1},\\
             \vert \phi_i(\mathbf{z})-f(\mathbf{z})\vert\leq & \epsilon +iB_0\delta^{\beta\wedge1},\\
             \vert \phi_i(\mathbf{z}+\delta \mathbf{e}_{i+1})-f(\mathbf{z})\vert\leq & \vert \phi_i(\mathbf{z}+\delta \mathbf{e}_{i+1})-f(\mathbf{z}+\delta \mathbf{e}_{i+1})\vert +B_0\delta^{\beta\wedge1}\leq \epsilon +(i+1)B_0\delta^{\beta\wedge1}.
           \end{align*}
     In other words, at least two of $\phi_i(\mathbf{z}-\delta \mathbf{e}_{i+1})$, $\phi_i(\mathbf{z})$ and $\phi_i(\mathbf{z}+\delta \mathbf{e}_{i+1})$ are in the interval $[f(\mathbf{z})-\epsilon-(i+1)B_0\delta^{\beta\wedge1},f(\mathbf{z})+\epsilon+(i+1)B_0\delta^{\beta\wedge1}].$ Hence, their middle value $\phi_{i+1}(\mathbf{z})={\rm mid}(\phi_i(\mathbf{z}-\delta \mathbf{e}_{i+1},\phi_i(\mathbf{z}),\phi_i(\mathbf{z}+\delta \mathbf{e}_{i+1})))$ must be in the same interval, which means
     $$\vert \phi_{i+1}(\mathbf{z})-f(\mathbf{z})\vert\leq\epsilon+(i+1)B_0\delta^{\beta\wedge1}.$$
     So the assertion is true for $i+1$. We take $\delta=3J^{-\beta\vee1}$, then
    \begin{equation*}
         \delta^{\beta\wedge1}=\Big( \frac{1}{3J^{\beta\vee1}}\Big)^{\beta\wedge1}=
     \Big\{
         \begin{array}{cc}
             \frac{1}{3}J^{-\beta} & \beta\ge1, \\
             (3J)^{-\beta} & \beta<1,
         \end{array}
 \end{equation*}
     and $J= \lceil(U V)^{2 / d}\rceil / \lceil b-a\rceil $. Since $E_d=[a,b]^d$, let $\phi:=\phi_d$, we have
     \begin{align*}
         \Vert \phi-f\Vert_{L^\infty([0,1]^d)}\leq&\epsilon+dB_0\delta^{\beta\wedge1}\\
         \leq& 18B_0(s+1)^2d^{s+(\beta\vee1)/2}(UV)^{-2\beta/d}+dB_0(b-a)^\beta(UV)^{-2\beta/d}\\
         \leq&19B_0(s+1)^2d^{s+(\beta\vee1)/2}(b-a)^\beta(UV)^{-2\beta/d},
     \end{align*}
 where $s=\lfloor\beta\rfloor$, which completes the proof.
\end{proof}

\section{Numerical experiments}\label{secA2}

\subsection{Derivation of the drift and the score functions in
Subsection \ref{Reg1}} \label{app_for_example}

We present detailed derivations for the drift and score functions of the regression model
 in Subsection \ref{Reg1}. The stochastic interpolation is
\begin{align*}
    Y_t  = \tilde{\mathcal{I}}(Y_0,Y_1, t) := a(t) Y_0 + b(t) Y_1 + \gamma(t) \eta.
\end{align*} Here, $Y_1 = f(X)+ \epsilon$, and $Y_0$, $\epsilon$, and $\eta$ are independent Gaussian random variables. The conditional drift and score functions can be derived as follows:
\begin{align}
  \label{example_score}
   s(\mathbf{x},y,t) = -\frac{1}{\gamma(t)}\mathbb{E}[\eta \mid Y_t = y, X= \mathbf{x}],
\end{align}
\begin{align}
    b(\mathbf{x},y,t)& = \mathbb{E}[\partial_t \mathcal{I}(Y_0,Y_1,t)+\dot{\gamma}(t)\eta \mid Y_t = y, X= \mathbf{x}] \nonumber \\
    &= \mathbb{E}[\dot{a}(t)Y_0+\dot{b}(t)Y_1+\dot{\gamma}(t)\eta \mid Y_t = y, X= \mathbf{x}] \nonumber \\
    &= \dot{a}(t)\mathbb{E}[Y_0 \mid Y_t = y, X= \mathbf{x}] + \dot{b}(t)\mathbb{E}[Y_1 \mid Y_t = y, X= \mathbf{x}] + \dot{\gamma}(t)\mathbb{E}[\eta \mid Y_t = y, X= \mathbf{x}] \nonumber\\
    &= \dot{a}(t)\mathbb{E}[Y_0 \mid Y_t = y, X= \mathbf{x}] + \dot{b}(t)f(\mathbf{x}) + \dot{b}(t)\mathbb{E}[\epsilon \mid Y_t = y, X= \mathbf{x}] \nonumber\\
    & + \dot{\gamma}(t)\mathbb{E}[\eta \mid Y_t = y, X= \mathbf{x}]. \label{example_drift}
\end{align}
First, we have
\begin{align*}
    \mathbb{E}[\eta \mid Y_t = y, X= \mathbf{x}]
    &= \mathbb{E}\left[\eta \mid {a}(t)Y_0+{b}(t)Y_1+{\gamma}(t)\eta = y, X= \mathbf{x}\right] \nonumber \\
    &= \mathbb{E}[\eta \mid {a}(t)Y_0+{b}(t)(f(\mathbf{x})+\epsilon)+{\gamma}(t)\eta = y] \nonumber \\
    &= \mathbb{E}[\eta \mid {a}(t)Y_0+{b}(t)+\epsilon+{\gamma}(t)\eta = y-{b}(t)f(\mathbf{x})] \nonumber\\
    &= \int \eta \frac{p_{a(t)Y_0+b(t) \epsilon}(y-b(t)f(\mathbf{x})-\gamma(t)\eta)p_{\eta}(\eta)}{\int p_{a(t)Y_0+b(t) \epsilon}(y-b(t)f(\mathbf{x})-\gamma(t)z)p_{\eta}(z) dz} d \eta,
\end{align*}
where
\begin{align*}
 &   \frac{p_{a(t)Y_0+b(t) \epsilon}(y-b(t)f(\mathbf{x})-\gamma(t)\eta)p_{\eta}(\eta)}{\int p_{a(t)Y_0+b(t) \epsilon}(y-b(t)f(\mathbf{x})-\gamma(t)z)p_{\eta}(z) dz} \\
    &\propto \exp \left[-\frac{1}{2(a^2(t)+b^2(t))} (y -b(t)f(\mathbf{x})-\gamma(t)\eta)^2 -\frac{1}{2} \eta^2\right] \\
    & \propto \exp \left[-\frac{a^2(t)+b^2(t) +\gamma^2(t)}{2(a^2(t)+b^2(t))} \left(\eta - \frac{\gamma(t)(y-b(t)f(\mathbf{x}))}{a^2(t)+b^2(t) +\gamma^2(t)}\right)^2 \right],
\end{align*}
which implies that the conditional distribution of $\eta \mid Y_t = y, X= \mathbf{x}$
is a normal distribution. Consequently, we obtain
\begin{align}
    \mathbb{E}[\eta \mid Y_t = y, X= \mathbf{x}] = \frac{\gamma(t)(y-b(t)f(\mathbf{x}))}{a^2(t)+b^2(t) +\gamma^2(t)}. \label{c28}
\end{align}
Combining formulas \ref{example_score} and \ref{c28}, we  obtain
\begin{align*}
    \boldsymbol{s}(\mathbf{x},y,t) = \frac{-(y-b(t)f(\mathbf{x}))}{a(t)^2 + b(t)^2 + \gamma(t)^2}.
\end{align*}
From symmetry, similarly, we can obtain
\begin{align}
    \mathbb{E}[Y_0 \mid Y_t = y, X= \mathbf{x}] &= \frac{a(t)(y-b(t)f(\mathbf{x}))}{a^2(t)+b^2(t) +\gamma^2(t)}, \label{c29}\\
    \mathbb{E}[\epsilon \mid Y_t = y, X= \mathbf{x}] &= \frac{b(t)(y-b(t)f(\mathbf{x}))}{a^2(t)+b^2(t) +\gamma^2(t)}. \label{c30}
\end{align}
Combining  (\ref{example_drift}), (\ref{c28}), (\ref{c29}),  and (\ref{c30}), we finally obtain
\begin{align*}
    \boldsymbol{b}(\mathbf{x},y,t) = \left( \frac{\dot{a}(t) %\times
     a(t) + \dot{b}(t)  b(t) + \dot{\gamma}(t)  \gamma(t)}{a(t)^2 + b(t)^2 + \gamma(t)^2} \right)(y-b(t)f(\mathbf{x})) + \dot{b}(t)f(\mathbf{x}).
\end{align*}
This completes the derivation for the example in Subsection \ref{Reg1}.

\subsection{Hyperparameter setting for neural network structure in Subsection \ref{sec_stl10}}

We use the U-net architecture \citep{ronneberger2015u} to estimate both the score function and drift function. The hyperparameters and architectural specifications are set to be consistent for the estimators of these two functions. This architecture incorporates residual blocks, upsampling blocks, downsampling blocks, and cross-attention layers \citep{chen2021crossvit} for integrating time, labels and image information. Residual blocks consist of a $3 \times 3$ convolutional neural network (CNN) structure with padding of 1, while upsampling blocks and downsampling blocks employ CNN structures or nearest-neighbor interpolation for upsampling or downsampling, respectively. The model we use encompasses $10$ feature map resolutions ranging from $96 \times 96$ to $6 \times 6$ and the widths of networks are in sequence $(128,256,512,1024,2048,2048,1024,512,256,128)$. The shared architecture of each feature map includes two residual blocks and one function block (upsample or downsample block) , facilitating double upsampling or downsampling. Specifically, the initial five maps perform upsampling from $96 \times 96$ to $6 \times 6$, while the remaining five maps handle the reverse upsampling process. Due to computational limitations, attention layers are included only in the middle six feature maps (resolutions from $24 \times 24$ to $6 \times 6$).

\end{document}